\title{Random Features Model with General Convex Regularization: \\A Fine Grained Analysis with Precise Asymptotic Learning Curves}
\author[1]{David Bosch\footnote{davidbos@chalmers.se}}
\author[1]{Ashkan Panahi\footnote{ashkan.panahi@chalmers.se}}
\author[2]{Ayca Özcelikkale \footnote{ayca.ozcelikkale@angstrom.uu.se}}
\author[1]{Devdatt Dubhashi\footnote{dubhashi@chalmers.se}}
\affil[1]{Department of Data Science and AI, Computer Science and Engineering, Chalmers University of Technology}
\affil[2]{Signals and Systems, Department of Electrical Engineering, Uppsala University}
\pgfplotsset{compat=1.17}
\begin{document}

\newcommand{\bDelta}{\bm{\Delta}}
\newcommand{\bLambda}{\bm{\Lambda}}
\newcommand{\bGamma}{\bm{\Gamma}}
\newcommand{\bSigma}{\bm{\Sigma}}
\newcommand{\bOmega}{\bm{\Omega}}
\newcommand{\bPsi}{\bm{\Psi}}

\newcommand{\balpha}{\bm{\alpha}}
\newcommand{\bdelta}{\bm{\delta}}
\newcommand{\bomega}{\bm{\omega}}
\newcommand{\bgamma}{\bm{\gamma}}
\newcommand{\bepsilon}{\bm{\epsilon}}
\newcommand{\blambda}{\bm{\lambda}}
\newcommand{\btheta}{\bm{\theta}}
\newcommand{\bphi}{\bm{\phi}}
\newcommand{\bpsi}{\bm{\psi}}
\newcommand{\bmeta}{\bm{\eta}}
\newcommand{\bzeta}{\bm{\zeta}}
\newcommand{\bmu}{\bm{\mu}}
\newcommand{\bnu}{\bm{\nu}}
\newcommand{\bpi}{\bm{\pi}}
\newcommand{\bsigma}{\bm{\sigma}}

\newcommand{\bargamma}{\bar{\gamma}}
\newcommand{\bartheta}{\bar{\theta}}

\newcommand{\tilDelta}{\tilde{\Delta}}
\newcommand{\tlDelta}{\tilde{\Delta}}

\newcommand{\tlepsilon}{\tilde{\epsilon}}
\newcommand{\tltheta}{\tilde{\theta}}
\newcommand{\tlgamma}{\tilde{\gamma}}

\newcommand{\tlblambda}{\tilde{\blambda}}

\newcommand{\barblambda}{\bar{\blambda}}

\newcommand{\bA}{\mathbf{A}}
\newcommand{\bC}{\mathbf{C}}
\newcommand{\bD}{\mathbf{D}}
\newcommand{\bE}{\mathbf{E}}
\newcommand{\bG}{\mathbf{G}}
\newcommand{\bH}{\mathbf{H}}
\newcommand{\bI}{\mathbf{I}}
\newcommand{\bJ}{\mathbf{J}}
\newcommand{\bL}{\mathbf{L}}
\newcommand{\bM}{\mathbf{M}}
\newcommand{\bN}{\mathbf{N}}
\newcommand{\bP}{\mathbf{P}}
\newcommand{\bQ}{\mathbf{Q}}
\newcommand{\bR}{\mathbf{R}}
\newcommand{\bS}{\mathbf{S}}
\newcommand{\bT}{\mathbf{T}}
\newcommand{\bU}{\mathbf{U}}
\newcommand{\bW}{\mathbf{W}}
\newcommand{\bX}{\mathbf{X}}
\newcommand{\bY}{\mathbf{Y}}
\newcommand{\bZ}{\mathbf{Z}}

\newcommand{\ba}{\mathbf{a}}
\newcommand{\bb}{\mathbf{b}}
\newcommand{\bc}{\mathbf{c}}
\newcommand{\bd}{\mathbf{d}}
\newcommand{\be}{\mathbf{e}}
\newcommand{\mbf}{\mathbf{f}}
\newcommand{\bg}{\mathbf{g}}
\newcommand{\bh}{\mathbf{h}}
\newcommand{\bl}{\mathbf{l}}
\newcommand{\bn}{\mathbf{n}}
\newcommand{\bp}{\mathbf{p}}
\newcommand{\bq}{\mathbf{q}}
\newcommand{\br}{\mathbf{r}}
\newcommand{\bs}{\mathbf{s}}
\newcommand{\bu}{\mathbf{u}}
\newcommand{\bv}{\mathbf{v}}
\newcommand{\bw}{\mathbf{w}}
\newcommand{\bx}{\mathbf{x}}
\newcommand{\by}{\mathbf{y}}
\newcommand{\bz}{\mathbf{z}}

\newcommand{\hbeta}{\hat{\beta}}
\newcommand{\htheta}{\hat{\theta}}
\newcommand{\hsigma}{\hat{\sigma}}
\newcommand{\hmu}{\hat{\mu}}

\newcommand{\bbeta}{\bm{\beta}}

\newcommand{\hp}{\hat{p}}
\newcommand{\hr}{\hat{r}}
\newcommand{\hs}{\hat{s}}
\newcommand{\hw}{\hat{w}}
\newcommand{\hx}{\hat{x}}

\newcommand{\hN}{\hat{N}}

\newcommand{\hbSigma}{\hat{\bm{\Sigma}}}

\newcommand{\hba}{\hat{\mathbf{a}}}
\newcommand{\hbs}{\hat{\mathbf{s}}}
\newcommand{\hbx}{\hat{\mathbf{x}}}
\newcommand{\hbv}{\hat{\mathbf{v}}}
\newcommand{\hbw}{\hat{\mathbf{w}}}

\newcommand{\hbW}{\hat{\mathbf{W}}}

\newcommand{\dif}{\text{d}}

\newcommand{\bbC}{\mathbb{C}}
\newcommand{\bbE}{\mathbb{E}}
\newcommand{\bbR}{\mathbb{R}}
\newcommand{\bbN}{\mathbb{N}}
\newcommand{\bbZ}{\mathbb{Z}}

\newcommand{\calA}{\mathcal{A}}
\newcommand{\calB}{\mathcal{B}}
\newcommand{\calC}{\mathcal{C}}
\newcommand{\calD}{\mathcal{D}}
\newcommand{\calE}{\mathcal{E}}
\newcommand{\calF}{\mathcal{F}}
\newcommand{\calG}{\mathcal{G}}
\newcommand{\calH}{\mathcal{H}}
\newcommand{\calL}{\mathcal{L}}
\newcommand{\calN}{\mathcal{N}}
\newcommand{\calM}{\mathcal{M}}
\newcommand{\calP}{\mathcal{P}}
\newcommand{\calS}{\mathcal{S}}
\newcommand{\calT}{\mathcal{T}}
\newcommand{\calV}{\mathcal{V}}
\newcommand{\calW}{\mathcal{W}}
\newcommand{\calX}{\mathcal{X}}
\newcommand{\calY}{\mathcal{Y}}

\newcommand{\calhL}{\mathcal{\hat{L}}}

\newcommand{\tlA}{\tilde{A}}
\newcommand{\tlC}{\tilde{C}}
\newcommand{\tlD}{\tilde{D}}
\newcommand{\tlP}{\tilde{P}}

\newcommand{\tlf}{\tilde{f}}
\newcommand{\tlv}{\tilde{v}}
\newcommand{\tls}{\tilde{s}}
\newcommand{\tlw}{\tilde{w}}
\newcommand{\tlx}{\tilde{x}}

\newcommand{\barb}{\bar{b}}
\newcommand{\barm}{\bar{m}}
\newcommand{\barn}{\bar{n}}
\newcommand{\barr}{\bar{r}}
\newcommand{\bary}{\bar{y}}

\newcommand{\barC}{\bar{C}}
\newcommand{\barD}{\bar{D}}
\newcommand{\barH}{\bar{H}}
\newcommand{\barK}{\bar{K}}
\newcommand{\barL}{\bar{L}}
\newcommand{\barP}{\bar{P}}
\newcommand{\barW}{\bar{W}}

\newcommand{\barba}{\bar{\ba}}
\newcommand{\barbe}{\bar{\be}}
\newcommand{\barbg}{\bar{\bg}}
\newcommand{\barbh}{\bar{\bh}}
\newcommand{\barbx}{\bar{\bx}}
\newcommand{\barby}{\bar{\by}}
\newcommand{\barbz}{\bar{\bz}}

\newcommand{\barbA}{\bar{\bA}}

\newcommand{\tlbA}{\tilde{\bA}}
\newcommand{\tlbE}{\tilde{\bE}}
\newcommand{\tlbG}{\tilde{\bG}}
\newcommand{\tlbR}{\tilde{\bR}}
\newcommand{\tlbW}{\tilde{\bW}}
\newcommand{\tlbX}{\tilde{\bX}}
\newcommand{\tlbY}{\tilde{\bY}}

\newcommand{\tlbe}{\tilde{\be}}
\newcommand{\tlbf}{\tilde{\mbf}}
\newcommand{\tlbg}{\tilde{\bg}}
\newcommand{\tlbv}{\tilde{\bv}}
\newcommand{\tlbw}{\tilde{\bw}}
\newcommand{\tlbx}{\tilde{\bx}}
\newcommand{\tlby}{\tilde{\by}}

\newcommand{\tc}{\text{c}}
\newcommand{\td}{{\text{d}}}
\newcommand{\ter}{{\text{r}}}
\newcommand{\ts}{{\text{s}}}
\newcommand{\tw}{{\text{w}}}

\newcommand{\bzero}{\mathbf{0}}
\newcommand{\bone}{\mathbf{1}}

\newcommand{\suml}{\sum\limits}
\newcommand{\minl}{\min\limits}
\newcommand{\maxl}{\max\limits}
\newcommand{\infl}{\inf\limits}
\newcommand{\supl}{\sup\limits}
\newcommand{\liml}{\lim\limits}
\newcommand{\intl}{\int\limits}
\newcommand{\ointl}{\oint\limits}
\newcommand{\bigcupl}{\bigcup\limits}
\newcommand{\bigcapl}{\bigcap\limits}

\newcommand{\opconv}{\text{conv}}

\newcommand{\eref}[1]{(\ref{#1})}

\newcommand{\sinc}{\text{sinc}}
\newcommand{\tr}{\text{Tr}}
\newcommand{\var}{\text{Var}}
\newcommand{\cov}{\text{Cov}}
\newcommand{\tth}{\text{th}}
\newcommand{\proj}{\text{proj}}

\newcommand{\nwl}{\nonumber\\}

\newenvironment{vect}{\left[\begin{array}{c}}{\end{array}\right]}
\newtheorem{theorem}{Theorem}
\newtheorem{proposition}{Proposition}
\newtheorem{lemma}{Lemma}
\newtheorem{assum}{Assumption}
\newtheorem{cor}{Corollary}

\theoremstyle{definition}
\newtheorem{definition}{Definition}
\newtheorem{assump}{Assumption}
\theoremstyle{remark}
\newtheorem{remark}{Remark}

\newcommand{\polylog}{\mathrm{polylog}\ }
\newcommand{\bPhi}{\bm{\Phi}}
\newcommand{\bV}{\bm{V}}

\newcommand{\diag}[1]{\mathrm{diag}\lbrace #1 \rbrace}
\newcommand{\bxi}{\bm{\xi}}
\newcommand{\bTheta}{\bm{\Theta}}
\newcommand{\uk}{^{(k)}}
\newcommand{\ukT}{^{(k)T}}
\newcommand{\norm}[1]{\left\|#1\right\|}
\newcommand{\isp}{\frac{1}{\sqrt{p}}}
\newcommand{\bXi}{\bm{\Xi}} 
\newcommand{\bPi}{\bm{\Pi}}
\newcommand{\bUpsilon}{\bm{\Upsilon}}
\newcommand{\bvarphi}{\bm{\varphi}}
\newcommand{\barphi}{\bar{\phi}}
\newcommand{\barbR}{\bar{\bR}}
\newcommand{\bt}{\bm{t}}
\newcommand{\bK}{\bm{K}}
\newcommand{\op}{\mathrm{op}}
\newcommand{\hbd}{\hat{\bd}}
\newcommand{\whp}{\quad\mathrm{w.h.p}}
\newcommand{\poly}{\mathrm{poly}}

\maketitle

\begin{abstract}
We compute precise asymptotic expressions for the learning curves of least squares random feature (RF) models with either a separable strongly convex regularization or the $\ell_1$ regularization. We propose a novel multi-level application of the convex Gaussian min max theorem (CGMT) to overcome the traditional difficulty of finding computable expressions for random features models with correlated data. Our result takes the form of a computable 4-dimensional scalar optimization. In contrast to previous results,  our approach does not require solving an often intractable proximal operator, which scales with the number of model parameters. Furthermore, we extend the universality results for the training and generalization errors for RF models to $\ell_1$ regularization. In particular, we demonstrate that under mild conditions,  random feature models with elastic net or $\ell_1$ regularization are asymptotically equivalent to a surrogate Gaussian model with the same first and second moments. We numerically demonstrate the predictive capacity of our results, and show experimentally that the predicted test error is accurate even in the non-asymptotic regime.
\end{abstract}

\section{Introduction}
It has been recently understood that classical statistical theory requires revisiting to describe the behavior of overparameterized models \citep{zhang2021understanding, belkin2019reconciling}. Since then, studying the asymptotic regime of a machine learning (ML) model, in which the number of data points and model parameters grow infinite at a constant ratio, has become a popular method of analysis \citep{belkin2020two, hastie2019surprises, Bartlett30063, tsigler2020benign}. The asymptotic analysis of regularized Random Feature (RF) models \citep{rahimi2007random} has been of particular interest as they can capture a large range of other interesting models \citep{mei2019generalization, goldt2019modeling, d2020double, LuPreciseRandom}. Despite remarkable progress in the analysis of RF models, existing asymptotic results 
are not directly computable for the majority of regularization functions, and in this generic scenario, precise asymptotic learning curves are still lacking. In this paper, we address this limitation and provide a novel technique that provides computable, exact asymptotic learning curves under a large family of separable, strongly convex regularization, as well as the $\ell_1$ regularization (also known as LASSO).

Similar to many recent papers, we make use of the convex Gaussian Min Max theorem (CGMT) \citep{thrampoulidis2015gaussian, hastie2019surprises, montanari2019generalization, LuPreciseRandom, goldt2020gaussian}, where there are generally two steps.  The RFs are non-Gaussian due to nonlinear activation functions, but it is shown that they can be  equivalently replaced by a surrogate Gaussian model with matching first two statistical moments \citep{panahi2017universal,oymak2018universality, HuUniversalityLaws}. Establishing this equivalence between the RF model and surrogate Gaussian model is generally referred to as universality \citep{panahi2017universal,oymak2018universality, HuUniversalityLaws}. Next, the CGMT is applied, which provides an alternative optimization problem whose analysis is provably tied to the original problem. This alternative optimization formulation has been a great tool for computing precise asymptotic learning curves in the case of uncorrelated features. However, for the general RF formulation, the surrogate features are inevitably heavily correlated. As a result, the alternative optimization has been generally as difficult to analyze as the original RF model. More precisely,  solution of this alternative optimization typically involves solving a proximal operator of a non separable $m$-dimensional vector that scales with the number of model parameters \citep{Loureiro2021Learning}, even if the regularization function is separable. Only in the case of ridge ($\ell_2^2$) regularization, where a specific rotational symmetry holds true, can this difficulty be overcome \citep{chang2020provable, montanari2019generalization, d2020double}.


{\bf Contributions}:
The first main contribution of this paper is a novel multilevel application of the CGMT to the correlated surrogate model that overcomes the difficulties with the analysis of the alternative optimization and substantially simplifies the final results. With this method, we provide a computable technique for obtaining learning curves of surrogate Gaussian model with arbitrary separable, strongly convex; or $\ell_1$ regularization. Our next contribution is to establish universality, i.e. to show that our analysis also applies to the original, non-Gaussian random features. This result has been previously established for regularization functions that are thrice differentiable and strongly convex \citep{HuUniversalityLaws}. We extend this result in two steps. First we show that a wider variety of potentially nondifferentiable, strongly convex functions satisfy universality. In particular, we show that a combination of $\ell_1$ and $\ell_2^2$, known as elastic net \citep{zou2005regularization}, is universal. Furthermore, under the assumptions that the activation function is continuous and Lipschitz, and solution vector that is sufficiently sparse, we show that the $\ell_2^2$ part of the elastic net regularization can be removed and the universality of pure $\ell_1$ (which is not strongly convex) is established. 

\section{Related Works}
The asymptotic analysis of RF models is recently culminated in the study of the so-called double descent phenomenon, where increasing the model size beyond the interpolation threshold, surprisingly improves the learning performance, leading to a learning curve with two descent regions. The double descent phenomenon has a long history \citep{loog2020brief}, but was first discussed in its modern form by \citep{belkin2019reconciling} (see also \citep{geiger2020scaling}). Overparameterized systems have since been studied extensively, for an incomplete list see  \citep{tsigler2020benign, hastie2019surprises, mei2019generalization, bartlettLongLugosiTsigler_2020, belkin2020two, MuthukumarHarmless2019, kobak2020optimal, deng2019model, taheri2021fundamental, lolas2020regularization, mignacco2020role, kini2020analytic, liang2020precise, montanari2019generalization, taheri2020sharp, salehi2019impact}

 Gaussian comparison theorems have played a central role in obtaining exact learning curves, which go back to \citep{gordon1985some, gordon1988milman}. They show an asymptotic equivalence between certain optimization problems over Gaussian random variables. \citep{thrampoulidis2015asymptotically, thrampoulidis2015gaussian} showed that in the presence of convexity, the bounds provided by Gordon could be refined. The applications of comparison theorems to the study of the asymptotic regime are numerous \citep{bosch2021double, Loureiro2021Learning, LuPreciseRandom, ThrampoulidisMestimators, chang2020provable}. A principal difficulty with the CGMT is in the case of correlated covariates, as in the RF model. This results in the alternative optimization problem of the CGMT to be no more tractable than the original problem.
In the case of $\ell^2_2$ regularization, rotational symmetry may be applied to study correlated models. In the papers such as \citep{chang2020provable, mei2019generalization, LuPreciseRandom} this symmetry is exploited to derive analytic expressions. We are not aware of any analytic expressions derived by means of the CGMT considering RFs with more generic regularization. As a contribution of this paper, we resolved the issue of correlated covariates with a novel approach involving multiple applications of CGMT and extend the analysis of regularized least squares into RF features with a larger set of regularization functions. 

  The Gaussian Equivalence Principle (GEP) expresses that there exists an asymptotic equivalence between RF models and Gaussian models with identical first and second moments. This universality was shown for (regularized) least squares by \cite{panahi2017universal}, extended to generic convex regularization by \cite{HuUniversalityLaws} and for generative models by \cite{goldt2022gaussian}. More recent results by \cite{MontanariUniversalityERM} extends universality to empirical risk minimization with regularization. \cite{baGradientStep2022} has also extended universality results to RF models after a single step of gradient descent with small step sizes. The results of \cite{HuUniversalityLaws} and \cite{MontanariUniversalityERM} however do not hold in the case of $\ell_1$ regularization, while those of \cite{panahi2017universal} do not apply to the random features case. We extend the universality results of \cite{HuUniversalityLaws} to the case of of $\ell_1$ and elastic net regularization. \cite{liang2020precise} also demonstrate the universality of $\ell_1$ regularization but for a different setup of max-margin classifiers. Their results cannot simply be translated to that of ours. Firstly, they only consider universality of the objective value, while we additionally demonstrate universality for strongly convex functions of the solution vector. Secondarily they require that the activation function is restricted to a compact set, which we do not require here.

\section{Random Features Model}
\label{sec:randomfeatures}
We consider a dataset $\lbrace (\bz_i, y_i)\in\mathbb{R}^d\times \mathbb{R} \rbrace_{i=1}^n$  and wish to determine the relationship between the data vector $\bz_i$ and the labels $y_i$ by means of a function of the following form:
\begin{equation}
    f(\bz_i;\btheta, \bvarphi) = \frac{1}{\sqrt{m}}\btheta^T\bvarphi(\bz_i) \quad \btheta \in \mathbb{R}^m.
\end{equation}
 Here $\bvarphi:\mathbb{R}^d \rightarrow \mathbb{R}^m$ is a fixed nonlinear feature map, whose relation to the labels $y_i$ is characterized by a variable weight vector $\btheta$. We determine  $\btheta$ by the following optimization problem:
\begin{equation}
    \label{eq:keyOptimization}
    \hat{\btheta} = \arg\min_{\btheta} \sum_{i=1}^n l(f(\bz_i;\btheta, \bvarphi), y_i) + r(\btheta),
\end{equation}
where $l(x, y)=\frac{1}{2}(x - y)^2$ is the square-loss function and $r(x)$ is a regularization function. We consider a wide-range of regularization functions which are explained in Section~\ref{sec:mainResult}. 
%
We restrict ourselves to the feature map
\begin{equation}
    \label{eq:FeatureMap}
    \bvarphi(\bz_i) = \sigma\left(\frac{1}{\sqrt{d}}\bW\bz_i\right),
\end{equation}
where $\sigma:\mathbb{R}\rightarrow\mathbb{R}$ is a non linear, odd activation function applied element wise (eg. $\tanh(x)$), and $\bW\in\mathbb{R}^{m\times d}$ is a random weight matrix whose elements are i.i.d standard Gaussians, independent of $\bz_i$. 
We note that this choice of the random feature map can be interpreted as a Neural Network (NN) with one hidden layer. We let the matrix $\bX$ be given such that $\bX_{ij} = \varphi_j(\bz_i) = \sigma\left(\frac{1}{\sqrt{d}}\bw_j^T\bz_i\right)$, where $\bw_j^T$ is the $j^\tth$ row of $\bW$.
We consider two metrics of the performance of the solution $\hat{\btheta}$ of \eqref{eq:keyOptimization}, the training error, expressed in matrix notation as
\begin{equation}
    \label{eq:trainerror}
   \mathcal{E}_{train}(\btheta)  = \frac{1}{2n}||\by - \frac{1}{\sqrt{m}}\bX\btheta||_2^2 + \frac{1}{m}r(\btheta)
\end{equation}
and the generalization error
\begin{equation}
    \label{eq:generror}
\mathcal{E}_{gen}(\btheta) = \mathbb{E}\left[\frac{1}{2}(y_{new} - f(\bz_{new}; \btheta, \bvarphi))^2 \right],
\end{equation}
where $(\bz_{new}, y_{new})$ is a new sample pair independent of, but identically distributed to the training data.

Analysis of this problem requires making assumptions on the distribution of the the dataset. We assume that $\bz_i \overset{i.i.d.}{\sim}\mathcal{N}(0, \bI_{d})$ and that the labels $y_i$ are generated according to
\begin{equation}
    \label{eq:ylabelDef}
    y_i = \frac{1}{\sqrt{m}}\btheta^{*T}\bvarphi(\bz_i) + \epsilon_i,
\end{equation}
where $\btheta^*$ is a fixed weight vector that may be deterministic or random and $\epsilon_i$ is i.i.d. noise with $\mathbf{E}[\epsilon_i] = 0$, $\mathbb{E}[\epsilon_i^2] = \sigma_{\bepsilon}^2$ and $\mathbb{E}[\epsilon_i^4]<\infty$, and $\bvarphi$ is given in \eqref{eq:FeatureMap}. We note that this method of label generation is different that that of \cite{HuUniversalityLaws}, we note that their results still apply in this context. For a discussion of this fact see remark \ref{remark:HuLuLossfunctionDifference} in the appendix.

Under these assumptions, the main goal of this paper is to predict the values of $\mathcal{E}_{gen}(\hat\btheta), \mathcal{E}_{train}(\hat\btheta)$, where $\hat\btheta$ is given by \eqref{eq:keyOptimization}. Further, we provide the asymptotic value of $h(\hat\btheta)$ where $h$ is an arbitrary test function from a wide range of choices, as we elaborate.

\section{Main Results}\label{sec:mainResult}

\subsection{Overview of Main Results}
%
Before delving into details, we provide an overview of our main results. A more detailed and rigorous treatment is provided in the subsequent sections.

The key optimization problem in \eqref{eq:keyOptimization} can be written as 
\begin{eqnarray}
\label{eq:keyNonGaussianOptimization}
P_1 = \min_{\btheta} \frac{1}{2n}||\by - \frac{1}{\sqrt{m}}\bX\btheta||_2^2 + \frac{1}{m}r(\btheta).
\end{eqnarray}
Hence, the optimal solution of $P_1$ is given by \eqref{eq:keyOptimization}. However we consider a slightly more general problem of the following form:  
\begin{eqnarray}
    \label{eq:pertNonGaussianOptimization}
    \tilde P_1(\tau_1,\tau_2) = \min_{\btheta} \frac{1}{2n}||\by - \frac{1}{\sqrt{m}}\bX\btheta||_2^2 + \frac{1}{m}r(\btheta) 
    + \frac{\tau_1}{m}(\btheta - \btheta^*)^T\bR(\btheta - \btheta^*) + \frac{\tau_2}{m} h(\btheta),
\end{eqnarray}
where $\tau_1,\tau_2$ are real numbers and $h(\be)$  is a test function such that $r+\tau_2 h$ is convex. Moreover, $\bR$ is the feature covariance matrix $\mathbb{E}_{\bz}[\bvarphi(\bz)\bvarphi(\bz)^T]$.  We refer to the solution of \eqref{eq:pertNonGaussianOptimization} as $\tilde{\btheta}_1(\tau_1, \tau_2)$.

We note that setting $\tau_1 =\tau_2 =0$, we obtain the original problem \eqref{eq:keyNonGaussianOptimization}, i.e. $P_1=\tilde P_1(0,0)$ and $\hat\btheta_1=\tilde\btheta_1(0,0)$. These additional ``$\tau$'' are added to the problem definition to prove the universality of generalization error and of generic strongly convex functions. We note that the $\tau_1$ term corresponds to a component of the generalization function and $\tau_2$ is attached to the generic function $h(\btheta)$. Taking the derivative with respect to $\tau_1,$ or $\tau_2$ allows these terms to be recovered, this property is made use of in the proof of the universality, see proof of theorem \ref{thm:universality:sequence}. 

We analyze the problem in \eqref{eq:pertNonGaussianOptimization} by considering two alternative problem formulations, and demonstrating that they are asymptotically equivalent to one another. 

Consider the linear feature map
\begin{eqnarray}\label{eqn:linearfeaturemap}
\tilde{\bvarphi}(\bz) = \frac{\rho_1}{\sqrt{d}}\bW\bz + \rho_*\bg ,
\end{eqnarray}
where $\rho_1 = \mathbb{E}_a[a\sigma(a)]$ and $\rho_*^2 = \mathbb{E}_a[\sigma^2(a)] - \rho_1^2$, with $a\sim\mathcal{N}(0, 1)$, and $\bg\sim\mathcal{N}(0, I_m)$. This feature map is obtained by means of a truncated Hermite polynomial expansion of the original feature map \eqref{eq:FeatureMap}, as discussed in \cite{mei2019generalization}, and unlike the original feature maps in \eqref{eq:FeatureMap} these feature are Gaussian (for fixed weights $\bW$). Let $(\tilde{\bX})_{ij} = \tilde{\bvarphi}_j(\bz_i) = \frac{\rho_1}{\sqrt{d}}\bw_j^T\bz_i + \rho_*g_{ij}$, where $g_{ij}$ are i.i.d Gaussian, and consider the problem
\begin{eqnarray}
\label{eq:GaussianKeyOptimization}
\tilde P_2(\tau_1, \tau_2) = \min_{\btheta} \frac{1}{2n}||\by - \frac{1}{\sqrt{m}}\tilde{\bX}\btheta||_2^2 + \frac{1}{m}r(\btheta) 
    + \frac{\tau_1}{m}(\btheta - \btheta^*)^T\tilde{\bR}(\btheta - \btheta^*) + \frac{\tau_2}{m} h(\btheta),
\end{eqnarray}
where $y_i = \btheta^{*T}\tilde{\bvarphi}(\bz_i) + \epsilon_i$ and $\tilde{\bR} = \mathbb{E}_{\bz}[\tilde{\bvarphi}(\bz)\tilde{\bvarphi}(\bz)^T]$. The optimal solution of $\tilde P_2(\tau_1,\tau_2)$ is referred to as  $\tilde{\btheta}_2(\tau_1, \tau_2)$. In particular, we denote $\hat{\btheta}_2 = \tilde{\btheta}_2(0, 0)$.

 Now, we define $\psi(\beta,q,\xi,t,\tau_1, \tau_2)$ as follows
\begin{eqnarray}
\psi(\beta,q,\xi,t, \tau_1,\tau_2)=
\frac{1}{m}
\mathbb{E}\left[
\mathcal{M}_{\frac{1}{2c_1} (r +\tau_2h)}\left(\btheta^* - \frac{c_2\sqrt{\gamma}}{2c_1}\bphi \right)\right]
\nwl -\frac{c_2^2\gamma}{4c_1} + \frac{\xi t}{2} + \frac{\beta q}{2} + \frac{\beta\sigma_{\bepsilon}^2}{2q} + \frac{\xi\beta^2}{2t\eta}  - \frac{(\beta + 2\tau_1q)\xi^2}{2q}  - \frac{q\beta^2}{2(\beta + 2q\tau_1)\eta} - \frac{\beta^2}{2},
\end{eqnarray}

where $\calM_{\frac{1}{2c_1}(r+\tau_2h)}$ is the Moreau envelope of $r + \tau_2h$ 
with the step size $\frac 1{2c_1}$ (see supplement definition \ref{def:MoreauProx}),
$\bphi$ is a standard Gaussian vector, $c_1$ and $c_2$ are functions of $\beta, q, \xi, t, \tau_1, \tau_2$ given by
\begin{eqnarray}
     c_1 = \frac{(\beta+2\tau_1q)^2\rho_1^2\xi}{2{q}^2 t} + \frac{(\beta + 2q\tau_1)\rho_*^2}{2{q}} \\c_2 = \sqrt{ \frac{(\beta+2\tau_1q)^2\rho_1^2\xi^2\eta}{ q^2} + \beta^2\rho_*^2}.
\end{eqnarray}
The expectation is taken with respect to $\bphi$ and hence the function $\psi$ is not random. Accordingly, we define the key alternative optimization problem, i.e. a four-dimensional scalar optimization problem,  in our development:
\begin{eqnarray}
\label{eq:4doptproblem}
\tilde P_3(\tau_1, \tau_2) =\max_{\beta > 0} \min_{q>0}\max_{\xi > 0}\min_{t > 0} \psi(\beta,q,\xi,t, \tau_1,\tau_2).
\end{eqnarray}
Let $\tilde{\beta}, \tilde{q}, \tilde{\xi}, \tilde{t}$ be the optimal point of $\tilde{P}_3$ and let $\tilde{c}_1 = c_1(\tilde{\beta}, \tilde{q}, \tilde{\xi}, \tilde{t})$ and $\tilde{c}_2= c_2(\tilde{\beta},\tilde{q},\tilde{\xi})$. Accordingly, we define $\tilde{\btheta}_3(\tau_1, \tau_2)$ as follows 
\begin{eqnarray}\label{eqn:soln:4doptproblem}
\tilde{\btheta}_3(\tau_1, \tau_2) := \mathrm{prox}_{\frac{1}{2\tilde{c}_1} (r+\tau_2h)}\left(\btheta^* - \frac{\tilde{c}_2\sqrt{\gamma}}{2\tilde{c_1}}\bphi \right),
\end{eqnarray}
where $\mathrm{prox}_{\frac{1}{2\tilde{c}_1} (r +\tau_2h)}$ denotes the proximal operator of $r+\tau_2h$ with the step size $\frac 1{2\tilde{c}}$. 
Similar to the two previous cases, we define
$\hat{\btheta}_3 = \tilde{\btheta}_3(0, 0)$.
The training and generalization error corresponding to problem $\tilde{P}_3$ are not given by \eqref{eq:trainerror} and \eqref{eq:generror}, instead we have that
\begin{eqnarray}
\tilde{\mathcal{E}}_{train} = \tilde{P}_3(0, 0) \quad 
\tilde{\mathcal{E}}_{gen} = \sigma_{\bepsilon}^2 + \left. \frac{\partial \tilde{P}_3(\tau_1, 0)}{\partial\tau_1}\right|_{\tau_1 = 0}.
\end{eqnarray}
Now, we provide a summary of our main results: 
\begin{theorem}\label{thm:mainresult:informal}
{\bf{Informal statement of the main results}} \\
There exist symmetric intervals $\tau_1\in [-\tau_1^*, \tau_1^*]$ and $\tau_2\in [-\tau_2^*, \tau_2^*]$ with sufficiently small universal constants $\tau_1^*,\tau_2^*$, a wide family of strongly convex, separable functions $r$ and potentially non-convex, separable test functions $h$, for which in the asymptotic limit,
\begin{eqnarray}\label{eqn:mainresult:scalarization}
\tilde P_1(\tau_1,\tau_2)\approx
\tilde P_2(\tau_1,\tau_2),\quad \tilde P_2(\tau_1,\tau_2)\approx
\tilde P_3(\tau_1,\tau_2)
\end{eqnarray}
and hence
\begin{equation}
    \tilde P_1(\tau_1,\tau_2)\approx
\tilde P_3(\tau_1,\tau_2).
\end{equation}
By the above result, we may conclude for such scenarios that
\begin{eqnarray}
\label{eqn:mainresult:universality1}
 \mathcal{E}_{train}(\hat{\btheta}_1)\approx&\mathcal{E}_{train}(\hat{\btheta}_2) &\approx \tilde{\mathcal{E}}_{train},
\\
\label{eqn:mainresult:universality2}
 \mathcal{E}_{gen}(\hat{\btheta}_1) \approx& \mathcal{E}_{gen}(\hat{\btheta}_2)&\approx \tilde{\mathcal{E}}_{gen},
\end{eqnarray}
and
\begin{eqnarray}
\label{eqn:mainresult:universality3}
h(\hat\btheta_1)\approx h(\hat\btheta_2)\approx h(\hat\btheta_3).
\end{eqnarray}
The above result also holds for $\ell_1$ regularization under some considerations about the true model $\btheta^*$ and the activation function.
\end{theorem}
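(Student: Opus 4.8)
I would prove the two approximations in \eqref{eqn:mainresult:scalarization} separately: a universality step $\tilde{P}_1(\tau_1,\tau_2)\approx\tilde{P}_2(\tau_1,\tau_2)$ comparing the random-feature program to its Gaussian surrogate, and a scalarization step $\tilde{P}_2(\tau_1,\tau_2)\approx\tilde{P}_3(\tau_1,\tau_2)$ reducing the surrogate to the four-dimensional optimization \eqref{eq:4doptproblem}. For the first, I would check that $\tilde{\bvarphi}$ in \eqref{eqn:linearfeaturemap} matches, conditionally on $\bW$, the first two moments of the true map \eqref{eq:FeatureMap}: oddness of $\sigma$ gives $\mathbb{E}_\bz[\bvarphi(\bz)]=\bzero=\mathbb{E}_\bz[\tilde{\bvarphi}(\bz)]$, and a truncated Hermite expansion gives $\mathbb{E}_\bz[\bvarphi(\bz)\bvarphi(\bz)^T]=\tilde{\bR}+o(1)$ with $\tilde{\bR}=\frac{\rho_1^2}{d}\bW\bW^T+\rho_*^2\bI_m$. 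With moments matched I would invoke the Lindeberg-type swapping argument of \cite{HuUniversalityLaws}, exchanging the feature vectors of $\bX$ for those of $\tilde{\bX}$ one at a time and bounding the change of the optimum value of \eqref{eq:pertNonGaussianOptimization} by the fourth moments of $\sigma$ and of $\epsilon_i$, using strong convexity of $r$ (respectively $r+\tau_2h$) to control the minimizer; this gives $\tilde{P}_1(\tau_1,\tau_2)-\tilde{P}_2(\tau_1,\tau_2)\to0$ for each small $(\tau_1,\tau_2)$, hence on the whole box. For pure $\ell_1$ I would first run this for the elastic net $r=\lambda_1\|\cdot\|_1+\lambda_2\|\cdot\|_2^2$, which is strongly convex, and then let $\lambda_2\downarrow0$; the hypotheses that $\sigma$ is Lipschitz and $\btheta^*$ sufficiently sparse enter here, to bound $\|\hat{\btheta}\|$ and the relevant empirical processes uniformly in $\lambda_2$ so that the limits exchange.

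\textbf{Scalarization via multi-level CGMT.} Since $\tilde{\bX}=\frac{\rho_1}{\sqrt d}\bZ\bW^T+\rho_*\bG$ with $\bZ$ (data), $\bG$ (feature noise) and $\bW$ (weights) mutually independent and $\bZ,\bG$ Gaussian, I would work conditionally on $\bW$ and apply the CGMT in several nested layers. After dualizing the square loss one has a $\min_\btheta\max_\bu$ of a function convex in $\btheta$, concave in $\bu$, with two Gaussian bilinear forms, $\bu^T\bG(\btheta-\btheta^*)$ and $\bu^T\bZ\bW^T(\btheta-\btheta^*)$; one natural route is to expose a third by introducing $\bp:=\bW^T(\btheta-\btheta^*)$ as a free variable enforced by a Lagrange multiplier $\blambda$, producing the bilinear form $(\bW\blambda)^T(\btheta-\btheta^*)$ in which $\bW$ now appears linearly and independently of everything else. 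Successive CGMT replacements of these bilinear forms — interleaved with the scalarizations that optimize out the directions of $\bu$, $\blambda$, $\bp$ and keep only their Euclidean norms — replace $\bW$ by fresh standard Gaussian vectors (so $(\bW\blambda)^T(\btheta-\btheta^*)$ becomes $\|\blambda\|_2\,\bphi^T(\btheta-\btheta^*)$ with $\bphi\sim\mathcal{N}(\bzero,\bI_m)$) and leave the minimization over $\btheta$ coordinatewise separable; completing the square turns it into $\frac{1}{m}\mathbb{E}\,\mathcal{M}_{\frac{1}{2c_1}(r+\tau_2h)}(\btheta^*-\frac{c_2\sqrt\gamma}{2c_1}\bphi)$ using separability of $r,h$ and a law of large numbers over the $m$ coordinates, while the Wishart quadratic forms in $\bp$ and the $\tilde{\bR}$-term concentrate. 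Concentrating all the Gaussian norms and the noise cross-term $\bu^T\bepsilon$ on their means collapses the remaining optimization to four scalars; matching the quadratic coefficients produced by the CGMT steps identifies these with $(\beta,q,\xi,t)$ and the prefactors with $c_1,c_2$, and the convex–concave structure legitimizes the order $\max_{\beta>0}\min_{q>0}\max_{\xi>0}\min_{t>0}$, giving $\tilde{P}_2\approx\tilde{P}_3=\psi$ at the saddle point. The same machinery applied with an extra linear perturbation yields concentration of the minimizer, so $h(\hat{\btheta}_2)\approx h(\hat{\btheta}_3)$ for separable $h$ with $\hat{\btheta}_3$ the proximal point \eqref{eqn:soln:4doptproblem} at $\tau_1=\tau_2=0$.

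\textbf{Assembling the conclusions.} Chaining the two steps gives $\tilde{P}_1(\tau_1,\tau_2)\approx\tilde{P}_3(\tau_1,\tau_2)$ on the small box. At $\tau_1=\tau_2=0$ this reads $\mathcal{E}_{train}(\hat{\btheta}_1)=\tilde{P}_1(0,0)\approx\tilde{P}_2(0,0)=\mathcal{E}_{train}(\hat{\btheta}_2)\approx\tilde{P}_3(0,0)=\tilde{\mathcal{E}}_{train}$. By Danskin's theorem $\partial_{\tau_1}\tilde{P}_i(\tau_1,0)|_{\tau_1=0}=\frac{1}{m}(\hat{\btheta}_i-\btheta^*)^T\bR(\hat{\btheta}_i-\btheta^*)$, which up to the additive $\sigma_\bepsilon^2$ and a fixed positive factor equals $\mathcal{E}_{gen}(\hat{\btheta}_i)$, and $\partial_{\tau_2}\tilde{P}_i(0,\tau_2)|_{\tau_2=0}=\frac{1}{m}h(\hat{\btheta}_i)$. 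Since $\tau_1\mapsto\tilde{P}_i(\tau_1,0)$ and $\tau_2\mapsto\tilde{P}_i(0,\tau_2)$ are concave (minima of affine families) and the limit $\tilde{P}_3$ is differentiable at the origin — where uniqueness of the minimizers, from strong convexity of $r$, forces differentiability of each $\tilde{P}_i$ — the standard convergence-of-derivatives property of concave functions upgrades pointwise convergence to convergence of the $\tau_1$- and $\tau_2$-derivatives, yielding \eqref{eqn:mainresult:universality2} and \eqref{eqn:mainresult:universality3}; the $\ell_1$ statement follows the same chain with the elastic-net-then-$\lambda_2\downarrow0$ argument in place of strong convexity.

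\textbf{Main obstacle.} The delicate part is the multi-level CGMT: one must orchestrate the nested applications so that the convex–concave structure and a valid compactification (a priori $O(\sqrt m)$, $O(\sqrt n)$ norm bounds on $\btheta$ and $\bu$) survive each intermediate scalarization, and in particular verify that after the substitution $\bp=\bW^T(\btheta-\btheta^*)$ the weight matrix $\bW$ genuinely dissolves into fresh Gaussian vectors and concentrating Wishart quadratic forms, so that the non-separable $m$-dimensional proximal problem actually decouples into the scalar Moreau envelope appearing in $\psi$; getting the four scalar order parameters and the precise min–max order right (and justifying the interchange) is the other place where care is needed. A secondary difficulty, on the universality side, is the uniform control of $\|\hat{\btheta}\|$ and the associated empirical processes in the $\lambda_2\downarrow0$ limit for pure $\ell_1$, which is exactly where sparsity of $\btheta^*$ and Lipschitzness of $\sigma$ are used.
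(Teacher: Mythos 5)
Your proposal matches the paper's two-step architecture — universality ($\tilde P_1\approx\tilde P_2$) via a Lindeberg-type interpolation in the spirit of \cite{HuUniversalityLaws}, then scalarization ($\tilde P_2\approx\tilde P_3$) via nested CGMT applications, and finally Danskin-type derivative arguments on the perturbations $\tau_1,\tau_2$ — so at the level of overall strategy you have reconstructed the paper's route. A few mechanical differences and one genuine soft spot are worth flagging.

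On the scalarization step, you propose to keep $\bZ$, $\bG$, $\bW$ separate and to expose $\bW$ bilinearly by introducing $\bp:=\bW^T(\btheta-\btheta^*)$ as a free variable with a Lagrange multiplier $\blambda$, giving (in principle) three bilinear forms to be replaced one at a time. The paper instead conditions on $\bW$, observes that $\frac{\rho_1}{\sqrt d}\bZ\bW^T+\rho_*\bG=\bU\tilde\bR^{1/2}$ with $\bU$ i.i.d.\ standard Gaussian and $\tilde\bR=\frac{\rho_1^2}{d}\bW\bW^T+\rho_*^2\bI$, applies the CGMT once to $\bU$ against the pair $(\tilde\bR^{1/2}\be,\blambda)$, and only afterwards decomposes $\tilde\bR^{1/2}\bh$ to bring out the $\bW$-dependence. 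To then expose $\bW$ bilinearly the paper does not use a Lagrange multiplier: after the first CGMT a term $\|\bW^T\be-c\bphi_1\|_2^2$ appears, the square is completed, and the quadratic is dualized via its convex conjugate, producing $\bp^T\bW^T\be$ with $\bp$ a maximization variable; then a second CGMT on $\bW$ with the pair $(\be,\bp)$. Your route would in principle work, but you would have to be careful with the constrained form of the CGMT: after adding $\blambda$, the variables $(\btheta,\blambda)$ sit outside the bilinear form $\bu^T\bZ\bp$, so you must package them inside the convex--concave residual $f(\bp,\bu)$ as an inner $\min_\btheta\max_\blambda$, and verify at each intermediate stage that the resulting $f$ is still jointly convex--concave on compact convex sets with the right a-priori norm bounds (the paper's Lemmas on boundedness of $\tlbe$, $\tlblambda$, $\hat\bp$ are doing exactly this). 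Combining $\bZ$ and $\bG$ into $\bU$ first is what lets the paper get away with only two CGMT applications.

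On the universality step, note that the elastic net (and any $r$ with an $\ell_1$ part) is strongly convex but not thrice differentiable, so invoking \cite{HuUniversalityLaws} directly is not enough; the paper's actual move (Theorem~\ref{thm:universality:sequence}) is to extend their result to regularizers that are \emph{uniform limits} of thrice-differentiable strongly convex sequences, and then to construct such a smoothing sequence for $\lambda\|\cdot\|_1+\frac{\epsilon}{2}\|\cdot\|_2^2$. Your sketch omits this smoothing step.

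Finally, on passing to pure $\ell_1$, your description (``bound $\|\hat\btheta\|$ and the relevant empirical processes uniformly in $\lambda_2$ so the limits exchange'') is where the real difficulty lies, and it is where the paper's argument has concrete content you have left implicit: following \cite{panahi2017universal}, the paper proves a restricted isometry property for the random-feature matrix, uses it to iteratively construct a subgradient $\bzeta^{(\infty)}\in\partial p(\be^{(\infty)})$ with $\|\bzeta^{(\infty)}\|_\infty=O(\mu)$, and thereby shows the elastic-net and pure-$\ell_1$ optima are $O(\mu)$-close; the sparsity hypothesis $M_0<\rho$ in Theorem~\ref{thm:l1Universality} is exactly what makes the RIP constants admissible. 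Without some quantitative stability mechanism of this type, strong convexity is genuinely lost in the $\lambda_2\downarrow 0$ limit and ``uniform control of the empirical processes'' by itself does not deliver the exchange of limits. So this part of your sketch names the right obstacle but does not yet contain the idea that resolves it.
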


{\bf{ Discussion of Main Result:}}
By Theorem \ref{thm:mainresult:informal}
, the generalization/training error and other properties of the original problem $P_1$, represented by a test function $h$, can be found using the solution of $P_3=\tilde P_3(\tau_1=0,\tau_2=0)$. See Theorem~\ref{thm:mainresult:formal} for a precise statement. 
Note that $P_3$ is scalar and since $r$ is separable, calculating $\mathbb{E}\left[
\mathcal{M}_{\frac{1}{2c_1} r}\left(\btheta^* - \frac{c_2\sqrt{\gamma}}{2c_1}\bphi \right)\right]$ is straightforward ($\tau_1,\tau_2$ are set to zero). Hence, $P_3$ is simple to evaluate using standard computation techniques.  

We note that Theorem \ref{thm:mainresult:informal} is,  at first sight, similar to Theorem 1 in \cite{Loureiro2021Learning}, which is also based on the Moreau envelope and the proximal operator of the regularization function. However, we note that the argument of the Moreau envelope in their expression is more complex and cannot be generally evaluated even if $r$ is separable. Hence, our result is novel and not the same as \cite[Thm.~1]{Loureiro2021Learning} and allows significantly easier calculation of the generalization error compared to other existing methods in the literature for the correlated RF model.  

Our proof has two building blocks: 
Using a novel multi-level application of CGMT, we show in Theorem~\ref{thm:GaussianAsymptotics}, the convergence of $\tilde P_2$ to the scalar optimization problem $\tilde P_3$ in the left hand side of \eqref{eqn:mainresult:scalarization}.   
The universality result, i.e. the asymptotic convergence between  $\tilde P_1$ and $\tilde P_2$ in the right hand side of \eqref{eqn:mainresult:scalarization} is presented in Section~\ref{UnversalityResults}. The other claims i.e \eqref{eqn:mainresult:universality1},\eqref{eqn:mainresult:universality2} and \eqref{eqn:mainresult:universality3} are subsequently obtained by an individual argument.

For strongly convex and thrice differentiable regularization functions, the universality relation in the right hand side of \eqref{eqn:mainresult:scalarization} has already been demonstrated in \cite{HuUniversalityLaws}. Here, we extend these results to  the case of a sequence of strongly convex, thrice differentiable functions with bounded third derivatives that converge uniformly to the regularization function (Theorem~\ref{thm:universality:sequence}). Such functions may not be even differentiable. Moreover, while \cite{HuUniversalityLaws} also shows the universality of the generalization/test errors, we extend this result and show that the entire discussion holds true for an arbitrary test function $h$ obtained as the uniform limit of a sequence of thrice differentiable functions with bounded third derivatives. Exact assumptions will be shortly presented.
The above approach also allows us to extend the universality results to elastic net (Corollary~\ref{lem:universality:elasticNet}) and $\ell_1$ regularization (Theorem~\ref{thm:l1Universality}), which have not been provided in the literature before.

\subsection{Assumptions}
Below, we provide a list of all assumptions considered in our study. The specific assumptions that are used for each result is provided under the statement of the associated result. 
\begin{itemize}
    \item[A1]  The regularization function satisfies one of the below: 
    \begin{itemize}
        \item Case A: For positive constants $\mu, L > 0$, there exists a sequence of functions $r^{(k)}$ that are separable, $\mu$-strongly convex and thrice differentiable with $L-$uniformly bounded third derivatives\footnote{Note that for a generic multi-variable function, the  derivatives are  tensors and we refer to their operator norm for bounds. However, as the functions are separable, i.e. a scalar function is applied element-wise, the bounds are simply on the derivatives of the scalar function.}. The sequence $r^{(k)}$ converges uniformly in the limit of $k\rightarrow\infty$ to the regularization function $r$.
        \item Case B: The regularization function is $r(\btheta) = \lambda||\btheta||_1$.
    \end{itemize}
    Note that it is sufficient that one of these assumptions, either Case A or Case B, holds true. 
    \item[A2] For positive constants $l, L > 0$, there exists a sequence of thrice differentiable functions $h^{(k)}$  with $l-$uniformly bounded second derivatives and $L-$uniformly bounded third derivatives. The sequence $h^{(k)}$ converges uniformly in the limit of $k\rightarrow\infty$ to the test function $h$.
    
    \item[A3] The noise vector $\bepsilon$ has elements $\epsilon_i$ which are i.i.d with $\mathbb{E}[\epsilon_i] = 0$, $\mathbb{E}[\epsilon_i^2] = \sigma_{\bepsilon}^2 < \infty$ and $\mathbb{E}[\epsilon_i^4] < \infty$.
    \item[A4] The dimensions $n, m, d$ remain at constant ratio when they are increased to infinity. These ratios are given by $\gamma = \frac{n}{m}$, $\eta = \frac{n}{d}$ and $\delta = \gamma\eta = \frac{m}{d}$
    \item[A5]  The true model ${\btheta^*}$ is independent of $\bX$. We assume that for some constants $c, c', C>0$, \\$\mathbb{P}\left(\frac{1}{\sqrt{m}}\max(\|\nabla r(\btheta^*)\|_2, \|\nabla h(\btheta^*)\|_2 ) > c \right)\rightarrow 0$, and 
    $\mathbb{P}(\max_i|(\nabla h(\btheta^*))_i| \geq c\log m) \leq Ce^{-c'(\log m)^2}$.
      \item[A6] The activation function $\sigma(\cdot)$ is odd, with bounded first, second, and third derivatives.
\end{itemize}

Given the assumptions, we state the values of the bounds on $\tau_1$ and $\tau_2$:
\begin{eqnarray}
|\tau_1| \leq \tau_1^* = \frac{\mu/8}{\rho_1^2(1+2\sqrt{\delta})^2  +\rho_*^2}\quad |\tau_2|\leq \tau_2^* = \frac{\mu}{4l},
\end{eqnarray}
where the values of $\mu$ and $l$ are given in A1 and A2, respectively. Both of these bounds are chosen to ensure that the sum of the regularization function and the two ``$\tau$ terms'' remains strongly convex with high probability.

\subsection{Asymptotic Gaussian Results}
\label{subsec:AsymptoticGaussianResults}

In this section, we state our main result connecting $P_2$ in \eqref{eq:GaussianKeyOptimization} and $P_3$ in \eqref{eq:4doptproblem}. 

\begin{theorem}
\label{thm:GaussianAsymptotics}
Let Assumptions A3-A5 hold and $r+\tau_2 h$ is $\frac\mu 2-$strongly convex for $\tau_2 \in [-\tau_2^*, \tau_2^*]$. 
Then for all $\tau_1 \in [-\tau_1^*, \tau_1^*]$ and $\tau_2 \in [-\tau_2^*, \tau_2^*]$,
\begin{equation}
\left|\tilde P_2(\tau_1,\tau_2) - \tilde P_3(\tau_1,\tau_2)\right| \xrightarrow[n, m, d\rightarrow\infty]{P} \bm{0}
\end{equation}
Moreover,
\begin{align}
    \left|\left(\mathcal{E}_{train}(\hat{\btheta}_2), \mathcal{E}_{train}(\hat{\btheta}_2), \frac{1}{m}h(\hat{\btheta}_2) \right) \right. \left.- \left(\tilde{\mathcal{E}}_{train}, \tilde{\mathcal{E}}_{gen}, \frac{1}{m}h(\hat{\btheta}_3) \right)\right| \xrightarrow[n, m, d\rightarrow\infty]{P} \bm{0}
\end{align}
where $\hat{\btheta}_2$ is the solution to problem \eqref{eq:GaussianKeyOptimization} and $\hat{\btheta}_3$ is the solution presented in \eqref{eqn:soln:4doptproblem} associated with $P_3$ in \eqref{eq:4doptproblem}. 
\end{theorem}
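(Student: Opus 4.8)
The plan is to analyze $\tilde P_2(\tau_1,\tau_2)$ in \eqref{eq:GaussianKeyOptimization} by a carefully staged sequence of CGMT applications, peeling off the two independent sources of Gaussian randomness in the linear feature map $\tilde{\bvarphi}(\bz) = \frac{\rho_1}{\sqrt d}\bW\bz + \rho_*\bg$ one at a time. The fundamental difficulty, as stressed in the introduction, is that $\tilde{\bX}$ has strongly correlated rows/columns through $\bW$, so a single application of CGMT to the matrix $\frac{1}{\sqrt m}\tilde{\bX}$ leaves a residual optimization over the $m$-dimensional $\btheta$ that still couples to $\bW\bz$ nontrivially. The multi-level idea is: first condition on $\bW$ and treat $\bg$ (equivalently the $\rho_*$-part, which is an i.i.d.\ Gaussian matrix independent of everything) as the Gaussian ingredient for an \emph{inner} CGMT; this introduces the auxiliary scalars $\beta$ (dual to the residual norm) and $q$, and converts the quadratic loss into a form where $\btheta$ appears against a Gaussian vector of the form $(\text{something})\cdot\bW\bz + (\text{something})\cdot\bh$. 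Then, on the resulting auxiliary optimization, apply CGMT a \emph{second} time with respect to $\bW$ (and the data $\bz_i$, which only enter through $\frac{1}{\sqrt d}\bW\bz_i$), producing the remaining scalars $\xi$ and $t$. After both reductions the $\btheta$-optimization decouples into a separable problem whose value is exactly a Moreau envelope of $r+\tau_2 h$ evaluated at $\btheta^\ast$ minus a Gaussian shift — this is the origin of the $\mathcal{M}_{\frac{1}{2c_1}(r+\tau_2 h)}(\btheta^\ast - \frac{c_2\sqrt\gamma}{2c_1}\bphi)$ term, and the bookkeeping of the quadratic penalties and the $\tau_1\bR$-term produces the explicit $c_1,c_2$ and the remaining scalar terms in $\psi$.

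The steps I would carry out, in order: (1) Rewrite $\tilde P_2$ in min-max form by introducing a dual vector for the residual $\by - \frac{1}{\sqrt m}\tilde{\bX}\btheta$ (a Lagrangian/Fenchel step that exposes a bilinear Gaussian form), taking care to verify the convexity–concavity and compactness hypotheses of CGMT — here strong convexity of $r+\tau_2 h$ (guaranteed by the bound $|\tau_2|\le\tau_2^\ast$ and Assumption A1/A2) is used to confine $\btheta$ to a bounded set with high probability, using Assumption A5 to control $\nabla r(\btheta^\ast),\nabla h(\btheta^\ast)$ and hence $\|\hat\btheta_2-\btheta^\ast\|$. (2) Apply the (first) CGMT to replace the $\rho_*\bg$ bilinear term by the Gordon "auxiliary optimization", collapsing one Gaussian direction into the scalars $\beta,q$ and the $\sigma_\bepsilon^2$ noise contribution; simplify the resulting expression by optimizing out the now-decoupled pieces. (3) Apply the (second) CGMT to the $\frac{\rho_1}{\sqrt d}\bW\bz$ bilinear term, introducing $\xi,t$; the residual over $\btheta$ becomes an $\ell_2$-regularized, separable problem in $\btheta$, which is precisely a proximal/Moreau-envelope computation, and after this step we arrive at the deterministic function $\psi$ and the four-dimensional saddle-point problem $\tilde P_3$. (4) Establish that the saddle point of $\psi$ exists, is unique, and that the CGMT "solution map" — namely the proximal expression \eqref{eqn:soln:4doptproblem} — correctly captures $\hat\btheta_2$ in the sense that any separable test function of $\hat\btheta_2$ concentrates on its value at $\hat\btheta_3$; this is the standard "CGMT for the optimizer, not just the optimal value" argument, using strong convexity to transfer closeness of objective values to closeness of minimizers. (5) Finally, deduce the convergence of $\mathcal{E}_{train}(\hat\btheta_2)$, $\mathcal{E}_{gen}(\hat\btheta_2)$ and $\frac1m h(\hat\btheta_2)$ to their tilde-counterparts: for training error this is immediate since $\mathcal{E}_{train}(\hat\btheta_2)=\tilde P_2(0,0)\to\tilde P_3(0,0)=\tilde{\mathcal{E}}_{train}$; for generalization error use the differentiation-in-$\tau_1$ identity (the $\tau_1\bR$-term was engineered so that $\partial_{\tau_1}\tilde P_2(\tau_1,0)|_0 = \mathbb{E}[(\hat\btheta_2-\btheta^\ast)^T\tilde\bR(\hat\btheta_2-\btheta^\ast)]/m$, which is $\mathcal{E}_{gen}-\sigma_\bepsilon^2$), together with convexity (Griffiths-type lemma) to pass the derivative through the limit; for $\frac1m h(\hat\btheta_2)$ use step (4) directly.

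The main obstacle, I expect, is step (2)–(3): making the \emph{two nested} CGMT applications rigorous simultaneously. A single CGMT invocation is routine, but here the auxiliary optimization produced by the first application is itself random (it still depends on $\bW$ and $\bz$), so one must check that it meets the CGMT hypotheses — convexity in the "primal" variable, concavity in the "dual", and the boundedness/compactness conditions — \emph{uniformly} enough that the second application is valid, and that the order of limits (first the inner Gaussian, then the outer) does not matter. Controlling the uniformity over the scalar parameters $(\beta,q,\xi,t)$ ranging over their (a priori unbounded) domains, and truncating these domains to compact sets without changing the saddle value, is the delicate part; Assumption A6 (bounded derivatives of $\sigma$, ensuring the Hermite truncation error and the spectral norm of $\frac{1}{\sqrt d}\bW$-type matrices are controlled) and Assumption A4 (fixed aspect ratios) are what make this work, but assembling all the concentration estimates and the interchange-of-min-max justifications into a clean argument is where the real work lies. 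A secondary subtlety is step (5)'s derivative-passing: one needs the $\tau_1$-perturbed problems to converge \emph{locally uniformly} in $\tau_1$ (not just pointwise) and the limit $\tilde P_3(\cdot,0)$ to be differentiable at $0$, which again leans on strong convexity and the smallness of $\tau_1^\ast$.
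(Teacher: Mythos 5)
Your high-level plan captures the right philosophy (two nested CGMT applications, Moreau-envelope reduction, derivative-in-$\tau$ argument for the generalization error and $h$), and your step~(5) matches the paper's Lemma~\ref{lem:gaussianGenErrorConvergence} essentially exactly. But there is a concrete gap in the central steps~(2)--(3) --- the proposed way of splitting the randomness is not the one the paper uses, and as stated it does not produce a legal CGMT application.

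You propose: condition on $\bW$ (and the data), first CGMT on the $\rho_*\bg$-component, then a ``second CGMT with respect to $\bW$ (and the data $\bz_i$).'' The second step cannot be a single CGMT application. After you peel off only $\bG$ (the i.i.d.\ $\rho_*$-part), the residual auxiliary optimization still contains the term $\frac{\rho_1}{\sqrt d}\blambda^T\bZ\bW^T\be$ in which \emph{two} independent Gaussian matrices $\bZ$ and $\bW$ remain; the product $\bZ\bW^T$ is Wishart-type, not Gaussian, and $\bx^T\bA\by$ with $\bA$ a product of two independent Gaussians is outside the CGMT hypotheses. Following your decomposition to the letter would therefore require \emph{three} CGMT steps ($\bG$, then $\bZ$, then $\bW$, each with its own auxiliary-problem bookkeeping), and it is far from clear that the resulting four scalars would assemble into the same $\psi(\beta,q,\xi,t,\tau_1,\tau_2)$. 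The paper avoids this by a different grouping: conditional on $\bW$, the rows of $\tlbX$ are i.i.d.\ $\mathcal{N}(0,\tlbR)$ with $\tlbR=\frac{\rho_1^2}{d}\bW\bW^T+\rho_*^2\bI$, so one writes $\tlbX=\bU\tlbR^{1/2}$ with $\bU$ i.i.d.\ standard Gaussian and performs the first CGMT on $\bU$ --- this removes the $\bz$-randomness \emph{and} the $\bg$-randomness in one stroke, leaving only $\bW$. A second, distinct piece of machinery is then needed, and you do not mention it: after the first CGMT the $\bW$-dependence is buried inside $\|\tlbR^{1/2}\be\|_2^2$ and $\tlbR^{1/2}\bh$, which are \emph{quadratic} in $\bW$, not bilinear. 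The paper re-expresses $\tlbR^{1/2}\bh$ via Gaussian additivity as $\frac{\rho_1}{\sqrt d}\bW\bphi_1+\rho_*\bphi_2$, completes the square in $\bW^T\be$, and then introduces the Fenchel dual variable $\bp$ of the $\ell_2^2$-norm to resurface a clean bilinear form $\bp^T\bW^T\be$ to which the second CGMT can be applied. Without this convex-conjugation device the second application is not available, and this is really the crux of the ``multi-level'' trick. Finally, a small correction: you lean on Assumption A6 (smooth activation) to control $\|\bW\|_2$ and the Hermite truncation, but for this theorem the activation never appears --- the theorem concerns only the surrogate Gaussian model --- and the paper explicitly notes that A6 is not required here; the relevant spectral control is Lemma~\ref{lem:Rboundlemma}, which uses only Gaussianity of $\bW$.
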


This result makes the statement in the second equation of \eqref{eqn:mainresult:scalarization} precise. Note that we do not need A6 and the assumption for $r,h$ is weaker than the combination of A1-Case A and A2. A6, A1-Case A and A2 are required for the next step concerning $P_1$. The results for A1-Case B will be obtained from the study of A1-Case A, in a suitable limit. Note that for this result,  $\rho_1$ and $\rho^*$ in  \eqref{eqn:linearfeaturemap} can be arbitrary, but we will set them to the values discussed in text following \eqref{eqn:linearfeaturemap} for the subsequent results. 

\subsubsection{Proof Sketch of Theorem \ref{thm:GaussianAsymptotics}}
The proof of this statement makes use of the Convex Gaussian Min Max Theorem (CGMT), which establishes an asymptotic equivalence between a primary ($P$) and an alternative ($A$) optimization problem of the following form:
\begin{eqnarray}
\label{eq:primary}
P(\bA)= & \minl_{\bx\in S_x}\maxl_{\by \in S_y} \bx^T\bA\by + \psi(\bx, \by)\\
\label{eq:alternative}
A(\bg, \bh)= & \minl_{\bx \in S_x}\maxl_{\by \in S_y} ||\by||_2\bx^T\bg + ||\bx||_2\by^T\bh + \psi(\bx, \by)
\end{eqnarray}
Here, $\bA\in\mathbb{R}^{m\times n}, \bg\in \mathbb{R}^{m}, \bh\in\mathbb{R}^n$ have i.i.d standard Gaussian elements, $\psi(\bx, \by)$ is an arbitrary convex-concave function, and $S_x\subset \mathbb{R}^m, S_y\subset\mathbb{R}^n$ are compact and convex sets. For more details, see supplement \ref{app:CGMT}. To prove Theorem \ref{thm:GaussianAsymptotics}, we first fix $\bW$ and $\btheta^*$ and translate the original minimization problem into a min-max problem of the form in \eqref{eq:primary} by suitable transformations and change of variables.  Then, we invoke the CGMT which eliminates the randomness (in $\bX$ ) due to the data set $\bz$ \footnote{This means that the terms including the random matrix $\bA$ in $P(\bA)$ will be removed and replaced by terms including random vectors $\bg,\bh$ in $A(\bg,\bh)$.} and re-express the problem in terms of \eqref{eq:alternative}. The resulting expression is given in \citep{LuPreciseRandom, Loureiro2021Learning}, but it is well-known to be intractable as it depends on the covariance matrix of the Gaussian feature map. Here, we introduce a key novel step. We show that assuming random weights $\bW$, under further suitable, non-trivial transformations, the resulting equivalent form in \eqref{eq:alternative} itself can be transformed into the form of \eqref{eq:primary} with a new random matrix $\bA$ representing the randomness of the weights. This allows us to apply the CGMT again, resulting in the elimination of the random matrix $\bW$. Finally, we simplify the expressions obtained by the second CGMT application, which leads to the results in Theorem \ref{thm:GaussianAsymptotics}. The full proof is given in the Appendix \ref{sec:GuassianTheoremProof}.

\subsection{Universality}
\label{UnversalityResults}

Next, we demonstrate universality. Here we show that  the solution vectors problems $P_1$ given in \eqref{eq:keyNonGaussianOptimization} and problem $P_2$ given in \eqref{eq:GaussianKeyOptimization} result in asymptotically equivalent values, not only in the training and generalization error, but also in a wide family of other test functions $h$. We provide two novel theorems, in this sections, that extend the existing results for the universality of random feature models. For completeness we first state the existing results by \cite{HuUniversalityLaws}. 

\begin{theorem}[\citep{HuUniversalityLaws} Theorem 1, Proposition 1]
\label{thm:KnownHUUnivLaws}
Let assumptions  A3-A5 hold. Set $\tau_2=0$ and  let $r(\btheta)$ be a regularization function that is strongly convex and thrice differentiable with uniformly bounded third derivatives. Let $\hat{\btheta}_1, \hat{\btheta}_2$ be the optimal solution to the problems given in \eqref{eq:keyNonGaussianOptimization} and \eqref{eq:GaussianKeyOptimization}, respectively. Then for all $\tau_1 \in [-\tau_1^*, \tau_1^*]$, 
\begin{equation}
    \tilde P_1(\tau_1,0)\to \tilde P_2(\tau_1,0) 
\end{equation}
As a result,
\begin{align}
\label{eqn:thm:universality:train}
    \left|\left(\mathcal{E}_{train}(\hat{\btheta}_1), \mathcal{E}_{gen}(\hat{\btheta}_1) \right) - \left(
    \mathcal{E}_{train}(\hat{\btheta}_2),
  \mathcal{E}_{gen}(\hat{\btheta}_2)
  \right) \right| \xrightarrow[n, m, d\rightarrow\infty]{P}\bm{0}
\end{align}
\end{theorem}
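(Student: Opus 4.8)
The statement is the Gaussian Equivalence Principle for random features, restated from \cite{HuUniversalityLaws}; the plan is to reconstruct it by a Lindeberg-type interpolation between the non-Gaussian feature matrix $\bX$ and its Gaussian surrogate $\tilde{\bX}$, carried out conditionally on the weight matrix $\bW$ and the true model $\btheta^*$. The structural input is that for a fixed data vector $\bz_i$ the feature vector $\bvarphi(\bz_i)$ and the surrogate $\tilde{\bvarphi}(\bz_i)$ have matching conditional first and second moments --- this is exactly the purpose of the choices $\rho_1 = \bbE_a[a\sigma(a)]$ and $\rho_*^2 = \bbE_a[\sigma^2(a)] - \rho_1^2$ in \eqref{eqn:linearfeaturemap}. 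Both $\tilde P_1(\tau_1,0)$ and $\tilde P_2(\tau_1,0)$ are minima over $\btheta$ of objectives depending on the data only through the score matrix $\frac{1}{\sqrt m}\bX\btheta$ (resp. $\frac{1}{\sqrt m}\tilde{\bX}\btheta$) and the label noise, so I would swap the rows of $\bX$ for those of $\tilde{\bX}$ one row (or entry) at a time and bound the accumulated error.

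The bulk of the work is to exhibit the optimal value as a smooth functional of the feature entries with uniformly bounded first three derivatives, so that each swap contributes at most a third-moment mismatch times an $O((nm)^{-3/2})$ factor and the telescoped total vanishes. Concretely I would: (i) use $\mu$-strong convexity of $r$ to obtain a unique minimizer $\hat{\btheta}$ that is Lipschitz in the data, and combine it with A5 to bound $\|\hat{\btheta} - \btheta^*\|$ with high probability; (ii) smooth the square loss and, if necessary, compactify the feasible set (justified a posteriori by (i)) so that the value functional has bounded third derivatives in each $\bX_{ij}$, using thrice-differentiability of $r$ and the regularity of $\sigma$ (as in A6); (iii) use Lipschitz concentration for the Gaussian $\bz_i$'s to show that $\tilde P_1(\tau_1,0)$ and $\tilde P_2(\tau_1,0)$ each concentrate around their means. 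The interpolation then gives $|\bbE\,\tilde P_1(\tau_1,0) - \bbE\,\tilde P_2(\tau_1,0)| \to 0$, and with (iii) this upgrades to convergence in probability, establishing the first display.

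To reach \eqref{eqn:thm:universality:train} I would exploit convexity in $\tau_1$. The training error equals $\tilde P_i(0,0)$ up to the regularization normalization, so it transfers immediately. For the generalization error, $\tilde P_i(\tau_1,0)$ is convex in $\tau_1$ (a minimum of functions affine in $\tau_1$ once $\bR$, resp. $\tilde{\bR}$, is fixed) and $\mathcal{E}_{gen}(\hat{\btheta}_i)$ is recovered as $\sigma_{\bepsilon}^2 + \partial_{\tau_1}\tilde P_i(\tau_1,0)\big|_{\tau_1 = 0}$; pointwise convergence of convex functions forces convergence of one-sided derivatives, and $\mu$-strong convexity of $r$ renders the limit differentiable in $\tau_1$ on $[-\tau_1^*, \tau_1^*]$, so the derivatives --- hence the generalization errors --- converge. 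The main obstacle is steps (ii)--(iii): controlling the third derivatives of the value functional while the minimizer lives on a non-compact domain. This is the delicate heart of the Hu--Lu argument, and it is precisely where thrice-differentiability and strong convexity of $r$, assumption A5 on $\nabla r(\btheta^*)$, and the bounded derivatives of $\sigma$ all come into play.
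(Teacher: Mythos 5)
The paper does not reprove the Lindeberg step: Theorem~\ref{thm:KnownHUUnivLaws} is stated as a restatement of \cite{HuUniversalityLaws}~Theorem~1/Proposition~1, and the appendix (Theorem~\ref{thm:UnivHuLu}) merely lists the handful of places where the Hu--Lu proof must be modified to accommodate the $\tau_1,\tau_2$ perturbation terms and assumption~A5. Your treatment of the generalization error via one-sided $\tau_1$-derivatives of the value function does match the paper's argument in Appendix~\ref{app:subsec:generrorUniv}. Your reconstruction of the interpolation step is pointed in the right direction but as written it has gaps that would prevent it from closing.

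\textbf{Entry-wise swaps cannot work.} Within a single row $i$, the features $\bX_{i1},\dots,\bX_{im}$ are all deterministic functions of the same Gaussian $\bz_i$ and are therefore strongly dependent; a classical Lindeberg swap of individual entries $\bX_{ij}\to\tilde\bX_{ij}$ destroys this dependence structure and the ``third-moment mismatch'' bookkeeping does not apply. The interpolation in Hu--Lu is sample-wise: one replaces the entire row $\bvarphi(\bz_i)$ by $\tilde\bvarphi(\bz_i)$ and controls the change in the optimal value by a leave-one-out argument. Your parenthetical ``one row (or entry) at a time'' treats the two as interchangeable, but only the row version is viable, and the per-swap error then scales like $\mathrm{polylog}(m)/m^{3/2}$ (giving an overall $\mathrm{polylog}(m)/\sqrt{m}$ over $n\asymp m$ swaps), not $O((nm)^{-3/2})$.

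\textbf{The engine is a one-dimensional CLT, not a moment bound on the value functional.} The decisive lemma in Hu--Lu is not that the value of the optimization has bounded third derivatives in each $\bX_{ij}$; it is a one-dimensional comparison statement showing that for any fixed, suitably delocalized direction $\be$, the scalar $\frac{1}{\sqrt m}\bvarphi(\bz)^T\be$ has the same distribution as $\frac{1}{\sqrt m}\tilde\bvarphi(\bz)^T\be$ to vanishing error, conditional on $\bW$. This is exactly where the Hermite expansion and the choice of $\rho_1,\rho_*$ enter, and it requires delocalization bounds on $\hat\btheta$ (this is why A5 is needed). Your sketch assumes the feature rows can be compared by matching joint moments; the actual argument only matches the one-dimensional marginal along the current error direction.

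Two smaller points. The value functions $\tilde P_i(\tau_1,0)$ are \emph{concave} in $\tau_1$, not convex --- your own parenthetical (``minimum of functions affine in $\tau_1$'') gives concavity --- though the one-sided-derivative argument works unchanged. And your appeal to differentiability of ``the limit'' at $\tau_1=0$ silently relies on the existence of a deterministic, differentiable limit function; in the paper this is supplied by the scalar problem $\tilde P_3$ from Theorem~\ref{thm:GaussianAsymptotics}, and should be invoked explicitly rather than attributed to strong convexity of $r$ alone.
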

\begin{remark}
The statement of the Theorem \ref{thm:KnownHUUnivLaws} is adapted to the particular setup that we consider here. For completeness the original theorem is given in appendix \ref{App:UniversalityTheorems} as Theorem \ref{thm:UnivHuLu}. 

\end{remark}

We are now ready to present our contribution. Firstly, we demonstrate the following theorem, relaxing the condition on the regularizer in Theorem \ref{thm:KnownHUUnivLaws}, to A1-Case~A and extending the result to an arbitrary test function $h$: 

\begin{theorem}\label{thm:universality:sequence}
Let A2-A6 hold and the regularization function $r$ satisfies  A1-Case~A.  Then for all $\tau_1 \in [-\tau_1^*, \tau_1^*]$ and $\tau_2 \in [-\tau_2^*, \tau_2^*]$,
\begin{equation}
    \left|\tilde P_1(\tau_1,\tau_2) - \tilde P_2(\tau_1,\tau_2) \right|\xrightarrow[n, m, d\rightarrow\infty]{P} 0 
\end{equation}
As a result,
\begin{align}
\label{eqn:thm:universality:train:novel}
    \left|\left(\mathcal{E}_{train}(\hat{\btheta}_1), \mathcal{E}_{gen}(\hat{\btheta}_1), \frac{1}{m}h(\hat{\btheta}_1) \right) -
    \left(
    \mathcal{E}_{train}(\hat{\btheta}_2),
  \mathcal{E}_{gen}(\hat{\btheta}_2), \frac{1}{m}h(\hat{\btheta}_2)
  \right)\right|\xrightarrow[n, m, d\rightarrow\infty]{P} \bm{0}
\end{align}
\end{theorem}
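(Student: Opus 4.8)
The plan is to obtain the universality statement for the (possibly nondifferentiable) regularizer $r$ satisfying A1-Case~A by a limiting argument built on top of the known thrice-differentiable result, Theorem \ref{thm:KnownHUUnivLaws}, applied to the approximating sequence $r^{(k)}$, together with the already-established scalarization $\tilde P_2 \approx \tilde P_3$ of Theorem \ref{thm:GaussianAsymptotics}. First I would fix $k$ and note that $r^{(k)} + \tau_2 h^{(k)}$ is strongly convex (for $\tau_1,\tau_2$ in the stated intervals the two $\tau$-terms cannot destroy the $\mu$-strong convexity, by the choice of $\tau_1^*,\tau_2^*$) and thrice differentiable with uniformly bounded third derivatives, since both $r^{(k)}$ and $h^{(k)}$ are. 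Hence Theorem \ref{thm:KnownHUUnivLaws} — more precisely its extension allowing the $\tau_2 h$ term, which is itself just a relabeling of the regularizer as $r^{(k)}+\tau_2 h^{(k)}$ — gives
\begin{equation}
\left|\tilde P_1^{(k)}(\tau_1,\tau_2) - \tilde P_2^{(k)}(\tau_1,\tau_2)\right| \xrightarrow[n,m,d\to\infty]{P} 0,
\end{equation}
where the superscript $(k)$ indicates that $r,h$ are replaced by $r^{(k)},h^{(k)}$ in \eqref{eq:pertNonGaussianOptimization} and \eqref{eq:GaussianKeyOptimization}.

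Next I would control the error incurred by replacing $r^{(k)},h^{(k)}$ with their uniform limits $r,h$ inside the optimization problems, uniformly in the problem dimensions. Because $r^{(k)}\to r$ and $h^{(k)}\to h$ uniformly on $\mathbb{R}$, and the regularizer enters \eqref{eq:pertNonGaussianOptimization} as $\frac1m r(\btheta) = \frac1m\sum_{j}r(\theta_j)$ — a sum of $m$ scalar terms each perturbed by at most $\|r^{(k)}-r\|_\infty$ — the value $\tilde P_1^{(k)}$ differs from $\tilde P_1$ by at most $\|r^{(k)}-r\|_\infty + |\tau_2|\,\|h^{(k)}-h\|_\infty \to 0$ as $k\to\infty$, uniformly in $n,m,d$; the same bound holds for $\tilde P_2^{(k)}$ vs.\ $\tilde P_2$. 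A standard $\varepsilon/3$ argument then closes the first display of the theorem: given $\varepsilon>0$, pick $k$ large so both uniform-approximation errors are below $\varepsilon/3$, then invoke the $(k)$-level universality for $n,m,d$ large. This yields $|\tilde P_1(\tau_1,\tau_2)-\tilde P_2(\tau_1,\tau_2)|\xrightarrow{P}0$.

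For the consequences \eqref{eqn:thm:universality:train:novel}, I would use the standard device (already flagged in the text after \eqref{eq:pertNonGaussianOptimization}): $\mathcal{E}_{gen}$ and the value of $h$ at the optimum are recovered as derivatives of $\tilde P_i(\tau_1,\tau_2)$ with respect to $\tau_1$ and $\tau_2$ at $0$. Since each $\tilde P_i(\cdot,\cdot)$ is concave in $\tau_1$ and (after the strong-convexity bookkeeping) admits one-sided derivatives that equal the corresponding error/test-function quantities, pointwise convergence of the concave functions $\tilde P_1 \to \tilde P_2$ on the open intervals forces convergence of their derivatives at interior points; combined with Theorem \ref{thm:GaussianAsymptotics} ($\tilde P_2\approx\tilde P_3$, with matching derivative identities) this transfers to $\mathcal{E}_{train},\mathcal{E}_{gen}$ and $\frac1m h(\hat\btheta_i)$. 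Strong convexity of $r+\tau_2 h$ guarantees the minimizers $\hat\btheta_i$ are well-defined and stable, so convergence in value upgrades to the stated convergence of these scalar functionals.

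The main obstacle is the second paragraph: making the uniform-in-dimension approximation genuinely uniform and, more delicately, ensuring that the sequence-level statement from Theorem \ref{thm:KnownHUUnivLaws} holds with the extra $\tau_2 h^{(k)}$ and $\tau_1$-quadratic terms present — i.e.\ checking that $r^{(k)}+\tau_2 h^{(k)}$ and the added $\tau_1 (\btheta-\btheta^*)^T\bR(\btheta-\btheta^*)$ term still fall within the hypotheses of \cite{HuUniversalityLaws} (strong convexity with high probability, bounded third derivatives, the gradient-at-$\btheta^*$ growth condition A5). The $\tau_1$ quadratic is globally smooth but its Hessian $\bR$ must be controlled in operator norm (this is exactly where the value of $\tau_1^*$ enters, via the spectral bound $\rho_1^2(1+2\sqrt\delta)^2+\rho_*^2$ on $\bR$ w.h.p.), and the test-function gradient tail bound in A5 is what lets $h^{(k)}(\btheta^*)$-type terms be absorbed. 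The transfer of derivatives in the last paragraph also requires a mild equi-Lipschitz or uniform-boundedness estimate on $\partial_{\tau_1}\tilde P_i$ so that interchange of the $k\to\infty$ and $n,m,d\to\infty$ limits with differentiation is legitimate; I would handle this via convexity (Griffiths-type lemma: pointwise convergence of convex functions implies convergence of derivatives wherever the limit is differentiable).
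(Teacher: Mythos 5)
Your plan is exactly the paper's: fix $k$, apply the thrice‑differentiable universality result (Theorem~\ref{thm:UnivHuLu}) to $r^{(k)}+\tau_2 h^{(k)}$, bound $|\tilde P_i^{(k)}-\tilde P_i|$ by $\|r^{(k)}-r\|_\infty+|\tau_2|\|h^{(k)}-h\|_\infty$ using separability, close with an $\varepsilon/3$ (in the paper, $5\varepsilon$) union‑bound argument, and then transfer $\mathcal{E}_{gen}$ and $\frac1m h(\hat\btheta)$ via the finite‑difference/envelope inequalities in $\tau_1,\tau_2$ together with Theorem~\ref{thm:GaussianAsymptotics}. The one point worth tightening is that the derivative transfer uses differentiability of the \emph{limit} $\tilde P_3$ (not of $\tilde P_2$, which need not be differentiable when $r$ is not), which is exactly what the paper's two‑sided finite‑difference bounds exploit; otherwise your argument matches the paper's.
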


The next result illustrates that universality can also be applied to elastic net regularization
\begin{cor}\label{lem:universality:elasticNet}
Let A2-A5 hold.  Let $r(\btheta) = \lambda ||\btheta||_1 + \frac{\mu}{2}||\btheta||_2^2 $. Then, the claims of Theorem \ref{thm:universality:sequence} hold true. 
\end{cor}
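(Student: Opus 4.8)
\textbf{Proof proposal for Corollary~\ref{lem:universality:elasticNet}.}

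The plan is to derive this as a direct consequence of Theorem~\ref{thm:universality:sequence} by verifying that the elastic net regularizer $r(\btheta)=\lambda\|\btheta\|_1+\frac\mu2\|\btheta\|_2^2$ satisfies Assumption A1-Case~A, since A2--A5 are assumed in the hypothesis and A6 is an assumption on the activation only (which may be taken to hold, or else the universality step invoked only requires what Theorem~\ref{thm:universality:sequence} requires). The regularizer is manifestly separable, being $\sum_i\bigl(\lambda|\theta_i|+\frac\mu2\theta_i^2\bigr)$, and it is $\mu$-strongly convex because the $\frac\mu2\|\cdot\|_2^2$ term contributes exactly $\mu$ to the Hessian while $\lambda\|\cdot\|_1$ is convex. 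The only missing ingredient is a sequence of thrice differentiable, $\mu$-strongly convex approximants with uniformly bounded third derivatives converging uniformly to $r$.

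First I would construct such a sequence by smoothing only the $\ell_1$ part. A convenient choice is $r^{(k)}(\btheta)=\sum_{i} s_k(\theta_i)+\frac\mu2\theta_i^2$, where $s_k$ is a smooth approximation of $\lambda|x|$; for instance a Moreau-type smoothing $s_k(x)=\lambda\sqrt{x^2+1/k^2}-\lambda/k$ (which is $C^\infty$, convex, with $s_k(x)\to\lambda|x|$ uniformly at rate $O(\lambda/k)$), or equivalently the Moreau envelope $\calM_{\frac1{2k}\lambda|\cdot|}$. Either way one checks the three required properties. Separability and thrice differentiability of $r^{(k)}$ are immediate. Strong convexity: $(r^{(k)})''=s_k''+\mu\geq\mu$ since $s_k$ is convex, so $r^{(k)}$ is $\mu$-strongly convex for every $k$. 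Uniformly bounded third derivative: one computes that $|s_k'''(x)|$ attains its maximum at a scale $x\sim 1/k$ and that this maximum is $O(\lambda k^2)$, i.e.\ the bound $L$ grows with $k$. This is the point requiring a little care --- A1-Case~A asks for a single constant $L$ bounding all the $|(r^{(k)})'''|$ uniformly in $k$.

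Here the subtlety is that Theorem~\ref{thm:universality:sequence} is a statement about the \emph{limit} of a sequence of universality statements: for each fixed $k$, Theorem~\ref{thm:KnownHUUnivLaws} applies to $r^{(k)}$ (with its own constant $L_k$), and then one passes to the limit using the uniform convergence $r^{(k)}\to r$ together with the stability of the optimizers (guaranteed by $\mu$-strong convexity, which is uniform in $k$). So what is actually needed is, for each $k$, a \emph{finite} bound $L_k$ on $|(r^{(k)})'''|$ --- not a bound uniform in $k$ --- plus the uniform strong convexity and uniform convergence. I would therefore re-read the proof of Theorem~\ref{thm:universality:sequence} to confirm it is set up this way (the phrase ``a sequence of strongly convex, thrice differentiable functions with bounded third derivatives that converge uniformly'' in the discussion of the main result suggests the third-derivative bound is allowed to depend on the sequence index). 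Granting that, the elastic net fits: $\mu$ is fixed, $L_k=O(\lambda k^2)<\infty$ for each $k$, and $\sup_x|r^{(k)}(x)-r(x)|=O(\lambda/k)\to0$. Finally the three limiting consequences \eqref{eqn:thm:universality:train:novel} --- convergence of training error, generalization error, and $\frac1m h(\hat\btheta_1)-\frac1m h(\hat\btheta_2)$ --- transfer verbatim from Theorem~\ref{thm:universality:sequence}, completing the corollary.

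\textbf{Main obstacle.} The one genuine technical point is the interplay between the smoothing parameter and the third-derivative bound: if Theorem~\ref{thm:universality:sequence} in fact demanded a third-derivative bound uniform in $k$, the elastic net construction above would fail and one would instead have to exhibit the elastic net itself as the uniform limit of a cleverly chosen sequence (e.g.\ using that $r$ here is \emph{itself} already strongly convex, so one could apply Theorem~\ref{thm:KnownHUUnivLaws} after a vanishingly small additional smoothing and track the error). Thus the real work is bookkeeping which constant is allowed to blow up and in what order the limits $k\to\infty$ and $n,m,d\to\infty$ are taken; the rest is routine verification of the separability and strong-convexity properties.
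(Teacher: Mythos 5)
Your proof takes essentially the same route as the paper: construct a sequence of separable, thrice-differentiable, $\mu$-strongly-convex approximants to the elastic net by smoothing only the $\ell_1$ part and retaining the quadratic term unchanged, verify uniform convergence, and invoke Theorem~\ref{thm:universality:sequence}. The paper's explicit smoothing is $h^{(k)}(x)=x\,\mathrm{erf}(\sqrt{k}x/\sqrt{2})+\sqrt{2/\pi}\,e^{-kx^2/2}/\sqrt{k}$ (the Gaussian convolution of $|x|$), whereas you use $s_k(x)=\lambda(\sqrt{x^2+1/k^2}-1/k)$; these are interchangeable and give the same uniform rate $O(1/k)$.

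The ``main obstacle'' you flag is real, and in fact your discussion of it is more careful than the paper's. A direct computation shows the paper's choice satisfies $h^{(k)\prime}(x)=\mathrm{erf}(\sqrt{k}x/\sqrt{2})$, hence $h^{(k)\prime\prime}(x)=\sqrt{2k/\pi}\,e^{-kx^2/2}$ and $h^{(k)\prime\prime\prime}(x)=-\sqrt{2k/\pi}\,kx\,e^{-kx^2/2}$, whose sup-norm is $\sqrt{2/\pi}\,k\,e^{-1/2}=\Theta(k)$. So the paper's own construction, just like your $s_k$ with $\|s_k'''\|_\infty=\Theta(\lambda k^2)$, does \emph{not} produce a third-derivative bound uniform in $k$, even though A1-Case~A as literally stated demands a single constant $L$ ``$L$-uniformly bounded third derivatives.'' The paper's lemma in Section~\ref{sec:pf:universality:elasticNet} only asserts ``bounded third derivative'' for each $k$ and silently drops uniformity. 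Your resolution is the right one: the proof of Theorem~\ref{thm:universality:sequence} fixes $k$ first (large enough that $|r^{(k)}-r|<m\epsilon$), applies Theorem~\ref{thm:KnownHUUnivLaws}/\ref{thm:UnivHuLu} to $r^{(k)}$ with $m\to\infty$ for that fixed $k$, and never passes a $k$-dependent constant through the $m\to\infty$ limit; so only finiteness of each $L_k$ is used, and the statement of A1-Case~A is simply more stringent than the argument requires. Both your construction and the paper's go through under the weaker (and actually used) hypothesis. One small additional caveat that neither you nor the paper addresses: the corollary is stated under ``A2--A5,'' but Theorem~\ref{thm:universality:sequence} is proven under A2--A6; A6 should also appear in the corollary's hypotheses, as you hint.
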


\subsubsection{Proof Sketch of Theorem \ref{thm:universality:sequence}}
The original proof given by \cite{HuUniversalityLaws} is valid only for regularization functions that are strongly convex and thrice differentiable with uniformly bounded third derivatives. We first extend these results to sequence of regularization functions $r^{(k)}$ that converge uniformly to a function $r$. Noting that this theorem holds for all $r^{(k)}$ with $k<\infty$ the proof consists of demonstrating that the relations hold in the limit. Second, \cite{HuUniversalityLaws} does not consider the term $\tau_2h(\btheta)$.
We adopt the original proof of \cite{HuUniversalityLaws} and modify it to demonstrate that the results similarly hold with a more generic test function $h(\btheta)$. The proof of these results are given in the Appendix~\ref{app:subsec:univSequenceTheorem}. 

For the specific case of elastic net, we construct a valid sequence $r^{(k)}(\btheta)$ that uniformly converges to the elastic net regularization function, see Appendix~\ref{sec:pf:universality:elasticNet}. 

\subsection{Random Features and Scalar Optimization Problem}

We now connect the original problem $P_1$ to the scalar optimization problem $P_3$ by combining the results in Section~\ref{subsec:AsymptoticGaussianResults} and Section~\ref{UnversalityResults}. This leads to the following precise statement of the main result in  Theorem~\ref{thm:mainresult:informal}:  

\begin{theorem}\label{thm:mainresult:formal}
Let Assumptions A2 - A6 and A1.Case A hold. 
Then for all $\tau_1 \in [-\tau_1^*, \tau_1^*]$ and $\tau_2 \in [-\tau_2^*, \tau_2^*]$, 
\begin{equation}
\left|\tilde P_1(\tau_1,\tau_2) - \tilde P_3(\tau_1,\tau_2)\right| \xrightarrow[n, m, d\rightarrow\infty]{P} 0
\end{equation}
Moreover,
\begin{align}
    \left|\left(\mathcal{E}_{train}(\hat{\btheta}_1), \mathcal{E}_{train}(\hat{\btheta}_1), \frac{1}{m}h(\hat{\btheta}_1) \right)- \left(\tilde{\mathcal{E}}_{train}, \tilde{\mathcal{E}}_{gen}, \frac{1}{m}h(\hat{\btheta}_3)\right) \right| \xrightarrow[n, m, d\rightarrow\infty]{P} \bm{0}
\end{align}
\end{theorem}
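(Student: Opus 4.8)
The plan is to derive Theorem~\ref{thm:mainresult:formal} purely by composition of the two building blocks already established in the excerpt, together with elementary arguments to pass from the perturbed scalar objectives $\tilde P_i(\tau_1,\tau_2)$ to the actual error functionals. First I would observe that the hypotheses of Theorem~\ref{thm:mainresult:formal}, namely A2--A6 and A1.Case~A, imply both of the hypothesis sets we need: they obviously imply A3--A5, and by the definition of $\tau_1^*,\tau_2^*$ in terms of $\mu$ and $l$ they guarantee that $r+\tau_2 h$ is $\tfrac\mu2$-strongly convex for $\tau_2\in[-\tau_2^*,\tau_2^*]$ with high probability (this is exactly the convexity bookkeeping the paper flags after A6). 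Hence Theorem~\ref{thm:GaussianAsymptotics} applies and gives $|\tilde P_2(\tau_1,\tau_2)-\tilde P_3(\tau_1,\tau_2)|\xrightarrow{P}0$ uniformly on the $\tau$-box, and Theorem~\ref{thm:universality:sequence} applies (its hypotheses A2--A6 and A1.Case~A are literally our hypotheses) and gives $|\tilde P_1(\tau_1,\tau_2)-\tilde P_2(\tau_1,\tau_2)|\xrightarrow{P}0$ on the same box. A triangle inequality then yields $|\tilde P_1(\tau_1,\tau_2)-\tilde P_3(\tau_1,\tau_2)|\xrightarrow{P}0$, which is the first displayed claim.

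Next I would transfer the convergence of the scalar objectives to the error functionals. For the training error with $\tau_1=\tau_2=0$ this is immediate: $\mathcal{E}_{train}(\hat\btheta_1)=\tilde P_1(0,0)$ and $\tilde{\mathcal E}_{train}=\tilde P_3(0,0)$ by definition, and similarly Theorem~\ref{thm:GaussianAsymptotics} already records $|\mathcal{E}_{train}(\hat\btheta_2)-\tilde{\mathcal E}_{train}|\xrightarrow{P}0$ and $|\tfrac1m h(\hat\btheta_2)-\tfrac1m h(\hat\btheta_3)|\xrightarrow{P}0$; likewise Theorem~\ref{thm:universality:sequence} records $|\tfrac1m h(\hat\btheta_1)-\tfrac1m h(\hat\btheta_2)|\xrightarrow{P}0$ and the analogous training statement. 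So the $h$-component and the training-error component follow by chaining these two, once more via the triangle inequality. For the generalization error I would use the identity $\mathcal E_{gen}(\hat\btheta_1)=\sigma_\bepsilon^2+\partial_{\tau_1}\tilde P_1(\tau_1,0)|_{\tau_1=0}$ (this is the design purpose of the $\tau_1$ term: differentiating the quadratic perturbation $\tfrac{\tau_1}{m}(\btheta-\btheta^*)^T\bR(\btheta-\btheta^*)$ and evaluating at the optimum recovers $\tfrac1m\,\mathbb{E}[(y_{new}-f)^2]-\sigma_\bepsilon^2$ up to the discrepancy between $\bR$ and its linear-feature analogue, which vanishes asymptotically), and the matching identity $\tilde{\mathcal E}_{gen}=\sigma_\bepsilon^2+\partial_{\tau_1}\tilde P_3(\tau_1,0)|_{\tau_1=0}$ is built into the statement. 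The subtlety is that pointwise convergence of $\tilde P_1(\cdot,0)$ to $\tilde P_3(\cdot,0)$ does not by itself give convergence of derivatives, so this step needs the convexity/concavity structure in $\tau_1$.

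Concretely, the derivative step is where I would be most careful, and it is the main obstacle. The map $\tau_1\mapsto\tilde P_1(\tau_1,0)$ is, for fixed data, concave in $\tau_1$ when the overall objective is convex in $\btheta$ (it is a pointwise infimum over $\btheta$ of functions affine in $\tau_1$); the same holds for $\tilde P_3(\tau_1,0)$ in the relevant range. For concave functions, pointwise convergence on an open interval upgrades automatically to convergence of derivatives at any interior point of differentiability (a standard convex-analysis fact, cf.\ the argument used in \citep{HuUniversalityLaws} for exactly this purpose). So I would: (i) argue $\tilde P_1(\cdot,0)$ and $\tilde P_3(\cdot,0)$ are concave on $(-\tau_1^*,\tau_1^*)$; (ii) argue differentiability at $\tau_1=0$ (uniqueness of the minimizer, from strong convexity of $r$, gives this via Danskin/envelope); (iii) invoke the concave-convergence lemma to pass from the already-proven $|\tilde P_1(\tau_1,0)-\tilde P_3(\tau_1,0)|\xrightarrow{P}0$ to $\partial_{\tau_1}\tilde P_1(0,0)\xrightarrow{P}\partial_{\tau_1}\tilde P_3(0,0)$; (iv) combine with the two identities above to conclude $|\mathcal E_{gen}(\hat\btheta_1)-\tilde{\mathcal E}_{gen}|\xrightarrow{P}0$, and similarly with $\hat\btheta_2$ in place of $\hat\btheta_1$ via Theorem~\ref{thm:GaussianAsymptotics}. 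Assembling (i)--(iv) with the training and $h$ components gives the second display of the theorem. I expect steps (ii)--(iii) — making the "pointwise convergence in probability plus concavity implies derivative convergence in probability" argument rigorous uniformly enough to land at a single point $\tau_1=0$ — to be the delicate part; everything else is triangle inequalities and bookkeeping on the assumption sets.
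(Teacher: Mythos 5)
Your composition of Theorem~\ref{thm:GaussianAsymptotics} and Theorem~\ref{thm:universality:sequence} via the triangle inequality, together with the difference-quotient/concavity argument for derivative convergence, is exactly the paper's (implicit) proof of Theorem~\ref{thm:mainresult:formal}: the paper carries out the derivative step in Lemma~\ref{lem:gaussianGenErrorConvergence} and Theorem~\ref{sup:thm:universalityGenerror} using the one-sided optimality inequality $\tilde P(\tau,0)\le \tilde P(0,0)+\tau\kappa$ rather than invoking an abstract concave-convergence lemma, but the substance is the same. One minor point: the identity $\mathcal E_{gen}(\hat\btheta_1)=\sigma_\bepsilon^2+\partial_{\tau_1}\tilde P_1(\tau_1,0)\vert_{\tau_1=0}$ is exact, since both sides involve the same nonlinear-feature covariance $\bR$, so your parenthetical worry about a vanishing $\bR$-vs-linear-analogue discrepancy is unnecessary (that mismatch is handled entirely inside the universality step, not here).
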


\subsection{Results for $\ell_1$ regularization}
We further extend these results to the case of $\ell_1$ regularization. For this case, additional assumptions are needed. In particular, we may only consider scenarios, where problem \eqref{eq:keyOptimization} is sufficiently sparse. This is defined by the following:
\begin{equation}\label{eq:eff}
    M_0=\frac 1m\suml_{i=1}^m\Pr\left(\hat\theta_{i,3}\neq 0\right)
\end{equation}
where $\hat\theta_{i,3}$  denotes the $i^\tth$ element of  $\hat\btheta_3$ for the regularization function $r(\btheta)=\lambda\|\btheta\|_1$. 
We prove the following theorem:
\begin{theorem}
\label{thm:l1Universality}
Let Assumptions A2 - A6 hold and $r(\btheta) = \lambda||\btheta||_1$.  The exists a constant $\rho$ only depending on the activation function $\sigma$ and the parameters of the problem ($\lambda,\sigma^2_{\bepsilon},\gamma,\eta$) such that for $M_0<\rho$,
the results of Theorem \ref{thm:mainresult:formal} holds for $r(\btheta) = \lambda||\btheta||_1$.
\end{theorem}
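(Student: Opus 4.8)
The plan is to realize $\ell_1$ regularization as the $\mu\downarrow0$ limit of the elastic net $r_\mu(\btheta)=\lambda\|\btheta\|_1+\frac\mu2\|\btheta\|_2^2$, for which Corollary~\ref{lem:universality:elasticNet} already gives, for every fixed $\mu>0$, the full chain $\tilde P_1^{(\mu)}\approx\tilde P_2^{(\mu)}\approx\tilde P_3^{(\mu)}$ together with the convergence of the training and generalization errors and of $\tfrac1m h(\hat\btheta^{(\mu)})$ (superscripts name the regularizer in force; $(0)$ denotes $r=\lambda\|\btheta\|_1$). Three things then have to be assembled: \emph{(a)} continuity of the scalar problem $\tilde P_3^{(\mu)}$ and of all functionals read off its saddle point as $\mu\downarrow0$; \emph{(b)} the Gaussian-side limit $\tilde P_2^{(0)}\to\tilde P_3^{(0)}$; \emph{(c)} the universality-side limit $\tilde P_1^{(0)}\to\tilde P_2^{(0)}$; after which a diagonal argument sends $\mu=\mu_{n,m,d}\downarrow0$ slowly enough that all ``fixed-$\mu$'' limits survive.

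For \emph{(a)}, the proximal map of $\tfrac1{2c_1}(\lambda|\cdot|+\tfrac\mu2|\cdot|^2)$ is soft-thresholding at level $\lambda/(2c_1)$ shrunk by $(1+\mu/(2c_1))^{-1}$, and this converges pointwise and (against a Gaussian argument, by dominated convergence, using A5 together with the standard tightness of $\tfrac1m\|\btheta^*\|_2^2$) in $L^2$ to plain soft-thresholding as $\mu\downarrow0$; hence $\psi$ and the stationarity equations of $\tilde P_3$ are continuous at $\mu=0$, and provided the $\ell_1$ saddle point is \emph{non-degenerate} — unique, with $\tilde{c}_1$ bounded away from $0$ and nonsingular Jacobian — the elastic-net saddle points and everything computed from them ($\tilde{\mathcal{E}}_{train}$, $\tilde{\mathcal{E}}_{gen}$, $\tfrac1m h(\hat\btheta_3)$, and the $\tau_1,\tau_2$ derivatives defining them) converge to their $\ell_1$ values; by concavity of $\tau_1\mapsto\tilde P_3(\tau_1,0)$ (an infimum of functions affine in $\tau_1$) the one-sided derivative at $0$ converges for free once the values do. For \emph{(b)}, only convexity is involved, not strong convexity, so the two-level CGMT of Theorem~\ref{thm:GaussianAsymptotics} either applies to $\ell_1$ directly (the minimizer is norm-bounded, so the feasible sets may be truncated to compacts without affecting the optimum) or is obtained from the squeeze $\tilde P_3^{(0)}-o_P(1)\le\tilde P_2^{(0)}\le\tilde P_2^{(\mu)}\to\tilde P_3^{(\mu)}\to\tilde P_3^{(0)}$, whose lower bound is the convexity-only Gordon inequality and whose middle step uses $r_\mu\ge\lambda\|\cdot\|_1$.

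Ingredient \emph{(c)} is where strong convexity is genuinely used in Theorem~\ref{thm:universality:sequence}, hence where $M_0<\rho$ must do the work. One direction is free: $r_\mu\ge\lambda\|\cdot\|_1$ gives $\tilde P_1^{(0)}\le\tilde P_1^{(\mu)}\to\tilde P_2^{(\mu)}\to\tilde P_2^{(0)}$, so $\limsup\tilde P_1^{(0)}\le\tilde P_2^{(0)}$. The reverse uses $\tilde P_1^{(\mu)}\le\tilde P_1^{(0)}+\tfrac\mu{2m}\|\hat\btheta_1^{(0)}\|_2^2$ (evaluate the elastic-net objective at the $\ell_1$ minimizer) together with the \emph{a priori} estimate $\tfrac1m\|\hat\btheta_1^{(0)}\|_2^2=O_P(1)$, which forces the gap to $O_P(\mu)$. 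When $\gamma>1$ this is automatic: the least-squares Hessian $\tfrac1{nm}\bX^T\bX$ has smallest eigenvalue of order $1/m$ by the Marchenko--Pastur lower edge, so comparing the objective at $\hat\btheta_1^{(0)}$ with its value $\tfrac1{2n}\|\by\|_2^2=O_P(1)$ at the origin yields the bound. When $\gamma<1$ it is exactly where the threshold enters: a restricted-eigenvalue / LASSO-cone argument — using the $\ell_1$ subgradient optimality conditions, the Marchenko--Pastur control of $\tfrac1n\bX^T\bX$ on sparse supports, and the near-optimality of $\hat\btheta_3$ from \emph{(b)} — bounds the active set of $\hat\btheta_1^{(0)}$ and hence $\|\hat\btheta_1^{(0)}\|_2$ once the effective sparsity $M_0$ is below a constant $\rho$ determined by this restricted-eigenvalue bound (and depending only on $\sigma,\lambda,\sigma_{\bepsilon}^2,\gamma,\eta$), the same $\rho$ serving as the non-degeneracy threshold of \emph{(a)}.

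Combining \emph{(a)}--\emph{(c)} through the diagonal argument gives $\tilde P_1^{(0)}(\tau_1,\tau_2)\to\tilde P_3^{(0)}(\tau_1,\tau_2)$; the training error is the $\tau_1=\tau_2=0$ case, and $\mathcal E_{gen}(\hat\btheta_1)$ and $\tfrac1m h(\hat\btheta_1)$ follow, as in Theorems~\ref{thm:GaussianAsymptotics} and~\ref{thm:universality:sequence}, by differentiating in $\tau_1$, $\tau_2$. One point needing care here: the admissible windows $[-\tau_1^*(\mu),\tau_1^*(\mu)]$ of the elastic-net approximants shrink proportionally to $\mu$, so the derivative extraction is a joint limit in $(\tau_1,\mu,n,m,d)$ at matched rates rather than a naive interchange. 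The main obstacle is the \emph{a priori} $\ell_2$-control of the pure-$\ell_1$ estimator in the overparameterized regime — making rigorous how $M_0<\rho$ forces $\tfrac1m\|\hat\btheta_1^{(0)}\|_2^2=O_P(1)$ — and, on the deterministic side, the uniqueness and stability of the $\ell_1$ scalar saddle point below the same threshold, which is what licenses the elastic-net saddle points (and their $\tau$-derivatives) to converge to a well-defined $\ell_1$ limit. Everything else is routine perturbation analysis or a direct appeal to the already-proved elastic-net and Gaussian-comparison results.
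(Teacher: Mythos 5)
Your high-level route matches the paper's: you realize $\ell_1$ as the $\mu\downarrow 0$ limit of the elastic net, lean on Corollary~\ref{lem:universality:elasticNet} for each fixed $\mu$, and use a sparsity-driven restricted-isometry condition to make the $\mu\downarrow 0$ limit non-degenerate. The paper's proof also passes through a condition of this type (its ``admissible sparsity'' $M_{adm}$ plays the role of your $\rho$) and also attributes the need for the threshold to the loss of strong convexity. However, two gaps in your proposal would need to be closed before it becomes a proof.

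First, you have under-specified what the restricted-eigenvalue control has to give you. You aim for the a priori estimate $\tfrac1m\|\hat\btheta_1^{(0)}\|_2^2=O_P(1)$, which is indeed sufficient to squeeze the \emph{objective values} $\tilde P_1^{(\mu)}\to\tilde P_1^{(0)}$ (and hence the training error). But it is not sufficient for the convergence of $\mathcal E_{gen}(\hat\btheta_1)$ and $\tfrac1m h(\hat\btheta_1)$, which you extract by differentiating in $\tau_1,\tau_2$ or by direct evaluation: for those you need the actual estimators $\hat\btheta_1^{(\mu)}$ and $\hat\btheta_1^{(0)}$ to be \emph{close in $\ell_2$}, not merely both bounded. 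Two vectors of norm $O(\sqrt m)$ can differ by $O(\sqrt m)$, which would wreck the generalization-error argument (and worsen, since $\tau_1^*(\mu)\propto\mu$ shrinks the window over which you can read off the derivative). This is precisely what the paper's argument, following the iterative construction of \cite{panahi2017universal} (Lemma~\ref{lemma:iterativeProcessFinishes} and Theorem~\ref{thm:L1ProblemConverges}, culminating in the bound $\mathbb{P}\!\left(\tfrac1n\|\hat\btheta^{\lambda+\rho,\epsilon}-\hat\btheta^{\lambda,0}\|_2^2>\eta\right)\to 0$), delivers: it produces an approximate subgradient with small sup-norm, and uses $\delta_k(\bA)+\theta_k(\bA)<1$ on sparse supports to contract the iteration and conclude proximity of the two minimizers. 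Your ``restricted-eigenvalue / LASSO-cone argument'' gestures at the right tool but needs to be deployed against the \emph{difference} $\hat\btheta_1^{(\mu)}-\hat\btheta_1^{(0)}$, not merely against $\hat\btheta_1^{(0)}$ itself.

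Second, your appeal to Marchenko--Pastur for the restricted singular values of $\bX$ does not apply as stated: $\bX_{ij}=\sigma(\tfrac1{\sqrt d}\bw_j^T\bz_i)$ is a nonlinear function of Gaussians, so $\bX^T\bX$ is not Wishart and the MP lower edge is not directly available. The paper devotes a dedicated lemma (the extension of \cite{panahi2017universal} Lemma~8) to showing a restricted-isometry estimate for $\bA=\bX/\sqrt{n(\rho_*^2+\rho_1^2)}$: it exploits the fact that $f(\bz)=\sum_j u_j\sigma(\bw_j^T\bz/\sqrt d)$ is Lipschitz given $\|\bW\|_2=O(\sqrt d)$, hence each $y_i=f(\bz_i)/\sqrt n$ is sub-Gaussian, and then runs a Chernoff bound plus an $\epsilon$-net over sparse unit vectors and supports. (Also note that your ``when $\gamma>1$ this is automatic'' claim collides with the fact that whichever convention one adopts for $\gamma$, in the overparameterized regime $\bX^T\bX$ has rank $\le n<m$ and a vanishing smallest eigenvalue, so there is no non-restricted escape route; the threshold $M_0<\rho$ is genuinely needed whenever $m>n$.) So your outline is structurally correct but leaves the two load-bearing technical steps---the closeness (not just boundedness) of the two estimators, and the RIP for nonlinear features---as assertions rather than arguments.
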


\subsubsection{Proof Sketch of Theorem \ref{thm:l1Universality}}
We adopt the proof in \cite{panahi2017universal}, which performs this procedure for i.i.d. sub-Gaussian features, and modify it for the random feature model. Extending the results for the $\ell_1$ regularization involves the results for the elastic net optimization in corollary 1. In  \cite{panahi2017universal} (section 3.3 of supplement), it is shown that for a small value of $\mu$ in the elastic net regularization $\lambda\|\ldotp\|_1+\frac\mu 2\|\ldotp\|_2^2$, the $\ell_2$ term can be removed and the change of the solution is negligible, if the matrix $\bX$ satisfies a proper restricted isometry property (RIP).   In  \cite{panahi2017universal} (lemma 8 in supplement), the RIP is shown for i.i.d. sub-Gaussian features. We extend this result and show that a similar RIP condition holds for random features model. The condition on $M_0$ ensures that the optimal solution is sufficiently stable, which otherwise is not guaranteed with the lack of strong convexity.
The full proof is presented in Appendix~\ref{app:subsec:pf:l1Universality}.

\section{Elastic Net Regularization}
In this section, we apply our results to the case of elastic net regularization, for which asymptotic learning curves has not been previously proposed. We consider the regularization function
\begin{eqnarray}
    r(\btheta)  = \lambda||\btheta||_1 + \frac{\alpha}{2}||\btheta||_2^2,
\end{eqnarray}
where $\lambda$ and $\alpha$ are two regularization parameters. We note that in the case of $\lambda = 0$ we obtain ridge regularization and in the case of $\alpha = 0$ we obtain $\ell_1$ regularization (LASSO). Due to the continuity of the asymptotic expressions, the analysis of elastic net may be directly used for the study of ridge or LASSO regression simply by setting either $\lambda=0$ or $\alpha=0$. Our interest in studying elastic net stems from the sparsity-promoting effect of the $\ell_1$ regularizer on the solution vector. 
When viewing the RF model as a shallow neural network, the effect of a sparse solution is to disable a number of nodes in the hidden layer. As a result, elastic net finds a subnetwork of the original NN with a minimal degradation in performance, in effect a form of network compression. For similar attempts, see for example \citep{tang2022survey,oyedotun2021deep, yu2014click}.

The asymptotic equivalent solution to the elastic net regularized problem is given by
\begin{eqnarray}
\label{eq:elasticnetSolutionVector}
(\hat{\btheta}_3)_i =
    \begin{cases}
    \frac{2\tilde{c}_1\theta^*_i}{2\tilde{c}_1+\alpha} + \frac{\tilde{c}_2\sqrt{\gamma}}{(2\tilde{c}_1 + \alpha)}\phi_{i} - \frac{\lambda}{2\tilde{c}_1 + \alpha} & \phi_{i} < -\zeta_{1i} \\
    \frac{2\tilde{c}_1\theta^*_i}{2\tilde{c}_1+\alpha} + \frac{\tilde{c}_2\sqrt{\gamma}}{(2\tilde{c}_1 + \alpha)}\phi_{i}  + \frac{\lambda}{2\tilde{c}_1 + \alpha} & \phi_{i}  > \zeta_{2i}\\
    0  \qquad \qquad -\zeta_{1i} \leq \phi_{i} \leq \zeta_{21}
    \end{cases},
\end{eqnarray}
in which $\zeta_{1i}$ and $\zeta_{2i}$ are given by
\begin{eqnarray}
\zeta_{1i} = \frac{(\lambda - 2\tilde{c}_1\theta^*_i)}{\sqrt{\gamma}\tilde{c}_2} \quad \zeta_{2i} = \frac{(\lambda + 2\tilde{c}_1\theta^*_i)}{\sqrt{\gamma}\tilde{c}_2}
\end{eqnarray}
and $\tilde{c}_1, \tilde{c}_2$ are the constants described in Theorem \ref{thm:GaussianAsymptotics}. The solution may also be expressed more succinctly by means of a soft thresholding operator. A full derivation of this solution may be found in the supplement section \ref{app:sec:ElasticsNetExampleCase}. We note that in the limit of $\lambda \rightarrow 0$, we obtain $-\zeta_{1i} = \zeta_{2i}$ and the solution collapses into a single case, that being the result for ridge regression.

According to theorem  \ref{thm:mainresult:formal} and \ref{thm:l1Universality}, the characteristics of the solution vector $\hat{\btheta}_1$, reflected by a suitable function $h$, asymptotically becomes close to that of $\hat{\btheta}_3$ (the $\ell_1$ case is under sparsity condition). 
Here, we consider the  sparsity of the solution. For this reason, we take a separable function $h_\epsilon(\btheta)=\suml_i \bar h_\epsilon(\theta_i)$, where $\bar  h_\epsilon(\theta)$ is  a positive $C^\infty$ bump function such that $\bar h_\epsilon(0)=1$ and $\bar h_\epsilon(\theta)=0$ for $|\theta|>\epsilon$. Our results apply to this function and we note that
\begin{equation}
    n_0(\btheta)\leq h_\epsilon(\btheta)\leq n_\epsilon(\btheta),
\end{equation}
where $n_\epsilon(\btheta)$ is the number of the elements $\theta_i$ in $\btheta$ with $|\theta_i|\leq \epsilon$. In particular, $n_0$ is the number of zeros. We may show that by theorem  \ref{thm:mainresult:formal} and the law of large numbers, the value of $\frac 1m h_\epsilon(\btheta)$ converges in probability to a constant $s_\epsilon$ calculated by analyzing $\hat\btheta_3$. We refer to $s:=\liml_{\epsilon\to 0}s_\epsilon$ as the "effective sparsity" of $\hat\btheta_1$. Roughly speaking, $s$ counts not only the zero entries of $\hat\btheta_1$, but also the vanishing entries as the problem size grows. 

By direct calculation, we shown in the supplement section \ref{app:sec:ElasticsNetExampleCase} that
\begin{eqnarray}
s \to\frac{2\tilde{c}_1 + \alpha}{\sqrt{\gamma}\tilde{c}_2}\frac{1}{m}\suml_i\mathbb{E}\left[\left(\hat{\btheta}_3\right)_i\phi_i\right],
\end{eqnarray}  
where $\phi_i$ and $\left(\hat{\btheta}_3\right)_i$ are defined in  \eqref{eq:elasticnetSolutionVector}.
We note that for pure $\ell_1$ regularization this formula may still be used by setting $\alpha = 0$, although we can theoretically support it for small values of sparsity. In this case, $s=1-M_0$ where $M_0$ is given in \eqref{eq:eff}. Experimental results for effective sparsity maybe be found in the supplement \ref{App:simulationDetails:Sparsity}

\section{Experiments}

\subsection{Experimental setup}
\label{ExperimentalSetup}
Using the expressions derived in the previous section, we examine the case of elastic net regularization experimentally. We choose the $\tanh$ activation for the non linearity of the feature map. We consider a deterministic vector $\btheta^*$ that consists of half ones and half zeros, 
. We set the noise power $\sigma_{\bepsilon}^2 = 0.1$ and let $\delta = 1$. We consider multiple cases, where for each case we solve the problem $P_3$ (equation \eqref{eq:4doptproblem}) using an iterative refining grid search algorithm. We compare the results to an experimental simulation in which $n + m = 1000$, with the relative ratio varied for different values of $\gamma = m/n$. Each empirical data point was averaged over 100 random realizations of the weights $\bW$, and the data $\bz$. More details maybe found in appendix \ref{App:simulationDetails}.

\subsection{Elastic net model} 
We compare the experimental and theoretically derived values for training and generalization error of the elastic net model for two cases. Firstly we vary the ratio $\gamma = \frac{m}{n}$ for fixed values of the regularization parameters, and secondly we vary the regularization parameter $\lambda$ for all other parameters being fixed.

The case of varying $\gamma$ is shown in figure \ref{fig:elasticNetReg}. Here, we fix $\lambda = 10^{-3}$ and choose several values of $\alpha$ including $0$, the case of pure $\ell_1$ regularization. Our expressions accurately predict the expected behavior of a network, the small deviation explained by the fact that $n, m$ are finite. However, the discrepancy is only notable in a small range near the interpolation peak, suggesting the validity of our expressions in a wide range of networks of a non asymptotic size.
We observe that small values of $\alpha$ result in a spike in the generalization error at the interpolation threshold, which in this model, is slightly more than $\gamma = 1$. We note that as the regularization parameter increases in strength, the interpolation peak diminishes. This is consistent with other results on the study of the double descent phenomenon \citep{d2020double}.

In figure \ref{fig:elasticNetRegLamVary}, we choose $\alpha = 10^{-3}$ and vary the value of the regularization parameter $\lambda$ at constant $\gamma$. We note that that the generalization error suggests that at each ratio of $\gamma = \frac{m}{n}$ there is an optimal value of $\lambda$ that minimizes the expected error.


\begin{figure}[!ht]
    \centering
    
    \subfigure [Training Error] {
    \label{subfig:trainerrorGamma}
    \resizebox{0.45\textwidth}{!}{%
        \begin{tikzpicture}
        \begin{axis}[
          xlabel={$\gamma = \frac{m}{n}$},
          ylabel=Training Error,
          grid = both,
            minor tick num = 1,
            major grid style = {lightgray},
            minor grid style = {lightgray!25},
          ]
        \addplot[ smooth, thin, red] table[ y=train0, x=gamma]{Data/datatheory1.dat};
        \addlegendentry{$\alpha = 0$}
        \addplot[ smooth, thin, blue] table[y=train1e-4, x=gamma]{Data/datatheory1.dat};
        \addlegendentry{$\alpha = 10^{-4}$}
        \addplot[ smooth, thin, purple] table[y=train1e-3, x=gamma]{Data/datatheory1.dat};
        \addlegendentry{$\alpha = 10^{-3}$}
        \addplot[ smooth, thin, brown] table[y=train1e-2, x=gamma]{Data/datatheory1.dat};
        \addlegendentry{$\alpha = 10^{-2}$}
        \addplot[ smooth, thin, teal] table[y=train1e-1, x=gamma]{Data/datatheory1.dat};
        \addlegendentry{$\alpha = 10^{-1}$}
        \addplot[color = red, mark = square, mark size = 1pt, only marks] table[ y=train0, x=gamma]{Data/dataExp1.dat};
        \addplot[blue, mark = otimes, mark size = 1pt, only marks] table[y=train1e-4, x=gamma]{Data/dataExp1.dat};
        \addplot[purple, mark = otimes, mark size = 1pt, only marks] table[y=train1e-3, x=gamma]{Data/dataExp1.dat};
        \addplot[brown, mark = otimes, mark size = 1pt, only marks] table[y=train1e-2, x=gamma]{Data/dataExp1.dat};
        \addplot[teal, mark = triangle, mark size = 1pt, only marks] table[y=train1e-1, x=gamma]{Data/dataExp1.dat};
        \end{axis}
    \end{tikzpicture}
        } 
    }
\subfigure [Generalization Error] {
    \label{subfig:trainerrorGamma}
    \resizebox{0.45\textwidth}{!}{%
        \begin{tikzpicture}
        \begin{axis}[
          xlabel={$\gamma = \frac{m}{n}$},
          ylabel=Generalization Error,
          legend pos = north west,
          grid = both,
            minor tick num = 1,
            major grid style = {lightgray},
            minor grid style = {lightgray!25},
          ]
        \addplot[ smooth, thin, red] table[ y=gap0, x=gamma]{Data/datatheory1.dat};
        \addlegendentry{$\alpha = 0$}
        \addplot[ smooth, thin, blue] table[y=gap1e-4, x=gamma]{Data/datatheory1.dat};
        \addlegendentry{$\alpha = 10^{-4}$}
        \addplot[ smooth, thin, purple] table[y=gap1e-3, x=gamma]{Data/datatheory1.dat};
        \addlegendentry{$\alpha = 10^{-3}$}
        \addplot[ smooth, thin, brown] table[y=gap1e-2, x=gamma]{Data/datatheory1.dat};
        \addlegendentry{$\alpha = 10^{-2}$}
        \addplot[ smooth, thin, teal] table[y=gap1e-1, x=gamma]{Data/datatheory1.dat};
        \addlegendentry{$\alpha = 10^{-1}$}
        \addplot[color = red, mark = square, mark size = 1pt, only marks] table[ y=gap0, x=gamma]{Data/dataExp1.dat};
        \addplot[blue, mark = otimes, mark size = 1pt, only marks] table[y=gap1e-4, x=gamma]{Data/dataExp1.dat};
        \addplot[purple, mark = otimes, mark size = 1pt, only marks] table[y=gap1e-3, x=gamma]{Data/dataExp1.dat};
        \addplot[brown, mark = otimes, mark size = 1pt, only marks] table[y=gap1e-2, x=gamma]{Data/dataExp1.dat};
        \addplot[teal, mark = triangle, mark size = 1pt, only marks] table[y=gap1e-1, x=gamma]{Data/dataExp1.dat};
        \end{axis}
    \end{tikzpicture}
        } 
    }
       \caption{Theoretically predicted (solid line) and numerically determined (markers) values of the training error (a) and generalization error (b) for the random features model with $\ell_1 + \ell_2$ regularization as a function of $\gamma =\frac{m}{n}$, for varying values of regularization strengths of $\alpha$ at constant value of $\lambda = 10^{-3}$.}
    \label{fig:elasticNetReg}
\end{figure}
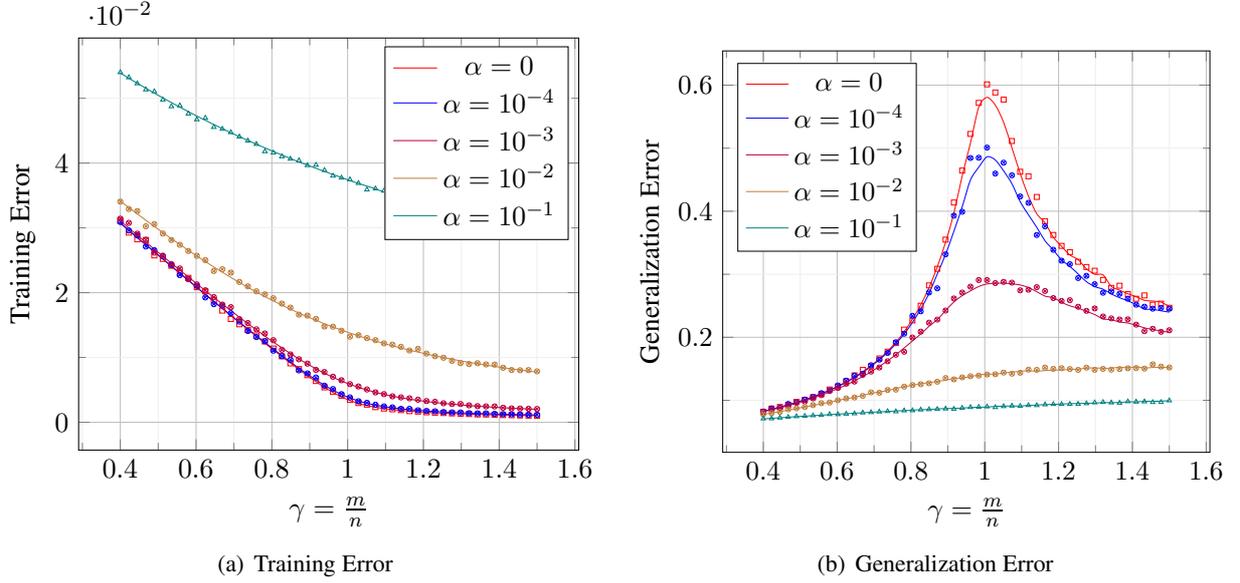

\begin{figure}[!ht]
    \centering
    
\subfigure [Training Error] {
    \label{subfig:trainErrorLam}
    \resizebox{0.45\textwidth}{!}{%
        \begin{tikzpicture}
        \begin{axis}[
          xlabel={$\lambda$},
          ylabel=Training Error,
          legend pos = north west,
          xmode=log,
          grid ,
            minor tick num = 1,
            major grid style = {lightgray},
            minor grid style = {lightgray!25},
          ]
          \addplot[ smooth, thin, red] table[y=train0.3, x=lam]{Data/datatheory2.dat};
        \addlegendentry{$\gamma = 0.3$}
        \addplot[ smooth, thin, green] table[y=train0.6, x=lam]{Data/datatheory2.dat};
        \addlegendentry{$\gamma = 0.6$}
        \addplot[ smooth, thin, blue] table[y=train0.9, x=lam]{Data/datatheory2.dat};
        \addlegendentry{$\gamma = 0.9$}
        \addplot[ smooth, thin, purple] table[y=train1.2, x=lam]{Data/datatheory2.dat};
        \addlegendentry{$\gamma = 1.2$}
        \addplot[ smooth, thin, brown] table[y=train1.5, x=lam]{Data/datatheory2.dat};
        \addlegendentry{$\gamma = 1.5$}
         \addplot[color = red, mark = triangle, mark size = 1pt, only marks] table[ y=train0.3, x=lam]{Data/dataExp2.dat};
        \addplot[color = green, mark = square, mark size = 1pt, only marks] table[ y=train0.6, x=lam]{Data/dataExp2.dat};
        \addplot[blue, mark = otimes, mark size = 1pt, only marks] table[y=train0.9, x=lam]{Data/dataExp2.dat};
        \addplot[purple, mark = triangle, mark size = 1pt, only marks] table[y=train1.2, x=lam]{Data/dataExp2.dat};
         \addplot[color = brown, mark = square, mark size = 1pt, only marks] table[ y=train1.5, x=lam]{Data/dataExp2.dat};
        \end{axis}

    \end{tikzpicture}
        } 
    }
\subfigure [Generalization Error] {
    \label{subfig:genErrorLam}
    \resizebox{0.45\textwidth}{!}{%
        \begin{tikzpicture}
        \begin{axis}[
          xlabel={$\lambda$},
          ylabel=Generalization Error,
          legend pos = north east,
          xmode=log,
          grid,
            minor tick num = 1,
            major grid style = {lightgray},
            minor grid style = {lightgray!25},
          ]
          \addplot[ smooth, thin, red] table[y=gap0.3, x=lam]{Data/datatheory2.dat};
        \addlegendentry{$\gamma = 0.3$}
        \addplot[ smooth, thin, green] table[y=gap0.6, x=lam]{Data/datatheory2.dat};
        \addlegendentry{$\gamma = 0.6$}
        \addplot[ smooth, thin, blue] table[y=gap0.9, x=lam]{Data/datatheory2.dat};
        \addlegendentry{$\gamma = 0.9$}
        \addplot[ smooth, thin, purple] table[y=gap1.2, x=lam]{Data/datatheory2.dat};
        \addlegendentry{$\gamma = 1.2$}
        \addplot[ smooth, thin, brown] table[y=gap1.5, x=lam]{Data/datatheory2.dat};
        \addlegendentry{$\gamma = 1.5$}
        \addplot[red, mark = triangle, mark size = 1pt, only marks] table[y=gap0.3, x=lam]{Data/dataExp2.dat};
      \addplot[color = green, mark = square, mark size = 1pt, only marks] table[ y=gap0.6, x=lam]{Data/dataExp2.dat};
        \addplot[blue, mark = otimes, mark size = 1pt, only marks] table[y=gap0.9, x=lam]{Data/dataExp2.dat};
        \addplot[purple, mark = triangle, mark size = 1pt, only marks] table[y=gap1.2, x=lam]{Data/dataExp2.dat};
        \addplot[color = brown, mark = square, mark size = 1pt, only marks] table[ y=gap1.5, x=lam]{Data/dataExp2.dat};
        \end{axis}

    \end{tikzpicture}
        } 
    }
      \caption{Theoretically predicted (solid line) and numerically determined (markers) values of the training error (a) and generalization error (b) for the random features model with $\ell_1 + \ell_2$ regularization as a function of the regularization parameter $\lambda$, for varying values of the ratio $\gamma = \frac{m}{n}$ constant value of $\alpha = 10^{-3}$.}
    \label{fig:elasticNetRegLamVary}
\end{figure}
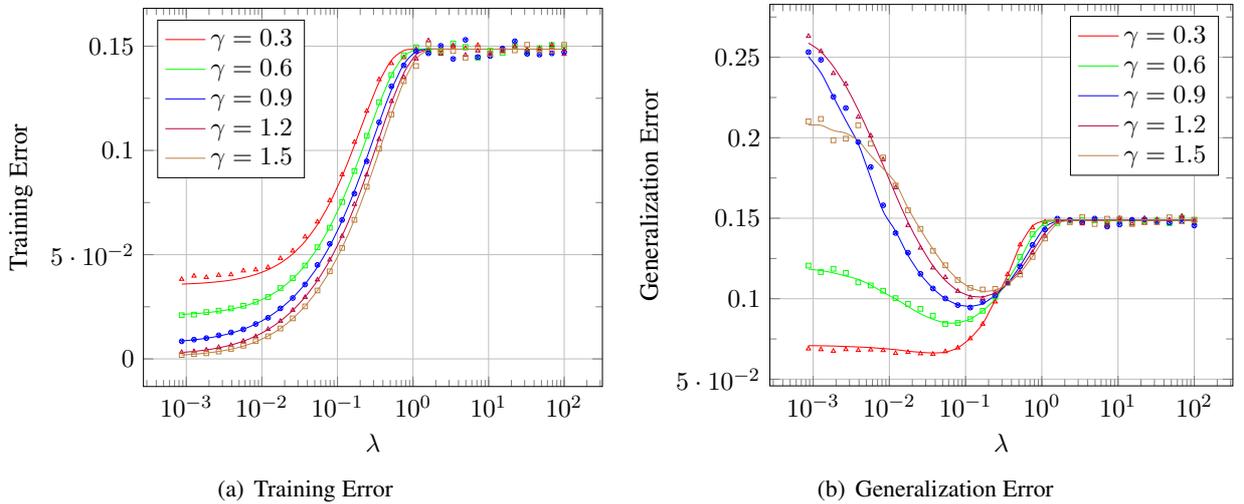

\section{CONCLUSION}
We derived expressions to determine the exact asymptotic learning curves for square loss random feature models, subject to strongly convex regularization, or $\ell_1$ regularization. These expressions consist of a 4-dimensional scalar optimization with two min-max pairs that is computable using standard techniques. We proved in two steps that these expressions coincide with the asymptotic learning curves: First, we demonstate that the scalar optimization is asymptotically equivalent to a surrogate Gaussian model whose first two moments match that of the RF models. For this, we proposed a novel multi-stage application of the CGMT. Then, we extended the results of the universality of RF models to a broader family, including elastic net and $\ell_1$ regularization, thereby demonstrating an asymptotic equivalence between the Gaussian model and the non linear RF model. Our results for universality hold not only for the cases of training and generalization error, but also for test functions $h$ from a wide family. 

There are several potential directions to extend our study. A particularly interesting direction is to use  our methodology to obtain refined expressions for a more generic loss functions, extending the existing studies,  e.g \cite{Loureiro2021Learning}. 

\bibliography{bib}

\appendix
\onecolumn

\section{Gaussian Min Max Theorems}
\renewcommand{\Pr}{\mathbb{P}}
\label{app:CGMT}
We make use of the Gaussian Min max theorem as well as the Convex Gaussian min max theorem in the proof of theorem 1. The Gaussian min max theorem was originally proven by Gordon \citep{gordon1985some, gordon1988milman}. The CGMT was developed by \citep{thrampoulidis2015gaussian}, we state the theorem here for completeness.

The Gaussian Min Max theorem states the following:
\begin{theorem}
\label{thm:GMT}
Let $\bA\in\mathbb{R}^{m\times n}, g\in\mathbb{R}, \bg \in \mathbb{R}^{m}$ and $\bh\in\mathbb{R}^{n}$ be independent of each other and have entries distributed i.i.d according to $\mathcal{N}(0, 1)$. Let $\mathcal{S}_1\subset \mathbb{R}^n$ and $\mathcal{S}_2\subset\mathbb{R}^m$ be nonempty compact sets. Let $f(\cdot, \cdot)$ we a continuous function on $\mathcal{S}_1\times\mathcal{S}_2$. We define
\begin{eqnarray}
\bP_1(\bA) :=& \minl_{\bx\in\mathcal{S}_1}\maxl_{\by\in\mathcal{S}_2} \by^T\bA\bx + g||\bx||_2||\by||_2+ f(\bx, \by),\\
\bP_2(\bg, \bh) :=&\minl_{\bx\in\mathcal{S}_1}\maxl_{\by\in\mathcal{S}_2} ||\bx||_2\bg^T\by + ||\by||_2\bh^T\bx + f(\bx, \by).
\end{eqnarray} 
Then for any $c\in\mathbb{R}$:
\begin{eqnarray}
\Pr(\bP_1(\bA, g) \leq c) \leq \Pr(\bP_2(\bg, \bh) \leq c)
\end{eqnarray}
\end{theorem}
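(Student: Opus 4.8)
The final statement to prove is the Gaussian Min Max Theorem (Theorem~\ref{thm:GMT}), the foundational comparison inequality due to Gordon. The plan is to derive it from the classical Gaussian comparison machinery, specifically from a Slepian/Gordon-type inequality on the expected value of max-min (or min-max) of Gaussian processes. First I would introduce, for fixed $\bx \in \mathcal S_1$ and $\by \in \mathcal S_2$ (normalized so that without loss of generality we work on the unit spheres, then rescale), the two Gaussian processes $X_{\bx,\by} = \by^T\bA\bx + g\|\bx\|_2\|\by\|_2 + f(\bx,\by)$ and $Y_{\bx,\by} = \|\bx\|_2 \bg^T\by + \|\by\|_2 \bh^T\bx + f(\bx,\by)$. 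The key observation is the covariance comparison: for the mean-zero parts, $\mathbb E[X_{\bx,\by}X_{\bx',\by'}] = \langle \bx,\bx'\rangle\langle\by,\by'\rangle + \|\bx\|\|\bx'\|\|\by\|\|\by'\|$ whereas $\mathbb E[Y_{\bx,\by}Y_{\bx',\by'}] = \|\bx\|\|\bx'\|\langle\by,\by'\rangle + \|\by\|\|\by'\|\langle\bx,\bx'\rangle$, so that the difference factors as $(\|\bx\|\|\bx'\| - \langle\bx,\bx'\rangle)(\|\by\|\|\by'\| - \langle\by,\by'\rangle) \ge 0$ by Cauchy--Schwarz, with equality when $\bx = \bx'$. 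This is exactly the sign pattern required by Gordon's inequality.

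Next I would invoke Gordon's comparison lemma: if two centered Gaussian processes $X, Y$ indexed by a product set $\mathcal S_1 \times \mathcal S_2$ satisfy $\mathbb E[X_{u,v} X_{u,v'}] \le \mathbb E[Y_{u,v}Y_{u,v'}]$ for all $u,v,v'$ and $\mathbb E[X_{u,v}X_{u',v'}] \ge \mathbb E[Y_{u,v}Y_{u',v'}]$ for $u \ne u'$, and the variances match on the diagonal, then
\begin{equation}
\mathbb E\left[\minl_{u}\maxl_{v} X_{u,v}\right] \le \mathbb E\left[\minl_u \maxl_v Y_{u,v}\right].
\end{equation}
To pass from the statement about expectations to the probabilistic statement $\Pr(\bP_1 \le c) \le \Pr(\bP_2 \le c)$, I would apply the standard strengthening of Gordon's inequality which compares the distributions directly: for any collection of thresholds, $\Pr\big(\bigcap_u \bigcup_v \{X_{u,v} \le c_{u,v}\}\big) \le \Pr\big(\bigcap_u\bigcup_v\{Y_{u,v}\le c_{u,v}\}\big)$ — taking all $c_{u,v} = c$ yields precisely the claim, since $\{\minl_u\maxl_v X_{u,v} \le c\}$ is the event $\bigcap_u\bigcup_v\{X_{u,v}\le c\}$. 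This distributional form is obtained by the classical smoothing argument of Gordon (interpolating between the two Gaussian vectors and differentiating under the expectation with a smooth approximation to indicators, then applying Gaussian integration by parts / Price's theorem), and I would cite \citep{gordon1985some,gordon1988milman,thrampoulidis2015gaussian} for it rather than reproduce the calculation.

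Two technical points need care. First, the processes are not centered — there is the deterministic term $f(\bx,\by)$ — but this is harmless since adding a fixed (non-random) function to both processes does not change the comparison, as the inequality only constrains the covariance structure of the centered parts and the diagonal variances, both of which are unaffected. Second, the sets $\mathcal S_1, \mathcal S_2$ may contain the origin or vectors of differing norms, and the normalization $\bx \mapsto \bx/\|\bx\|$ used in the covariance computation requires $\bx \ne 0$; I would handle this by noting the processes extend continuously (the $g\|\bx\|\|\by\|$ and $\|\bx\|,\|\by\|$ factors vanish at the origin) and by a routine density/compactness argument reducing to finite subsets where Gordon's finite-dimensional inequality applies directly, then passing to the continuous limit using continuity of $f$ on the compact set $\mathcal S_1 \times \mathcal S_2$. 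The main obstacle is really just bookkeeping: verifying the covariance inequalities hold with the correct direction on and off the diagonal, and correctly setting up the min-max (rather than max-max) structure so that Gordon's lemma — which requires the inequality to \emph{reverse} as one moves between distinct values of the outer minimization variable — applies. Once the covariance computation above is in hand, the rest is a direct appeal to the established comparison inequality.
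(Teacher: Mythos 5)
Note first that the paper does not actually prove Theorem~\ref{thm:GMT}; it is Gordon's Gaussian min--max theorem, stated in Appendix~\ref{app:CGMT} with citations to Gordon's original papers, so your proposal is a reconstruction of the classical argument rather than something the paper spells out. The overall strategy is right, and the core covariance calculation is correct: after centering, the two processes differ in covariance by exactly
\[
\left(\|\bx\|\|\bx'\| - \langle\bx,\bx'\rangle\right)\left(\|\by\|\|\by'\| - \langle\by,\by'\rangle\right)\ \ge\ 0,
\]
with equality on the diagonal $\bx=\bx'$. That is indeed the sign pattern that puts the problem in scope of a Slepian--Gordon comparison, and absorbing the deterministic $f$ into the thresholds $\lambda_{\bx,\by}$ is the right way to handle the non-centered part.

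However, the way you quote Gordon's lemma contains two compensating errors, and although they cancel in the final line the intermediate steps are wrong as written. First, the set identity $\{\min_u\max_v X_{u,v}\le c\}=\bigcap_u\bigcup_v\{X_{u,v}\le c\}$ is false: the right-hand side says ``for every $u$ there exists $v$ with $X_{u,v}\le c$,'' which is the event $\{\max_u\min_v X_{u,v}\le c\}$, a max--min statement. The identity that matches Gordon's distributional form is $\{\min_u\max_v X_{u,v}\ge\lambda\}=\bigcap_u\bigcup_v\{X_{u,v}\ge\lambda\}$. Second, the direction of the conclusion in your version of Gordon's lemma is reversed: under the conditions $\mathbb E[X_{u,v}X_{u,v'}]\le\mathbb E[Y_{u,v}Y_{u,v'}]$ (same $u$) and $\mathbb E[X_{u,v}X_{u',v'}]\ge\mathbb E[Y_{u,v}Y_{u',v'}]$ ($u\ne u'$), the process $X$ has \emph{less} correlation along the inner max index (so its inner max is stochastically larger) and \emph{more} correlation across the outer min index (so its outer min is stochastically larger); both effects push $\min_u\max_v X$ up, giving $\mathbb E\,\min_u\max_v X\ge\mathbb E\,\min_u\max_v Y$, not $\le$. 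Correspondingly, the distributional inequality that applies is $\Pr\left(\bigcap_u\bigcup_v\{X_{u,v}\ge\lambda_{u,v}\}\right)\ge\Pr\left(\bigcap_u\bigcup_v\{Y_{u,v}\ge\lambda_{u,v}\}\right)$. With $\lambda_{\bx,\by}=c-f(\bx,\by)$ this reads $\Pr(\bP_1\ge c)\ge\Pr(\bP_2\ge c)$, and complementing (using absolute continuity of the Gaussian laws) gives $\Pr(\bP_1\le c)\le\Pr(\bP_2\le c)$ as claimed. The remaining technical points — continuity at $\bx=0$ or $\by=0$, reduction to finite index subsets by compactness of $\mathcal S_1\times\mathcal S_2$ — are routine and you handle them appropriately, but the comparison step should be rewritten with the correct set identity and inequality directions rather than relying on two sign errors that happen to compensate.
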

The Convex Gaussian Min Max theorem extends these results to the following:
\begin{theorem}
\label{thm:CGMT}
Let $\bA\in\mathbb{R}^{m\times n}, \bg \in \mathbb{R}^{m}$ and $\bh\in\mathbb{R}^{n}$ be independent of each other and have entries distributed i.i.d according to $\mathcal{N}(0, 1)$. Let $\mathcal{S}_1\subset \mathbb{R}^n$ and $\mathcal{S}_2\subset\mathbb{R}^m$ be nonempty compact sets. Let $f(\cdot, \cdot)$ we a continuous function on $\mathcal{S}_1\times\mathcal{S}_2$. We define
\begin{eqnarray}
\bP_1(\bA) :=& \minl_{\bx\in\mathcal{S}_1}\maxl_{\by\in\mathcal{S}_2} \by^T\bA\bx+ f(\bx, \by),\\
\bP_2(\bg, \bh) :=&\minl_{\bx\in\mathcal{S}_1}\maxl_{\by\in\mathcal{S}_2} ||\bx||_2\bg^T\by + ||\by||_2\bh^T\bx + f(\bx, \by).
\end{eqnarray} 
Then for any $c_1\in\mathbb{R}$ we have that
\begin{eqnarray}
\mathbb{P}(\bP_1(\bA)<c_1) \leq 2\mathbb{P}(\bP_2(\bg, \bh)\leq c_1),
\end{eqnarray}
Under the further assumptions that $S_1$ and $S_2$ are convex sets and $f$ is concave-convex on $S_1\times S_2$ then for all $c_2\in\mathbb{R}$
\begin{eqnarray}
\mathbb{P}(\bP_1(\bA)>c_2) \leq 2\mathbb{P}(\bP_2(\bg, \bh)\geq c_2).
\end{eqnarray}
\end{theorem}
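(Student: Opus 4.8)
The plan is to deduce the Convex Gaussian Min-Max Theorem from Gordon's Gaussian Min-Max Theorem (Theorem~\ref{thm:GMT}) in two stages, the factor $2$ in each bound being the price of a sign-symmetrization argument.

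\textit{First inequality (no convexity needed).} Let $\bP_1^{(g)}(\bA,g) := \minl_{\bx\in\mathcal{S}_1}\maxl_{\by\in\mathcal{S}_2}\big(\by^T\bA\bx + g\,\|\bx\|_2\|\by\|_2 + f(\bx,\by)\big)$ denote Gordon's primary problem carrying the extra scalar term, so that $\bP_1(\bA)=\bP_1^{(g)}(\bA,0)$. On the event $\{g\le 0\}$ the extra term is pointwise nonpositive, and adding a nonpositive quantity to the objective of the inner maximization can only lower the inner $\maxl$, hence also the outer $\minl$; thus $\bP_1^{(g)}(\bA,g)\le\bP_1(\bA)$ whenever $g\le 0$. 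Since $g$ is independent of $\bA$ and $\mathbb{P}(g\le 0)=\tfrac12$, this gives $\mathbb{P}\big(\bP_1^{(g)}(\bA,g)\le c_1\big)\ge\tfrac12\,\mathbb{P}\big(\bP_1(\bA)<c_1\big)$, and combining with the Gordon bound $\mathbb{P}\big(\bP_1^{(g)}(\bA,g)\le c_1\big)\le\mathbb{P}\big(\bP_2(\bg,\bh)\le c_1\big)$ from Theorem~\ref{thm:GMT} yields $\mathbb{P}(\bP_1(\bA)<c_1)\le 2\mathbb{P}(\bP_2(\bg,\bh)\le c_1)$.

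\textit{Second inequality (convexity used).} Assume now $\mathcal{S}_1,\mathcal{S}_2$ convex and $f$ concave--convex. Then $(\bx,\by)\mapsto \by^T\bA\bx+f(\bx,\by)$ is convex in $\bx$, concave in $\by$, over the convex compact set $\mathcal{S}_1\times\mathcal{S}_2$, so Sion's minimax theorem gives $\bP_1(\bA)=\maxl_{\by}\minl_{\bx}\big(\by^T\bA\bx+f(\bx,\by)\big)$, whence $-\bP_1(\bA)=\minl_{\by\in\mathcal{S}_2}\maxl_{\bx\in\mathcal{S}_1}\big(\bx^T(-\bA^T)\by - f(\bx,\by)\big)$. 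This is again a min-max problem over compact sets with continuous payoff and a standard Gaussian coupling matrix (since $-\bA^T$ has i.i.d.\ standard Gaussian entries), so the first inequality---which required no convexity---applies to it, with the two feasible sets interchanged: $\mathbb{P}\big(-\bP_1(\bA)<-c_2\big)\le 2\,\mathbb{P}\big(Q(\bg',\bh')\le -c_2\big)$, where $Q$ is the associated Gordon auxiliary. Rearranging and relabelling the sign-symmetric Gaussian vectors identifies $-Q(\bg',\bh')$, in distribution, with $\maxl_{\by}\minl_{\bx}\big(\|\bx\|_2\bg^T\by+\|\by\|_2\bh^T\bx+f(\bx,\by)\big)$, which by weak duality is $\le\bP_2(\bg,\bh)$ pointwise; hence $\mathbb{P}(\bP_1(\bA)>c_2)\le 2\,\mathbb{P}\big(-Q(\bg',\bh')\ge c_2\big)\le 2\,\mathbb{P}(\bP_2(\bg,\bh)\ge c_2)$.

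\textit{Main obstacle.} The delicate point is the final step above: the Gordon auxiliary objective $\|\bx\|_2\bg^T\by+\|\by\|_2\bh^T\bx+f(\bx,\by)$ is \emph{not} jointly convex--concave, because each norm multiplies a sign-indefinite linear form, so one cannot invoke a minimax theorem to swap $\maxl_\by\minl_\bx$ for $\minl_\bx\maxl_\by$ and get equality with $\bP_2(\bg,\bh)$. What saves the argument is that equality is unnecessary---only the elementary weak-duality inequality $\maxl_\by\minl_\bx\le\minl_\bx\maxl_\by$ is used, and for the upper tail it happens to point in the right direction. The remainder is bookkeeping: checking that all extrema are attained and the relevant events measurable (from continuity of $f$ and compactness of $\mathcal{S}_1,\mathcal{S}_2$) and verifying the distributional identities $-\bA^T\overset{d}{=}\bA^T$, $-\bg\overset{d}{=}\bg$, $-\bh\overset{d}{=}\bh$ used in the reflections.
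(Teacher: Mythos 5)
The paper itself does not prove Theorem~\ref{thm:CGMT}: it states it for completeness and attributes it to \cite{thrampoulidis2015gaussian}. Your derivation is correct and is, in substance, the argument from that reference: the factor of $2$ via conditioning on $\{g\le 0\}$ to pass from Gordon's $\bP_1^{(g)}$ to $\bP_1$; Sion's minimax on the convex--concave primary to write $-\bP_1$ as a min-max with Gaussian coupling matrix $-\bA^T\overset{d}{=}\bA$; the lower-tail bound applied to that reversed problem; and the elementary $\maxl_{\by}\minl_{\bx}\le\minl_{\bx}\maxl_{\by}$ comparison for the (non-convex--concave) auxiliary, which is indeed the only non-obvious step and is handled correctly.
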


We note that if in the limit of $n, m \rightarrow \infty$ the value of $\bP_2(\bg, \bh)$ concentrates on a value $a$ then similarly $\bP_1(\bA)$ converges to the same value.

\section{Proof of Theorem \ref{thm:GaussianAsymptotics}}
\label{sec:GuassianTheoremProof}

 To prove, theorem \ref{thm:GaussianAsymptotics}, we shall apply the CGMT (supplement theorem \ref{thm:CGMT}) to obtain an alternative problem formulation for \eqref{eq:GaussianKeyOptimization}. Subsequently, we will simplify the alternative problem, and then express it once again in the form that is suitable for a second CGMT application. Applying the CGMT for a second time, we obtain a second alternative problem. After simplifying this second alternative problem, we will demonstrate the results in Theorem \ref{thm:GaussianAsymptotics}. To begin with the first application of the CGMT, we fix $\bW$ and change the variable $\btheta$ in 
 \eqref{eq:GaussianKeyOptimization} to $\be=\btheta-\btheta^*$ to obtain
 \begin{eqnarray}
\label{eq:GaussianKeyOptimization_e}
\tilde P_2(\tau_1, \tau_2) = \min_{\be} \frac{1}{2n}||\bepsilon - \frac{1}{\sqrt{m}}\tilde{\bX}\be||_2^2 + \frac{1}{m}r(\be+\btheta^*)
    + \frac{\tau_1}{m}\be^T\tilde{\bR}\be + \frac{\tau_2}{m} h(\be+\btheta^*),
\end{eqnarray}
 note that the rows $\tlbx_i$ of $\tlbX$ are i.i.d, centered and Gaussian with the covariance matrix $\tlbR=\frac{\rho^2_1}{d}\bW^T\bW+\rho_*^2 I$. Hence, we may write $\tlbX=\bU\tlbR^{\frac 12}$ where $\bU$ has i.i.d. standard Gaussian entries. Next, using the Legendre transform of the square function, we may
 write \eqref{eq:GaussianKeyOptimization} as
\begin{eqnarray}
    \tlP_2(\tau_1,\tau_2) = \min_{\be}\max_{\blambda} \frac{1}{n}\blambda^T\bepsilon - \frac{1}{n\sqrt{m}}\blambda^T\bU\tilde{\bR}^{1/2}\be -\frac{1}{2n}||\blambda||_2^2 + \frac{1}{m}r\left(\be + \btheta^*\right) + \frac{1}{m}\tau_1\be^T\tilde{\bR}\be + \frac{1}{m}\tau_2h(\be+\btheta^*)
\end{eqnarray}
In here $\tau_1\in\mathbb{R}$ and $\tau_2\in\mathbb{R}$ are constants and by the assumption, $r(\be+\btheta^*)+\tau_2 h(\be+\btheta^*)$ is $\frac{\mu} 2-$strongly convex. We require that  $\tau_1$ is chosen  sufficiently small, to ensure that the entire optimization problem remains strongly convex in $\be$.  In particular, we ensure that the term $B(\be) := r(\be + \btheta^*) + \tau_1\be^T\bR\be + \tau_2h(\be+\btheta^*)$ is $\frac{\mu}{4}$-strongly convex.
First, we show that
\begin{eqnarray}\label{eq:tau_1}
    |\tau_1| \leq \tau_1^* = \frac{\mu/8}{\rho_1^2(1+2\sqrt{\delta})^2 + \rho_*^2}
\end{eqnarray}
will satisfy this condition with high probability\footnote{Throughout this paper, the term "high probability" means a probability converging to $1$ as the problem size grows.}. For this reason, we introduce the following lemma:

\begin{lemma}
\label{lem:Rboundlemma}
    Define $C_{\tilde{\bR}}=\rho_1^2\left(1+2\sqrt{\delta}\right)^2+\rho_*^2$.
    For a random matrix $\bW$ with i.i.d. standard Gaussian entries and $\tlbR = \frac{\rho_1^2}{d}\bW\bW^T + \rho_*^2\bI$ , the following relation holds:
    \begin{eqnarray}
        \Pr\left[\|\tlbR\|_2 > C_{\tilde{\bR}}\right]<2e^{-cm}
    \end{eqnarray}
\end{lemma}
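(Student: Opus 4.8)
The plan is to reduce the statement to a classical non-asymptotic bound on the spectral norm of a rectangular Gaussian matrix. First I would note that since $\bW\bW^T \succeq 0$, every eigenvalue of $\tlbR = \frac{\rho_1^2}{d}\bW\bW^T + \rho_*^2\bI$ is of the form $\frac{\rho_1^2}{d}\lambda + \rho_*^2$ with $\lambda \geq 0$ an eigenvalue of $\bW\bW^T$, so that $\|\tlbR\|_2 = \frac{\rho_1^2}{d}\lambda_{\max}(\bW\bW^T) + \rho_*^2 = \frac{\rho_1^2}{d}\|\bW\|_2^2 + \rho_*^2$. Using $\delta = m/d$, and hence $\sqrt{d}\,(1+2\sqrt{\delta}) = \sqrt{d} + 2\sqrt{m}$, the threshold translates exactly: when $\rho_1 \neq 0$ the event $\{\|\tlbR\|_2 > C_{\tilde{\bR}}\}$ coincides with $\{\|\bW\|_2 > \sqrt{d} + 2\sqrt{m}\}$, and when $\rho_1 = 0$ that event is empty since then $\|\tlbR\|_2 = \rho_*^2 = C_{\tilde{\bR}}$.

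Next I would invoke the standard estimate from non-asymptotic random matrix theory: for $\bW \in \mathbb{R}^{m\times d}$ with i.i.d.\ $\mathcal{N}(0,1)$ entries one has $\mathbb{E}\|\bW\|_2 \leq \sqrt{m} + \sqrt{d}$, and since the map $\bW \mapsto \|\bW\|_2$ is $1$-Lipschitz with respect to the Frobenius norm, Gaussian concentration of measure yields $\Pr\!\left(\|\bW\|_2 \geq \sqrt{m} + \sqrt{d} + t\right) \leq e^{-t^2/2}$ for all $t > 0$. Choosing $t = \sqrt{m}$ gives $\Pr\!\left(\|\bW\|_2 \geq \sqrt{d} + 2\sqrt{m}\right) \leq e^{-m/2}$, so combining with the previous step, $\Pr\!\left(\|\tlbR\|_2 > C_{\tilde{\bR}}\right) \leq e^{-m/2} < 2e^{-cm}$ with $c = \tfrac12$ (the factor $2$ is a harmless over-estimate, kept for uniformity with the rest of the paper).

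There is no serious obstacle here; the only points needing care are (i) tracking the arithmetic so that the slack term in the Gaussian tail bound is exactly $\sqrt{d\delta} = \sqrt{m}$, which is what produces the $e^{-\Theta(m)}$ rate, and (ii) treating the degenerate case $\rho_1 = 0$ separately. If one wishes to avoid quoting $\mathbb{E}\|\bW\|_2 \leq \sqrt{m} + \sqrt{d}$, the same conclusion follows from an $\varepsilon$-net argument over the product of the unit spheres in $\mathbb{R}^m$ and $\mathbb{R}^d$, combined with a union bound over the net and a scalar sub-Gaussian tail estimate, at the cost of a slightly smaller constant $c$.
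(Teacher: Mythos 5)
Your proof is correct and takes essentially the same route as the paper's: both reduce $\|\tlbR\|_2$ to a statement about $\|\bW\|_2$ and then invoke a standard non-asymptotic tail bound for the operator norm of a Gaussian matrix, with the choice $t \asymp \sqrt{m}$ producing the $e^{-\Theta(m)}$ rate. Your version is marginally tighter in two cosmetic ways — you observe that $\|\tlbR\|_2 = \frac{\rho_1^2}{d}\|\bW\|_2^2 + \rho_*^2$ is an exact identity (the paper only uses the inequality $\leq$, which suffices) and you pin down the explicit constant $c = 1/2$ via Gaussian concentration of the $1$-Lipschitz map $\bW \mapsto \|\bW\|_2$, whereas the paper cites the bound $\Pr\bigl(\tfrac{1}{\sqrt{d}}\|\bW\|_2 \geq 1 + \sqrt{m/d} + t\bigr) \leq 2e^{-cdt^2}$ with an unspecified $c$ — but these are presentation choices, not a different argument.
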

for a universal constant $c>0$, where $||\cdot||_2$ denotes the spectral norm.
\begin{proof}
We note that by the definition of $\tlbR$, we have that
\begin{eqnarray}
    \|\tlbR\|_2 = \left\|\frac{\rho_1^2}{d}\bW\bW^T + \rho_*^2\bI\right\|_2 \leq \frac{\rho_1^2}{d}\|\bW\|_2^2 + \rho_*^2.
\end{eqnarray}
The elements of $\bW\in\mathbb{R}^{m\times d}$ are i.i.d normally distributed. From a standard result in matrix theory \citep{papaspiliopoulos2020high}[Corollary 7.3.3] we obtain
\begin{eqnarray}
\mathbb{P}(\frac{1}{\sqrt{d}}||\bW||_2 \geq 1 + \sqrt{m/d} + t) \leq 2e^{-cdt^2}.
\end{eqnarray}
 Choosing $t = \sqrt{m/d}$ yields
\begin{eqnarray}
\Pr(\frac{1}{\sqrt{d}}||\bW||_2 \geq 1+ 2\sqrt{\delta})\leq 2e^{-cm},
\end{eqnarray}
where we recall that $\delta = \frac{m}{d}$. This provides the desired result.
\end{proof}
    According to lemma \ref{lem:Rboundlemma}, the term $\be^T\tlbR\be$ is $2C_\bR-$smooth, and hence for $\tau_1\leq \frac{\frac\mu 4}{2C_\bR}$, the term $B$ is $\frac\mu 2-\frac\mu 4=\frac\mu 4-$convex. This is the same as the condition in \eqref{eq:tau_1}. Hence, in the rest of this proof we assume that $B$ is strongly convex.
    

Next, we note that applying the CGMT requires that both $\blambda$ and $\be$ are in compact feasibility sets. Here we employ a similar strategy to \citep{ThrampoulidisMestimators, LuPreciseRandom, Loureiro2021Learning} by showing that with high probability, the solutions of both the original problem and the alternative problem can be bound in fixed compact sets, hence restricting the optimizations to these sets will not affect the result. As a result, we may apply the CGMT.

\begin{lemma}\label{lemma:bounded_CGMT1}
    Consider the following two optimization problems that correspond to the primary optimization and to the first alternative optimization in CGMT.
    \begin{eqnarray}
    \label{eq:compactsetproblem1}
    \tlP_{2,1} = \min_{\be}\max_{\blambda}\frac{1}{n}\blambda^T\bepsilon - \frac{1}{n\sqrt{m}}\blambda^T\tilde{\bX}\be -\frac{1}{2n}||\blambda||_2^2 + \frac{1}{m}B(\be) \\
    \label{eq:compactsetproblem2}
    \tlP_{2,2} = \min_{\be}\max_{\blambda}\frac{1}{n}\blambda^T\bepsilon - \frac{1}{n\sqrt{m}}||\tilde{\bR}^{1/2}\be||_2\blambda^T\bg - \frac{1}{n\sqrt{m}}||\blambda||_2\bh^T\tilde{\bR}^{1/2}\be -\frac{1}{2n}||\blambda||_2^2 + \frac{1}{m}B(\be)
    \end{eqnarray}
    In these equations $\bg, \bh$ are  standard normal vectors of size $n, m$, respectively. Denote by $\tlbe_{2,1}, \tlbe_{2,2}$ the optimal solutions of $\tlP_{2,1}$ and $\barP_{2,2}$, respectively. Furthermore, respectively denote by $\tlblambda_1(\be), \tlblambda_2(\be)$ the solution of their inner optimization (over $\blambda$) for a given vector $\be$. 
    Let $B$ be strongly convex with constant $\frac\mu 4$ and $\max\left\{\|\nabla r(\btheta^*)\|, \|\nabla h(\btheta^*)\|\right\}=O(\sqrt{m})$. Then, there exist positive constants $C_{\be}, C_{\blambda}$ only depending on $\mu$ such that the following hold true:
    \begin{itemize}
        \item The solutions $\tlbe_{2,i}$ for $i=1,2$ satisfy
        \begin{eqnarray}
    \liml_{m\to \infty}\Pr\left(\max\left\{||\tlbe_{2,1}||_2,||\barbe_{2,2}||_2\right\} \leq C_{\be}\sqrt{m}\right).
    \end{eqnarray}
    \item It also holds that
    \begin{equation}
        \liml_{m\to\infty}\Pr\left(\supl_{\be\mid\|\be\|\leq C_{\be}\sqrt{m}}\max\left\{\|\tlblambda_1(\be)\|,\|\tlblambda_2(\be)\|\right\}\leq C_{\blambda}\sqrt{m}\right)=1
    \end{equation}
    \end{itemize}
\end{lemma}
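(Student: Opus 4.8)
I would prove the two assertions separately, starting with the bound on $\tlbe_{2,i}$, which is self-contained, and then deriving the uniform bound on the inner maximizer. For the bound on $\be$, the idea is to carry out the inner maximization over $\blambda$ in both $\tlP_{2,1}$ and $\tlP_{2,2}$ and then invoke the strong convexity of $B$. In $\tlP_{2,1}$ the inner maximum over $\blambda$ equals $\frac1{2n}\|\bepsilon-\frac1{\sqrt m}\tlbX\be\|_2^2\ge 0$; in $\tlP_{2,2}$ the inner objective is coercive in $\blambda$, since the $-\frac1{2n}\|\blambda\|_2^2$ term dominates the at-most-linear cross terms, so a finite maximum exists, and evaluating at $\blambda=\bzero$ shows it is $\ge 0$ as well. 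Writing $F(\be)$ for the resulting function of $\be$ alone in each case, optimality gives $F(\tlbe_{2,i})\le F(\bzero)=\frac1{2n}\|\bepsilon\|_2^2+\frac1m B(\bzero)$. Combining this with the $\frac\mu4$-strong convexity inequality $\frac1m B(\be)\ge \frac1m B(\bzero)+\frac1m\langle\nabla B(\bzero),\be\rangle+\frac\mu{8m}\|\be\|_2^2$ and the nonnegativity of the loss part of $F$, the $B(\bzero)$ terms cancel and I obtain the scalar quadratic inequality
\begin{equation}
\frac{\mu}{8m}\|\tlbe_{2,i}\|_2^2 \le \frac1m\|\nabla B(\bzero)\|_2\,\|\tlbe_{2,i}\|_2 + \frac1{2n}\|\bepsilon\|_2^2 .
\end{equation}
Since $\nabla B(\bzero)=\nabla r(\btheta^*)+\tau_2\nabla h(\btheta^*)$ has norm $O(\sqrt m)$ by hypothesis and $\frac1{2n}\|\bepsilon\|_2^2=O(1)$ with high probability by A3 and the law of large numbers, solving this quadratic yields $\|\tlbe_{2,i}\|_2\le C_\be\sqrt m$ for a constant $C_\be$ depending only on $\mu$ (and the absorbed universal constants) on this high-probability event.

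For the bound on $\blambda$, I would analyze the inner problem for a \emph{fixed} $\be$ and observe that every resulting estimate depends on $\be$ only through $\|\be\|_2$, so taking the supremum over $\|\be\|_2\le C_\be\sqrt m$ is immediate. In $\tlP_{2,1}$ the inner maximizer is explicit, $\tlblambda_1(\be)=\bepsilon-\frac1{\sqrt m}\tlbX\be$, hence $\|\tlblambda_1(\be)\|_2\le\|\bepsilon\|_2+\tfrac1{\sqrt m}\|\tlbX\|_\op\|\be\|_2$; writing $\tlbX=\bU\tlbR^{1/2}$ and using the Gaussian operator-norm bound $\|\bU\|_\op=O(\sqrt m)$ (the same estimate as in Lemma~\ref{lem:Rboundlemma}) together with $\|\tlbR\|_2\le C_{\tlbR}$ from Lemma~\ref{lem:Rboundlemma}, this is $O(\sqrt m)$. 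In $\tlP_{2,2}$, comparing the inner value at the maximizer $\tlblambda_2(\be)$ with its value $0$ at $\blambda=\bzero$ and applying Cauchy--Schwarz termwise gives, after dividing by $\|\tlblambda_2(\be)\|_2$,
\begin{equation}
\tfrac1{2n}\|\tlblambda_2(\be)\|_2 \le \tfrac1n\|\bepsilon\|_2 + \tfrac1{n\sqrt m}\|\tlbR^{1/2}\be\|_2\big(\|\bg\|_2+\|\bh\|_2\big),
\end{equation}
and since $\|\tlbR^{1/2}\be\|_2\le C_{\tlbR}^{1/2}\|\be\|_2=O(\sqrt m)$ and $\|\bg\|_2,\|\bh\|_2,\|\bepsilon\|_2=O(\sqrt m)$ with high probability, this too is $O(\sqrt m)$, uniformly over the ball. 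Intersecting the finitely many high-probability events --- the Gaussian operator-norm estimate for $\bU$, Lemma~\ref{lem:Rboundlemma} for $\tlbR$, the concentration of $\|\bepsilon\|_2^2$, $\|\bg\|_2^2$, $\|\bh\|_2^2$, and the hypothesis on $\|\nabla r(\btheta^*)\|_2,\|\nabla h(\btheta^*)\|_2$ --- yields both claims simultaneously with probability tending to $1$, with constants $C_\be,C_\blambda$ depending only on $\mu$.

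The step I expect to be most delicate is the $\blambda$-bound for the alternative problem $\tlP_{2,2}$: unlike $\tlP_{2,1}$ there is no closed-form maximizer, and the sign of $\bh^T\tlbR^{1/2}\be$ can render the term $-\frac1{n\sqrt m}\|\blambda\|_2\,\bh^T\tlbR^{1/2}\be$ convex rather than concave in $\blambda$, so one must first establish existence of a finite maximizer (from the dominating $-\|\blambda\|_2^2$ term) before the value-at-$\bzero$ comparison applies; one must also check the resulting bound is genuinely uniform in $\be$, which works here precisely because every estimate passes through $\|\be\|_2$. The remaining ingredients --- explicit optimization, Cauchy--Schwarz, strong-convexity inequalities, and standard Gaussian concentration --- are routine.
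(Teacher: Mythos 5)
Your proposal is correct and follows essentially the same strategy as the paper's proof: for the $\be$-bound, both arguments compare $F(\tlbe_{2,i})\le F(\bzero)$ with the lower bound $F(\be)\ge\frac1m B(\be)$ obtained by plugging $\blambda=\bzero$, and then close via strong convexity of $B$ and the $O(\sqrt m)$ hypothesis on $\nabla B(\bzero)$; for the $\blambda$-bound, both reduce to the observation that the quadratic penalty $-\frac{1}{2n}\|\blambda\|_2^2$ dominates the linear-in-$\blambda$ cross terms. The one small deviation is in bounding $\tlblambda_2(\be)$: the paper computes the optimality condition for $\hat\beta=\|\blambda\|_2$ explicitly and bounds the resulting expression, whereas you compare the objective at $\tlblambda_2(\be)$ against its value at $\blambda=\bzero$ and apply Cauchy--Schwarz; your route is slightly cleaner since it sidesteps writing out the maximizer, but both yield the same $O(\sqrt m)$ estimate uniformly on the ball $\|\be\|_2\le C_\be\sqrt m$.
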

\begin{proof}
We note that $B$ is $\frac\mu 4$ strongly convex. Solving for $\blambda$ in both optimization, we may write the optimization over $\be$ as
\begin{eqnarray}
    \min_{\be} F_i(\be) \qquad i = 1,2
\end{eqnarray}
Where $F_i(\be)$ is the optimal value over $\blambda$. We note that setting $\blambda = \bm{0}$ in both optimizations, we obtain that $F(\be) \geq \frac 1mB(\be)$. Then we see that
\begin{eqnarray}
    B(\be) \geq B(\bm{0}) + \bd^T\be + \frac{\mu}{4}||\be||_2^2
\end{eqnarray}
where $\bd = \nabla B(0)=\nabla r(\btheta^*)+\tau_2 \nabla h(\btheta^*)$ and by the assumption $\|\bd\|=O(\sqrt{m})$.

For optimization $P_1$, we note that
\begin{eqnarray}
    F(\bm{0}) = \frac{1}{m}B(\bm{0}) + \frac{1}{2n}\left\|\bepsilon  \right\|_2^2.
\end{eqnarray}
This implies that for the optimal solution $\hat{\be}$ we have
\begin{eqnarray}
\label{eq:midd}
 \frac{1}{m}B(\bm{0}) + \frac{1}{2n}\left\|\bepsilon  \right\|_2^2 = F(\bm{0}) \geq F(\tlbe_1) \geq \frac{1}{m}B(\bm{0}) + \frac{1}{m}\bd^T\tlbe_1 + \frac{\mu}{4m}||\tlbe_1||_2^2,
\end{eqnarray}
which yields
\begin{eqnarray}
\frac{\mu}{4m}\left\|\tlbe_1 + \frac{1}{\mu}\bd\right\|^2 \leq \frac{1}{2n}||\bepsilon||_2^2 + \frac{1}{4\mu m}||\bd||_2^2.
\end{eqnarray}
Then, we obtain
\begin{eqnarray}
||\tlbe_1||_2 \leq \left\|\frac{1}{\mu}\bd\right\|_2  + \sqrt{\frac{2m}{n\mu}||\bepsilon||_2^2 + \frac{1}{\mu^2}||\bd||_2^2}.
\end{eqnarray}
From the standard matrix theory \citep{papaspiliopoulos2020high}[Theorem 2.8.1] we know that $||\bepsilon||_2^2 < c n$ for some $c$, with high probability. We observe that there must exist some constant $C_{\be_1}$ such that
\begin{eqnarray}
    \lim_{m\rightarrow\infty}\mathbb{P}(||\tlbe_1||_2 \geq C_{\be_1}\sqrt{m}) = 0.
\end{eqnarray}
Now we consider \eqref{eq:compactsetproblem2}. Our strategy is similar to the previous case. We note that if we let $\beta = ||\blambda||_2$ we can solve the optimization over $\blambda$ to obtain:
\begin{eqnarray}
    F(\be) = \max_{\beta \geq 0} \frac{\beta}{n}\left\|\bepsilon - \frac{1}{\sqrt{m}}\|\tilde{\bR}^{1/2}\be\|_2\bg \right\| - \frac{\beta}{n\sqrt{m}}\bh^T\tilde{\bR}^{1/2}\be - \frac{\beta^2}{2n} + \frac{1}{m}B(\be).
\end{eqnarray}
The optimization is limited to $\beta \geq 0$. Hence, its optimal value will be increased when the constant is lifted, leading to a quadratic optimization and the following result
\begin{eqnarray}
    F(\be) \leq \frac{1}{m}B(\be) + \frac{1}{2n}\left(\left\|\bepsilon - \frac{1}{\sqrt{m}}\|\tilde{\bR}^{1/2}\be\|_2\bg \right\| - \frac{\beta}{\sqrt{m}}\bh^T\tilde{\bR}^{1/2}\be \right)^2,
\end{eqnarray}
and in particular
\begin{eqnarray}
F(\bm{0}) \leq \frac{1}{m}B(\bm{0}) + \frac{1}{2n}||\bepsilon||_2^2.
\end{eqnarray}
By applying the same inequality as in \eqref{eq:midd} we obtain that
\begin{eqnarray}
    ||\tlbe_2||_2 \leq \left\| \frac{1}{\mu}\bd\right\| + \sqrt{ \frac{2m}{n\mu}\|\bepsilon\|_2^2 + \frac{1}{\mu^2}\|\bd\|_2^2  }.
\end{eqnarray}
As such, we obtain 
\begin{eqnarray}
    \lim_{m\rightarrow\infty}\mathbb{P}(||\tlbe_2|| \geq C_{\be_2}\sqrt{m}) = 0.
\end{eqnarray}
Now, let $C_{\be} = \max(C_{\be_1}, C_{\be_2})$, and use this to define the set $A_{\be} = \lbrace \be \in \mathbb{R}^m|\ ||\be||_2 \leq C_{\be}\sqrt{m} \rbrace$.

Next, we note from the optimality condition of the inner optimization in Eq. \eqref{eq:compactsetproblem1} that   
\begin{eqnarray}
    \tlblambda_1(\be) = \bepsilon - \frac{1}{\sqrt{m}}\bU\tilde{\bR}^{1/2}\be.
\end{eqnarray}
As such, for all $\be \in A_{\be}$ we have
\begin{eqnarray}
    ||\tlblambda_1(\be)||_2 \leq ||\bepsilon||_2 + \left\|\frac{1}{\sqrt{m}}\bU\tilde{\bR}^{1/2}\right\|\|\be\|_2 \leq \|\bepsilon\|_2 + \left\|\frac{1}{\sqrt{m}}\bU \right\|_2\|\tilde{\bR}^{1/2}\|_2\|\be\|_2
\end{eqnarray}
We note from lemma \ref{lem:Rboundlemma} that $\|\bR^{1/2}\|_2$ is bounded, and we make use of standard random matrix theory to conclude $\|\frac{1}{\sqrt{m}}\bU\|_2 < C$ with high probability. Then, making use of the same arguments as before we can see that there must exist a constant $C_{\blambda_1}$ such that for all $\be \in A_{\be}$
\begin{eqnarray}
    \lim_{n\rightarrow\infty}\mathbb{P}\left(\supl_{\be \in A_{\be}}||\tlblambda_1(\be)||_2 \geq C_{\blambda_1}\sqrt{n}\right) = 0
\end{eqnarray}

Finally we note that the optimality condition over $\beta$ of problem \ref{eq:compactsetproblem2} gives that for all $\be \in A_{\be}$
\begin{eqnarray}
    \hat{\beta} = \|\tlblambda_1(\be)\|_2 = \left\|\bepsilon - \frac{1}{\sqrt{m}}\|\tilde{\bR}^{1/2}\be\|_2\bg \right\|_2 - \frac{1}{\sqrt{m}}\tilde{\bR}^{1/2}\bh \nwl
    \leq \|\bepsilon\|_2 + \frac{1}{\sqrt{m}}\|\bg\|_2\|\tilde{\bR}^{1/2}\|_2\|\be\|_2 +  \frac{1}{\sqrt{m}}\|\tilde{\bR}^{1/2}\|_2\|\bh\|_2
\end{eqnarray}
We note that with high probability $\|\bepsilon\|_2 < C\sqrt{n}$, $\|\bg\|_2 < C\sqrt{n}$ and $||\bh||_2 < C\sqrt{m}$. From this we can see that there exists a constant $C_{\blambda_2}$ such that
\begin{eqnarray}
    \lim_{n\rightarrow\infty}\mathbb{P}\left(\supl_{\be \in A_{\be}}||\tlblambda_2(\be)||_2 \geq C_{\blambda_2}\sqrt{n}\right) = 0
\end{eqnarray}
Taking $C_{\blambda} = \max(C_{\blambda_1}, C_{\blambda_2})$ completes the proof.
\end{proof}

We use the definition of the sets $A_{\be} = \lbrace\be\in\mathbb{R}^m|\ ||\be||_2 \leq C_{\be}\sqrt{m} \rbrace$ and $A_{\blambda} = \lbrace\blambda\in\mathbb{R}^n|\ ||\blambda||_2 \leq C_{\blambda}\sqrt{n} \rbrace$ in the rest of this study. By the lemma above, we can with high probability, restrict ourselves to the following problem
\begin{eqnarray}
    \tilde{P}_{2,1}'(\tau_1, \tau_2) = \min_{\be\in A_{\be}}\max_{\blambda \in A_{\blambda}} \frac{1}{n}\blambda^T\bepsilon - \frac{1}{n\sqrt{m}}\blambda^T\bU\tilde{\bR}^{1/2}\be -\frac{1}{2n}||\blambda||_2^2 + \frac{1}{m}B(\be),
\end{eqnarray}
and be certain that the solution vector and the optimal value to the problem $\tilde{P}_2$ will be equal to those of the problem $\tilde{P}_{2,1}'$. We now make use of the CGMT, (Thm. \ref{thm:CGMT}). From which we obtain the following optimization problem
\begin{eqnarray}
    \tlP_{2,2}' = \min_{\be\in A_{\be}}\max_{\blambda \in A_{\blambda}}\frac{1}{n}\blambda^T\bepsilon - \frac{1}{n\sqrt{m}}||\tilde{\bR}^{1/2}\be||_2\bg^T\blambda - \frac{1}{n\sqrt{m}}||\blambda||_2\bh^T\tilde{\bR}^{1/2}\be  - \frac{1}{2n}||\blambda||_2^2 + \frac{1}{m}B(\be)
\end{eqnarray}
In which $\bg\sim\mathcal{N}(0, I_n)$ and $\bh\sim\mathcal{N}(0, I_m)$. Note that by lemma \eqref{lemma:bounded_CGMT1}, $\tilde{P}_{2,1}'$ is also identical to $\tilde{P}_{2,2}$. We now let $\beta = \frac{1}{\sqrt{n}}||\blambda||_2$. We further note that $0 \leq \beta \leq \beta_{max}$ in which $\beta_{max}$ can be arbitrarily larger than $C_{\blambda}$. We can solve the optimization over $\blambda$ to obtain

\begin{eqnarray}
    A_2 = \min_{\be \in A_{\be}}\max_{0\leq \beta \leq \beta_{max}} \beta\left\|\frac{1}{\sqrt{n}}\bepsilon - \frac{1}{\sqrt{nm}}\|\tilde{\bR}^{1/2}\be\|_2\bg \right\|_2 - \frac{\beta}{\sqrt{nm}}\bh^T\tilde{\bR}^{1/2}\be - \frac{\beta^2}{2} + \frac{1}{m}B(\be)
\end{eqnarray}

We now note that the first term of this problem concetrates. We prove this in the following lemma

\begin{lemma}
    \label{lem:A2concentrationLemma}
    Consider the term 
    \begin{eqnarray}
F(\be, \beta) =\beta\left\|\frac{1}{\sqrt{n}}\bepsilon - \frac{1}{\sqrt{nm}}\|\tilde{\bR}^{1/2}\be\|_2\bg \right\|_2 - \frac{\beta}{\sqrt{nm}}\bh^T\tilde{\bR}^{1/2}\be - \frac{\beta^2}{2} + \frac{1}{m}B(\be)
    \end{eqnarray}
and let $\bar{F}$ be given by
\begin{eqnarray}
\bar{F}(\be, \beta) = \beta\sqrt{\sigma_{\bepsilon}^2 - \frac{1}{m}\|\tilde{\bR}^{1/2}\be\|_2^2}  - \frac{\beta}{\sqrt{nm}}\bh^T\tilde{\bR}^{1/2}\be - \frac{\beta^2}{2} + \frac{1}{m}B(\be)
\end{eqnarray}
Then there exist positive constants $C, c$ such that for any $\epsilon > 0$.
\begin{eqnarray}
\Pr\left(\sup_{\be\in A_{\be}, 0\leq\beta\leq\beta_{max}}|F(\be, \beta) - \bar{F}(\be,\beta)| \geq \epsilon \right) \leq Ce^{-cn\epsilon}
\end{eqnarray}

\end{lemma}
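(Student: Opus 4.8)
The strategy is to discard the randomness that already matches between $F$ and $\bar F$, reduce the supremum over $\be\in A_\be$ to a one–parameter scalar problem, establish a pointwise concentration estimate, and then upgrade it to the uniform statement by a Lipschitz/net argument. First, the three terms $-\frac{\beta}{\sqrt{nm}}\bh^T\tilde\bR^{1/2}\be$, $-\frac{\beta^2}{2}$ and $\frac1m B(\be)$ appear verbatim in both $F$ and $\bar F$, so $F(\be,\beta)-\bar F(\be,\beta)$ equals $\beta$ times the difference of their first terms. Introducing the scalar $u=u(\be):=\frac1{\sqrt m}\|\tilde\bR^{1/2}\be\|_2\ge 0$, the first term of $F$ is $G(u):=\big\|\frac1{\sqrt n}\bepsilon-\frac{u}{\sqrt n}\bg\big\|_2$, which depends on $\be$ only through $u$, while the first term of $\bar F$ is its deterministic surrogate $\bar G(u):=\sqrt{\sigma_\bepsilon^2+u^2}$ (indeed $\mathbb{E}[G(u)^2]=\sigma_\bepsilon^2+u^2$ since $\bepsilon$ and $\bg$ are independent and $\mathbb{E}[\epsilon_i]=0$; the sign in the displayed $\bar F$ should accordingly be read as $+$). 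On the event of Lemma~\ref{lem:Rboundlemma}, $\|\tilde\bR\|_2\le C_{\tilde\bR}$, hence $u(\be)\le C_\be\sqrt{C_{\tilde\bR}}=:U$ for every $\be\in A_\be$; since the $\beta$–dependence of $F-\bar F$ is linear, it therefore suffices to show that, with the stated probability,
\[
\sup_{u\in[0,U]}\big|G(u)-\bar G(u)\big|\le \epsilon/\beta_{max}.
\]

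For the pointwise bound, fix $u\in[0,U]$ and expand $G(u)^2=\frac1n\|\bepsilon\|_2^2-\frac{2u}{n}\bepsilon^T\bg+\frac{u^2}{n}\|\bg\|_2^2$. The term $\frac{u^2}{n}\|\bg\|_2^2$ concentrates on $u^2$ by the standard $\chi^2$ estimate $\Pr(|\frac1n\|\bg\|_2^2-1|>t)\le 2e^{-cn\min(t,t^2)}$, with $u^2\le U^2$ bounded. Conditionally on $\bepsilon$, the cross term is centred Gaussian with variance $\frac{4u^2}{n^2}\|\bepsilon\|_2^2=O(U^2/n)$ on the high–probability event $\|\bepsilon\|_2^2\le Cn$ (valid since $\mathbb{E}[\epsilon_i^4]<\infty$), hence it has a sub-Gaussian tail. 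Finally $\frac1n\|\bepsilon\|_2^2$ concentrates on $\sigma_\bepsilon^2$; with only four finite moments this is obtained by truncating at a slowly growing level, say $|\epsilon_i|\le n^{1/3}$ — all $|\epsilon_i|$ are below this level with probability $\ge 1-n^{-1/3}\mathbb{E}[\epsilon_1^4]$, the truncation bias is $O(n^{-2/3})$, and a Bernstein bound controls the truncated sum. Combining the three pieces, $|G(u)^2-\bar G(u)^2|$ exceeds a threshold $\delta$ only on an event of exponentially (in $n$, up to the noise caveat below) small probability, and one passes from $G(u)^2$ to $G(u)$ via $|\sqrt a-\sqrt b|\le|a-b|/(\sqrt a+\sqrt b)$ using $\bar G(u)\ge\sigma_\bepsilon>0$ (or $|\sqrt a-\sqrt b|\le\sqrt{|a-b|}$ if $\sigma_\bepsilon=0$).

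To uniformize, note that $u\mapsto G(u)$ is $\frac1{\sqrt n}\|\bg\|_2$–Lipschitz on $[0,U]$ with $\frac1{\sqrt n}\|\bg\|_2\le C$ w.h.p., and $u\mapsto\bar G(u)$ is $1$–Lipschitz; covering $[0,U]$ by an $\epsilon/(2(C+1)\beta_{max})$–net of cardinality $O(1/\epsilon)$, applying the pointwise estimate at each net point with a union bound, and filling the gaps by the Lipschitz estimate gives the displayed inequality, the polynomial prefactor being absorbed into the exponent at the cost of a smaller constant $c$. Intersecting with the finitely many high–probability events invoked above (Lemma~\ref{lem:Rboundlemma}, the bounds on $\|\bepsilon\|_2$ and $\frac1{\sqrt n}\|\bg\|_2$, and the truncation event) then yields the claim.

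The main obstacle is the noise term $\frac1n\|\bepsilon\|_2^2$: under only a bounded fourth moment the clean exponential-in-$n$ concentration is genuinely unavailable, so either a stronger tail assumption on $\epsilon_i$ must be imposed or — as in the truncation sketch — one accepts a weaker but still $o(1)$ rate for that contribution; moreover the $\chi^2$-type fluctuations naturally produce the exponent $\min(\epsilon,\epsilon^2)$ rather than $\epsilon$, which is the form that should be read into the statement. Every other ingredient — the reduction to the scalar $u$, the Gaussian tail for the cross term, and the net argument — is routine.
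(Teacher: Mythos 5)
Your decomposition is the same as the paper's: since the last three terms of $F$ and $\bar F$ coincide, the difference is $\beta$ times the difference of the two ``data-fit'' terms, and after squaring that reduces to controlling the fluctuation $\delta(\be)=\tfrac1n\|\bepsilon\|_2^2-\sigma_\bepsilon^2 + u(\be)^2\big(\tfrac1n\|\bg\|_2^2-1\big) - \tfrac{2u(\be)}{n}\bepsilon^T\bg$ uniformly over $A_\be$, where $u(\be)=\tfrac1{\sqrt m}\|\tilde\bR^{1/2}\be\|_2\le C_\be\sqrt{C_{\tilde\bR}}$. Where you diverge is in how you uniformize: you parametrize by $u$, build an $\epsilon$-net on $[0,U]$ and use a Lipschitz estimate, whereas the paper simply replaces the $\be$-dependent coefficients in $\delta$ by their worst-case values $U$ and $U^2$ to get a single scalar random variable $\bar\delta$ that dominates $|\delta(\be)|$ simultaneously for all $\be\in A_\be$. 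The paper's route is shorter and avoids the net entirely; your route also works and would be the natural one if the $\be$-dependence did not factor through a single nonnegative scalar, but here it adds no power. You also correctly flag two genuine issues with the statement as printed: the displayed $\bar F$ should have $\sigma_\bepsilon^2+\tfrac1m\|\tilde\bR^{1/2}\be\|_2^2$ (consistent with $\mathbb{E}\|\tfrac1{\sqrt n}\bepsilon-\tfrac{u}{\sqrt n}\bg\|_2^2=\sigma_\bepsilon^2+u^2$ and with the $\bar P_2$ used immediately afterwards), and the claimed $Ce^{-cn\epsilon}$ tail for the $\tfrac1n\|\bepsilon\|_2^2$ contribution is not justified under A3 alone, which only guarantees a finite fourth moment; one needs either a sub-Gaussian (or sub-exponential) noise assumption to get this rate, or one settles for the polynomial rate that suffices for the convergence-in-probability conclusion of Theorem~\ref{thm:GaussianAsymptotics}. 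Your truncation workaround for the latter is a reasonable fix and your observation that the $\chi^2$ piece naturally yields $e^{-cn\min(\epsilon,\epsilon^2)}$ is also accurate.
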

\begin{proof}
We see that $F$ can be expressed as
\begin{eqnarray}
    F = \beta\sqrt{ \frac{1}{n}\|\bepsilon\|_2^2 + \frac{1}{nm}\|\tilde{\bR}^{1/2}\be\|_2^2\|\bg\|_2^2 - \frac{1}{n\sqrt{m}}\|\tilde{\bR}^{1/2}\be\|_2\bepsilon^T\bg }\nwl- \frac{\beta}{\sqrt{nm}}\bh^T\tilde{\bR}^{1/2}\be - \frac{\beta^2}{2} + \frac{1}{m}B(\be)\nwl
\end{eqnarray}
Or equivalently
\begin{eqnarray}
    F = \beta\sqrt{ \left(\frac{1}{n}\|\bepsilon\|_2^2 - \sigma_{\bepsilon}^2\right) + \sigma_{\bepsilon}^2 + \frac{1}{m}\|\tilde{\bR}^{1/2}\be\|_2^2\left(\frac{1}{n}\|\bg\|_2^2 - 1\right) + \frac{1}{m}||\tilde{\bR}^{1/2}\be||_2^2 - \frac{2}{\sqrt{m}}\|\tilde{\bR}^{1/2}\be\|_2\frac{\bepsilon^T\bg}{n} }\nwl- \frac{\beta}{\sqrt{nm}}\bh^T\tilde{\bR}^{1/2}\be - \frac{\beta^2}{2} + \frac{1}{m}B(\be)\nwl
    \leq \bar{F} + \beta\sqrt{\delta} \leq \bar{F} + \beta_{max}\sqrt{\bar\delta}
\end{eqnarray}
in which
\begin{eqnarray}
\delta =  \left(\frac{1}{n}\|\bepsilon\|_2^2 - \sigma_{\bepsilon}^2\right) + \frac{1}{m}\|\tilde{\bR}^{1/2}\be\|_2^2\left(\frac{1}{n}\|\bg\|_2^2 - 1\right)- \frac{2}{\sqrt{m}}\|\tilde{\bR}^{1/2}\be\|_2\frac{\bepsilon^T\bg}{n} \nwl
\leq \left(\frac{1}{n}\|\bepsilon\|_2^2 - \sigma_{\bepsilon}^2\right) + C_{\be}^2C_{\tilde{\bR}}\left(\frac{1}{n}\|\bg\|_2^2 - 1\right)+ 2\sqrt{C_{\tilde{\bR}}}C_{\be}\left|\frac{\bepsilon^T\bg}{n}\right| \overset{def}{=} \bar{\delta }
\end{eqnarray}

We know that $C_{\tilde{\bR}}$ and $C_{\be}$ are universal constants. It is also clear that the probability that $\Pr(|\bar{\delta}| \geq \epsilon) \leq Ce^{-cn\epsilon}$ for some constants $C, c>0$. From this we can see that

\begin{eqnarray}
\Pr\left(\sup_{\be\in A_{\be}, 0\leq\beta\leq\beta_{max}}|F(\be, \beta) - \bar{F}(\be,\beta)| \geq \epsilon \right) \leq \Pr\left(\sup_{\be\in A_{\be}, 0\leq\beta\leq\beta_{max}}|\delta\beta| \geq \epsilon \right) \leq \Pr\left(|\beta_{max}\bar{\delta}| \geq \epsilon \right) \leq Ce^{-cn\epsilon}
\end{eqnarray}
For some constants $C, c > 0$.

\end{proof}

Because of this we can with high probability, examine instead the problem

\begin{eqnarray}
    \bar{P}_2 = \min_{\be \in A_{\be}}\max_{0\leq \beta \leq \beta_{max}} \beta\sqrt {\sigma_{\bepsilon}^2 + \frac{1}{m}\|\tilde{\bR}^{1/2}\be\|_2^2} - \frac{\beta}{\sqrt{nm}}\bh^T\tilde{\bR}^{1/2}\be - \frac{\beta^2}{2} + \frac{1}{m}B(\be)
\end{eqnarray}

We now note that the optimization problem is convex in $\be$ and concave in $\beta$, and both optimizations are over convex sets, as such we can interchange the order of $\min$ and $\max$
\begin{eqnarray}
    \bar{P}_2 = \max_{0\leq \beta \leq \beta_{max}}\min_{\be \in A_{\be}} \beta\sqrt {\sigma_{\bepsilon}^2 + \frac{1}{m}\|\tilde{\bR}^{1/2}\be\|_2^2} - \frac{\beta}{\sqrt{nm}}\bh^T\tilde{\bR}^{1/2}\be - \frac{\beta^2}{2} + \frac{1}{m}B(\be)
\end{eqnarray}

We now note that for any scalar value $a$, we can express $\sqrt{a} = \min_{q>0} \frac{q}{2} + \frac{a}{2q}$. Making use of this "square root trick" we can obtain the problem

\begin{eqnarray}
    \bar{P}_2 = \max_{0\leq \beta \leq \beta_{max}}\min_{\be \in A_{\be}}\min_{q_{min}\leq q \leq q_{max}} \frac{\beta q}{2} + \frac{\beta}{2q}\sigma_{\bepsilon}^2 + \frac{\beta}{2qm}\|\tilde{\bR}^{1/2}\be\|_2^2- \frac{\beta}{\sqrt{nm}}\bh^T\tilde{\bR}^{1/2}\be - \frac{\beta^2}{2} + \frac{1}{m}B(\be)
\end{eqnarray}
We note also that $q$ can be bounded between $q_{min} = \sigma_{\bepsilon}$, which is obtained when $\be = \bm{0}$ and $q_{\max} = \sqrt{\sigma_{\bepsilon}^2 + C_{\tilde{\bR}}C_{\be}^2}$. We can also swap the order of the two $\min$s obtaining

\begin{eqnarray}
\label{eq:problemA2bar}
    \bar{P}_2 = \max_{0\leq \beta \leq \beta_{max}}\min_{q_{min}\leq q \leq q_{max}} \min_{\be \in A_{\be}}\frac{\beta q}{2} + \frac{\beta}{2q}\sigma_{\bepsilon}^2 + \frac{\beta}{2qm}\|\tilde{\bR}^{1/2}\be\|_2^2- \frac{\beta}{\sqrt{nm}}\bh^T\tilde{\bR}^{1/2}\be - \frac{\beta^2}{2} + \frac{1}{m}B(\be)
\end{eqnarray}

At this point we will consider only the inner optimization problem over $\be$ and consider $\beta$ and $q$ to be fixed. We shall return to the outer optimization later, and instead only consider

\begin{eqnarray}
\label{eq:D2definition}
D_2 = D_2(\beta,q)= \min_{\be \in A_{\be}}\frac{\beta q}{2} + \frac{\beta}{2q}\sigma_{\bepsilon}^2 + \frac{\beta}{2qm}\|\tilde{\bR}^{1/2}\be\|_2^2- \frac{\beta}{\sqrt{nm}}\bh^T\tilde{\bR}^{1/2}\be - \frac{\beta^2}{2} + \frac{1}{m}B(\be)
\end{eqnarray}

We now make use of the definition of $\tilde{\bR}$. We note specifically that

\begin{eqnarray}
    \tilde{\bR}^{1/2}\bh = \tilde{\bh} \sim \mathcal{N}(0, \tilde{\bR} = \frac{\rho_1^2}{d}\bW\bW^T + \rho_*^2\bI)
\end{eqnarray}
Which by the additivity of Gaussians can be expressed as
\begin{eqnarray}
    \tilde{\bh} = \frac{\rho_1}{\sqrt{d}}\bW\bphi_1 + \rho_*\bphi_2
\end{eqnarray}
In which $\bphi_1\sim\mathcal{N}(0, I_d)$ and $\bphi_2\sim\mathcal{N}(0, I_m)$, we also pull the relevant factor of $\be^T\tilde{\bR}\be$ out of $B(\be)$. We make a new definition $\tilde{B}(\be) = r(\be + \btheta^*) + \tau_2h(\be+\btheta^*)$, we remind that $\tilde{B}(\be)$ is by assumption $\frac{\mu}{2}$ strongly convex. Making the relevant substitutions we obtain

\begin{eqnarray}
    D_2 = \min_{\be\in A_{\be}}\frac{\beta q}{2} + \frac{\beta}{2q}\sigma_{\bepsilon}^2 + \frac{\beta\rho_1^2}{2qmd}||\bW^T\be||_2^2 + \frac{\beta\rho_*^2}{2qm}||\be||_2^2 - \nwl \frac{\beta\rho_1}{\sqrt{nmd}}\be^T\bW\bphi_1 - \frac{\beta\rho_*}{\sqrt{nm}}\bphi_2^T\be - \frac{\beta^2}{2} +\frac{\tau_1\rho_1^2}{md}||\bW^T\be||_2^2 + \frac{\tau_1\rho_*^2}{m}||\be||_2^2 + \frac{1}{m}\tilde{B}(\be)
\end{eqnarray}
We complete the square over the terms that contain $\bW^T\be$, obtaining:

\begin{eqnarray}
    D_2 = \min_{\be \in A_{\be}}\frac{\rho_1^2(\beta + 2q\tau_1)}{2qmd}\left\|\bW^T\be - \frac{\beta\sqrt{md}}{\rho_1(\beta + 2qm\tau_1)\sqrt{n}}\bphi_1 \right\|_2^2 - \frac{\beta^2q}{2n(\beta + 2q\alpha_1)}||\bphi_1||_2^2 \nwl+  \frac{\beta q}{2} + \frac{\beta}{2q}\sigma_{\bepsilon}^2   +  \frac{\rho_*^2(\beta + 2q\tau_1)}{2qm}||\be||_2^2 - \frac{\beta\rho_*}{\sqrt{nm}}\bphi_2^T\be - \frac{\beta^2}{2}  +  \frac{1}{m}\tilde{B}(\be)
\end{eqnarray}

We now introduce another maximization over $\bp$ as the convex conjugate of the $\ell_2^2$ norm. We obtain
\begin{eqnarray}
    D_2 = \min_{\be\in A_{\be}}\max_{\bp}\frac{\rho_1^2(\beta + 2q\tau_1)}{qmd}\bp^T\bW^T\be - \frac{\beta\rho_1}{\sqrt{nmd}}\bp^T\bphi_1 - \frac{\rho_1^2(\beta + 2q\tau_1)}{2qmd}||\bp||_2^2 \nwl- \frac{\beta^2q}{2n(\beta + 2q\tau_1)}||\bphi_1||_2^2 +  \frac{\beta q}{2} + \frac{\beta}{2q}\sigma_{\bepsilon}^2   +  \frac{\rho_*^2(\beta + 2q\tau_1)}{2qm}||\be||_2^2 - \frac{\beta\rho_*}{\sqrt{nm}}\bphi_2^T\be - \frac{\beta^2}{2}  +  \frac{1}{m}\tilde{B}(\be)
\end{eqnarray}

Our goal now is to apply the CGMT again to this problem. We note that the problem in convex in $\be$ and concave in $\bp$. However we need to show that $\be$ and $\bp$ can be bound to compact and convex sets, and that the optimal points of both optimizations fall within these sets. We prove this in the following lemma
\begin{lemma}
\label{lemma:bounded_CGMT2}
        Consider the following two optimization problems that correspond to the first alternative and second alternative optimization by the CGMT
        \begin{eqnarray}
        \label{eq:compactsetproblem3}
         D_2 = \min_{\be\in A_{\be}}\max_{\bp}\frac{\rho_1^2(\beta + 2q\tau_1)}{qmd}\bp^T\bW^T\be - \frac{\beta\rho_1}{\sqrt{nmd}}\bp^T\bphi_1 - \frac{\rho_1^2(\beta + 2q\tau_1)}{2qmd}||\bp||_2^2 \nwl- \frac{\beta^2q}{2n(\beta + 2q\tau_1)}||\bphi_1||_2^2 +  \frac{\beta q}{2} + \frac{\beta}{2q}\sigma_{\bepsilon}^2   +  \frac{\rho_*^2(\beta + 2q\tau_1)}{2qm}||\be||_2^2 - \frac{\beta\rho_*}{\sqrt{nm}}\bphi_2^T\be - \frac{\beta^2}{2}  +  \frac{1}{m}\tilde{B}(\be)\\
         \label{eq:compactsetproblem4}
          D_3 = \min_{\be}\max_{\bp}\frac{\rho_1^2(\beta + 2q\tau_1)}{qmd}||\bp||_2\be^T\bphi_3 + \frac{\rho_1^2(\beta + 2q\tau_1)}{qmd}||\be||_2\bp^T\bphi_4 - \frac{\beta\rho_1}{\sqrt{nmd}}\bp^T\bphi_1 - \frac{\rho_1^2(\beta + 2q\tau_1)}{2qmd}||\bp||_2^2 \nwl- \frac{\beta^2q}{2n(\beta + 2q\tau_1)}||\bphi_1||_2^2 +  \frac{\beta q}{2} + \frac{\beta}{2q}\sigma_{\bepsilon}^2   +  \frac{\rho_*^2(\beta + 2q\tau_1)}{2qm}||\be||_2^2 - \frac{\beta\rho_*}{\sqrt{nm}}\bphi_2^T\be - \frac{\beta^2}{2}  +  \frac{1}{m}\tilde{B}(\be)
        \end{eqnarray}
        where $\bphi_3$ and $\bphi_4$ are standard normals of dimension $m, d$ respectively. Denote $\hat{\be}_2,  \hat{\be}_3$ as optimal points of $D_2$ and $D_3$ respectively and $\hat{\bp}_2(\be), \hat{\bp}_3(\be)$ as their inner optimization solution for a fixed $\be$. Let $\tilde{B}$ be $\frac{\mu}{2}$ strongly convex and $\max\left\{\|\nabla r(\theta^*)\|,\|\nabla h(\theta^*)\|\right\}=O(\sqrt{m})$. Then there exist positive constants $C_{\be},  C_{\bp}$ only depending on $\mu$ such that
        \begin{eqnarray}
        \liml_{m\to\infty}\Pr\left(||\hat{\be_i}||_2 \leq C_{\be}\sqrt{m}\right) =1\quad  i=2,3
        \end{eqnarray}
        and
        \begin{eqnarray}
        \liml_{m\to\infty}\Pr\left(\supl_{\be\mid\|\be\|\leq C_{\be}}||\hat{\bp}_i(\be)||_2 \leq C_{\bp}\sqrt{md}\right) = 1\quad i  =2,3
        \end{eqnarray}
    \end{lemma}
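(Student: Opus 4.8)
The argument follows the same template as the proof of Lemma~\ref{lemma:bounded_CGMT1}: exploit the $\tfrac\mu2$-strong convexity of $\tilde B$ to bound the minimizing $\be$, and then use the stationarity conditions of the inner maximization to bound the maximizing $\bp$ once $\be$ is confined to a ball. Throughout, $\beta\in[0,\beta_{max}]$ and $q\in[q_{min},q_{max}]$ are fixed, but every bound we produce will be uniform over these compact intervals; we also use that $|\tau_1|\le\tau_1^*$ forces $\beta+2q\tau_1>0$, so the explicit quadratic terms $\tfrac{\rho_*^2(\beta+2q\tau_1)}{2qm}\|\be\|_2^2$ and $-\tfrac{\rho_1^2(\beta+2q\tau_1)}{2qmd}\|\bp\|_2^2$ are genuinely convex, resp.\ concave, in the relevant variable.

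First, the bound on $\be$. In $D_2$ the inner problem over $\bp$ is an unconstrained concave quadratic; maximizing it yields a value function $F_2(\be)$ in which the contribution of $\bp$ is a nonnegative completed square. Evaluating $F_2$ at $\be=\bm 0$ gives a quantity that is $O(1)$ with high probability, since $\tfrac1n\|\bphi_1\|_2^2$ concentrates and $\tfrac1m\tilde B(\bm 0)$ is bounded. On the other hand, dropping the nonnegative square term and using $\tilde B(\be)\ge \tilde B(\bm 0)+\bd^T\be+\tfrac\mu4\|\be\|_2^2$ with $\bd=\nabla r(\btheta^*)+\tau_2\nabla h(\btheta^*)$ and $\|\bd\|_2=O(\sqrt m)$ (Assumption A5), together with the bound $\tfrac{\beta\rho_*}{\sqrt{nm}}|\bphi_2^T\be|\le C\|\be\|_2/\sqrt m$ valid with high probability, produces a lower bound $F_2(\be)\ge \tfrac1m\tilde B(\bm 0)-C-C\|\be\|_2/\sqrt m+\tfrac{\mu}{4m}\|\be\|_2^2$. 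Comparing the two estimates through $F_2(\hat\be_2)\le F_2(\bm 0)$ and solving the resulting quadratic inequality in $\|\be\|_2$ gives $\|\hat\be_2\|_2\le C_{\be}\sqrt m$ with high probability, with $C_{\be}$ depending only on $\mu$ and the fixed parameters. For $D_3$ the inner maximization over $\bp$ is carried out by first optimizing the direction of $\bp$ and then its magnitude $p=\|\bp\|_2\ge 0$; relaxing the constraint $p\ge0$ only increases the value and leaves a concave quadratic in $p$ whose maximum is an explicit function of $\be$, after which the identical strong-convexity comparison at $\be=\bm 0$ yields $\|\hat\be_3\|_2\le C_{\be}\sqrt m$.

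Next, the bound on $\bp$. On the event $\{\|\be\|_2\le C_{\be}\sqrt m\}$, the stationarity condition of $D_2$ in $\bp$ gives $\hat\bp_2(\be)=\bW^T\be-\tfrac{\beta\sqrt{md}}{\rho_1(\beta+2q\tau_1)\sqrt n}\bphi_1$, so $\|\hat\bp_2(\be)\|_2\le \|\bW\|_2\|\be\|_2+C\sqrt m\,\|\bphi_1\|_2$; using $\|\bW\|_2\le(1+2\sqrt\delta)\sqrt d$ with high probability (the bound established inside the proof of Lemma~\ref{lem:Rboundlemma}) and $\|\bphi_1\|_2\le C\sqrt d$ gives $\|\hat\bp_2(\be)\|_2\le C_{\bp}\sqrt{md}$ uniformly over the ball. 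For $D_3$, writing the stationarity conditions for the direction and magnitude of $\bp$ separately expresses $\|\hat\bp_3(\be)\|_2$ as a bounded multiple of $\|\be\|_2$ plus norms of $\bphi_1,\bphi_3,\bphi_4$, all of which concentrate at scale $\sqrt m$ or $\sqrt d$, so again $\|\hat\bp_3(\be)\|_2\le C_{\bp}\sqrt{md}$ with high probability. Taking the two $C_{\be}$'s, resp.\ the two $C_{\bp}$'s, to be their maxima completes the proof.

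The main obstacle is the $D_3$ case: because $\bp$ enters the bilinear terms only through $\|\bp\|_2$, the inner objective fails to be smooth at $\bp=\bm 0$, so the stationarity-based estimates must be phrased in terms of the magnitude $p=\|\bp\|_2$ and a separate unit direction, and one must justify both that the optimal direction is well defined and that the relaxation $p\ge0\to p\in\mathbb R$ is harmless. A secondary technical point is keeping all constants uniform in $\beta\in[0,\beta_{max}]$ and $q\in[q_{min},q_{max}]$ and dependent only on $\mu$ (and the fixed $\rho_1,\rho_*$); since these intervals are compact and $\beta+2q\tau_1$ is bounded away from $0$ on them, this amounts to taking suprema of continuous functions over compact sets.
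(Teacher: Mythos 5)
Your proposal follows essentially the same route as the paper's proof: set $\bp=\bm 0$ (resp.\ $\xi=0$) to lower-bound the value function by a $\frac\mu 2$-strongly convex $T(\be)$, compare against an explicit upper bound at $\be=\bm 0$, and solve the resulting quadratic inequality in $\|\be\|_2$; then read off $\hat\bp$ from the stationarity conditions and bound it via $\|\bW\|_2$, $\|\bphi_1\|_2$, $\|\bphi_3\|_2$, $\|\bphi_4\|_2$ concentration. Two small points worth noting. First, your argument for the $D_2$ bound on $\be$ is correct but redundant: the paper observes that $C_{\be_2}$ exists trivially because $D_2$ is defined with $\be$ already constrained to the compact set $A_{\be}$, so the strong-convexity comparison is only needed for $D_3$. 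Second, your preliminary remark that ``$|\tau_1|\le\tau_1^*$ forces $\beta+2q\tau_1>0$'' is not actually implied: at $\beta=0$ with $\tau_1<0$ one has $\beta+2q\tau_1=2q\tau_1<0$. The positivity of $\beta+2q\tau_1$ is indeed needed (it is used in completing the square and introducing the conjugate over $\bp$ in the steps leading to $D_2$), but it is an implicit assumption in the paper's derivation rather than a consequence of the bound on $\tau_1$; you should either restrict to $\beta>0$ bounded away from $0$ or note that the degenerate boundary $\beta=0$ is handled separately. Neither point affects the substance of your argument, which is otherwise sound and matches the paper.
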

    \begin{proof}
    We know that $C_{\be_2}$ exists from the fact that in $D_2$ $\be$ is already in a bounded set. For both optimizations, we solve the optimization over $\bp$, and write this optimization over $\be$ as
    \begin{eqnarray}
    \min_{\be} F_i(\be)\quad i = 2, 3,
    \end{eqnarray}
    where $F_i(\be)$ is the optimal value over $\bp$. We note that setting $\bp = \bm{0}$ in both optimizations we obtain that
    \begin{eqnarray}
    F(\be) \geq \frac{1}{m}T(\be):=- \frac{\beta^2q}{2n(\beta + 2q\tau_1)}||\bphi_1||_2^2 +  \frac{\beta q}{2} + \frac{\beta}{2q}\sigma_{\bepsilon}^2   +  \frac{\rho_*^2(\beta + 2q\tau_1)}{2qm}||\be||_2^2 - \frac{\beta\rho_*}{\sqrt{nm}}\bphi_2^T\be - \frac{\beta^2}{2}  +  \frac{1}{m}\tilde{B}(\be)
    \end{eqnarray}
    On the other hand,  by taking the second derivative, we observe that $T(\be)$ is  $\nu = \frac{\rho_*\beta}{2q} + \frac\mu2$ strongly convex with respect to $\be$. As such, we find that
    \begin{eqnarray}
    T(\be) \geq T(\bm{0}) + \bd^T\be + \frac{\nu}{2}||\be||_2^2,
    \end{eqnarray}
    where $\bd=\nabla T(\bzero)$. We note that by the assumption, $\bd=\mathcal{O}(\sqrt{m})$. For the optimization $D_3$, we let $\xi = ||\bp||_2$ and solve the optimization over $\bp$ to obtain that
    \begin{eqnarray}
    F_3(\be) = \max_{\xi > 0}\frac{\rho_1^2(\beta + 2q\tau_1)\xi}{qmd}\be^T\bphi_3 + \xi\left\|\frac{\rho_1^2(\beta + 2q\tau_1)}{qmd}||\be||_2\bphi_4 - \frac{\beta\rho_1}{\sqrt{nmd}}\bphi_1\right\|_2 - \frac{\rho_1^2(\beta + 2q\tau_1)\xi^2}{2qmd} 
    + \frac{1}{m}T(\be)
    \end{eqnarray}
    We note that dropping the constraint over $\xi$ will not decrease the optimal value, as such
    \begin{eqnarray}
    F_3(\be) \leq \max_{\xi}\frac{\rho_1^2(\beta + 2q\tau_1)\xi}{qmd}\be^T\bphi_3 + \xi\left\|\frac{\rho_1^2(\beta + 2q\tau_1)}{qmd}||\be||_2\bphi_4 - \frac{\beta\rho_1}{\sqrt{nmd}}\bphi_1\right\|_2 - \frac{\rho_1^2(\beta + 2q\tau_1)\xi^2}{2qmd} 
    + \frac{1}{m}T(\be)
    \end{eqnarray}
    
    From which we see that
    \begin{eqnarray}
    F_3(\bm{0}) \leq \max_{\xi} \frac{\xi\beta\rho_1}{\sqrt{nmd}}||\bphi_1||_2 - \frac{\rho_1^2(\beta + 2q\tau_1)\xi^2}{2qmd} 
    + \frac{1}{m}T(\bm{0})\nwl
     = \frac{\beta\sqrt{md}q ||\bphi_1||_2}{2\sqrt{n}\rho_1(\beta  +2q\tau_1)^2} + \frac{1}{m}T(\bm{0})
    \end{eqnarray}
    
    From this we obtain that
    \begin{eqnarray}
     \frac{\beta\sqrt{md}q ||\bphi_1||_2}{2\sqrt{n}\rho_1(\beta  +2q\tau_1)^2} + \frac{1}{m}T(\bm{0})\geq F(\bm{0}) \geq F_3(\hat{\be}) \geq \frac{1}{m}T(\bm{0})  + \frac{1}{m}\bd^T\be + \frac{\nu}{2m}||\be||_2^2,
    \end{eqnarray}
    and hence
    \begin{eqnarray}
    \frac{\nu}{2m}\left\|\be + \frac{1}{\nu}\bd \right\|_2^2 \leq \frac{1}{\nu m}||\bd||_2^2 +\frac{\beta\sqrt{md}q ||\bphi_1||_2}{2\sqrt{n}\rho_1(\beta  +2q\tau_1)^2}
    \end{eqnarray}
    or
    \begin{eqnarray}
    ||\be||_2 \leq \frac{1}{\nu}||\bd||_2 + \sqrt{\frac{2}{\nu^2}||\bd||_2^2 + \frac{m\beta\sqrt{md}q ||\bphi_1||_2}{\nu\sqrt{n}\rho_1(\beta  +2q\tau_1)^2}}
    \end{eqnarray}
    Noting that with high probability $||\bphi||_2 \leq C\sqrt{d}$ and recalling that $n, m, d$ all grow at constant ratios, we can see that there must exist a constant $C_{\be_3}$ such that
    \begin{eqnarray}
    \Pr(||\hat{\be}_3|| > C_{\be_3}\sqrt{m}) \to 0
    \end{eqnarray}
     We then let $C_{\be} = \max(C_{\be_2}, C_{\be_3})$ and define the set $\tilde{A}_{\be} = \lbrace \be \in \mathbb{R}^m|\ ||\be||_2\leq C_{\be}\sqrt{m}\rbrace$. Then from the optimality condition over $\bp$ for eq \eqref{eq:compactsetproblem3} we know that
     
     \begin{eqnarray}
     \hat\bp_2(\be) = \bW^T\be - \frac{\beta\sqrt{md}}{\rho_1(\beta + 2qm\tau_1)\sqrt{n}}\bphi_1
     \end{eqnarray}
     and as such for all $\be \in A_{\be}$ we must have that
     \begin{eqnarray}
     ||\hat\bp_2(\be)||_2 \leq ||\bW||_2||\be||_2 + \frac{\beta\sqrt{md}}{\rho_1(\beta + 2qm\tau_1)\sqrt{n}}||\bphi_1||_2
     \end{eqnarray}
     We know as a standard result that $||\bW||_2 \leq C\sqrt{d}$ and that $||\bphi_1||_2\leq C\sqrt{d}$ with high probability. As such the constant $C_{\bp_2}$ must exist.
     
     Finally examining the optimality condition over $\xi$ of problem \eqref{eq:compactsetproblem4} we find that for all $\be \in A_{\be}$ we have that
     \begin{eqnarray}
     \hat{\xi} = ||\hat{\bp}_3(\be)||_2 = \be^T\bphi_3 + \left\|||\be||_2\bphi_4 - \frac{q\beta \sqrt{md}}{\rho_1(\beta + 2q\tau_1)\sqrt{n}}||\bphi_1\right\|_2 \nwl
     \leq  ||\be||_2||\bphi_3||_2 + ||\be||_2||\bphi_4||_2 + \frac{q\beta\sqrt{md}}{\rho_1(\beta+2q\tau_1)\sqrt{n}}||\bphi_1||_2
     \end{eqnarray}
     
     We note that with high probability $||\bphi_1||_2 < \sqrt{d}C$, $||\bphi_4||_2 < \sqrt{d}C$ and $||\bphi_3||_2 < \sqrt{m}C$. Recalling that $m, d$ grow at constant ratio we see that the constant $C_{\bp_3}$ exists.
    \end{proof}

    We can therefore define the constants $C_{\be}:= \max(C_{\be_i})$ from $i=1,2,3$ and $C_{\bp} = \max(C_{\bp_2},C_{\bp_3})$, and by doing so define the sets $A_{\be} = \lbrace \be\in\mathbb{R}^m|\ ||\be||_2 < C_{\be}\sqrt{m}\rbrace$ and $A_{\bp} = \lbrace \bp\in\mathbb{R}^d|\ ||\bp||_2 < C_{\bp}\sqrt{md}\rbrace$.
     From this we can see that with high probability the optimal value of the optimization $\bar{P}_2$ will be equal to that of

\begin{eqnarray}
    D_2 = \min_{\be\in A_{\be}}\max_{\bp\in A_{\bp}}\frac{\rho_1^2(\beta + 2q\tau_1)}{qmd}\bp^T\bW^T\be - \frac{\beta\rho_1}{\sqrt{nmd}}\bp^T\bphi_1 - \frac{\rho_1^2(\beta + 2q\tau_1)}{2qmd}||\bp||_2^2 \nwl- \frac{\beta^2q}{2n(\beta + 2q\tau_1)}||\bphi_1||_2^2 +  \frac{\beta q}{2} + \frac{\beta}{2q}\sigma_{\bepsilon}^2   +  \frac{\rho_*^2(\beta + 2q\tau_1)}{2qm}||\be||_2^2 - \frac{\beta\rho_*}{\sqrt{nm}}\bphi_2^T\be - \frac{\beta^2}{2}  +  \frac{1}{m}\tilde{B}(\be)
\end{eqnarray}
We now apply the CGMT to the problem $D_2$ for fixed values of $\beta, q$, we obtain the following problem 

\begin{eqnarray}
D_3 = \min_{\be\in A_{\be}}\max_{\bp\in A_{\bp}}\frac{\rho_1^2(\beta + 2q\tau_1)}{qmd}||\bp||_2\be^T\bphi_3 + \frac{\rho_1^2(\beta + 2q\tau_1)}{qmd}||\be||_2\bp^T\bphi_4 - \frac{\beta\rho_1}{\sqrt{nmd}}\bp^T\bphi_1 - \frac{\rho_1^2(\beta + 2q\tau_1)}{2qmd}||\bp||_2^2 \nwl- \frac{\beta^2q}{2n(\beta + 2q\tau_1)}||\bphi_1||_2^2 +  \frac{\beta q}{2} + \frac{\beta}{2q}\sigma_{\bepsilon}^2   +  \frac{\rho_*^2(\beta + 2q\tau_1)}{2qm}||\be||_2^2 - \frac{\beta\rho_*}{\sqrt{nm}}\bphi_2^T\be - \frac{\beta^2}{2}  +  \frac{1}{m}\tilde{B}(\be)
\end{eqnarray}
Let $\xi = \frac{\rho_1}{\sqrt{md}}||\bp||_2$ and solve the optimization over $\bp$. We note that $\xi\geq0$ and that $\xi\leq \xi_{max} = \frac{\rho_1}{\sqrt{dm}}\sup_{\bp\in A_{\bp}} ||\bp||_2$. From this we obtain the problem,
\begin{eqnarray}
D_3 = \min_{\be\in A_{\be}}\max_{0 \leq \xi \leq \xi_{max}}\frac{\rho_1(\beta + 2q\tau_1)}{q\sqrt{md}}\be^T\bphi_3 + \xi\left\|\frac{\rho_1(\beta + 2q\tau_1)}{q\sqrt{md}}||\be||_2\bphi_4 - \frac{\beta}{\sqrt{n}}\bphi_1\right\|_2 - \frac{(\beta + 2q\tau_1)\xi^2}{2q} \nwl- \frac{\beta^2q}{2n(\beta + 2q\tau_1)}||\bphi_1||_2^2 +  \frac{\beta q}{2} + \frac{\beta}{2q}\sigma_{\bepsilon}^2   +  \frac{\rho_*^2(\beta + 2q\tau_1)}{2qm}||\be||_2^2 - \frac{\beta\rho_*}{\sqrt{nm}}\bphi_2^T\be - \frac{\beta^2}{2}  +  \frac{1}{m}\tilde{B}(\be)
\end{eqnarray}

We now show that this term concentrates in the following lemma

\begin{lemma}
\label{lem:secondConcentrationResult}
Let $F(\be, \xi)$ be given by
\begin{eqnarray}
F(\be, \xi) = \frac{\rho_1(\beta + 2q\tau_1)}{q\sqrt{md}}\be^T\bphi_3 + \xi\left\|\frac{\rho_1(\beta + 2q\tau_1)}{q\sqrt{md}}||\be||_2\bphi_4 - \frac{\beta}{\sqrt{n}}\bphi_1\right\|_2 - \frac{(\beta + 2q\tau_1)\xi^2}{2q} \nwl- \frac{\beta^2q}{2n(\beta + 2q\tau_1)}||\bphi_1||_2^2 +  \frac{\beta q}{2} + \frac{\beta}{2q}\sigma_{\bepsilon}^2   +  \frac{\rho_*^2(\beta + 2q\tau_1)}{2qm}||\be||_2^2 - \frac{\beta\rho_*}{\sqrt{nm}}\bphi_2^T\be - \frac{\beta^2}{2}  +  \frac{1}{m}\tilde{B}(\be)
\end{eqnarray}
and let $\bar{F}(\be, \xi)$
\begin{eqnarray}
\bar{F}(\be, \xi) = \frac{\rho_1(\beta + 2q\tau_1)}{q\sqrt{md}}\be^T\bphi_3 + \xi\sqrt{\frac{\rho_1^2(\beta + 2q\tau_1)^2}{q^2m}||\be||_2^2 + \frac{\beta^2d}{n}} - \frac{(\beta + 2q\tau_1)\xi^2}{2q} \nwl- \frac{\beta^2qd}{2n(\beta + 2q\tau_1)} +  \frac{\beta q}{2} + \frac{\beta}{2q}\sigma_{\bepsilon}^2   +  \frac{\rho_*^2(\beta + 2q\tau_1)}{2qm}||\be||_2^2 - \frac{\beta\rho_*}{\sqrt{nm}}\bphi_2^T\be - \frac{\beta^2}{2}  +  \frac{1}{m}\tilde{B}(\be)
\end{eqnarray}
Then
\begin{eqnarray}
\Pr\left(\sup_{\be \in A_{\be}, 0 \leq \xi\leq \xi_{max}}|F(\be, \xi) - \bar{F}(\be, \xi)| > \epsilon\right) \xrightarrow[m, d\rightarrow\infty]{P} 0
\end{eqnarray}
\end{lemma}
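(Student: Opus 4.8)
The Gaussian vectors $\bphi_1,\bphi_2,\bphi_3,\bphi_4$ are fixed throughout, and inspecting $F$ and $\bar F$ term by term one sees that they differ in exactly two places: the norm $\big\|\tfrac{\rho_1(\beta+2q\tau_1)}{q\sqrt{md}}\|\be\|_2\bphi_4-\tfrac{\beta}{\sqrt n}\bphi_1\big\|_2$ is replaced by $\sqrt{\tfrac{\rho_1^2(\beta+2q\tau_1)^2}{q^2m}\|\be\|_2^2+\tfrac{\beta^2 d}{n}}$, and $\|\bphi_1\|_2^2$ is replaced by $d$. Write $a=a(\be)=\tfrac{\rho_1(\beta+2q\tau_1)}{q\sqrt{md}}\|\be\|_2$ and $b=\tfrac{\beta}{\sqrt n}$; then the substituted square root equals $\sqrt{(a^2+b^2)d}$. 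On the feasible set $A_{\be}$ we have $\|\be\|_2\le C_{\be}\sqrt m$, so $a^2 d\le A^2:=\tfrac{\rho_1^2(\beta+2q\tau_1)^2 C_{\be}^2}{q^2}$ is a constant, and $b^2 d=\tfrac{\beta^2 d}{n}$ is a constant since $n/d=\eta$ is fixed; likewise, by Lemma~\ref{lemma:bounded_CGMT2} the relevant range of $\xi$ obeys $0\le\xi\le\xi_{\max}=\rho_1 C_{\bp}$, again a constant. The plan is therefore to show that the two substitutions produce an error that is $o(1)$ uniformly, using that the prefactors $\xi$, $a$, $b$ only cost bounded multiplicative constants and --- the point that makes the supremum trivial --- that the resulting bound does not depend on $(\be,\xi)$ at all.

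For the norm term, the elementary inequality $|\sqrt x-\sqrt y|\le\sqrt{|x-y|}$ gives
\[
\Big|\,\|a\bphi_4-b\bphi_1\|_2-\sqrt{(a^2+b^2)d}\,\Big|\le\sqrt{\Big|a^2(\|\bphi_4\|_2^2-d)+b^2(\|\bphi_1\|_2^2-d)-2ab\,\bphi_4^T\bphi_1\Big|},
\]
and since $0\le a^2\le A^2/d$ and $b^2=\beta^2/n$, the quantity under the square root is at most $\tfrac{A^2}{d}\big|\|\bphi_4\|_2^2-d\big|+\tfrac{\beta^2}{n}\big|\|\bphi_1\|_2^2-d\big|+\tfrac{2A\beta}{\sqrt{nd}}\,|\bphi_4^T\bphi_1|$, which is free of $(\be,\xi)$. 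The first two summands vanish with an exponential rate by the Laurent--Massart deviation bound for $\chi^2_d$ variables (recall $m,n,d$ grow at constant ratios by A4); the third is handled by noting that, conditionally on $\bphi_1$, $\bphi_4^T\bphi_1\sim\mathcal N(0,\|\bphi_1\|_2^2)$, so on the high-probability event $\|\bphi_1\|_2\le 2\sqrt d$ a Gaussian tail bound gives $\Pr\big(\tfrac1{\sqrt{nd}}|\bphi_4^T\bphi_1|>t\big)\le 2e^{-cd}+2e^{-cnt^2}$. The second substitution contributes the term $\tfrac{\beta^2 q}{2n(\beta+2q\tau_1)}\big|\|\bphi_1\|_2^2-d\big|$, which is again controlled directly by the same $\chi^2_d$ bound.

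Assembling these estimates with a union bound over the (finitely many) events, multiplying by the constant $\xi_{\max}$ where needed, and rescaling $\epsilon$ to absorb the constants $A,\beta,\xi_{\max},q,\sigma_{\bepsilon},\dots$, one obtains
\[
\Pr\Big(\sup_{\be\in A_{\be},\,0\le\xi\le\xi_{\max}}|F(\be,\xi)-\bar F(\be,\xi)|>\epsilon\Big)\le C\,e^{-c\,n\,g(\epsilon)}
\]
for constants $C,c>0$ and some $g(\epsilon)>0$ (one may take $g(\epsilon)=\min(\epsilon^2,\epsilon^4)$ up to constants; the precise form is immaterial), which in particular yields the stated convergence in probability. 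Structurally this mirrors Lemma~\ref{lem:A2concentrationLemma}, so there is no genuine obstacle here; the only step that needs care is verifying that the prefactors $a(\be)$ and $\xi$ are uniformly bounded over the feasible sets, which is exactly what Lemma~\ref{lemma:bounded_CGMT2} supplies, so that after the decomposition above the residual randomness no longer involves the optimization variables.
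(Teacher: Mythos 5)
Your proof is correct and takes essentially the same approach the paper intends: the paper's own proof is the one-line remark that "the lemma is proven in the same manner as Lemma~\ref{lem:A2concentrationLemma}", i.e.\ decompose the square-root term, bound the deviation uniformly over $A_{\be}$ and the bounded $\xi$-range using the compactness constants, and invoke standard $\chi^2$/Gaussian concentration. Your write-up simply supplies the details of that argument explicitly, and your identification of the two substituted terms, the bound $\xi_{\max}=\rho_1 C_{\bp}$, and the use of $|\sqrt{x}-\sqrt{y}|\le\sqrt{|x-y|}$ all match the structure of Lemma~\ref{lem:A2concentrationLemma}.
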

\begin{proof}
The lemma is proven in the same manner as lemma \ref{lem:A2concentrationLemma}.
\end{proof}

By this lemma we can with high probability consider the following problem instead:
\begin{eqnarray}
\label{eq:D3def1}
\bar{D}_3 = \min_{\be\in A_{\be}}\max_{0 \leq \xi \leq \xi_{max}}\frac{\rho_1(\beta + 2q\tau_1)}{q\sqrt{md}}\be^T\bphi_3 + \xi\sqrt{\frac{\rho_1^2(\beta + 2q\tau_1)^2}{q^2m}||\be||_2^2 + \frac{\beta^2d}{n}} - \frac{(\beta + 2q\tau_1)\xi^2}{2q} \nwl- \frac{\beta^2qd}{2n(\beta + 2q\tau_1)} +  \frac{\beta q}{2} + \frac{\beta}{2q}\sigma_{\bepsilon}^2   +  \frac{\rho_*^2(\beta + 2q\tau_1)}{2qm}||\be||_2^2 - \frac{\beta\rho_*}{\sqrt{nm}}\bphi_2^T\be - \frac{\beta^2}{2}  +  \frac{1}{m}\tilde{B}(\be)
\end{eqnarray}
We now interchange the order of the min and max. As the problem is clearly convex in $\be$ and concave in $\xi$ and the problem is over convex sets this interchange is admissible.

\begin{eqnarray}
\bar{D}_3 =\max_{0 \leq \xi \leq \xi_{max}} \min_{\be\in A_{\be}}\frac{\rho_1(\beta + 2q\tau_1)}{q\sqrt{md}}\be^T\bphi_3 + \xi\sqrt{\frac{\rho_1^2(\beta + 2q\tau_1)^2}{q^2m}||\be||_2^2 + \frac{\beta^2d}{n}} - \frac{(\beta + 2q\tau_1)\xi^2}{2q} \nwl- \frac{\beta^2qd}{2n(\beta + 2q\tau_1)} +  \frac{\beta q}{2} + \frac{\beta}{2q}\sigma_{\bepsilon}^2   +  \frac{\rho_*^2(\beta + 2q\tau_1)}{2qm}||\be||_2^2 - \frac{\beta\rho_*}{\sqrt{nm}}\bphi_2^T\be - \frac{\beta^2}{2}  +  \frac{1}{m}\tilde{B}(\be)
\end{eqnarray}

We now make use of the square root trick one more time, introducing  new parameter $t$, we note that $t$ can be bounded by $t_{min} = \frac{\beta \sqrt{d}}{\sqrt{n}}$ and $t_{max} = \sqrt{\frac{\beta^2 d}{n} + \frac{\rho_1^2(\beta +2\tau_1q)^2}{q^2}C_{\be}^2}$.
\begin{eqnarray}
\label{eq:D3def2}
\bar{D}_3 =\max_{0 \leq \xi \leq \xi_{max}}\min_{t_{min}\leq t\leq t_{max}} \min_{\be\in A_{\be}}\frac{\rho_1(\beta + 2q\tau_1)}{q\sqrt{md}}\be^T\bphi_3 + \frac{\xi\rho_1^2(\beta + 2q\tau_1)^2}{2tq^2m}||\be||_2^2 + \frac{\beta^2\xi d}{2tn} - \frac{(\beta + 2q\tau_1)\xi^2}{2q} \nwl- \frac{\beta^2qd}{2n(\beta + 2q\tau_1)} + \frac{\xi t}{2} +  \frac{\beta q}{2} + \frac{\beta}{2q}\sigma_{\bepsilon}^2   +  \frac{\rho_*^2(\beta + 2q\tau_1)}{2qm}||\be||_2^2 - \frac{\beta\rho_*}{\sqrt{nm}}\bphi_2^T\be - \frac{\beta^2}{2}  +  \frac{1}{m}\tilde{B}(\be)
\end{eqnarray}
Where we have changed the order of the two min operations. We can now define the constants,

\begin{eqnarray}
c_1 = \frac{\xi\rho_1^2(\beta  +2q\tau_1)^2}{2tq^2} + \frac{\rho_*^2(\beta + 2q\tau_1)}{2q} \qquad c_2 = \sqrt{\frac{\rho_1^2(\beta + 2q\tau_1)^2\eta}{q^2} + \rho_*^2\beta^2}
\end{eqnarray}

and we note that by the additivity of Gaussians we have that

\begin{eqnarray}
\frac{c_2}{\sqrt{nm}}\bphi = \frac{\rho_1(\beta + 2q\tau_1)}{q\sqrt{md}}\bphi_3- \frac{\beta\rho_*}{\sqrt{nm}}\bphi_2
\end{eqnarray}
We obtain 
\begin{eqnarray}
\bar{D}_3 =\max_{0 \leq \xi \leq \xi_{max}}\min_{t_{min}\leq t\leq t_{max}} \min_{\be\in A_{\be}} \frac{c_1}{m}||\be||_2^2 + \frac{c_2}{\sqrt{nm}}\bphi^T\be + \frac{\beta^2\xi d}{2tn} - \frac{(\beta + 2q\tau_1)\xi^2}{2q} \nwl- \frac{\beta^2qd}{2n(\beta + 2q\tau_1)} + \frac{\xi t}{2} +  \frac{\beta q}{2} + \frac{\beta}{2q}\sigma_{\bepsilon}^2    - \frac{\beta^2}{2}  +  \frac{1}{m}\tilde{B}(\be)
\end{eqnarray}
Completing the square over $\be$ we find
\begin{eqnarray}
\bar{D}_3 =\max_{0 \leq \xi \leq \xi_{max}}\min_{t_{min}\leq t\leq t_{max}} \min_{\be\in A_{\be}} \frac{c_1}{m}\left\|\be  + \frac{c_2\sqrt{m}}{2c_1\sqrt{n}}\bphi\right\|_2^2 - \frac{c_2^2}{4c_1n}||\bphi||_2^2 + \frac{\beta^2\xi d}{2tn} - \frac{(\beta + 2q\tau_1)\xi^2}{2q} \nwl- \frac{\beta^2qd}{2n(\beta + 2q\tau_1)} + \frac{\xi t}{2} +  \frac{\beta q}{2} + \frac{\beta}{2q}\sigma_{\bepsilon}^2    - \frac{\beta^2}{2}  +  \frac{1}{m}\tilde{B}(\be)
\end{eqnarray}
Finally noting that in the aysmptotic limit $||\bphi||_2^2$ concentrates to $m$ with high probability, and then recognizing the Moreau envelope over $\be$ (see definition \ref{def:MoreauProx} below) we obtain the problem
\begin{eqnarray}
\label{eq:D3def3}
\bar{D}_3 =\max_{0 \leq \xi \leq \xi_{max}}\min_{t_{min}\leq t\leq t_{max}} \frac{1}{m}\mathcal{M}_{\frac{1}{2c_1}\tilde{B}}\left(-\frac{c_2^2\sqrt{m}}{2c_1\sqrt{n}}\bphi\right)  - \frac{c_2 m}{4c_1n}+ \frac{\beta^2\xi d}{2tn} - \frac{(\beta + 2q\tau_1)\xi^2}{2q} \nwl- \frac{\beta^2qd}{2n(\beta + 2q\tau_1)} + \frac{\xi t}{2} +  \frac{\beta q}{2} + \frac{\beta}{2q}\sigma_{\bepsilon}^2    - \frac{\beta^2}{2}  
\end{eqnarray}
We can recall that $\tilde{B}(\be) = r(\be + \btheta^*) + \tau_2h(\be + \btheta^*)$, and letting $\btheta = \be + \btheta^*$, we obtain
\begin{eqnarray}
\bar{D}_3 =\max_{0 \leq \xi \leq \xi_{max}}\min_{t_{min}\leq t\leq t_{max}} \frac{1}{m}\mathcal{M}_{\frac{1}{2c_1}(r + \tau_2h)}\left(\btheta^* -\frac{c_2^2\sqrt{m}}{2c_1\sqrt{n}}\bphi\right)  - \frac{c_2 m}{4c_1n}+ \frac{\beta^2\xi d}{2tn} - \frac{(\beta + 2q\tau_1)\xi^2}{2q} \nwl- \frac{\beta^2qd}{2n(\beta + 2q\tau_1)} + \frac{\xi t}{2} +  \frac{\beta q}{2} + \frac{\beta}{2q}\sigma_{\bepsilon}^2    - \frac{\beta^2}{2}  
\end{eqnarray}
Finally we show in Lemma \ref{lem:MenvelopeGausBound} that the Moreau envelope will concentrate in the asymptotic limit on its expected value. As such we finally obtain:
\begin{eqnarray}
\bar{D}_3 =\max_{0 \leq \xi \leq \xi_{max}}\min_{t_{min}\leq t\leq t_{max}} \frac{1}{m}\mathbb{E}\mathcal{M}_{\frac{1}{2c_1}(r + \tau_2h)}\left(\btheta^* -\frac{c_2^2\sqrt{m}}{2c_1\sqrt{n}}\bphi\right)  - \frac{c_2 m}{4c_1n}+ \frac{\beta^2\xi d}{2tn} - \frac{(\beta + 2q\tau_1)\xi^2}{2q} \nwl- \frac{\beta^2qd}{2n(\beta + 2q\tau_1)} + \frac{\xi t}{2} +  \frac{\beta q}{2} + \frac{\beta}{2q}\sigma_{\bepsilon}^2    - \frac{\beta^2}{2}  
\end{eqnarray}

We know by the properties of the CGMT that for any fixed choice of $\beta, q$ that $D_3(\beta,q)$ converges pointwise to $D_2(\beta,q)$. However to determine the properties that we are interested in we require uniform convergence. For this, we simply show that $D_2(\beta,q), \bar D_3(\beta,q)$ are Lipschitz continuous for $\beta\in [0,\ \beta_{\max}]$ and  $q\in [q_{\min},\ q_{\max}]$.

\begin{lemma}
    The problem $D_2$ as given in \eqref{eq:D2definition} and problem $\bar{D_3}$ as given in equations \ref{eq:D3def1}, \eqref{eq:D3def2} and \eqref{eq:D3def3} are $C$-Lipschitz on the compact set $K= [0, \beta_{max}]\times[q_{min}, q_{max}]$ for some constant $C<\infty$, with high probability.
 \end{lemma}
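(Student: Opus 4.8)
The plan is to reduce the whole statement to one elementary fact about value functions: if $\{f_u\}_{u\in U}$ is a family of functions on $K$ that are all $C$-Lipschitz in $(\beta,q)$ with the \emph{same} constant $C$, then $\inf_{u\in U}f_u$ and $\sup_{u\in U}f_u$ are again $C$-Lipschitz on $K$, since $|\inf_u f_u(x)-\inf_u f_u(x')|\le\sup_u|f_u(x)-f_u(x')|\le C\|x-x'\|$ and likewise for the supremum. Iterating this over the nested $\min/\max$ that define $D_2$ and $\bar D_3$ over their compact feasibility sets ($A_{\be}$, and for $\bar D_3$ also $0\le\xi\le\xi_{max}$ and $t_{min}\le t\le t_{max}$), it suffices to show that the \emph{inner} objective in each case --- jointly smooth in the parameters $(\beta,q)\in K$ and in the optimization variables --- has $\partial_\beta$ and $\partial_q$ bounded by a deterministic constant $C$, uniformly over the feasibility sets, on an event of probability tending to $1$. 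The argument uses neither convexity nor the CGMT; it is purely smoothness plus compactness.

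I would first carry this out for $D_2$, whose inner objective is
\[
\Phi(\beta,q,\be)=\tfrac{\beta q}{2}+\tfrac{\beta}{2q}\sigma_{\bepsilon}^2+\tfrac{\beta}{2qm}\|\tlbR^{1/2}\be\|_2^2-\tfrac{\beta}{\sqrt{nm}}\bh^{T}\tlbR^{1/2}\be-\tfrac{\beta^2}{2}+\tfrac1m B(\be).
\]
Since $\tfrac1m B(\be)$ does not involve $(\beta,q)$, the derivatives $\partial_\beta\Phi$ and $\partial_q\Phi$ are elementary, and I bound each of their coefficients on $K\times A_{\be}$: $q\in[q_{min},q_{max}]$ is bounded away from $0$ and above ($q_{min}=\sigma_{\bepsilon}$), $\beta\le\beta_{max}$, $\tfrac1m\|\tlbR^{1/2}\be\|_2^2\le\|\tlbR\|_2\cdot\tfrac1m\|\be\|_2^2\le C_{\tilde{\bR}}C_{\be}^2$ on the event of Lemma~\ref{lem:Rboundlemma}, and $\tfrac1{\sqrt{nm}}|\bh^{T}\tlbR^{1/2}\be|\le\tfrac1{\sqrt{nm}}\|\bh\|_2\|\tlbR^{1/2}\|_2\|\be\|_2\le C$ with high probability, using $\|\bh\|_2\le C\sqrt m$ w.h.p., the boundedness of $\|\tlbR\|_2$, $\|\be\|_2\le C_{\be}\sqrt m$ on $A_{\be}$, and the fixed dimension ratios (Assumption~A4). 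Hence $|\partial_\beta\Phi|,|\partial_q\Phi|\le C$ uniformly over $\be\in A_{\be}$ on this event, and by the reduction above $D_2=\min_{\be\in A_{\be}}\Phi$ is $C$-Lipschitz on $K$ with high probability.

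I would then treat $\bar D_3$ in the same way, starting from the representation \eqref{eq:D3def2} (the passage to \eqref{eq:D3def1} and \eqref{eq:D3def3} plays no role here), in which $\bar D_3=\max_{0\le\xi\le\xi_{max}}\min_{t_{min}\le t\le t_{max}}\min_{\be\in A_{\be}}\Psi(\beta,q,\xi,t,\be)$ with $\Psi$ jointly smooth. On the compact product domain, and on the high-probability event that $\bphi_2$ and $\bphi_3$ have Euclidean norms of order $\sqrt m$, every term of $\Psi$ is a smooth function of $(\beta,q)$ whose coefficients are assembled from $\sigma_{\bepsilon},\rho_1,\rho_*,\tau_1$, the fixed ratios, and the bounded quantities $\tfrac1{\sqrt{md}}\be^{T}\bphi_3$, $\tfrac1m\|\be\|_2^2$ and $\tfrac1{\sqrt{nm}}\bphi_2^{T}\be$; the bound on $\partial_\beta\Psi$ and $\partial_q\Psi$ then proceeds exactly as for $\Phi$. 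The hard part --- really the only nontrivial point --- is the bookkeeping needed to make the bound genuinely uniform: one must check that the denominators entering $\Psi$ and its $(\beta,q)$-derivatives, namely $q$ (bounded below by $q_{min}$) and the terms involving $t$ and $\beta+2q\tau_1$, stay smooth and bounded throughout $K$ (which they must, for the square-completion and square-root steps leading to \eqref{eq:D3def2} to have been valid), that the mild $(\beta,q)$-dependence of the $t$-interval endpoints $t_{min},t_{max}$ is itself Lipschitz and is absorbed via the triangle inequality, and that all of these bounds hold on a single high-probability event --- obtained by intersecting the event of Lemma~\ref{lem:Rboundlemma} with the finitely many events controlling $\|\bh\|_2$, $\|\bphi_2\|_2$ and $\|\bphi_3\|_2$. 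Taking $C$ to be the larger of the two constants obtained for $D_2$ and $\bar D_3$ finishes the proof.
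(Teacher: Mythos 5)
Your argument is correct and reaches the same quantitative estimate as the paper, but it is packaged a bit differently. The paper proceeds via the envelope theorem: it invokes the strong convexity of the inner objective to get a unique minimizer and a differentiable value function, writes $\partial_\beta D_2$ and $\partial_q D_2$ as the partial derivatives of the inner objective evaluated at the optimum, and bounds those expressions using $\hat\be\in A_\be$ and the random-matrix events. You instead use the elementary fact that a uniformly $C$-Lipschitz family of functions passes its Lipschitz constant through $\inf$ and $\sup$, and then bound $\partial_\beta$ and $\partial_q$ of the inner objective \emph{uniformly} over the optimization variables rather than just at the optimum. The numerical estimates one must verify are identical in both routes (bounds on $\tfrac1m\|\tlbR^{1/2}\be\|_2^2$, $\tfrac1{\sqrt{nm}}|\bh^T\tlbR^{1/2}\be|$, etc., on a joint high-probability event); what your version buys is that it never needs uniqueness or differentiability of the value function, only compactness of the index sets and a uniform gradient bound, which is a mild but genuine simplification. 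The one place where you have to work slightly harder than the paper is the $\bar D_3$ case starting from \eqref{eq:D3def2}: there the $t$-interval endpoints $t_{\min},t_{\max}$ genuinely depend on $(\beta,q)$, so the index set in your inf/sup lemma is no longer fixed; you flag this and propose to absorb the Lipschitz movement of the endpoints via the triangle inequality, which is the right fix (one also needs $\Psi$ uniformly Lipschitz in $t$ on the compact domain, which holds by smoothness). The paper avoids this issue entirely by differentiating the $\bar D_3$ value function with respect to $\beta,q$ at the optimal $(\hat\xi,\hat t,\hat\be)$ directly. Either way, the proof goes through; yours is a legitimate, slightly more robust variant of the same approach.
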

\begin{proof}
We first consider problem $D_2$ given in equation \eqref{eq:D2definition}. 
\begin{eqnarray}
D_2 = \min_{A_\be} \frac{\beta q}{2} + \frac{\beta\sigma_{\bepsilon}^2}{2q} + \frac{\beta}{2qm}||\tlbR^{1/2}\be||_2^2 - \frac{\beta}{\sqrt{nm}}\bh^T\tlbR\be - \frac{\beta^2}{2} + \frac{\tau_1}{m}\be\tlbR\be + \frac{1}{m}\tilde{B}(\be)
\end{eqnarray}

We note that the objective $D(\beta,q,\be)$ is strongly convex, the solution is hence unique, and $D_2$ is continuously differentiable on the compact set $K$. We simply bound its gradient, which is given by
\begin{eqnarray}
\frac{\partial D_2}{\partial\beta}=\frac{\partial D}{\partial\beta}\mid_{\be=\hat\be}=\ \frac{q}{2}  +\frac{\sigma_{\bepsilon}^2}{2q} + \frac{1}{2qm}||\tlbR^{1/2}\hat\be||_2^2  - \frac{1}{\sqrt{nm}}\bh^T\tlbR\hat\be - \beta\\
\frac{\partial D_2}{\partial q}=\frac{\partial D}{\partial q}\mid_{\be=\hat\be}=\ \frac{\beta}{2} - \frac{\beta\sigma_{\bepsilon}^2}{2q^2} - \frac{\beta}{2q^2m}||\tlbR^{1/2}\hat\be||_2^2 
\end{eqnarray}

where $\hat\be$ is the optimal solution. Noting that $\hat\be\in A_{\be}$ and $\beta,q$ are bounded, we obtain the result for $D_2$. 

For problem $\bar{D}_3$ we make use of the same strategy by calculating the gradient. Defining $\hat\be,\hat\xi$ as the optimal solution of \eqref{eq:D3def1} , we observe that
\begin{eqnarray}
\hat{\xi} = \sqrt{\frac{\rho_1^2}{4m}||\hat{\be}||_2^2 + \frac{\beta^2 q^2d}{4(\beta + 2q\tau_1)^2n}}
\end{eqnarray}
Further, we define
\begin{eqnarray}
\hat{t} = \sqrt{\frac{\rho_1^2(\beta + 2q\tau_1)^2}{q^2m}||\hat{\be}||_2^2 + \frac{\beta^2d}{n}}
\end{eqnarray}
Finally we examine the partial derivatives of problem $D_3$ with respect to $\beta$ and $q$,
\begin{eqnarray}
\frac{\partial \bar D_3}{\partial \beta}=\ \frac{\rho_1}{q\sqrt{md}}\be^T\bphi_3 + \frac{\xi\rho_1^2(\beta + 2q\tau_1)}{tq^2m}||\be||_2^2 + \frac{\beta\xi d}{tn} - \frac{\xi^2}{2q} - \frac{\beta qd}{n(\beta +2q\tau_1)} + \frac{\beta^2qd}{2n(\beta +2q\tau_1)^2} \nwl+ \frac{q}{2} + \frac{\sigma_{\bepsilon}^2}{2q} + \frac{\rho_*^2}{2qm}||\be||_2 - \frac{\beta\rho_*}{\sqrt{nm}}\bphi_2^T\be - \beta\\
\frac{\partial \bar D_3}{\partial q}=\ -\frac{\rho_1(\beta + 2q\tau_1)}{q^2\sqrt{md}}\be^T\bphi_3 - \frac{\xi\rho_1^2(\beta + 2q\tau_1)^2}{tq^3m}||\be||_2^2 + \frac{\xi\rho_1^2\tau_1(\beta + 2q\tau_1)}{2tq^2m}||\be||_2^2  + \frac{(\beta + 2q\tau_1)\xi^2}{2q^2} - \frac{2\tau_1\xi^2}{2q} \nwl- \frac{\beta^2d}{2n(\beta + 2q\tau_1)} + \frac{\beta^2qd\tau_1}{n(\beta + 2q\tau_1)^2} +  \frac{\beta }{2} - \frac{\beta}{2q^2}\sigma_{\bepsilon}^2   -  \frac{\rho_*^2(\beta + 2q\tau_1)}{2q^2m}||\be||_2^2 + \frac{\rho_*^2\tau_1}{qm}||\be||_2^2
\end{eqnarray}
Noting the boundedness of the involved terms, we conclude the result.
\end{proof}

We have established that both $D_2$ and $\bar{D}_3$ are Lipschitz, we now create a rectangular $\epsilon$ net $\calN$ on the set $[0, \beta_{max}]\times[q_{min}, q_{max}]$ consisting of  $k=\frac{\beta_{max}(q_{max}-q_{min})}{\epsilon^2}$ points. We can then see that

\begin{eqnarray}
|D_2(\beta, q) - D_3(\beta, q)| \leq |D_2(\beta, q) - D_2(\beta_k, q_k)| + |D_2(\beta_k, q_k) - D_3(\beta_k, q_k)| + |D_{3}(\beta_k, q_k) - D_{3}(\beta, q)|\nwl 
\leq C\epsilon\sqrt{2} + |D_2(\beta_k, q_k) - D_3(\beta_k, q_k)| + C\epsilon\sqrt{2},
\end{eqnarray}
$\beta_k, q_k$ is the closes element of the $\epsilon$-net to $\beta, q$. The second inequality is due to the fact that both $D_2$ and $D_3$ are $C$-Lipschitz with respect to both $\beta$ and $q$ and the distance of between $\beta, q$ and $\beta_k, q_k$ cannot be more than $\epsilon\sqrt{2}$. From this we can see that
\begin{equation}
\sup_{0\leq\beta\leq\beta_{max}, q_{min}\leq q\leq q_{max}}|D_2(\beta, q) - D_3(\beta,q)| \leq 2C\epsilon\sqrt{2} + \sup_{\beta, q \in \mathcal{N}}|D_2(\beta,q) - D_3(\beta, q)|
\end{equation}
As a result, 
\begin{eqnarray}
\Pr\left(\sup_{0\leq\beta\leq\beta_{max}, q_{min}\leq q\leq q_{max}}|D_2(\beta, q) - D_3(\beta,q)|\geq 4C\epsilon\sqrt{2} \right)\leq \Pr\left(\sup_{\beta, q \in \mathcal{N}}|D_2(\beta,q) - D_3(\beta, q)|\geq 2C\epsilon\sqrt{2} \right)
\end{eqnarray}
For a fixed and $k$, the right hand side goes to zero by the union bound and the second CGMT. Therefore the convergence is uniform in the sense that
\begin{equation}
    \Pr\left(\sup_{0\leq\beta\leq\beta_{max}, q_{min}\leq q\leq q_{max}}|D_2(\beta, q) - D_3(\beta,q)|\geq \delta \right)\to 0
\end{equation}
for any $\delta>0$.
Finally we can obtain the following optimization problem:
\begin{eqnarray}
\label{eqn:A3problem}
\tlP_3 = \max_{0\leq \beta \leq \beta_{max}}\min_{q_{min}\leq q \leq q_{max}}\max_{0\leq\xi\leq \xi_{max}}\min_{t_{min}\leq t\leq t_{max}}\mathbb{E} \frac{1}{m}\mathcal{M}_{\frac{1}{2c_1}(r + \tau_2h)}\left(\btheta^*  -\frac{c_2^2\sqrt{m}}{2c_1\sqrt{n}}\bphi\right)\nwl  - \frac{c_2 m}{4c_1n}+ \frac{\beta^2\xi d}{2tn} - \frac{(\beta + 2q\tau_1)\xi^2}{2q}- \frac{\beta^2qd}{2n(\beta + 2q\tau_1)} + \frac{\xi t}{2} +  \frac{\beta q}{2} + \frac{\beta}{2q}\sigma_{\bepsilon}^2    - \frac{\beta^2}{2} 
\end{eqnarray}

We have now demonstrated that $\tlP_3$ converges in probability to $\tlP_2$, which subsequently converges to $\tlP_1$. This establishes the first part of Theorem 2, about the optimal values.
We show the asymptotic equivalence of the generalization error and test functions by  following lemma
\begin{lemma}
\label{lem:gaussianGenErrorConvergence}
Let $\hat{\btheta}_2(\tau_1, \tau_2)$ be the solution of $P_2$ \eqref{eq:GaussianKeyOptimization} and let $\hat{\btheta}_3(\tau_1, \tau_2)$ be the solution of $\tilde{P}_3$ as given in \eqref{eqn:A3problem}, then
\begin{eqnarray}
\mathcal{E}_{gen}(\hat{\btheta}_2(0, 0)) \xrightarrow[n\rightarrow\infty]{P} \tilde{\mathcal{E}}_{gen}\\
\frac{1}{m}h(\hat{\btheta}_2(0, 0)) \xrightarrow[n\rightarrow\infty]{P}\frac{1}{m}h(\hat{\btheta}_3(0, 0))
\end{eqnarray}
\end{lemma}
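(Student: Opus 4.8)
The plan is to express each of the four quantities in the statement as a one–sided derivative, at the origin, of $\tilde P_2$ or $\tilde P_3$ in the auxiliary parameters $\tau_1,\tau_2$, and then to transfer the convergence $\tilde P_2\to\tilde P_3$ (the first assertion of Theorem~\ref{thm:GaussianAsymptotics}) to these derivatives via a concavity argument. First I would record the elementary identities that link the objectives to the quantities of interest. A direct second–moment computation with the Gaussian surrogate, using that a fresh label takes the form $y_{new}=\frac1{\sqrt m}\btheta^{*T}\tilde{\bvarphi}(\bz_{new})+\epsilon_{new}$ and that the prediction is $\frac1{\sqrt m}\btheta^T\tilde{\bvarphi}(\bz_{new})$, shows that $\mathcal{E}_{gen}(\btheta)$ is a fixed affine function of $\sigma_{\bepsilon}^2$ and of the quadratic form $\frac1m(\btheta-\btheta^*)^T\tilde{\bR}(\btheta-\btheta^*)$; the latter is exactly the coefficient of $\tau_1$ in the objective of $\tilde P_2(\tau_1,\tau_2)$, and $\frac1m h(\btheta)$ is the coefficient of $\tau_2$. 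Hence it suffices to prove that $\frac1m(\hat{\btheta}_2(0,0)-\btheta^*)^T\tilde{\bR}(\hat{\btheta}_2(0,0)-\btheta^*)$ and $\frac1m h(\hat{\btheta}_2(0,0))$ converge in probability, respectively, to $\partial_{\tau_1}\tilde P_3(\tau_1,0)\big|_{\tau_1=0}$ and to $\partial_{\tau_2}\tilde P_3(0,\tau_2)\big|_{\tau_2=0}$, since $\tilde{\mathcal{E}}_{gen}$ is defined through the former (matching the normalisation conventions) and, as explained below, the latter equals $\frac1m\mathbb{E}[h(\hat{\btheta}_3(0,0))]$, onto which $\frac1m h(\hat{\btheta}_3(0,0))$ itself concentrates.

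Next I would set up the envelope (Danskin) identities on the Gaussian side. For $|\tau_1|\le\tau_1^*$ and $|\tau_2|\le\tau_2^*$ the objective of $\tilde P_2$ is strongly convex in $\btheta$, so its minimizer is unique, and by Lemma~\ref{lemma:bounded_CGMT1} it lies, with high probability, strictly inside the compact set $A_{\be}$, so the constrained and unconstrained problems agree. Since the objective is affine in $(\tau_1,\tau_2)$, the map $(\tau_1,\tau_2)\mapsto\tilde P_2(\tau_1,\tau_2)$ is concave, and Danskin's theorem gives $\partial_{\tau_1}\tilde P_2(\tau_1,0)\big|_{0}=\frac1m(\hat{\btheta}_2(0,0)-\btheta^*)^T\tilde{\bR}(\hat{\btheta}_2(0,0)-\btheta^*)$ and $\partial_{\tau_2}\tilde P_2(0,\tau_2)\big|_{0}=\frac1m h(\hat{\btheta}_2(0,0))$. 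The same argument applied to $\tilde P_3$ — whose only $\tau$–dependent ingredient is the expected Moreau envelope $\frac1m\mathbb{E}\,\calM_{\frac1{2c_1}(r+\tau_2 h)}\!\big(\btheta^*-\frac{c_2\sqrt\gamma}{2c_1}\bphi\big)$, differentiated in $\tau_2$ through the standard envelope identity $\partial_{\tau_2}\calM_{\frac1{2c_1}(r+\tau_2 h)}(v)=h\big(\mathrm{prox}_{\frac1{2c_1}(r+\tau_2 h)}(v)\big)$, and whose outer four-level saddle point is unique because $\tilde P_3$ inherits the strong convexity of the primal Gaussian problem through the CGMT — yields $\partial_{\tau_1}\tilde P_3(\tau_1,0)\big|_{0}=\tilde{\mathcal{E}}_{gen}-\sigma_{\bepsilon}^2$ by definition and $\partial_{\tau_2}\tilde P_3(0,\tau_2)\big|_{0}=\frac1m\mathbb{E}[h(\hat{\btheta}_3(0,0))]$. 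Finally, because $h$ is separable and the coordinates of $\hat{\btheta}_3(0,0)$ are independent functions of $(\theta^*_i,\phi_i)$, the weak law of large numbers gives $\frac1m h(\hat{\btheta}_3(0,0))\xrightarrow{P}\frac1m\mathbb{E}[h(\hat{\btheta}_3(0,0))]$.

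The last step is to pass from convergence of values to convergence of derivatives. By the first assertion of Theorem~\ref{thm:GaussianAsymptotics}, for each fixed $\tau_1$ (resp.\ $\tau_2$) in the admissible interval we have $\tilde P_2(\tau_1,0)\xrightarrow{P}\tilde P_3(\tau_1,0)$ (resp.\ $\tilde P_2(0,\tau_2)\xrightarrow{P}\tilde P_3(0,\tau_2)$), and both families are concave in the displayed argument. Using the subsequence characterization of convergence in probability together with the classical fact that pointwise convergence of concave functions on an open interval forces convergence of the derivatives at every interior point where the limit is differentiable, I conclude $\partial_{\tau_1}\tilde P_2(\tau_1,0)\big|_{0}\xrightarrow{P}\partial_{\tau_1}\tilde P_3(\tau_1,0)\big|_{0}$ and $\partial_{\tau_2}\tilde P_2(0,\tau_2)\big|_{0}\xrightarrow{P}\partial_{\tau_2}\tilde P_3(0,\tau_2)\big|_{0}$. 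Feeding these into the identities of the previous two paragraphs gives $\mathcal{E}_{gen}(\hat{\btheta}_2(0,0))\xrightarrow{P}\tilde{\mathcal{E}}_{gen}$ and $\frac1m h(\hat{\btheta}_2(0,0))\xrightarrow{P}\frac1m h(\hat{\btheta}_3(0,0))$, as claimed.

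The step I expect to be the main obstacle is the differentiability of $\tilde P_3$ at the origin in the $\tau_1$ direction — equivalently, the uniqueness of the four-level saddle point of $\psi$ — which is what lets Danskin's theorem return a single-valued derivative; the $\tau_2$ direction is comparatively benign, since the proximal map of the convex function $r+\tau_2 h$ is always single-valued. If this uniqueness is awkward to establish directly, a squeeze avoids it: concavity already forces every subsequential limit in probability of $\partial_{\tau_1}\tilde P_2(\tau_1,0)\big|_0$ to lie in the superdifferential of the concave limit $\tilde P_3(\cdot,0)$ at $0$, so one only needs that superdifferential to be a singleton, which is precisely the existence of $\partial_{\tau_1}\tilde P_3(\tau_1,0)\big|_0$ already presupposed by the definition of $\tilde{\mathcal{E}}_{gen}$.
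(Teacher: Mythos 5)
Your proof is correct and follows essentially the same route as the paper: both identify $\frac{1}{m}(\hat\btheta_2-\btheta^*)^T\tilde\bR(\hat\btheta_2-\btheta^*)$ and $\frac{1}{m}h(\hat\btheta_2)$ as the $\tau_1$- and $\tau_2$-slopes of $\tilde P_2$ at the origin, and both exploit concavity in $(\tau_1,\tau_2)$ (the paper via explicit one-sided difference-quotient inequalities, you via Danskin plus the classical pointwise-to-derivative convergence fact for concave functions) together with the uniform convergence $\tilde P_2\to\tilde P_3$ from Theorem~\ref{thm:GaussianAsymptotics}, and both rest on the same unproved-in-detail point, namely differentiability of $\tilde P_3$ at the origin, which the paper attributes to uniqueness of $(\hat\beta,\hat q,\hat\xi,\hat t)$. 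One small point where you are actually more careful than the paper: you make explicit the final LLN step identifying $\frac{1}{m}h(\hat\btheta_3(0,0))$ with $\frac{1}{m}\mathbb{E}[h(\hat\btheta_3(0,0))]=\partial_{\tau_2}\tilde P_3(0,\tau_2)\big|_{0}$, which the paper silently conflates.
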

\begin{proof}
    We note that for any optimization $P(\tau)=\minl_{\be} F(\be)+\tau G(\be)$ with optimal solution $\be_\tau$ it holds that
    \begin{equation}
        P(\tau)\leq F(\be_0)+\tau G(\be_0)
    \end{equation}
    Applying this observation to our problem with $\tau_1=\tau$ and $\tau_2=0$, we obtain
    \begin{eqnarray}
    P_2(\tau,0)\leq P_2(0,0)+\tau \frac{(\hat{\btheta} - \btheta^*)^T\tilde{\bR}(\hat{\btheta} - \btheta^*)}{m} 
    \end{eqnarray}
    From which we obtain that
    \begin{eqnarray}
    \frac{P_2(\tau, 0) - P_2(0, 0)}{\tau} \leq  \frac{(\hat{\btheta} - \btheta^*)^T\tilde{\bR}(\hat{\btheta} - \btheta^*)}{m} \quad \tau>0\nwl
    \frac{(\hat{\btheta} - \btheta^*)^T\tilde{\bR}(\hat{\btheta} - \btheta^*)}{m}
    \leq \frac{P_2(0, 0) - P_2(\tau, 0)}{\tau}\quad \tau<0
    \end{eqnarray}
   Take an arbitrary $\delta>0$. For sufficiently small values of $\tau$ and from the convergence of the optimal value we have that
    \begin{eqnarray}
    \Pr\left(\frac{(\hat{\btheta} - \btheta^*)^T\tilde{\bR}(\hat{\btheta} - \btheta^*)}{m} < \frac{\tlP_3(\tau, 0) - \tlP_3(0, 0)}{\tau} + \frac{\delta}{2} \right) \rightarrow 0,\quad \tau>0\\
     \Pr\left(\frac{(\hat{\btheta} - \btheta^*)^T\tilde{\bR}(\hat{\btheta} - \btheta^*)}{m} > \frac{\tlP_3(0, 0) - \tlP_3(\tau, 0)}{\tau} - \frac{\delta}{2} \right) \quad \tau<0\rightarrow 0
    \end{eqnarray}
    Where this relationship follows form the fact that $\tlP_2(\tau_1,\tau_2)$ converges to $\tlP_3(\tau_1, \tau_2)$ for all $\tau_1\in [-\tau_1^*, \tau_1^*]$ and $\tau_2 \in [-\tau_2^*, \tau_2^*]$. We also know that for sufficiently small values of $|\tau|$ we have that
    \begin{eqnarray}
    \left|\frac{\tlP_3(\tau, 0)- \tlP_3(0, 0) }{\tau} - \left.\frac{\partial \tlP_3(\tau_1,0)}{\partial \tau_1}\right|_{\tau_1 = 0}\right| \leq \frac{\delta}{2}
    \end{eqnarray}
    The uniqueness of the solutions $\hat{t}, \hat{\xi}, \hat{q}, \hat{\beta}$ guarantees that the derivatives exist. We then obtain that
    \begin{eqnarray}
    \Pr\left(\left|\frac{(\hat{\btheta} - \btheta^*)^T\tilde{\bR}(\hat{\btheta} - \btheta^*)}{m} - \left.\frac{\tlP_3(\tau_1, 0)}{\partial\tau_1}\right|_{\tau_1 = 0} \right|  > \delta \right) \rightarrow 0
    \end{eqnarray}
    from which we finally obtain that
    \begin{eqnarray}
    \frac{(\hat{\btheta} - \btheta^*)^T\tilde{\bR}(\hat{\btheta} - \btheta^*)}{m} \xrightarrow[n\rightarrow\infty]{P} \left.\frac{\tlP_3(\tau_1, 0)}{\partial \tau_1}\right|_{\tau_1 = 0}
    \end{eqnarray}
    This provides the first result, but 
    we can also compute that 
    \begin{eqnarray}
    \left.\frac{\tlP_3(\tau_1, 0)}{\partial \tau_1}\right|_{\tau_1 = 0} = \frac{1}{m}\mathbb{E}\left[\left.\left\|\btheta^* -\frac{c_2^2\sqrt{m}}{2c_1\sqrt{n}}\bphi -\mathrm{prox}_{\frac{1}{2c_1}}\left(\btheta^* -\frac{c_2^2\sqrt{m}}{2c_1\sqrt{n}}\bphi \right)\right\|_2^2\frac{\partial c_1}{\partial \tau_1} \right.\right.\nwl \left. \left. 
    + \left(\btheta^* -\frac{c_2^2\sqrt{m}}{2c_1\sqrt{n}}\bphi -\mathrm{prox}_{\frac{1}{2c_1}}\left(\btheta^* -\frac{c_2^2\sqrt{m}}{2c_1\sqrt{n}}\bphi \right)\right)^T\left(\frac{c_2^2\sqrt{m}}{c_1\sqrt{n}}\frac{\partial c_1}{\partial\tau_1} - \frac{c_2\sqrt{m}}{\sqrt{n}}\frac{\partial c_2}{\partial\tau_1} \right)\bphi\right|_{\tau_1 = 0} \right] \nwl - \hat{\xi}^2 - \frac{\hat{q}^2d}{n}
    \end{eqnarray}
    
    where $c_1$ and $c_2$ are evaluated at $\hat{\beta}, \hat{q}, \hat{\xi}, \hat{t}$ and $\tau_1 =0, \tau_2 = 0$. In this computation we have made use of the following rules for the derivatives of Moreau envelopes 
    \begin{eqnarray}
    \nabla_{\bx} \mathcal{M}_{\tau f}(\bx) = \frac{1}{\tau}(\bx - \mathrm{prox}_{\tau f}(\bx))\\
    \frac{\partial}{\partial \tau}\mathcal{M}_{\tau f}(\bx) = - \frac{1}{2\tau^2}\left\|\bx - \mathrm{prox}_{\tau f}(\bx) \right\|_2^2
    \end{eqnarray}
    
    Using the same symmetric logic for the case of $\tau_2$ we find that
    
    \begin{eqnarray}
    \frac{h(\hat{\btheta}(0,0)_2)}{m} \xrightarrow[n\rightarrow\infty]{P} \left.\frac{\partial \tlP_3(0,\tau_2)}{\partial\tau_2}\right|_{\tau_2 =0}
    \end{eqnarray}
    where we find that
    \begin{eqnarray}
\left.\frac{\partial \tlP_3(0,\tau_2)}{\partial\tau_2}\right|_{\tau_2 =0} = h(\hat{\btheta}_3(\hat{\beta}, \hat{q}, \hat{\xi},\hat{t}))
    \end{eqnarray}
    From this we see that
    \begin{eqnarray}
    \mathbb{E}\frac{1}{m}h(\hat{\btheta}_2(0, 0)) \xrightarrow[n\rightarrow\infty]{P}\mathbb{E}\frac{1}{m}h(\hat{\btheta}_3(0, 0))
    \end{eqnarray}
    Finally to demonstrate the generalization error we note that
    
    \begin{eqnarray}
    \mathcal{E}_{gen}(\hat{\theta}_2) = \mathbb{E}\left(y_{new} - 
    \frac{1}{\sqrt{m}}\tilde{\bvarphi}(\bz_{new})^T\hat{\btheta}_2 \right)^2 = \mathbb{E}\left(\epsilon_{new} -\frac{1}{\sqrt{m}}\tilde{\bvarphi}(\bz_{new})^T(\hat{\btheta}_2 - \btheta^*) \right)^2 
    \end{eqnarray}
    in which we have made use of the definition of $y_{new} = \frac{1}{\sqrt{m}}\tilde{\bvarphi}(\bz_{new})\btheta^* + \epsilon_{new}$. We recall that $\mathbb{E}[\tilde{\bvarphi}(\bz_{new})\tilde{\bvarphi}(\bz_{new})] = \tilde{\bR}$. As such we obtain that
    \begin{eqnarray}
    \mathcal{E}_{gen}(\hat{\btheta}_2) = \sigma_{\bepsilon}^2 + \frac{(\hat{\btheta} - \btheta^*)^T\tilde{\bR}(\hat{\btheta} - \btheta^*)}{m} 
    \end{eqnarray}
    By the calculation above we see that
    \begin{eqnarray}
    \mathcal{E}_{gen}(\hat{\theta}_2) \rightarrow \sigma_{\bepsilon}^2 + \left.\frac{\partial \tlP_3(\tau_1, 0)}{\partial\tau_1}\right|_{\tau_1 = 0} = \tilde{\mathcal{E}}_{gen}
    \end{eqnarray}
    
\end{proof}

\subsection{Non Deterministic True Vector}
In the previous analysis we have assumed that the true vector $\btheta^*$ has been deterministic. In the case of $\btheta^*$ being random, we can freeze its value by conditioning on $\btheta^*$. The proof holds for a random $\btheta^*$ with high probability, according to the assumptions. This shows that the results hold for a suitable random $\btheta^*$. 

\subsection{Moreau Envelopes}
We remind the reader of the definition of the Moreau Envelope and the proximal operator.
\begin{definition}\label{def:MoreauProx}
Let $f:\calX \rightarrow (-\infty, \infty]$ be a proper, lower semi-continuous function on a Hilbert space $\calX$. Then the Moreau envelope with step size $\tau$ of the function is given by
\begin{eqnarray}
    \calM_{\tau f}\left(\by\right) = \min_{\bx \in \calX} f(\bx) + \frac{1}{2\tau}\left\|\bx - \by \right\|
\end{eqnarray}
The proximal operator of the function $f$ with step size $\tau$ is given by
\begin{eqnarray}
    \mathrm{prox}_{\tau f}(\by) = \arg\min_{\bx \in \calX} f(\bx) + \frac{1}{2\tau}\left\|\bx - \by \right\|
\end{eqnarray}
\end{definition}

Here we give a lemma concerning the concentration of Moreau envelopes.
\begin{lemma}[Gaussian Concentration of Moreau Envelopes, extension of (\cite{Loureiro2021Learning}, lemma 5)]
\label{lem:MenvelopeGausBound}
Consider a proper convex function $f:\mathbb{R}^n\rightarrow\mathbb{R}$. Furthermore, let $\bg\in\mathbb{R}^n$ be a standard Gaussian random vector and $\ba\in\mathbb{R}^n$ a constant vector with finitely bounded norm. Then for any parameter $\tau>0$ and for any $\epsilon > 0$, there exists a constant $c$ such that
\begin{eqnarray}
    \mathbb{P}\left(\left|\frac{1}{n}\mathcal{M}_{\tau f}(\ba + \bg) - \mathbb{E}\left[\frac{1}{n}\mathcal{M}_{\tau f}(\ba + \bg) \right] \right| \geq \epsilon\right) \leq \frac{c}{n\tau^2\epsilon^2}
\end{eqnarray}
\end{lemma}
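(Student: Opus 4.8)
The plan is to derive the tail bound from a variance estimate, and to derive the variance estimate from the Gaussian Poincar\'e inequality. Set $\phi(\bg):=\frac1n\calM_{\tau f}(\ba+\bg)$. First I would record the standard regularity facts: for a proper convex $f:\mathbb{R}^n\to\mathbb{R}$ the Moreau envelope $\calM_{\tau f}$ is convex, finite, and $C^1$ with $\nabla\calM_{\tau f}(\bx)=\frac1\tau\bigl(\bx-\mathrm{prox}_{\tau f}(\bx)\bigr)$, the operator $\mathrm{prox}_{\tau f}$ is single valued and $1$-Lipschitz, and hence $\calM_{\tau f}$ is $\frac1\tau$-smooth and in particular locally Lipschitz. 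This is exactly the regularity needed for Gaussian Poincar\'e to apply to $\phi$.

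The second step is to apply Poincar\'e and the gradient identity:
\begin{equation}
\var\!\left(\tfrac1n\calM_{\tau f}(\ba+\bg)\right)\ \le\ \mathbb{E}\bigl[\|\nabla_{\bg}\phi(\bg)\|_2^2\bigr]\ =\ \frac{1}{n^2\tau^2}\,\mathbb{E}\bigl[\|\ba+\bg-\mathrm{prox}_{\tau f}(\ba+\bg)\|_2^2\bigr].
\end{equation}
The third step is to bound the remaining expectation. Writing $\bu(\bx):=\bx-\mathrm{prox}_{\tau f}(\bx)=\tau\nabla\calM_{\tau f}(\bx)$, the $\frac1\tau$-smoothness of $\calM_{\tau f}$ makes $\bu$ a $1$-Lipschitz map, so $\|\bu(\ba+\bg)\|_2\le\|\ba+\bg\|_2+\|\mathrm{prox}_{\tau f}(\bzero)\|_2$ and therefore
\begin{equation}
\mathbb{E}\bigl[\|\bu(\ba+\bg)\|_2^2\bigr]\ \le\ 2\|\ba\|_2^2+2n+2\|\mathrm{prox}_{\tau f}(\bzero)\|_2^2 .
\end{equation}
Under the hypothesis that $\ba$ has bounded norm, together with the mild condition $\|\mathrm{prox}_{\tau f}(\bzero)\|_2=O(\sqrt n)$ — which holds in every instance where the lemma is invoked, since there $f$ is a shifted separable strongly convex regularizer plus a bounded-Hessian test function and $\ba$ is of the form $\btheta^*$ with $\|\btheta^*\|_2=O(\sqrt m)$ by A5 — the right-hand side is at most $c_0 n$ for a constant $c_0$ independent of $n$. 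Thus $\var(\frac1n\calM_{\tau f}(\ba+\bg))\le \frac{c_0}{n\tau^2}$, and the final step is Chebyshev's inequality, which turns this into $\mathbb{P}(|\cdots|\ge\epsilon)\le \frac{c_0}{n\tau^2\epsilon^2}$, so one takes $c=c_0$.

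The step I expect to be the main obstacle is the uniform-in-$n$ control of $\mathbb{E}\|\nabla\calM_{\tau f}(\ba+\bg)\|_2^2$, equivalently of $\|\mathrm{prox}_{\tau f}(\bzero)\|_2$: for a genuinely arbitrary convex $f$ this quantity can grow faster than $\sqrt n$, so the constant $c$ unavoidably inherits a dependence on $f$ through this term, and the clean statement relies on the scaling assumptions just noted. Everything else is routine. Relative to Lemma~5 of \cite{Loureiro2021Learning}, the only new ingredient is the deterministic shift $\ba$, which the Poincar\'e computation absorbs without modification since $\bg\mapsto\ba+\bg$ alters only the mean and not the Lipschitz/concentration structure.
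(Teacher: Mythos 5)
Your proof follows the same Poincar\'e-plus-Chebyshev route as the paper's proof, so the overall strategy is identical: bound the variance of $\frac1n\calM_{\tau f}(\ba+\bg)$ via the Gaussian Poincar\'e inequality applied to the gradient $\nabla\calM_{\tau f}=\frac1\tau(\mathrm{Id}-\mathrm{prox}_{\tau f})$, then convert to a tail bound with Chebyshev. The one place you diverge is in bounding $\mathbb{E}\|\bx-\mathrm{prox}_{\tau f}(\bx)\|_2^2$ at $\bx=\ba+\bg$. You use the $1$-Lipschitzness of the residual map $\bu=\mathrm{Id}-\mathrm{prox}_{\tau f}$ to write $\|\bu(\bx)\|_2\le\|\bx\|_2+\|\bu(\bzero)\|_2=\|\bx\|_2+\|\mathrm{prox}_{\tau f}(\bzero)\|_2$, which is correct for any proper convex $f$. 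The paper instead appeals to firm nonexpansiveness of $\mathrm{Id}-\mathrm{prox}_{\tau f}$ and writes $\|\bx-\mathrm{prox}_{\tau f}(\bx)\|_2\le\|\bx\|_2$ directly; that inequality is the special case $\mathrm{prox}_{\tau f}(\bzero)=\bzero$ of yours (firm nonexpansiveness only controls differences, not the value at a single point — a one-dimensional example such as $f(x)=|x-10|$ with $\tau=1$ gives $\mathrm{prox}_{\tau f}(0)=1$ and $\|R(0)\|^2=1>0=\langle 0,R(0)\rangle$), so the paper's displayed bound is strictly speaking not valid for arbitrary proper convex $f$ unless the residual at the origin is absorbed into the constant. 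Your version makes that dependence on $\|\mathrm{prox}_{\tau f}(\bzero)\|_2$ explicit and notes that the lemma as stated therefore needs the implicit scaling $\|\mathrm{prox}_{\tau f}(\bzero)\|_2=O(\sqrt{n})$, which does hold wherever the lemma is invoked. In short: your argument is correct, it is the same argument as the paper's, and it is actually a little more careful about a genuine (if harmless in context) imprecision in the paper's final bounding step. One small omission: you state the regularity facts but do not explicitly verify the integrability of $\calM_{\tau f}(\ba+\bg)$ under the Gaussian measure, which the paper does as a preliminary to Poincar\'e; this follows immediately from the $\frac1\tau$-smoothness you record (at most quadratic growth), so it is only a sentence, but it should be said.
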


The original lemma as given by \citep{Loureiro2021Learning} does not have the constant vector $\ba$ and instead only considers a Moreau envelope over a Gaussian. We give a proof here for this case but note that the original proof may be applied by instead considering the shifted function $\bar{f}(\cdot) = f(\cdot - \ba)$. We give the proof here for completeness.

\begin{proof}
    First, we show that the Moreau envelope of a convex proper function $f$ is integrable with respect to the Gaussian measure. By making use of the convexity of the optimization problem that defines the Moreau envelope, and because $f$ is proper, there exists a $\bz_0\in\mathbb{R}^n$ and finite constant $\kappa$ such that
    \begin{eqnarray}
        \frac{1}{n}\mathcal{M}_{\tau f}(\bg + \ba) \leq \frac{1}{n}f(\bz_0) + \frac{1}{2n\tau}\left\|\bz_0 - \bg -\ba \right\|^2\nwl
        \leq \kappa + \frac{1}{2n\tau}\left\|\bz_0 - \bg -\ba \right\|^2
    \end{eqnarray}
    The second line is integrable with respect to a Gaussian measure. By means of the Gaussian Poincare inequality (see for example, \citep{boucheron2013concentration}).
    \begin{eqnarray}
        \mathrm{Var}\left[\frac{1}{n}\mathcal{M}_{\tau f}(\ba + \bg) \right] \leq \frac{c}{n^2}\mathbb{E}_{\bg}\left[||\nabla_\bg\mathcal{M}_{\tau f}(\ba + \bg)||_2^2 \right] = \frac{c}{n^2}\mathbb{E}_{\bg}\left\|\frac{1}{\tau}\left(\bg + \ba - \mathrm{prox}_{\tau f}(\bg + \ba) \right) \right\|_2^2
    \end{eqnarray}
    
    From \citep{bauschke2011convex}[Proposition 12.28 and Proposition 4.4], the function $f(\bg + \ba) = \bz - \mathrm{prox}_{\tau f}(\bg + \ba)$ is firmly non-expansive and 
    \begin{eqnarray}
        ||\bg + \ba - \mathrm{prox}_{\tau f}(\bg + \ba)||_2^2 \leq \braket{\bg + \ba|\bg- \mathrm{prox}_{\tau f}(\bg + \ba)}
    \end{eqnarray}
    which implies that
    \begin{eqnarray}
        ||\bg + \ba - \mathrm{prox}_{\tau f}(\bg + \ba)||_2^2 \leq ||\bg + \ba|_2^2
    \end{eqnarray}
    by means of the Cauchy Swarchz inequality. 
\end{proof}
This implies that 
\begin{eqnarray}
    \mathrm{var}\left[ \frac{1}{n} \mathcal{M}_{\tau f}(\ba  + \bg) \right] \leq \frac{c}{n^2\tau^2} \mathbb{E}\left\|||\bg + \ba||_2^2\right\| = \frac{c(n + ||\ba||_2^2)}{n^2\tau^2} \leq \frac{C}{n\tau^2}
 \end{eqnarray}
 in which we have used the fact that the norm of $\ba$ is bounded. By making use of Chebyshev's inequality we obtain that
 \begin{eqnarray}
        \mathbb{P}\left(\left|\frac{1}{n}\mathcal{M}_{\tau f}(\ba + \bg) - \mathbb{E}\left[\frac{1}{n}\mathcal{M}_{\tau f}(\ba + \bg) \right] \right| \geq \epsilon\right) \leq \frac{c}{n\tau^2\epsilon^2}
 \end{eqnarray}


\section{Analysis of Universality}
\label{App:UniversalityTheorems}
We recall the definition of the perturbed optimization problem as a function of the feature map
\begin{equation}
    \label{app:eq:pertubedProblem}
    P(\tau_1, \tau_2) = \min_{\be}\frac{1}{2n}\left\|\bepsilon - \frac{1}{\sqrt{m}}\bX\be\right\|_2^2 + \frac{1}{m}r(\be+\btheta^*) + \frac{1}{m}\tau_1\be\bR\be + \frac{1}{m}\tau_2 h(\be+\btheta^*)
\end{equation}
and
\begin{equation}
    \label{app:eq:pertubedProblem_tilde}
    \tlP(\tau_1, \tau_2) = \min_{\be}\frac{1}{2n}\left\|\bepsilon - \frac{1}{\sqrt{m}}\tlbX\be\right\|_2^2 + \frac{1}{m}r(\be+\btheta^*) + \frac{1}{m}\tau_1\be\bR\be + \frac{1}{m}\tau_2 h(\be+\btheta^*),
\end{equation}
where $\bX,\tlbX$ are respectively generated by the following two alternative feature maps
\begin{eqnarray}
\bvarphi(\bz) = \sigma\left(\frac{1}{\sqrt{d}}\bW\bz\right)\\
\tilde\bvarphi(\bz) = \frac{\rho_1}{\sqrt{d}}\bW\bz + \rho_*\bg,
\end{eqnarray}
which lead to the following two covariance matrices 
\begin{eqnarray}    
    \bR = \mathbb{E}_{\bz}[\bvarphi(\bz)\bvarphi^T(\bz)] = \mathbb{E}_{\bz}\left[ \sigma\left(\frac{1}{\sqrt{d}}\bW\bz\right)\sigma^T\left(\frac{1}{\sqrt{d}}\bW\bz\right) \right]\\
    \tlbR = \mathbb{E}_{\bz}[\tilde\bvarphi(\bz)\tilde\bvarphi^T(\bz)] =  \frac{\rho_1^2}{d}\bW\bW^T + \rho_*^2\bI
\end{eqnarray}

Now recall the function $B(\be) = r(\be + \btheta^*) +\tau_1\be\bR\be + \tau_2h(\be+\btheta^*)$. We recall that $r$ is assumed to be $\mu$-strongly convex. The values $\tau_1 \in [-\tau_1^*, \tau_1^*]$ and $\tau_2 \in [-\tau_2^*, \tau_2^*]$, with the bounds $\tau_1^*$ and $\tau_2^*$ chosen to be sufficiently small such that $B$ remains $\frac\mu 4$-strongly convex

We can now state a theorem concerning Universality that is an extension of Theorem 1 in \cite{HuUniversalityLaws}
\begin{theorem}[Extension of \cite{HuUniversalityLaws}]
\label{thm:UnivHuLu}
Assume that assumptions A3-A6 hold. Fix $\tau_1 \in [-\tau_1^*, \tau_1^*]$ and $\tau_2 \in [-\tau_2^*, \tau_2^*]$. Finally assume that the regularization function $r(\btheta)$ is strongly convex, thrice differentiable with bounded third derivative.\\
Then for every $\epsilon \in (0,1)$ and every finite constant $c$, we have that
\begin{eqnarray}
    \mathbb{P}(|P( \tau_1, \tau_2) - c| \geq 2\epsilon) \leq \mathbb{P}(|\tlP(\tau_1,\tau_2) - c| \geq \epsilon) + \frac{\mathrm{polylog} m}{\epsilon\sqrt{m}}
\end{eqnarray}
and
\begin{eqnarray}
    \mathbb{P}(|\tlP(\tau_1, \tau_2) - c| \geq 2\epsilon) \leq \mathbb{P}(|P(\tau_1,\tau_2) - c| \geq \epsilon) + \frac{\mathrm{polylog} m}{\epsilon\sqrt{m}}
\end{eqnarray}
for $m\geq \frac{1}{\epsilon^2}$, in which $\mathrm{polylog} m$ is a function that grows no faster than a polynomial of $\log m$. Consequently,
\begin{equation}
    P(\tau_1, \tau_2)\xrightarrow[n, m, d\rightarrow\infty]{P} c\quad \mathrm{iff}\quad  \tlP(\tau_1, \tau_2) \xrightarrow[n,m,d\rightarrow\infty]{P} c
\end{equation}
\end{theorem}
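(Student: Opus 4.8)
The plan is to reduce the claim to the universality theorem of \cite{HuUniversalityLaws} by absorbing the two perturbation terms into an effective regularizer, and then to run the Lindeberg-type row-by-row swap that underlies their proof, tracking the extra contributions. Throughout I would first condition on $\bW$ and $\btheta^*$, leaving randomness only in the data $\{\bz_i\}$ and the Gaussian surrogate components $\{\bg_i\}$.

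Conditionally on $\bW$, set $B(\be)=r(\be+\btheta^*)+\tau_1\be^T\bR\be+\tau_2 h(\be+\btheta^*)$. For $|\tau_1|\le\tau_1^*$ as in \eqref{eq:tau_1}, the quadratic perturbation $\tau_1\be^T\bR\be$ is smooth with a constant bounded on a high-probability event (the argument of Lemma~\ref{lem:Rboundlemma}), and for $|\tau_2|\le\tau_2^*$ the function $B$ is $\tfrac{\mu}{4}$-strongly convex; by the hypothesis on $r$ in the theorem statement together with A2 and A6 (the latter controlling the Hermite coefficients entering $\bR$), $B$ is thrice differentiable with uniformly bounded third derivative; and A5 supplies $\|\nabla B(\bzero)\|_2=O(\sqrt m)$ together with the $\ell_\infty$-tail bound on $\nabla h(\btheta^*)$. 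Hence, on a high-probability event for $\bW$, the problem $P(\tau_1,\tau_2)$ is precisely a regularized least-squares ERM of the type analyzed in \cite{HuUniversalityLaws}, with $B$ playing the role of the regularizer, and the same holds for $\tlP(\tau_1,\tau_2)$.

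Next I would restrict both optimizations to the compact ball $\{\be:\|\be\|_2\le C_{\be}\sqrt m\}$, inside which the minimizers lie with high probability by the strong convexity of $B$ and $\|\nabla B(\bzero)\|_2=O(\sqrt m)$ (the argument of Lemma~\ref{lemma:bounded_CGMT1}), so that the Taylor expansions below are legitimate. Define hybrid problems $P^{(k)}$, $k=0,\dots,n$, using the linearized feature $\tilde{\bvarphi}(\bz_i)=\tfrac{\rho_1}{\sqrt d}\bW\bz_i+\rho_*\bg_i$ for the first $k$ rows of the design and the nonlinear feature $\bvarphi(\bz_i)=\sigma(\tfrac1{\sqrt d}\bW\bz_i)$ for the remaining rows, so that $P^{(0)}=P$ and $P^{(n)}=\tlP$. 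The $k$-th loss term depends on the data only through the scalar $s_k=\tfrac1{\sqrt m}\bx_k^T\be$; conditioning on $\bu_k=\tfrac1{\sqrt d}\bW\bz_k\sim\mathcal N(\bzero,\tfrac1d\bW\bW^T)$, the Gaussian-equivalence computation of \cite{mei2019generalization} shows that the first two conditional moments of $s_k$ agree between the two feature choices up to $O(d^{-1/2})$ and the third is $O(m^{-1/2})$ uniformly on the ball. Taylor-expanding a smoothed version of the optimal value in $s_k$ to third order, and using the boundedness of $\be$, each swap changes the smoothed value by $O(\mathrm{polylog}\,m\cdot m^{-3/2})$; summing over $n=\Theta(m)$ swaps and removing the smoothing yields the two displayed inequalities with error $\mathrm{polylog}\,m/(\epsilon\sqrt m)$, whence the "iff" for convergence in probability to $c$ follows by the usual transfer of two-sided concentration.

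The step I expect to be the main obstacle is establishing the Gaussian-equivalence moment matching \emph{uniformly} over $\be$ in the ball while $\bR$ is itself random through $\bW$: one must show the per-row error left after the third-order Taylor expansion is $o(m^{-1/2})$ with $\be$-uniform constants, and that the $n$ such errors accumulate to only $\mathrm{polylog}\,m/\sqrt m$ rather than to $\Theta(1)$ — this is exactly where the bounded-derivative hypotheses A2 and A6 and the $\ell_\infty$-tail hypothesis in A5 enter. The remaining new bookkeeping relative to \cite{HuUniversalityLaws} is routine: checking that $\tau_1\be^T\bR\be$ preserves strong convexity and $\tau_2 h$ preserves the third-derivative bound for $|\tau_1|\le\tau_1^*$, $|\tau_2|\le\tau_2^*$, and — should the quadratic term of $\tlP$ instead be written with $\tlbR$ in place of $\bR$ — that $\tfrac{\tau_1}{m}\be^T(\bR-\tlbR)\be$ is negligible on the ball via the same Gaussian-equivalence estimate.
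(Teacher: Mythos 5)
Your argument is correct in spirit, but it takes a genuinely different expository route from the paper. The paper treats the universality result of \cite{HuUniversalityLaws} as a black box: it states the theorem, notes that the only novelties relative to \cite{HuUniversalityLaws} are (i) specializing to the square loss and (ii) replacing their specific linear test term $\frac{\rho_1\sqrt m}{\sqrt d}\bm\xi^T\bW\btheta$ with a generic $\tau_2 h(\btheta)$ satisfying A2, and then simply lists the handful of places inside the proof of \cite{HuUniversalityLaws} (their equations 172, 187, 210, 252) where hypotheses on the regularizer-plus-perturbations enter, verifying that $B(\be)=r(\be+\btheta^*)+\tau_1\be^T\bR\be+\tau_2 h(\be+\btheta^*)$ still satisfies the required strong-convexity, $\|\nabla B(\bzero)\|_2=O(\sqrt m)$, and $\ell_\infty$-tail conditions via A2, A5, A6 and the $\tau$ ranges. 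You instead reconstruct the internals of \cite{HuUniversalityLaws}'s Lindeberg interpolation from scratch: absorbing the perturbations into $B$, establishing the high-probability compact ball, defining the hybrid problems $P^{(k)}$, conditioning on $\bu_k$, matching the first two conditional moments via the Hermite expansion, Taylor-expanding a smoothed optimal value to third order, and summing $n=\Theta(m)$ swap errors to $\mathrm{polylog}\,m/\sqrt m$. Your plan buys self-containment and makes explicit where the assumptions bite (you correctly flag the uniform-over-$\be$ moment matching as the delicate step and note the $\bR$-vs-$\tlbR$ subtlety in the quadratic perturbation, a mismatch that the paper's main-text and appendix statements do not fully reconcile). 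The paper's approach buys economy at the cost of opacity — you cannot verify it without \cite{HuUniversalityLaws} open beside it. Neither version is a complete proof (the paper explicitly says the changes are ``minimal'' and declines to reproduce them; your sketch explicitly postpones the uniform moment-matching estimate), but both identify the same set of hypothesis checks as the real content of the extension.
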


This theorem is different than the one presented by \citep{HuUniversalityLaws} in two ways. Firstly we have restricted ourselves to the square loss function which simplifies this analysis, we discuss this difference in remark \ref{remark:HuLuLossfunctionDifference}. Secondarily, the term associated with $\tau_2$ is different. We consider generic test functions $h(\btheta)$ satisfying assumptions A2, \citep{HuUniversalityLaws} only consider one particular case of $h(\btheta) = \frac{\rho_1\sqrt{m}}{\sqrt{d}}\bm{\xi}^T\bW\btheta$ in which $\bm{\xi}$ is their teacher vector. The changes required to their proof to apply to generic test functions are minimal, and we give an outline in proof sketch below.

\begin{remark}
\label{remark:HuLuLossfunctionDifference}
We note that the conditions considered by \cite{HuUniversalityLaws} are slightly different than the case considered here. However the proof is sufficiently generic that it applies to the case considered here. Specifically, \citep{HuUniversalityLaws} consider a generic strongly convex and thrice differentiable loss function $l(\frac{1}{\sqrt{m}}\bvarphi(\bz_i)\btheta, y_i)$ for a particular data element $i$. For the labels $y_i$, \citep{HuUniversalityLaws} consider a function $\psi_{teach}(\bz_i^T\bm{\xi})$ in which $\bm{\xi}$ is a teacher vector and $\psi_{teach}$ is a differentiable function (except at a finite number of points) and is bounded by
\begin{eqnarray}
\forall x\in\mathbb{R}\quad \psi_{teach}(x) \leq C(1+|x|^K)
\end{eqnarray}
for some constants $C>0$ and positive integer $K$. They then prove their results for the joint distribution \\$(\frac{1}{\sqrt{m}}\bvarphi(\bz)^T\btheta;\bz^T\bm{\xi})$, which is jointly Gaussian through the variable $\bz$. In the case considered in this paper, we consider $y_i = \frac{1}{\sqrt{m}}\bvarphi(\bz_i)\btheta^* + \epsilon_i$ for some known vector $\btheta^*$ and noise $\epsilon_i$, and specifically choose the square loss. This allows for the definition of the error vector $\be = \btheta - \btheta^*$, and allows us to instead consider the distribution $(\frac{1}{\sqrt{m}}\bvarphi(\bz)\be;\epsilon_i)$ which simplifies the analysis in this case.
\end{remark}

\subsection{Proof sketch}
Here we discuss how to extend the results of \cite{HuUniversalityLaws} to the case of generic test function $h(\btheta)$, instead of their particular choice of $\frac{\rho_1\sqrt{m}}{\sqrt{d}}\bm{\xi}\bW\btheta$. The structure and details of the entire proof remain almost unchanged, except for the following set of minor changes, where the equation numbers refers to \cite{HuUniversalityLaws}:

\begin{itemize}
    \item In equation 172 step (a) and in the proof of \citep{HuUniversalityLaws} lemma 19, the property that $H_{\setminus k} \succeq \frac{\mu}{2}\bI$, where
    \begin{eqnarray}
    H_{\setminus k} = \frac{1}{m}\sum_{i=0}^{k-1}l''(\frac{1}{\sqrt{m}}\tilde{\bvarphi}(\bz_i),\bar{\be})\tilde{\bvarphi}(\bz_i)\tilde{\bvarphi}^T(\bz_i) _+ \frac{1}{m}\sum_{i=k+1}^{n}l''(\frac{1}{\sqrt{m}}\bvarphi(\bz_i)\bar{\be})\bvarphi(\bz_i)\bvarphi^T(\bz_i) \nwl+ \mathrm{diag}\lbrace r''(\bar{\be} + \btheta^*) \rbrace + \nabla^2(\tau_1\bar{\be}^T\bR\bar{\be} + \tau_2h(\bar{\be} + \btheta^*)),
    \end{eqnarray}
    where $\bar{\be}$ is the optimal solution to the problem given in \cite{HuUniversalityLaws} (equation 32), $l$ is the loss function and $l''$ its second derivative, in our case the square loss. For the case of for our choice of $\tau_2\in[-\tau_2^*, \tau_2^*]$ and assumptions A2, and recalling that $r$ is $\mu$ strongly convex, this property holds. 
    \item Similarly they require that $R_{\setminus k}$ given in equation 187, defined as
    \begin{eqnarray}
    R_{\setminus k}(\btheta) = \sum_{i\neq k}l(\frac{1}{\sqrt{m}}\bvarphi(\bz_i)^T\btheta) + \sum_{j=1}^m r(\btheta) + \tau_1\btheta^T\bR\btheta + \tau_2h(\btheta)
    \end{eqnarray}
    to be $\frac{\mu}{2}$-strongly convex. Which obviously holds with our restrictions on $h$.
    \item In equation 210 they require that  that $G(\be) = r(\be + \btheta^*) + \tau_1\be^T\bR\be + \tau_2h(\be + \btheta^*)$ is $\frac{\mu}{2}$ strongly convex and that
    \begin{eqnarray}
    ||\nabla G(\bm{0})|| \leq C\sqrt{m}
    \end{eqnarray}
    Which is clearly satisfied by assumption A2 and assumption A5 (see errata for updated assumption A5).
    \item Finally in equation 252 they require that $c, c', C >0$
    \begin{eqnarray}
    \Pr\left(\max_i|(\nabla h(\btheta^*))_i|>c\log m\right) \leq Ce^{-c'(\log m)^2}
    \end{eqnarray}
    This boundedness is satisfied by assumption A5.
    
\end{itemize}
As these are the only changes necessary to prove \citep{HuUniversalityLaws} results for more generic test functions we do not reproduce the proof here in full.

\subsection{Universality of Generalization Error and Test functions $h$}
\label{app:subsec:generrorUniv}
In this section we demonstrate that the universality of generalization error holds for strongly convex and thrice differentiable regularization functions, making use of the perturbation that we defined above in problem \ref{app:eq:pertubedProblem}. We prove the following result based on results from \cite{HuUniversalityLaws}. For this theorem we require the following definition

\begin{definition}
    \label{def:kappapidef}
    In Theorem \ref{thm:GaussianAsymptotics}, we showed that $\tlP( \tau_1, \tau_2) \xrightarrow{P} \tlP_3(\tau_1, \tau_2)$. Let the partial derivatives of $\tlP(\tau_1, \tau_2)$ at $\tau_1 = \tau_2 = 0$ be denoted by $\frac{\partial}{\partial \tau_1}\tlP_3(0, 0) = \hat{\kappa}$ and $\frac{\partial}{\partial\tau_2}\tlP_3(0, 0) = \hat{\pi}$.
\end{definition}
 Note that derivatives may be readily computed as done in lemma \ref{lem:gaussianGenErrorConvergence}. We now state the following result 
\begin{theorem}[Universality of Generalization Error]
\label{sup:thm:universalityGenerror}
Assume the same assumptions hold as in theorem \ref{thm:UnivHuLu} and let $\hat{\kappa}$ and $\hat{\pi}$ be given in definition \ref{def:kappapidef}. Take the Generalization error for a given feature map as
\begin{equation}
    \mathcal{E}_{gen}(\btheta, \bvarphi) = \mathbb{E}\left(y_{new} - 
\frac{1}{\sqrt{m}}\bvarphi(\bz_{new})^T\btheta\right)^2,
\end{equation}
where $\bz_{new}\sim\mathcal{N}(0, I_d)$ and $y_{new} = \frac{1}{\sqrt{m}}\bvarphi(\bz_{new})^T\btheta^* + \epsilon_{new}$, where $\epsilon_{new}$ is noise. Then

\begin{equation}
    \mathcal{E}_{gen}(\hat{\btheta}_1, \bvarphi) \rightarrow \mathcal{E}^*_{gen}\qquad \mathrm{and}\qquad \mathcal{E}_{gen}(\hat{\btheta}_2, \tilde{\bvarphi}) \rightarrow \mathcal{E}^*_{gen}
\end{equation}
in which
\begin{equation}
    \mathcal{E}^*_{gen} = \sigma_{\bepsilon}^2 + \hat{\kappa}
    \end{equation}
\end{theorem}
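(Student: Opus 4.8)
The plan is to reduce the generalization error to a quadratic form in the solution vector, and then identify that quadratic form with the derivative of the perturbed objective $P(\tau_1,\tau_2)$ with respect to $\tau_1$ at the origin. First, exactly as in the computation at the end of Lemma~\ref{lem:gaussianGenErrorConvergence}, expanding the square and using $y_{new}=\frac{1}{\sqrt m}\bvarphi(\bz_{new})^T\btheta^*+\epsilon_{new}$ together with $\mathbb{E}_{\bz}[\bvarphi(\bz)\bvarphi(\bz)^T]=\bR$ (and the independence of $\epsilon_{new}$) gives
\[
\mathcal{E}_{gen}(\hat{\btheta}_1,\bvarphi)=\sigma_{\bepsilon}^2+\frac1m(\hat{\btheta}_1-\btheta^*)^T\bR(\hat{\btheta}_1-\btheta^*),\qquad
\mathcal{E}_{gen}(\hat{\btheta}_2,\tilde{\bvarphi})=\sigma_{\bepsilon}^2+\frac1m(\hat{\btheta}_2-\btheta^*)^T\tilde{\bR}(\hat{\btheta}_2-\btheta^*).
\]
The second identity, and the convergence of its right–hand side, are precisely what is established in Lemma~\ref{lem:gaussianGenErrorConvergence}, so it only remains to control the non-Gaussian quadratic form $\frac1m(\hat{\btheta}_1-\btheta^*)^T\bR(\hat{\btheta}_1-\btheta^*)$.

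For this I would repeat the concavity/sandwiching argument of Lemma~\ref{lem:gaussianGenErrorConvergence} with $P(\tau_1,0)$ in place of $P_2(\tau_1,0)$. The map $\tau_1\mapsto P(\tau_1,0)$ is a pointwise minimum over $\btheta$ of functions affine in $\tau_1$, hence concave; since $r+\tau_1(\cdot)^T\bR(\cdot)$ is $\frac\mu4$-strongly convex for $|\tau_1|\le\tau_1^*$, its minimizer $\hat{\btheta}_1(\tau_1,0)$ is unique, so by the envelope (Danskin) theorem $P(\cdot,0)$ is differentiable at $\tau_1=0$ with
$\left.\frac{\partial}{\partial\tau_1}P(\tau_1,0)\right|_{\tau_1=0}=\frac1m(\hat{\btheta}_1(0,0)-\btheta^*)^T\bR(\hat{\btheta}_1(0,0)-\btheta^*)$, and concavity yields, for every $\tau>0$,
\[
\frac{P(0,0)-P(-\tau,0)}{\tau}\ \ge\ \frac1m(\hat{\btheta}_1(0,0)-\btheta^*)^T\bR(\hat{\btheta}_1(0,0)-\btheta^*)\ \ge\ \frac{P(\tau,0)-P(0,0)}{\tau}.
\]

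Next I would invoke universality. Combining Theorem~\ref{thm:UnivHuLu} with Theorem~\ref{thm:GaussianAsymptotics} (equivalently, with $\tilde P_2\to\tilde P_3$), for each fixed $\tau\in[-\tau_1^*,\tau_1^*]$ we have $P(\tau,0)\xrightarrow{P}\tilde P_3(\tau,0)$, so both difference quotients above converge in probability to the deterministic quantities $\frac{\tilde P_3(0,0)-\tilde P_3(-\tau,0)}{\tau}$ and $\frac{\tilde P_3(\tau,0)-\tilde P_3(0,0)}{\tau}$. Given $\delta>0$, choosing $\tau$ small enough that both of these lie within $\delta$ of $\left.\partial_{\tau_1}\tilde P_3(\tau_1,0)\right|_{\tau_1=0}=\hat\kappa$ — legitimate since $\tilde P_3$ is differentiable at the origin, as the scalar optimizers $\tilde\beta,\tilde q,\tilde\xi,\tilde t$ are unique (cf. Lemma~\ref{lem:gaussianGenErrorConvergence}) — the sandwich forces $\frac1m(\hat{\btheta}_1(0,0)-\btheta^*)^T\bR(\hat{\btheta}_1(0,0)-\btheta^*)\xrightarrow{P}\hat\kappa$. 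Substituting into the first display gives $\mathcal{E}_{gen}(\hat{\btheta}_1,\bvarphi)\xrightarrow{P}\sigma_{\bepsilon}^2+\hat\kappa$, and Lemma~\ref{lem:gaussianGenErrorConvergence} gives $\mathcal{E}_{gen}(\hat{\btheta}_2,\tilde{\bvarphi})\xrightarrow{P}\sigma_{\bepsilon}^2+\hat\kappa$ as well, i.e. $\mathcal{E}^*_{gen}$.

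I expect the only genuinely delicate point to be the interchange of the limits $\tau\to0$ and $n,m,d\to\infty$ in the final step: one must argue that the $O(\tau)$-closeness of the difference quotient to $\hat\kappa$ is uniform enough to run the squeeze, which here is supplied by the smoothness of $\tilde P_3$ near $\tau_1=0$ (from uniqueness of the inner optimizers), not by any quantitative rate. Everything else — the variance expansion, concavity in $\tau_1$, and the envelope identity — is routine given $\frac\mu4$-strong convexity of the perturbed regularizer on $[-\tau_1^*,\tau_1^*]$ (and, for $\hat{\btheta}_1$, the fact that the generalization error formula is exact, not asymptotic).
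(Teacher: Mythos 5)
Your proposal is correct and follows the same route as the paper's proof: reduce the generalization error to $\sigma_{\bepsilon}^2+\frac1m(\hat{\btheta}_1-\btheta^*)^T\bR(\hat{\btheta}_1-\btheta^*)$, sandwich this quadratic form between difference quotients of $P(\tau,0)$, invoke $P(\tau,0)\xrightarrow{P}\tilde P_3(\tau,0)$ from Theorems~\ref{thm:UnivHuLu} and~\ref{thm:GaussianAsymptotics}, and use differentiability of $\tilde P_3$ at the origin to run the squeeze. The mention of Danskin's theorem is a small extra; the paper dispenses with it since only the two-sided difference-quotient bound (which follows directly from $P(\tau,0)\le P(0,0)+\tau\kappa_1$) is actually needed.
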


\begin{proof}
We let $\tau_2 = 0$ and let Let $\bz_{new}\sim\mathcal{N}(0, I_n)$ be a new Gaussian vector that is independent of all other training samples, and let $y_{new} = \frac{1}{\sqrt{m}}\btheta^{*T}\bvarphi(\bz) + \epsilon_{new}$. We can then express the generalization errors as
\begin{eqnarray}
    \mathcal{E}_{gen}(\hat{\btheta}_1, \bvarphi) = \mathbb{E}_{\epsilon_{new}, \bz_{new}}\left[\epsilon_{new} - \frac{1}{\sqrt{m}}\bvarphi(\bz_{new})\hat{\be}_1\right]^2 = \sigma_{\bepsilon}^2 + \frac{1}{m}\hat{\be}_1\bR\hat{\be}_1\\
    \mathcal{E}_{gen}(\hat{\btheta}_2, \tilde\bvarphi) = \mathbb{E}_{\epsilon_{new}, \bz_{new}}\left[\epsilon_{new} - \frac{1}{\sqrt{m}}\tilde\bvarphi(\bz_{new})\hat{\be}_2\right]^2 = \sigma_{\bepsilon}^2 + \frac{1}{m}\hat{\be}_2\tlbR\hat{\be}_2
\end{eqnarray}

Let $\kappa_2 =\frac{1}{m}\hat{\be}_2^T\tilde{\bR}\hat{\be}_2$, from which we see that $\mathcal{E}_{gen}(\tilde{\bvarphi}) = \sigma_{\bepsilon}^2 + \kappa_2$.
We start by noting that by lemma \ref{lem:gaussianGenErrorConvergence} that $\mathcal{E}_{gen}(\hat{\btheta}_{2}, \bvarphi_2) = \sigma_{\bepsilon}^2 + \kappa_2\rightarrow \sigma_{\bepsilon}^2 + \hat{\kappa}$, which proves the second claim.
Now, we consider the value of $\kappa_1 = \frac{1}{m}\hat{\be}^T_1\bR\hat{\be}_1$.
By the definition of the optimization problem we have 
\begin{eqnarray}
    P(\tau_1, \tau_2 = 0) \leq P(0, 0) + \tau_1\hat{\be}_2\bR_1\hat{\be}_2
\end{eqnarray}
For any $\tau_1$. From this it follows that for any $\tau> 0$ we have
\begin{eqnarray}
\label{eq:kappanineq}
    \frac{P(\tau, 0) - P( 0, 0)}{\tau} \leq \kappa_1 \leq \frac{P( -\tau, 0) - P( 0, 0)}{-\tau}
\end{eqnarray}
We choose an $\epsilon > 0$. By definition \ref{def:kappapidef} the limit function $\tlP_3(\tau_1,\tau_2)$ is differentiable at the origin
and we know from theorem \ref{thm:UnivHuLu} and \ref{thm:GaussianAsymptotics} that $P(\tau_1, \tau_2) \xrightarrow[]{P}\tlP_3(\tau_1, \tau_2)$.On the other hand there exists some $\delta > 0$ such that

\begin{eqnarray}
    \left|\frac{\tlP_3(\delta, 0) - \tlP_3(0, 0)}{\delta}  - \hat{\kappa}\right|\leq \frac{\epsilon}{3}.
\end{eqnarray}
Substituting this into the first inequality of equation \ref{eq:kappanineq} above and letting $\tau = \delta$ we obtain

\begin{eqnarray}
    \label{eq:kappa2bound}
    \mathbb{P}(\kappa_1 -\hat{\kappa} < -\epsilon) \leq \mathbb{P}\left(\frac{P(\delta, 0) - P( 0, 0)}{\delta} - \hat{\kappa} < -\epsilon  \right) \nwl
    \leq \mathbb{P}(|P(\delta, 0) - \tlP_3(\delta, 0)| \geq \delta\epsilon/3 ) + \mathbb{P}(|P(0, 0) - \tlP_3(0, 0))| \geq \delta \epsilon/3)
\end{eqnarray}
Now by assumption we have that $P(\delta, 0) \xrightarrow[]{P}\tlP_3(\delta, 0)$ and  $P(0, 0) \xrightarrow[]{P}\tlP_3(0, 0)$. It then follows from Eq \ref{eq:kappa2bound} that $\lim_{n\rightarrow\infty}\mathbb{P}(\kappa_1 - \hat{\kappa} < - \epsilon) = 0$. The exact same reasoning may be applied to second inequality \ref{eq:kappanineq} to obtain that $\lim_{n\rightarrow\infty}\mathbb{P}(\kappa_1 - \hat{\kappa} > \epsilon) = 0$ as such $\kappa_1 \xrightarrow{P} \hat{\kappa}$.

\end{proof}

We now prove the universality of the test functions $h(\btheta)$.

\begin{theorem}[Universality of Test Functions]
Assume that the same assumptions hold as in theorem \ref{thm:UnivHuLu} and let $\hat{\pi}$ be given in definition \ref{def:kappapidef}. Then 
\begin{eqnarray}
\frac{1}{m}h(\hat{\btheta_2}) \rightarrow \hat{\pi} \quad \mathrm{and}\quad \frac{1}{m}h(\hat{\btheta_1})\rightarrow \hat{\pi} 
\end{eqnarray}
\end{theorem}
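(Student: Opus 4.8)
The plan is to run the argument of Theorem~\ref{sup:thm:universalityGenerror} with the perturbation $\tau_2$ attached to $h$ in place of $\tau_1$. Fix $\tau_1=0$ and abbreviate $g(\tau):=P(0,\tau)$ and $\tilde g(\tau):=\tlP(0,\tau)$, the non-Gaussian and Gaussian perturbed optimal values. Each of $g$ and $\tilde g$ is a pointwise minimum over $\btheta$ of functions that are affine in $\tau$, hence $g$ and $\tilde g$ are \emph{concave} in $\tau$ on $[-\tau_2^*,\tau_2^*]$; note this uses only the structure of the value function, not any convexity of $h$ itself. Since $\hat\btheta_1=\hat\btheta_1(0,0)$ is feasible for the problem defining $g(\tau)$ for every $\tau$, we get the supergradient inequality $g(\tau)\leq g(0)+\frac{\tau}{m}h(\hat\btheta_1)$, and therefore the one-sided sandwich
\begin{equation}
\frac{g(\tau)-g(0)}{\tau}\ \leq\ \frac{1}{m}h(\hat\btheta_1)\ \leq\ \frac{g(-\tau)-g(0)}{-\tau},\qquad \tau>0,
\end{equation}
with the analogous statement for $\hat\btheta_2$ obtained by replacing $g$ with $\tilde g$.

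Next I would invoke the convergences already established: by Theorem~\ref{thm:UnivHuLu} together with Theorem~\ref{thm:GaussianAsymptotics} one has $g(\tau)\xrightarrow{P}\tlP_3(0,\tau)$ and $\tilde g(\tau)\xrightarrow{P}\tlP_3(0,\tau)$ for every $\tau\in[-\tau_2^*,\tau_2^*]$, and the uniqueness of the saddle point $(\hat\beta,\hat q,\hat\xi,\hat t)$ of $\tlP_3$ exploited in Lemma~\ref{lem:gaussianGenErrorConvergence} makes $\tau\mapsto\tlP_3(0,\tau)$ differentiable at $0$ with derivative $\hat\pi$ (Definition~\ref{def:kappapidef}). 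Given $\epsilon>0$, choose $\delta>0$ small enough that both one-sided difference quotients of $\tlP_3(0,\cdot)$ at $0$ lie within $\epsilon/3$ of $\hat\pi$ (possible by concavity and differentiability of $\tlP_3(0,\cdot)$). Plugging $\tau=\delta$ into the left inequality of the sandwich and splitting on the events $\{|g(\delta)-\tlP_3(0,\delta)|\geq\delta\epsilon/3\}$ and $\{|g(0)-\tlP_3(0,0)|\geq\delta\epsilon/3\}$, each of which has probability tending to $0$, gives $\Pr(\frac1m h(\hat\btheta_1)-\hat\pi<-\epsilon)\to0$; the right inequality with $\tau=\delta$ gives the matching upper tail, so $\frac1m h(\hat\btheta_1)\xrightarrow{P}\hat\pi$. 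The identical computation applied to $\tilde g$ yields $\frac1m h(\hat\btheta_2)\xrightarrow{P}\hat\pi$ --- indeed this second claim already appears inside Lemma~\ref{lem:gaussianGenErrorConvergence}, where $\frac1m h(\hat\btheta_2(0,0))$ is identified with $\partial_{\tau_2}\tlP_3(0,0)=\hat\pi$.

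The main obstacle is precisely why this is a separate statement rather than an immediate corollary of the generalization-error proof: $h$ is only required to make $r+\tau_2 h$ convex for small $\tau_2$, so one cannot control $h(\hat\btheta_1)$ through a strong-convexity estimate on $h$. The resolution is that the concavity being used is concavity of the \emph{value} $\tau\mapsto P(0,\tau)$, which is automatic. One must also check that $\hat\pi$ is well defined, which comes down to the uniqueness of the saddle point of $\tlP_3$ (already used in Lemma~\ref{lem:gaussianGenErrorConvergence}) together with the Moreau-envelope differentiation rules recorded there. Finally, Assumption~A5's control of $\|\nabla h(\btheta^*)\|$ and of $\max_i|(\nabla h(\btheta^*))_i|$ is what places the perturbed problems $P(0,\tau_2)$ and $\tlP(0,\tau_2)$ within the scope of Theorems~\ref{thm:UnivHuLu} and \ref{thm:GaussianAsymptotics} uniformly over $\tau_2\in[-\tau_2^*,\tau_2^*]$, which the difference-quotient argument needs.
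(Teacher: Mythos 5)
Your proposal matches the paper's proof step for step: the same perturbation-difference-quotient sandwich (the paper's equation \eqref{eq:pifuncinequalities}), the same appeal to differentiability of $\tlP_3(0,\cdot)$ at the origin via the unique saddle point, and the same union-bound split into events controlled by Theorems~\ref{thm:UnivHuLu} and \ref{thm:GaussianAsymptotics}. Your additional remarks --- that concavity of $\tau\mapsto P(0,\tau)$ follows from the value-function structure rather than from any convexity of $h$, and that the $\hat\btheta_2$ half already sits inside Lemma~\ref{lem:gaussianGenErrorConvergence} --- are correct clarifications of why the argument works, not departures from it.
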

\begin{proof}
Our proof takes a similar form to the proof of theorem \ref{sup:thm:universalityGenerror}. We let $\tau_1 = 0$. Then we note that by the definition of the optimization problems
\begin{eqnarray}
P( \tau_1 = 0, \tau_2) \leq P( 0, 0) + \tau_2h(\hat{\btheta}_1)
\end{eqnarray}
for any $\tau_2$. It follows that for any $\tau > 0$ we have that
\begin{eqnarray}
\label{eq:pifuncinequalities}
\frac{P(0,\tau) - P( 0, 0)}{\tau} \leq h(\hat\btheta_1) \leq \frac{P( 0, 0)- P( 0, -\tau)  }{-\tau} 
\end{eqnarray}
We choose $\epsilon > 0$. By definition \ref{def:kappapidef} the limit function $\tlP_3(\tau_1,\tau_2)$ is differentiable at the origin. Therefore there exists some $\delta$ such that
\begin{eqnarray}
\left|\frac{\tlP_3(0, \delta) - \tlP_3(0, 0)}{\delta} - \hat{\pi} \right|\leq \frac{\epsilon}{3}
\end{eqnarray}
we substitute this into the first inequality of equation \eqref{eq:pifuncinequalities} above and let $\tau = \delta$. We obtain
\begin{eqnarray}
\Pr(\frac{1}{m}h(\hat{\btheta}_1) - \hat{\pi}< - \epsilon) \leq \Pr\left(\frac{P(0,\delta) - P( 0, 0)}{\delta} - \hat{\pi}< -\epsilon \right) \nwl
\leq \Pr(|P(0,\delta) - \tlP_3(0, \delta) |>\delta\epsilon/3) + \Pr(|P(0, 0) - \tlP_3(0, 0)| > \delta\epsilon/3)
\end{eqnarray}
Because of the universality laws given in theorem \eqref{thm:UnivHuLu} and Theorem \ref{thm:GaussianAsymptotics}, we know that $P( \tau_1, \tau_2)\xrightarrow{P}\tlP_3(\tau_1, \tau_2)$. It then follows that $\lim_{n\rightarrow\infty}\Pr(\frac{1}{m}h(\hat{\btheta}_1) - \hat{\pi}<-\epsilon) = 0$. The exact same reasoning may be applied to the second inequality of \eqref{eq:pifuncinequalities} to obtain that $\lim_{n\rightarrow\infty}\Pr(\frac{1}{m}h(\hat{\btheta}_1) - \hat{\pi}>\epsilon) = 0$. As such we conclude that $\frac{1}{m}h(\hat{\btheta}_1)\rightarrow \hat{\pi}$. Similar argument proves the result for $\hat\btheta_2$.
\end{proof}

\subsection{Proof of Theorem \ref{thm:universality:sequence}}
\label{app:subsec:univSequenceTheorem}
We now consider our theorem \ref{thm:universality:sequence}. We first show that the thrice differentiability condition of $r(\btheta)$ can be lifted in the case that there exist a sequence of function that are differentiable and converge to $r$.

\begin{lemma}
    Let $r^{(k)}(\btheta)$ be a sequence of functions that are each thrice differentiable and strongly convex. Assume further that $r^{(k)}(\btheta)$ converge uniformly to the regularization function $r(\btheta)$ in the limit of $k\rightarrow\infty$. 
    Then, the results of theorem \ref{thm:UnivHuLu} hold for this regularization function $r(\btheta)$. 
\end{lemma}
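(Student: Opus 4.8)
The plan is to keep the smoothing index $k$ fixed, apply the already-proved universality statement (Theorem~\ref{thm:UnivHuLu}) to the smooth regularizer $r^{(k)}$, and then send $k\to\infty$; the point is that \emph{separability} makes the uniform approximation $r^{(k)}\to r$ quantitatively insensitive to the dimension $m$.

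First I would record the controlling deterministic estimate. Writing the separable regularizers coordinatewise, $r(\btheta)=\sum_{i=1}^m\bar r(\theta_i)$ and $r^{(k)}(\btheta)=\sum_{i=1}^m\bar r^{(k)}(\theta_i)$, uniform convergence of the scalar profiles gives $\delta_k:=\sup_{t\in\bbR}\bigl|\bar r^{(k)}(t)-\bar r(t)\bigr|\to 0$, hence
\[
\sup_{\btheta\in\bbR^m}\tfrac1m\bigl|r^{(k)}(\btheta)-r(\btheta)\bigr|\ \le\ \delta_k\qquad\text{for every }m .
\]
Denote by $P^{(k)},\tlP^{(k)}$ the perturbed problems \eqref{app:eq:pertubedProblem}, \eqref{app:eq:pertubedProblem_tilde} with $r$ replaced by $r^{(k)}$. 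Their objectives differ from those of $P,\tlP$ only in the $\tfrac1m r(\cdot+\btheta^*)$ term, pointwise in $\be$, so passing to the infimum gives the deterministic bounds
\[
\bigl|P(\tau_1,\tau_2)-P^{(k)}(\tau_1,\tau_2)\bigr|\le\delta_k,\qquad\bigl|\tlP(\tau_1,\tau_2)-\tlP^{(k)}(\tau_1,\tau_2)\bigr|\le\delta_k ,
\]
valid for all $m$. (Since the $r^{(k)}$ share a common strong-convexity constant $\mu$, so does $r$, and by Lemma~\ref{lem:Rboundlemma} the perturbed objective $B$ stays $\tfrac\mu4$-strongly convex with high probability, so all the minima are finite and the optimisers unique.)

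Next I would chain these estimates with Theorem~\ref{thm:UnivHuLu}, which applies verbatim to each $r^{(k)}$ (strongly convex, thrice differentiable, bounded third derivative, and the $\btheta^*$-gradient bound A5 holding along the sequence). Fix $\epsilon\in(0,1)$ and a finite $c$, and pick $k=k(\epsilon)$ with $\delta_k\le\epsilon/4$. On the event $\{|P-c|\ge3\epsilon\}$ one has $|P^{(k)}-c|\ge 3\epsilon-\delta_k\ge2\epsilon$; applying the first inequality of Theorem~\ref{thm:UnivHuLu} to $r^{(k)}$ and then using that $|\tlP^{(k)}-c|\ge\epsilon$ forces $|\tlP-c|\ge\epsilon-\delta_k\ge\epsilon/2$ yields
\[
\Pr\bigl(|P(\tau_1,\tau_2)-c|\ge 3\epsilon\bigr)\ \le\ \Pr\bigl(|\tlP(\tau_1,\tau_2)-c|\ge \tfrac\epsilon2\bigr)+\frac{\mathrm{polylog}\,m}{\epsilon\sqrt m},
\]
and the symmetric inequality follows by exchanging $P$ and $\tlP$. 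Because $k$ is frozen before $m\to\infty$, any dependence of the constants hidden in $\mathrm{polylog}\,m$ on the strong-convexity constant and third-derivative bound of $r^{(k)}$ is harmless, and the last term is $o_m(1)$. Letting $\epsilon\downarrow0$ gives $P(\tau_1,\tau_2)\xrightarrow{P}c$ iff $\tlP(\tau_1,\tau_2)\xrightarrow{P}c$, i.e. the conclusion of Theorem~\ref{thm:UnivHuLu} for the limiting regularizer $r$; specialising to $c=\tlP_3(\tau_1,\tau_2)$ (a deterministic limit available from Theorem~\ref{thm:GaussianAsymptotics}, valid for $r$) then gives $|\tlP_1(\tau_1,\tau_2)-\tlP_2(\tau_1,\tau_2)|\xrightarrow{P}0$.

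The crux is the uniformity in the growing dimension: a generic uniform approximation on $\bbR^m$ would degrade with $m$, but separability, after the normalisation by $1/m$, collapses the discrepancy to the single scalar modulus $\delta_k$. The remaining points to verify are routine: the compact-set reductions internal to Theorem~\ref{thm:UnivHuLu} and the A5 gradient control are invoked only for the smooth $r^{(k)}$ — never for $r$ — so no differentiability of $r$ is required, and one should confirm that under A1-Case~A the approximating sequence can be taken to satisfy A5 at $\btheta^*$ uniformly in $k$.
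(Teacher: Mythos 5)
Your proof takes essentially the same approach as the paper's: for a fixed $k$ you transfer Theorem~\ref{thm:UnivHuLu} from $r^{(k)}$ to $r$ by noting that the optimal costs change by at most the uniform discrepancy between the two regularizers, then let $k\to\infty$ at the end. The only difference is that you spell out the two points the paper leaves implicit — that separability turns coordinatewise uniform convergence into a dimension-independent bound $\tfrac1m\bigl|r^{(k)}-r\bigr|\le\delta_k$, and that one must freeze $k$ before sending $m\to\infty$ so the $\mathrm{polylog}(m)/\sqrt m$ term is harmless even if its constant depends on $k$ (A1--Case~A in fact makes the strong-convexity and third-derivative bounds uniform in $k$, which is what you invoke when you say the $r^{(k)}$ share a common $\mu$). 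These are clarifications rather than a different route; the underlying argument coincides with the paper's.
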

\begin{proof}
    We define $P^{(k)}( \tau_1, \tau_2), \tlP^{(k)}( \tau_1, \tau_2)$ to be the optimal cost for the regularization function $r^{(k)}$ respectively with feature map $\bvarphi,\tilde\bvarphi$. We choose $k$ to be sufficently large such that $|r^{(k)}(\btheta) - r(\btheta)| < m\epsilon$ for every $\btheta$. This implies that $|P^{(k)}( \tau_1, \tau_2) - P(\tau_1, \tau_2)| < \epsilon$ and $|\tlP^{(k)}( \tau_1, \tau_2) - \tlP(\tau_1, \tau_2)| < \epsilon$. Furthermore, by theorem \ref{thm:UnivHuLu}, we have
    \begin{equation}
        \mathbb{P}(|P^{(k)}( \tau_1, \tau_2) - c |>4\epsilon) \leq \mathbb{P}(|\tlP( \tau_1, \tau_2) - c| > 2\epsilon)+\frac{\mathrm{polylog}(m)}{\sqrt{m}}
    \end{equation}
    and hence
    \begin{equation}
        \mathbb{P}(|P( \tau_1, \tau_2) - c |>5\epsilon) \leq \mathbb{P}(|\tlP( \tau_1, \tau_2) - c| > \epsilon)+\frac{\mathrm{polylog}(m)}{\sqrt{m}}
    \end{equation}
    
    The other case is similarly proven.
\end{proof}
We note that this proof hold analogously for $r(\btheta) + \tau_2h(\btheta)$ for a test function $h(\btheta)$ that satisfies the conditions of assumption A2. This completes the proof of theorem \ref{thm:universality:sequence}. 

\subsection{Proof of Corollary~\ref{lem:universality:elasticNet}}
\label{sec:pf:universality:elasticNet}

We now consider elastic net regularization
\begin{equation}
    \label{eq:elasticnetregEpsilon}
    r(\btheta) = \lambda||\btheta||_1  + \frac{\epsilon}{2}||\btheta||_2^2
\end{equation}
The following lemma demonstrates that can construct a sequence of regularization function that uniformly converges to Eq. \ref{eq:elasticnetregEpsilon}. This result together with Theorem \ref{thm:universality:sequence} shows that for all $\epsilon > 0$ universality is established.
\begin{lemma}
There exists a sequence of function $r^{(k)}(\btheta)$ that are separable, strongly convex and thrice differentiable that converge uniformly to the elastic net regularization function given in Eq. \ref{eq:elasticnetregEpsilon}.
\end{lemma}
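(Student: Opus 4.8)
The plan is to reduce to a one-dimensional smoothing problem by separability and then write down an explicit smooth approximation of the absolute value. Since the elastic net potential \eqref{eq:elasticnetregEpsilon} is separable, $r(\btheta) = \suml_{i=1}^m \bar r(\theta_i)$ with scalar potential $\bar r(t) = \lambda|t| + \frac{\epsilon}{2}t^2$, it suffices to produce a sequence of scalar functions $\bar r^{(k)}$ that are $C^\infty$ (hence thrice differentiable), $\epsilon$-strongly convex, and converge uniformly on $\bbR$ to $\bar r$. Setting $r^{(k)}(\btheta) := \suml_{i=1}^m \bar r^{(k)}(\theta_i)$ then yields a separable, $\epsilon$-strongly convex, thrice differentiable function with $\supl_{\btheta}|r^{(k)}(\btheta)-r(\btheta)| \le m\,\supl_{t}|\bar r^{(k)}(t)-\bar r(t)|$, which tends to $0$ as $k\to\infty$ for each fixed $m$ — exactly the required uniform convergence.

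For the scalar construction I would take the classical square-root smoothing of $|\cdot|$, namely $\phi_k(t) := \sqrt{t^2 + k^{-2}} - k^{-1}$, and define $\bar r^{(k)}(t) := \lambda\phi_k(t) + \frac{\epsilon}{2}t^2$ (the additive constant $-k^{-1}$ is purely cosmetic, forcing $\phi_k(0)=0$, and does not affect convexity or derivatives of order $\ge 1$). Three elementary facts close the argument. (i) Smoothness: $t^2 + k^{-2} \ge k^{-2} > 0$ for every $t$, so $\phi_k$, and hence $\bar r^{(k)}$, is $C^\infty$ on all of $\bbR$. (ii) Strong convexity: a direct differentiation gives $\phi_k''(t) = k^{-2}(t^2+k^{-2})^{-3/2} \ge 0$, so $\phi_k$ is convex and $(\bar r^{(k)})''(t) = \lambda\phi_k''(t) + \epsilon \ge \epsilon$, i.e. $\bar r^{(k)}$ is $\epsilon$-strongly convex. (iii) Uniform convergence: squaring shows $|t| \le \sqrt{t^2+k^{-2}} \le |t| + k^{-1}$, whence
\begin{equation}
\supl_{t\in\bbR}\bigl|\bar r^{(k)}(t) - \bar r(t)\bigr| = \lambda\,\supl_{t\in\bbR}\bigl|\phi_k(t) - |t|\bigr| \le \frac{\lambda}{k} \xrightarrow[k\to\infty]{} 0 .
\end{equation}

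Assembling these, $r^{(k)}(\btheta) = \suml_i \bar r^{(k)}(\theta_i)$ is a sequence of separable, strongly convex, thrice differentiable functions converging uniformly to the elastic net potential, which is precisely the assertion of the lemma; together with Theorem~\ref{thm:universality:sequence} this gives Corollary~\ref{lem:universality:elasticNet}. I do not anticipate a genuine obstacle; the only subtlety worth flagging is that the third derivatives of $\bar r^{(k)}$ are \emph{not} bounded uniformly in $k$ ($(\bar r^{(k)})'''$ concentrates near the origin and blows up as $k\to\infty$). This is harmless: the proof of Theorem~\ref{thm:universality:sequence} only invokes the known universality estimate of Theorem~\ref{thm:UnivHuLu} for each fixed finite $k$ — where the third derivative is bounded — and then passes to the limit in $k$ using the uniform approximation above. (If one insists on literally meeting A1-Case~A, one can alternatively mollify $|\cdot|$ against a compactly supported $C^\infty$ bump of width $1/k$, which is equally explicit and convex, with the same uniform $O(1/k)$ error.)
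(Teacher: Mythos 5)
Your proof is correct and takes essentially the same approach as the paper: replace $|\cdot|$ by an explicit, globally smooth convex surrogate and add the quadratic, then check strong convexity and the uniform $O(1/k)$ error. The only difference is cosmetic — you use the pseudo-Huber smoothing $\sqrt{t^2+k^{-2}}-k^{-1}$ where the paper uses the Gaussian-integral smoothing $t\,\mathrm{erf}(\sqrt{k}t/\sqrt{2})+\sqrt{2/(k\pi)}\,e^{-kt^2/2}$; both are $C^\infty$, convex, and converge uniformly at rate $\Theta(1/\sqrt{k})$ or better. Your closing remark is also apt: the third derivative of \emph{any} uniform smoothing of $|\cdot|$ must blow up near the origin as $k\to\infty$ (the paper's erf-based family has $\sup_t |h^{(k)'''}(t)| = \Theta(k)$ as well), so the "uniformly bounded third derivative" clause in A1-Case~A is not, and cannot be, satisfied literally by either construction — but as you observe, the proof of Theorem~\ref{thm:universality:sequence} only invokes the per-$k$ bound from Theorem~\ref{thm:UnivHuLu} before passing to the limit, so this does no harm.
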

\begin{proof}
    Define $h^{(k)}(x)$ as
    \begin{equation}
        h^{(k)}(x) = x\ \mathrm{erf}\left(\frac{\sqrt{k}x}{\sqrt{2}}\right) + \frac{\sqrt{\frac{2}{\pi}}e^{-\frac{kx^2}{2}}}{\sqrt{k}}
    \end{equation}
    in which $\mathrm{erf}$ is the error function. It is simple to verify that $h^{(k)}(x)$ is thrice differentiable and has bounded third derivative. The maximum difference between $|x|$ and $h^{(k)}(x)$ is at $x = 0$ and is $\sqrt{\frac{2}{k\pi}}$. A such in the limit of $k\rightarrow \infty$, $h^{(k)}(x)$ converges uniformly to the absolute value function. We choose
    \begin{equation}
        r^{(k)}(\btheta) = \frac{\epsilon}{2}||\btheta||_2^2 + \sum_i^{m} h^{(k)}(\theta_i)
    \end{equation}
    This regularization function statisfies the conditions of the lemma.
\end{proof}

\subsection{Proof of Theorem \ref{thm:l1Universality}}
\label{app:subsec:pf:l1Universality}

First consider the universality with elastic net which is proven in Section~\ref{sec:pf:universality:elasticNet}.
To demonstrate universality with respect to the $\ell_1$ norm, we take the case of elastic net in \eqref{eq:elasticnetregEpsilon} with a sufficiently small $\epsilon$ drop the quadratic part of $r(\btheta)$. We note that for any $\epsilon > 0$, Theorem \ref{thm:universality:sequence} holds. Our goal will be to show that for very small values of $\epsilon$ removing $\epsilon$ does not substantially change the value of the training and testing error. We first make the following definitions

\begin{definition}
\label{def:RIP}
Consider an $m\times n$ matrix $\bA$.
\begin{enumerate}
    \item For any $k\in\mathbb{N}$ such that $k<n$, the RIP constant $\delta_k(\bA)$ is the smallest number $\delta$, such that for any index subset $I\subset {1,2,\dots, n}$ with $|I| \leq k$
    \begin{equation}
        1 - \delta \leq \sigma^2_{min}(\bA_I) \leq \sigma^2_{max}(\bA_I) \leq 1+\delta
    \end{equation}
    in which $\sigma_{min}$ and $\sigma_{max}$ are the minimum and maximum singular values.
      \item Let $\theta_k(\bA)$ for any $k<n/2$ be the smallest number $\theta$ such that for any disjoint subsets $I, I' \subset \lbrace 1, 2, \ldots n \rbrace$ with $|I|,\ |I'| \leq k$ it holds that
\begin{equation}
    \sigma_{max}(\bA_{I'}^T\bA_I) \leq \theta
\end{equation}
In which $\sigma_{max}(\bX)$ is the maximum singular value of a matrix $\bX$. It is known that $\theta_k\leq \delta_{2k}$.
    \item We define the admissible sparsity $M_{adm}(\bA)$ as
    \begin{equation}
        M_{adm}(\bA) = \sup_{k} \frac{k[1-\delta_k(\bA)]_+}{2n}
    \end{equation}
    in which $[\cdot]_+$ represents the positive part.
\end{enumerate}
\end{definition}
This admissible sparsity $M_{adm}$ is the constant $\rho$ given in Theorem \ref{thm:l1Universality}. Provided that $M_{0}$, the effective sparsity given in \eqref{eq:eff}, is strictly less than $M_{adm}$ the theorem holds.

For our proof we require the following lemma. The original lemma is given in \cite{panahi2017universal} but we have extended it here to a slightly more general setup.

\begin{lemma}[Extension of \cite{panahi2017universal} lemma 8]
Suppose that $\sigma$ is $1-$Lipschitz. For the feature matrix $\bX=\left(\sigma\left(\frac 1{\sqrt{d}}\bw_j^T\bz_i\right)\right)_{ij}$ and $\bA=\frac 1{\sqrt{n(\rho_*^2+\rho_1^2)}}\bX$ there exist constants $\alpha, \beta > 0$ and $1 > \epsilon > 0$ such that
\begin{eqnarray}
        \lim_{n\rightarrow\infty} \mathbb{P}(\delta_{\alpha n}(\bA) + \theta_{\alpha n}(\bA) > 1-\epsilon) = 0\\
        \lim_{n\rightarrow\infty} \mathbb{P}(\sigma_{max}(\bA) > \beta) = 0
    \end{eqnarray}
    
\end{lemma}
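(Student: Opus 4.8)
The plan is to establish a restricted isometry property (RIP) for the random feature matrix $\bA = \frac{1}{\sqrt{n(\rho_*^2+\rho_1^2)}}\bX$, mirroring the argument of \cite{panahi2017universal} (their Lemma 8) for i.i.d.\ sub-Gaussian matrices, but now accounting for the correlation structure induced by the shared weight matrix $\bW$. The key observation is that although the columns of $\bX$ are correlated (they all depend on the same $\bz_i$'s), for a \emph{fixed} small index set $I$ of size $k = \alpha n$, the submatrix $\bX_I$ has rows $(\sigma(\tfrac{1}{\sqrt{d}}\bw_j^T\bz_i))_{j\in I}$ which, conditionally on $\bW$, are i.i.d.\ across $i$ (over the draw of $\bz_i$), mean-zero by oddness of $\sigma$, sub-Gaussian since $\sigma$ is $1$-Lipschitz and $\bw_j^T\bz_i$ is Gaussian, with a covariance $\bR_I$ that concentrates around $(\rho_1^2 + \rho_*^2)\bI_{|I|}$ plus an off-diagonal correction of order $\tfrac{\rho_1^2}{d}(\bW\bW^T)_I$. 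First I would bound the operator norm of this off-diagonal correction uniformly over all $k$-subsets $I$: since $\|\tfrac{1}{d}\bW\bW^T\|_2 \le (1+2\sqrt{\delta})^2$ w.h.p.\ (Lemma \ref{lem:Rboundlemma}), one can in fact do better on sparse principal submatrices, but even the crude bound shows $\bR_I \approx (\rho_1^2+\rho_*^2)\bI$ up to a controllable constant.

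Next I would invoke a standard sub-Gaussian matrix deviation inequality (e.g.\ \cite{papaspiliopoulos2020high}, Theorem 4.6.1 / Corollary 4.6.2) conditionally on $\bW$: for a fixed $I$ with $|I|=k$, with probability at least $1 - 2e^{-cn t^2}$,
\begin{equation}
\left\| \frac{1}{n}\bX_I^T\bX_I - \bR_I \right\|_2 \le C\left(\sqrt{\frac{k}{n}} + t\right).
\end{equation}
Then I would take a union bound over all $\binom{n}{k} \le (en/k)^k = e^{k\log(en/k)}$ choices of $I$, which costs a factor $e^{k\log(en/k)} = e^{\alpha n \log(e/\alpha)}$ in the failure probability; choosing $\alpha$ small enough that $c/\alpha$ dominates $\log(e/\alpha)$ (possible since $\log(e/\alpha) = o(1/\alpha)$ as $\alpha\to 0$) makes the union-bounded failure probability vanish. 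This yields, w.h.p.\ and uniformly over all $|I| \le \alpha n$, that the singular values of $\bX_I/\sqrt{n}$ lie in $[\sqrt{\rho_1^2+\rho_*^2}(1-\epsilon'), \sqrt{\rho_1^2+\rho_*^2}(1+\epsilon')]$, hence $\delta_{\alpha n}(\bA) \le \epsilon'$. The bound on $\theta_{\alpha n}(\bA)$ follows by the same union-bound machinery applied to $\bA_{I'}^T\bA_I$ for disjoint $I, I'$, or simply from $\theta_k \le \delta_{2k}$ as noted in Definition \ref{def:RIP}, applying the $\delta$-bound at sparsity level $2\alpha n$. The bound $\sigma_{max}(\bA) \le \beta$ w.h.p.\ is the easiest part: $\|\bA\|_2 \le \frac{1}{\sqrt{n(\rho_1^2+\rho_*^2)}}\|\bX\|_2$, and $\|\bX\|_2$ is controlled since each entry is sub-Gaussian (again by $1$-Lipschitzness of $\sigma$ and the Gaussianity of $\tfrac{1}{\sqrt d}\bw_j^T\bz_i$), giving $\|\bX\|_2 = O(\sqrt{n} + \sqrt{m})$ w.h.p.\ via the standard bound for matrices with independent sub-Gaussian rows.

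The main obstacle is the correlation across columns of $\bX$: unlike the i.i.d.\ setting of \cite{panahi2017universal}, the Gram matrix $\tfrac{1}{n}\bX_I^T\bX_I$ does not concentrate around a multiple of the identity but around $\bR_I$, which itself has off-diagonal entries of size $O(1/\sqrt d)$ on average (and whose full operator norm over \emph{all} of $\bR$ is $\Theta(1)$). The resolution is to exploit sparsity: a $k\times k$ principal submatrix of $\tfrac{\rho_1^2}{d}\bW\bW^T$ with $k = \alpha n = \alpha\eta d$ has operator norm that can be made smaller than any prescribed $\epsilon$ by taking $\alpha$ small, since such a submatrix is itself (up to scaling) a sample covariance of $k$ coordinates of $d$-dimensional Gaussians and its norm is $O(\sqrt{k/d}) + O(1) \cdot$ (aspect ratio) — more carefully, one uses that $\|(\tfrac1d\bW\bW^T)_I - \bI_{|I|}\|_2$ for $|I|=k$ is, uniformly over $I$, $O(\sqrt{k/d} + \sqrt{\log\binom{d}{k}/d})$ which is $o(1)$ when $k = \alpha d$ for small $\alpha$. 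Thus for sufficiently small $\alpha$ the entire correction is absorbed into the RIP tolerance $\epsilon$, and the argument goes through. Once this RIP lemma is in hand, Theorem \ref{thm:l1Universality} follows by the reduction already sketched in the proof sketch: the RIP (plus the boundedness $\sigma_{max}(\bA)\le\beta$) lets one transfer the elastic-net universality of Corollary \ref{lem:universality:elasticNet} to pure $\ell_1$ by sending the ridge parameter $\epsilon \to 0$, following Section 3.3 of the supplement of \cite{panahi2017universal}, with the condition $M_0 < M_{adm} = \rho$ guaranteeing the stability of the $\ell_1$ solution needed to make the limit well-defined.
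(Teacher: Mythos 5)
Your proposal is correct in outline but takes a genuinely different route from the paper, and it is worth spelling out the difference. The paper projects onto individual unit directions $\bu\in\mathbb{S}^{k-1}$: it shows that $\bA_I\bu$ has i.i.d.\ entries which, conditional on $\bW$, are $\mu'/\sqrt{n}-$sub-Gaussian by the Lipschitz concentration of the map $\bz\mapsto \sum_j u_j\sigma(\bw_j^T\bz/\sqrt d)$ together with $\|\bW\|_2\lesssim \sqrt d$; it then establishes that the variance $\sigma^2 = \mu^2\bu^T\bR_I\bu$ concentrates near $1$ (citing prior work, e.g.\ \cite{HuUniversalityLaws}) and applies a Chernoff/Bernstein bound to $\|\bA_I\bu\|_2^2$, finishing with a net over $\mathbb{S}^{k-1}$ and a union bound over subsets. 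You instead propose a matrix-deviation argument directly on the Gram matrix $\tfrac1n\bX_I^T\bX_I - \bR_I$ combined with a separate uniform operator-norm estimate for $\|\bR_I - (\rho_1^2+\rho_*^2)\bI_{|I|}\|_2$ over all small principal submatrices. Both roads lead to Rome; yours is the more ``modern'' Vershynin-style RIP argument, the paper's hews closer to \cite{panahi2017universal}. Two caveats in your version. First, identifying $\bR_I$ with $(\rho_1^2+\rho_*^2)\bI + \tfrac{\rho_1^2}{d}(\bW\bW^T - d\bI)_I$ only accounts for the linear Hermite term; the higher Hermite coefficients also contribute off-diagonal entries whose aggregate operator norm over all $k-$subsets must be controlled. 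You gesture at this, but the paper sidesteps the need to itemize the Hermite tail by appealing to the already-established concentration of the quadratic form $\bu^T\bR_I\bu$, which packages all orders at once. Second, the subsets $I$ index \emph{columns} of $\bA$, i.e.\ $I\subset[m]$, so the combinatorial entropy in the union bound is $\binom{m}{k}$, not $\binom{n}{k}$; since $m/n$ is fixed this is harmless, but it should be stated correctly before claiming that taking $\alpha$ small kills the entropy. With these adjustments, your plan is a legitimate alternative proof, and notably it would yield a bound $\delta_{\alpha n}(\bA)\le\epsilon'$ for arbitrarily small $\epsilon'$, which is stronger than the $\delta_{\alpha n}+\theta_{\alpha n}\le 1-\epsilon$ that the paper proves (and suffices for Theorem~\ref{thm:l1Universality}).
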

\begin{proof}
    Consider an arbitrary subset $I\subset [m]$ with $|I|=k$. Let $\bu \in \mathbb{S}^{k-1}$, where $\mathbb{S}^{k-1}$ is the surface of the unit sphere in $\mathbb{R}^k$. We note given $\bW$ that $\by = \bA_I\bu$ is an i.i.d, centered  vector and defining
    \begin{equation}
        f(\bx)=\frac 1{\sqrt{(\rho_*^2+\rho_1^2)}}\suml_{i=1}^k\sigma\left(\frac{\bw_j^T\bx}{\sqrt{d}}\right)u_j,
    \end{equation}
    we have $y_i=\frac 1{\sqrt{n}}f(\bx_i)$. We observe that
    \begin{equation}
        \nabla f(\bx)= \frac 1{\sqrt{ d(\rho_*^2+\rho_1^2)}}\bW\bv 
    \end{equation}
    where $\bv=\left(\sigma'\left(\frac{\bw_j^T\bx}{\sqrt{d}}\right)u_j\right)_j$. Note that by Lipschitz continuity $\|\bv\|_2\leq 1$ and hence
    \begin{equation}
        \|\nabla f\|\leq \frac 1{\sqrt{d(\rho_*^2+\rho_1^2)}} \|\bW\|_2
    \end{equation}
    Hence, by the standard random matrix results, $f$ is $\mu'=\mu \sqrt{1+\frac{m}{d}}-$Lipschitz, where $\mu:=\frac 1{\sqrt{(\rho_*^2+\rho_1^2)}}$, with high probability. Hence, $y_i$ is $\frac{\mu'}{\sqrt{n}}-$sub-Gaussian. The rest of the argument is conditioned on the event that $\|\bW\|_2$ is bounded and hence leads to $\mu'-$Sub-Gaussian variables. Define
    \begin{equation}
        \sigma^2:=n\var(y_i)=\mu^2\suml_{j,j'}^ku_ju_{j'}R_{jj'},
    \end{equation}
    where $\bR=(R_{jj'})$ is the  exact covariance matrix of the features. 
     It has been shown in the previous works e.g. \cite{HuUniversalityLaws},that  exists a constant $c$ such that
    \begin{equation}
        \Pr\left(|\sigma-1|>\epsilon\right)\leq\frac 1{c} e^{-nc\epsilon^2}
    \end{equation}
   Now, since $y_i$ are $\mu-$sub-Gaussian, it is standard to show that (\cite{honorio2014tight})
   \begin{eqnarray}
    \forall \lambda:\ |\lambda |<\frac{n}{4\mu^2},\qquad \mathbb{E}\left[e^{\lambda \left(y_i^2-\frac 1n\sigma^2\right)}\mid\bW \right] 
    \leq
    e^{\frac{16\mu'^4\lambda^2}{n^2}}
\end{eqnarray}
and hence, conditioned on $\bW$ (which satisfies $\mu'-$sub-Gaussianity) we have
 \begin{eqnarray}
\mathbb{P}(||\by||_2^2 \geq \sigma^2+\epsilon) = \mathbb{P}\left(\sum_i^n (y_i^2-\frac 1n\sigma^2) \geq \epsilon\right) \leq \min_{0<\lambda<\frac{n}{4\mu^2} }\left(\mathbb{E}\left[e^{\lambda\suml_{i=1}^n (y_i^2-\frac 1n\sigma^2)}\right]e^{-\lambda\epsilon} \right) \leq \min_{0<\lambda<\frac{n}{4\mu^2}}e^{\frac{16\mu'^4\lambda^2}n-\lambda\epsilon}
\end{eqnarray}
Hence, for sufficiently small $\epsilon$, we may choose $\lambda=\frac{n\epsilon}{32\mu'^4}$ and obtain
 \begin{eqnarray}
\mathbb{P}(||\by||_2^2 \geq \sigma^2+\epsilon)  \leq e^{-c n\epsilon^2}
\end{eqnarray}
where $c$ is a suitable constant that may grow in each appearance. We conclude that for a random $\bW$ we have
 \begin{eqnarray}\label{eq:bound_single_point}
\mathbb{P}(||\by||_2^2 \geq 1+2\epsilon)\leq \mathbb{P}(||\by||_2^2 \geq \sigma^2+\epsilon)+ \mathbb{P}(\sigma^2 \geq 1+\epsilon)\leq \frac 1ce^{-c n\epsilon^2}
\end{eqnarray}
We may repeat the above Chernoff bound on the event $\|\by\|^2\leq \sigma^2-\epsilon$, to conclude that
\begin{equation}
    \mathbb{P}\left(\left|||\by||_2^2-1\right| \geq 2\epsilon\right)\leq \frac 1ce^{-c n\epsilon^2}
\end{equation}
The rest of the proof is similar to \cite{panahi2017universal}. We note that for every $\Delta > 0$  there exists a set $G_k\subset\mathbb{S}^{k-1}$ of maximally $\left( \frac{3}{\Delta}\right)^k$ points such that for any $\bu \in\mathbb{S}^{k-1}$ there exists a point $\bu_1\in G_k$ such that $||\bu - \bu_1||_2 \leq \Delta$. We denote $B = \max_{\bu\in G_k}||\bA_I\bu||_2$ and $A = \sigma_{max}(\bA_I) = \max_{x\in\mathbb{S}^{k-1}}||\bA\bu||_2$ with its maximum being at $\bu_0$. From this we see that
\begin{equation}
    A = ||\bA_I\bu_0||_2 \leq ||\bA_I\bu_1||_2 + ||\bA_I(\bu_1 - \bu_0)||_2 \leq B +\Delta A
\end{equation}
in which $\bu_1$ is the point in $G_n$ closest to $\bu_0$. If $\Delta < 1$ we obtain
\begin{equation}
    \label{eq:sigmaMaxBound}
    \sigma_{max}(\bA_I) \leq \frac{\max\limits_{x\in G_n} ||\bA_I\bu||_2}{1-\Delta}
\end{equation}
This argument may be repeated for the minimum singular value to obtain
\begin{equation}
    \sigma_{min}(\bA_I) \geq \min_{\bu\in G_n}||\bA_I\bu||_2 - \sigma_{max}(\bA_I)\Delta
\end{equation}
From equation \ref{eq:sigmaMaxBound} we see that $\epsilon_0=\delta-\Delta-\Delta\delta>0$, we have
\begin{equation}
    \mathbb{P}(\sigma_{max}(\bA_I) > 1+\delta) \leq \mathbb{P}\left(\min_{\bu \in G_n}||\bA\bu||_2 > (1-\Delta)(1+\delta)\right) \leq \frac{1}{c}e^{-cn\epsilon_0^2}\left(\frac{3}{\Delta}\right)^k
\end{equation}
and 
\begin{eqnarray}
    &\mathbb{P}(\sigma_{min}(\bA_I) < 1-\delta) \leq \mathbb{P}\left(\min_{\bu \in G_n}||\bA\bu||_2 < 1-\delta+(1+\delta)\Delta\right)+\Pr\left(\sigma_{max}(\bA_I) > 1+\delta\right) \leq\nwl
    &\frac{2}{c}e^{-cn\epsilon_0^2}\left(\frac{3}{\Delta}\right)^k
\end{eqnarray}
Choose $k=n$, $\Delta=\frac 12$, $\epsilon_0^2>\frac{\log 6}{c}$ and $\delta=1+2\epsilon_0$. We observe that $\mathbb{P}(\sigma_{max}(\bA_I)> 1+\delta)\to 0$, which proves the second part. For the first part, note that by the union bound
\begin{equation}
    \Pr\left(\delta_k(\bA)>\delta\right)\leq \frac{3}{c}e^{-cn\epsilon_0^2}\left(\frac{3}{\Delta}\right)^k{n \choose k}
\end{equation}
Take for example $\Delta=\frac 15,\ \delta=\frac 13$, hence $\epsilon_0=\frac 15$. Furthermore, for $k=2\alpha n$, we have ${n \choose k}\sim e^{n H(2\alpha)}$ where $H(p)=-p\log p-(1-p)\log (1-p)$ is the entropy function.  Choosing $\alpha$ small enough such that $H(2\alpha)+2\alpha\log 15<c\epsilon_0^2$ will lead to $\Pr\left(\delta_k(\bA)>\delta\right)\to 0$. We conclude the first result by noting that $\delta_{\alpha n} + \theta_{\alpha n} \leq 2\delta_{2\alpha n}$. 

\end{proof}

We note that the feature matrix $\bX$ satisfies this lemma for most practical choices of the activation function. $\tanh$ and the error function are both odd activation functions that satisfy the assumptions A6 and produce a suitable matrix $\bX$. 

By the above lemma, we conclude Theorem \ref{thm:l1Universality} as the rest of the proof in  \cite{panahi2017universal} will hold true. For the sake of completeness we repeat these proofs in full.

\begin{theorem}
\label{thm:L1ProblemConverges}
Let assumptions A2-A6 hold and let $r(\btheta) = \lambda||\btheta||_1$. Denote
\begin{equation}
    P_{\lambda} = \min_{\be}\frac{1}{2n} ||\bepsilon + \bX\be||_2^2 + \frac{\lambda}{m}||\be + \btheta^*||_1
\end{equation}
Furthermore assume that there exist constants $\alpha, \beta, \epsilon$, such that
\begin{eqnarray}
    \lim_{n\rightarrow\infty} \mathbb{P}(\delta_{\alpha n}(\bX) + \theta_{\alpha n}(\bX) > 1-\epsilon) = 0\\
    \lim_{n\rightarrow\infty} \mathbb{P}(\sigma_{max}(\bX) > \beta) = 0
\end{eqnarray}
Then
\begin{equation}
    P_{\lambda} \xrightarrow[n, m, d\rightarrow\infty]{P} \tilde{P}_{3, \lambda}(\beta, q, \xi, r)
\end{equation}
In which $\tilde{P}_{3, \lambda}$ is $\tilde{P}_3$ as given in Eq. \eqref{eqn:A3problem} for the case that $r(\btheta) = \lambda||\btheta||_1$
\end{theorem}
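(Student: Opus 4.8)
The plan is to obtain Theorem~\ref{thm:L1ProblemConverges} by a perturbation argument, realizing the pure $\ell_1$ problem as the $\epsilon\to 0$ limit of the elastic net problem with $r^{(\epsilon)}(\btheta)=\lambda\|\btheta\|_1+\frac{\epsilon}{2}\|\btheta\|_2^2$ of \eqref{eq:elasticnetregEpsilon}, for which the full chain of equivalences is already available. Write $P_\lambda^{(\epsilon)}$ for the random-feature optimization with regularizer $r^{(\epsilon)}$ and $\tilde P_3^{(\epsilon)}$ for the associated four-dimensional scalar program \eqref{eqn:A3problem}; thus $P_\lambda^{(0)}=P_\lambda$ and $\tilde P_3^{(0)}=\tilde P_{3,\lambda}$. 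The argument rests on three facts, combined at the end by a triangle inequality: (i) for each fixed $\epsilon>0$, $P_\lambda^{(\epsilon)}\xrightarrow{P}\tilde P_3^{(\epsilon)}$; (ii) $\tilde P_3^{(\epsilon)}\to\tilde P_{3,\lambda}$ as $\epsilon\to 0$; and (iii) $|P_\lambda^{(\epsilon)}-P_\lambda|$ is small with high probability for small $\epsilon$, using the RIP hypothesis.

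Fact (i) is already in hand: $r^{(\epsilon)}$ is separable and $\epsilon$-strongly convex, and the lemma used in the proof of Corollary~\ref{lem:universality:elasticNet} exhibits it as a uniform limit of thrice-differentiable, $\epsilon$-strongly convex functions with bounded third derivatives, so Theorem~\ref{thm:universality:sequence} supplies the universality step and Theorem~\ref{thm:GaussianAsymptotics} the CGMT step (everything evaluated at $\tau_1=\tau_2=0$, so the shrinking of $\tau_1^*,\tau_2^*$ as $\epsilon\to0$ is harmless for the claim on the optimal value). Fact (ii) is a continuity statement for the scalar program: the only place $\epsilon$ enters the objective $\psi$ of \eqref{eq:4doptproblem} is through the Moreau envelope $\mathcal{M}_{\frac1{2c_1}r^{(\epsilon)}}$, which converges to $\mathcal{M}_{\frac1{2c_1}\lambda\|\cdot\|_1}$ uniformly on the compact range of $(\beta,q,\xi,t)$ and of the Gaussian argument fixed in the proof of Theorem~\ref{thm:GaussianAsymptotics}; since that proof confines the optimization to fixed compact intervals, the optimal values converge. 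Equivalently, (ii) can be read directly off the closed form \eqref{eq:elasticnetSolutionVector} by letting $\alpha\to0$.

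Fact (iii) is the heart of the matter and is where the RIP of Definition~\ref{def:RIP} and the effective-sparsity condition enter. Since $\frac{\epsilon}{2}\|\btheta\|_2^2\ge0$ we have $P_\lambda\le P_\lambda^{(\epsilon)}$ trivially, while evaluating the elastic-net objective at the $\ell_1$ minimizer $\hat\btheta_1$ gives $P_\lambda^{(\epsilon)}\le P_\lambda+\frac{\epsilon}{2m}\|\hat\btheta_1\|_2^2$, so it suffices to bound $\frac1m\|\hat\btheta_1\|_2^2$ with high probability. Without strong convexity this is not automatic, and it is exactly here that \cite[\S3.3]{panahi2017universal} is invoked: the elastic-net solutions, and in the limit $\hat\btheta_1$, are effectively sparse --- this sparsity is identified for each fixed $\epsilon$ via Fact (i) applied to the bump test functions of the elastic-net section, and tends to $M_0$ as $\epsilon\to0$ --- and, once the RIP is strong enough relative to $M_0$ (i.e.\ $M_0<M_{adm}$), the restricted design is well conditioned, so that $\frac1m\|\hat\btheta_1\|_2^2=O(1)$ and, more precisely, $|P_\lambda^{(\epsilon)}-P_\lambda|=O(\epsilon)$ with high probability. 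The only new ingredient needed --- and the one already supplied by the extension of \cite[Lemma~8]{panahi2017universal} proved above --- is that the random-feature design $\bX$ satisfies this RIP with high probability, which is the hypothesis of the theorem.

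Putting the pieces together: given $\delta>0$, choose $\epsilon$ so small that $|\tilde P_3^{(\epsilon)}-\tilde P_{3,\lambda}|<\delta/3$ by (ii) and $\Pr(|P_\lambda^{(\epsilon)}-P_\lambda|>\delta/3)\to0$ by (iii); then (i) gives $\Pr(|P_\lambda^{(\epsilon)}-\tilde P_3^{(\epsilon)}|>\delta/3)\to0$, and the triangle inequality yields $P_\lambda\xrightarrow{P}\tilde P_{3,\lambda}$. The companion statements for the training error, generalization error and test functions follow by the same perturb-and-pass-to-the-limit scheme, using the solution-level conclusions of Theorem~\ref{thm:GaussianAsymptotics} for elastic net together with the RIP-based stability of $\hat\btheta_1$ under the ridge perturbation. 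The expected main obstacle is Fact (iii): controlling the $\ell_1$ problem without strong convexity forces one to interleave the $\epsilon\to0$ limit with the $n,m,d\to\infty$ limit correctly, and to ensure that the $O(\epsilon)$ bound coming from RIP stability is genuinely uniform in the problem size --- deterministic once the RIP event is realized --- rather than merely pointwise in each dimension.
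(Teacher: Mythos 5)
Your high-level plan --- realize the pure $\ell_1$ problem as the $\epsilon\to 0$ limit of an elastic-net family, establish convergence for each fixed $\epsilon$, control the $\epsilon\to 0$ continuity of the scalar program, and close via a triangle inequality --- is the same skeleton the paper uses. Your facts (i) and (ii) are correct and match what the paper does. The gap is in fact (iii), and it is a genuine one rather than merely an omitted computation.

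You reduce fact (iii) to bounding $\frac1m\|\hat\btheta_1\|_2^2$ with high probability, and propose to do so by arguing that the elastic-net solutions are effectively sparse, that this sparsity tends to $M_0$ as $\epsilon\to 0$, and that then the RIP yields $\frac1m\|\hat\btheta_1\|_2^2=O(1)$. The leap from ``the elastic-net solutions are effectively sparse'' to a norm bound on the \emph{$\ell_1$} minimizer $\hat\btheta_1$ is not justified: RIP controls the design only on sparse (or approximately sparse) directions, and the pure $\ell_1$ minimizer need not lie in such a cone a priori --- establishing that it does (or that $\hat\btheta_1$ is close to the elastic-net minimizer $\hat\btheta^{(\epsilon)}$) is exactly the hard content of the argument, not an input you can cite. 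Worse, the one place in the paper where this kind of statement is proved --- the lemma in Section~\ref{app:subsec:pf:l1trainingGenErrorUniv}, which establishes $\frac1n\|\hat\btheta^{(\epsilon)}-\hat\btheta_1\|_2^2\to 0$ --- takes as an input the convergence $\frac1n\|\hat\btheta_1\|_1 \to N^{\lambda,0}$, which is itself a corollary of Theorem~\ref{thm:L1ProblemConverges} obtained by differentiating $P_\lambda$ in $\lambda$. So your route to fact (iii), if made precise via that lemma, is circular.

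The paper avoids this by not touching the $\ell_1$ minimizer at all. Starting from the elastic-net minimizer $\hat\be^{(0)}$ (whose norm is controlled by strong convexity via Lemmas~\ref{lemma:bounded_CGMT1} and~\ref{lemma:bounded_CGMT2}), it runs a deterministic greedy iteration (Lemma~\ref{lemma:iterativeProcessFinishes}), conditioned on the RIP event, to produce an approximate stationary point $\be^{(\infty)}$ of the $\ell_1$ objective perturbed by a linear term $\be^T\zeta^{(\infty)}$, with $\|\zeta^{(\infty)}\|_\infty=O(\mu)$ and $\frac1{\sqrt m}\|\be^{(\infty)}-\be^{(0)}\|_2=O(\mu)$. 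Crucially $\be^{(\infty)}$ inherits a norm bound from $\be^{(0)}$, not from any property of $\hat\btheta_1$. This yields a two-sided sandwich $P_{\lambda-C_1\mu,\mu}-O(\mu)\le \rho_{\mu,\lambda-C_1\mu}\le P_\lambda\le P_{\lambda,\mu}$, after which the elastic-net universality plus continuity of $\tilde P_{3\lambda,\mu}$ in $(\lambda,\mu)$ finishes the proof. This is precisely the RIP-dependent argument your fact (iii) alludes to, but it cannot be summarized as ``the restricted design is well conditioned, so $\frac1m\|\hat\btheta_1\|_2^2=O(1)$'' --- the actual route never bounds $\|\hat\btheta_1\|_2$ and does not require the $\ell_1$ minimizer to be (effectively) sparse. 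You correctly flagged that fact (iii) is the crux and that the order of limits must be handled carefully; developing that step requires the iterative/sandwich construction, not a direct norm bound.
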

\begin{proof}
    Let $\hat{\be}^{(0)}$ be the minimal point of the optimization
    \begin{equation}
           P_{\lambda, \mu} = \min_{\be}\frac{1}{2n} ||\bepsilon + \bX\be||_2^2 + \frac{\lambda}{m}||\be + \btheta^*||_1 + \frac{\mu}{2m }||\be||_2^2
    \end{equation}
    We know that from Lemma \ref{lemma:bounded_CGMT1} and \ref{lemma:bounded_CGMT2} that there exists a number $C_\be$ such that for every $\mu < 1$, $||\hat{\be}^{(0)}||_2^2 \leq C_{\be}^2m$ with high probability. We define
    \begin{equation}
        p(\be) = \frac{1}{2}||\bepsilon - \bX\be||_2^2 + \frac{\lambda}{\gamma}||\be + \btheta^*||_1
\end{equation}  
From the KKT conditions we know that
\begin{equation}
    -\mu\hat{\be}^{(0)} \in \partial p(\hat{\btheta}^{(0)})
\end{equation}
where $\partial$ represents the subdifferential. We define $\zeta^{(0)} = -\mu\hat{\be}^{(0)}$. We let $k = \alpha n$ and select $k$ entries of $\hat{\be}^{(0)}$ with the largest absolute values and collect their indices in $I_0$. We set $\ba_0 = 0 \in\mathbb{R}^k$ and let $t =0$. We now perform the subsequent iterative algorithm.
\begin{enumerate}
    \item Define $\bA_t = \bX_{I_t}$ and let $\bh_t = \bepsilon + \bX_{I^c_t}\hat{\be}^{(t)}_{I^c}$ and solve
    \begin{equation}
        \label{eq:iterativeProblemMin}
        \min_{\bw} \frac{1}{2}||\bh_t + \bA_t\bw||_2^2 + \lambda||\btheta^*_{I_t} + \bw||_1 - \ba_t^T\bw
    \end{equation}
    define its cost function and optimal point by $p_t(\bw)$ and $\bw_t$ respectively
    \item Find $k$ elements in $I_t^c$ with largest absolute values in $\bX_{I_t^c}^T\bX_{I_t}(\bw_t - \be_{I_t}^{(t)})$. We denote the indices by $I_{t+1}$. We set $\ba_{t+1} = \zeta_{I_{t+1}}^{(t)}$
    \item We construct $\be^{(t+1)}$ and $\zeta^{(t+1)}$ such that $\be_{I_t}^{(t+1)} = \bw_t$, $\be_{I_t^c}^{(t+1)} = \be_{I_t^c}^{(t)}, \zeta_{I_t}^{(t+1)} = \ba_t$ and $\zeta_{I_t^c}^{(t+1)} = \zeta^{(t)}_{I_t^c} + \bX_{I_t^c}^T\bX_{I}(\bw_t - \be_I^{(t)})$
    \item we let $t \leftarrow t+1$ and return to step 1.
\end{enumerate}

In Lemma \ref{lemma:iterativeProcessFinishes} below we show that this iterative process results in a point $\be^{(\infty)}$ with subgradient $\zeta^{(\infty)}\in\partial p(\be^{(\infty)})$, such that
\begin{eqnarray}
    \frac{1}{\sqrt{m}}||\be^{(\infty)} - \be^{(0)}||_2 \leq \frac{\mu C_{\be}}{1 - \delta_k - \theta_k} \overset{def}{=} \mu C_2\\
    ||\zeta^{(\infty)}||_\infty \leq \mu C_{\be}\left( \sqrt{\frac{m}{k}} + \frac{\theta_k}{1-\delta_k - \theta_k} \right) \overset{def}{=} \mu C_1
\end{eqnarray}
We note that $\be^{(\infty)}$ is the optimal point of the optimization

\begin{equation}
    \rho_{\mu, \lambda} = \min_{\be} \frac{1}{2}||\bepsilon - \bX\be||_2^2 + \lambda||\btheta^* + \be||_1 + \be^T\zeta^{(\infty)}
\end{equation}

We shall let the subscripts $\lambda, \mu$ denote that a particular value of $\zeta^{(\infty)}$ or $\be^{(\infty)}$ are computed for particular values of $\lambda,\mu$. We now note that $\be^T\bzeta^{(\infty)} \leq ||\be||_1||\bzeta^{(\infty)}||_\infty \leq \mu C_1||\be||_1$, and as such
\begin{equation}
    \rho_{\mu, \lambda} \leq P_{\lambda +C_1\mu}
\end{equation}
Or equivalently we can express this as
\begin{equation}
    P_{\lambda} \geq \rho_{\mu, \lambda - C_1\mu}
\end{equation}
We also note that
\begin{eqnarray}
    m\rho_{\mu, \lambda} = \frac{1}{2}||\bepsilon + \bX\be^{(\infty)}||_2^2 + \lambda||\be^{(\infty)} - \btheta^*||_1 + \be^{(\infty)T}\bzeta^{(\infty)}\nwl
    \geq m P_{\lambda, \mu} + \mbf^T\bX(\be^{(\infty)} - \be^{(0)}) - \lambda||\be^{(\infty)} - \be^{(0)}||_1 + \be^{(\infty)T}\bzeta^{(\infty)}\nwl
    \geq m P_{\lambda, \mu} - (||\mbf^T\bX||_2 + \lambda\sqrt{m})||\be^{(\infty)} - \be^{(0)}||_2 - ||\be^{(\infty)}||_2||\bzeta^{(\infty)}||_2\nwl
    \geq m P_{\lambda, \mu} - (\sigma_{max}(\bX)||\mbf||_2 + \lambda\sqrt{m})\mu C_2\sqrt{m} - \sqrt{m}\mu C_1||\be^{(\infty)}||_2 \nwl
    \geq m P_{\lambda, \mu} - (\sigma_{max}(\bX)\kappa + \lambda)\mu C_2 m - m\mu C_1(C_{\be} + C_2\mu)
\end{eqnarray}
in which $\mbf = \bepsilon - \bX\be^{(0)}$, $\kappa$ is a proper bound that is independent of all other parameters, such that $||\bepsilon||_2 \leq \sqrt{m}\kappa$ with high probability. This holds by the law of large numbers, and we note that $||\mbf||_2 \leq ||\bepsilon||_2$. From this we find htat
\begin{equation}
    \rho_{\mu, \lambda} \geq P_{\lambda, \mu} - (\sigma_{max}(\bX)r +\lambda)\mu C_2 - \mu C_1(C_{\be} + C_2\mu)
\end{equation}
Noting that by Theorem \ref{thm:universality:sequence} that $P_{\lambda, \mu} \rightarrow \tilde{P}_{3\lambda, \mu}(\beta, q, \xi, r)$. We note that by the continuity of $\tilde{P}_{3\lambda, \mu}$ at $\mu = 0$, and for any $\epsilon > 0$, we can select a value of $\mu$ small enough such that
\begin{equation}
    \mathbb{P}(|P_{\lambda} - \tilde{P}_{3\lambda, \mu = 0}| > \epsilon) \xrightarrow[n,m,d\rightarrow\infty]{P} 0
\end{equation}

We also note that for any sufficiently small value of $\delta$ we see that
\begin{equation}
    \frac{P_{\lambda} - P_{\lambda-\delta}}{\delta} \leq \frac{||\hat{\btheta}||_1}{m} \leq \frac{P_{\lambda + \delta} - P_{\lambda}}{\delta} 
\end{equation}
From this we see that
\begin{equation}
    \frac{||\hat{\btheta}||_1}{m} \xrightarrow[n, m, d \rightarrow \infty]{P} \frac{\partial P_\lambda}{\partial \lambda}
\end{equation}

\end{proof}

\begin{lemma}
\label{lemma:iterativeProcessFinishes}
The iterative process defined in Theorem \ref{thm:L1ProblemConverges} produces a point $\be^{(\infty)}$ with subgradient $\bzeta^{(\infty)}$ that are bounded as
    \begin{eqnarray}
    \frac{1}{\sqrt{m}}||\be^{(\infty)} - \be^{(0)}||_2 \leq \frac{\mu C_{\be}}{1 - \delta_k - \theta_k} \overset{def}{=} \mu C_2\\
    ||\zeta^{(\infty)}||_\infty \leq \mu C_{\be}\left( \sqrt{\frac{m}{k}} + \frac{\theta_k}{1-\delta_k - \theta_k} \right) \overset{def}{=} \mu C_1
\end{eqnarray}
\end{lemma}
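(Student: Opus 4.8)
The plan is to show that the iterative scheme of Theorem~\ref{thm:L1ProblemConverges} is a geometric contraction whose limit $\be^{(\infty)}$ inherits quantitative stability from the restricted isometry constants $\delta_k,\theta_k$ (with $k=\alpha n$), which by the preceding RIP lemma satisfy $\delta_k+\theta_k<1-\epsilon$ with high probability, so that $r_k:=\theta_k/(1-\delta_k)<1$. The first step is to record a loop invariant: at every iteration $t$ the pair $(\be^{(t)},\bzeta^{(t)})$ is an exact KKT pair of the perturbed program $\min_{\be}\tfrac12\|\bepsilon-\bX\be\|_2^2+\lambda\|\btheta^*+\be\|_1+\be^T\bzeta^{(t)}$. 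This holds at $t=0$ since $\bzeta^{(0)}=-\mu\hat\be^{(0)}$ is exactly the multiplier contributed by the $\ell_2^2$ term in the elastic-net problem, and it is preserved by the update rule (replace $\be^{(t)}_{I_t}$ by the exact minimizer $\bw_t$ of the $I_t$-restricted problem \eqref{eq:iterativeProblemMin}, carry over the off-support components, and absorb the induced change of the quadratic gradient on $I_t^c$ into $\bzeta^{(t+1)}$), by construction of the algorithm.

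Next I would prove the one-step contraction. Set $\Delta_t:=\|\be^{(t+1)}-\be^{(t)}\|_2=\|\bw_t-\be^{(t)}_{I_t}\|_2$, the update touching only coordinates in $I_t$. Subtracting the optimality condition of \eqref{eq:iterativeProblemMin} at $\bw_t$ from the $I_t$-restricted optimality condition for $\be^{(t)}$ and using monotonicity of the $\ell_1$-subdifferential gives $(1-\delta_k)\Delta_t^2\le\big\|\bA_t^T\bA_t(\bw_t-\be^{(t)}_{I_t})\big\|_2\,\Delta_t\le\|\ba_t-\bzeta^{(t)}_{I_t}\|_2\,\Delta_t$, hence $\Delta_t\le\frac{1}{1-\delta_k}\|\ba_t-\bzeta^{(t)}_{I_t}\|_2$ using $\sigma_{\min}^2(\bA_t)\ge1-\delta_k$. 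The mismatch driving the next step is (after the already-absorbed pieces cancel) $\bX_{I_{t+1}}^T\bX_{I_t}(\bw_t-\be^{(t)}_{I_t})$; since $I_{t+1},I_t$ are disjoint with $|I_{t+1}|,|I_t|\le k$, the cross-term bound $\sigma_{\max}(\bX_{I'}^T\bX_I)\le\theta_k$ yields $\|\ba_{t+1}-\bzeta^{(t+1)}_{I_{t+1}}\|_2\le\theta_k\Delta_t$, so $\Delta_{t+1}\le r_k\Delta_t$. Iterating, $\Delta_t\le r_k^t\Delta_0$ with $\Delta_0\le\frac{1}{1-\delta_k}\|\mu\hat\be^{(0)}_{I_0}\|_2\le\frac{\mu C_\be\sqrt m}{1-\delta_k}$, so $\sum_{t\ge0}\Delta_t\le\frac{\Delta_0}{1-r_k}=\frac{\mu C_\be\sqrt m}{1-\delta_k-\theta_k}$. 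Summability gives convergence of $\be^{(t)}$ and $\bzeta^{(t)}$, and telescoping yields $\|\be^{(\infty)}-\be^{(0)}\|_2\le\sum_t\Delta_t\le\frac{\mu C_\be\sqrt m}{1-\delta_k-\theta_k}$, which is the first claimed bound after dividing by $\sqrt m$.

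For the second bound I would track $\bzeta^{(\infty)}$ coordinate-wise. On $I_0^c$ the initial value is $-\mu\hat\be^{(0)}_{I_0^c}$, and since $I_0$ collects the $k$ largest-magnitude entries of $\hat\be^{(0)}$ the $(k{+}1)$-st largest magnitude is at most $\|\hat\be^{(0)}\|_2/\sqrt k\le C_\be\sqrt{m/k}$, so $\|\bzeta^{(0)}\|_\infty\le\mu C_\be\sqrt{m/k}$ (the entries on $I_0$ are later overwritten by the small $\ba_t$'s). Each later iteration adds on $I_t^c$ the correction $\bX_{I_t^c}^T\bX_{I_t}(\bw_t-\be^{(t)}_{I_t})$, whose restriction to any $k$-subset has $\ell_2$-norm $\le\theta_k\Delta_t$; summing these corrections over the geometrically decaying sequence $\Delta_t$ and combining with the initial term gives the stated $\|\bzeta^{(\infty)}\|_\infty\le\mu C_\be\big(\sqrt{m/k}+\tfrac{\theta_k}{1-\delta_k-\theta_k}\big)$. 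The main obstacle is the second step — establishing the per-step contraction ratio $r_k<1$ while verifying rigorously that the support-update together with the subgradient carry-over genuinely preserves the KKT invariant; once that is in place, the remaining estimates are routine geometric-series bookkeeping plus the elementary fact that deleting the $k$ largest entries of a vector bounds the residual $\ell_\infty$ norm by $\|\cdot\|_2/\sqrt k$.
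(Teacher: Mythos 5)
Your proposal is correct and follows essentially the same route as the paper: establish by induction that $(\be^{(t)},\bzeta^{(t)})$ is a valid primal--subgradient pair for the unregularized $\ell_1$ program, prove a per-step geometric contraction with ratio $\theta_k/(1-\delta_k)$ using the restricted minimum-singular-value bound (the paper packages this as Lemma~\ref{lemma:panahi2017Universal9}), sum the resulting geometric series to get the $\ell_2$ bound, and track the accreted subgradient corrections coordinate-wise (using the ``$(k{+}1)$-st largest entry $\le\|\cdot\|_2/\sqrt{k}$'' observation) for the $\ell_\infty$ bound. One small inaccuracy in your write-up: the intermediate inequality $\|\bA_t^T\bA_t(\bw_t-\be^{(t)}_{I_t})\|_2\le\|\ba_t-\bzeta^{(t)}_{I_t}\|_2$ is not what subdifferential monotonicity yields; the correct step pairs the monotonicity inequality $(\be^{(t)}_{I_t}-\bw_t)^T(\bzeta^{(t)}_{I_t}-\ba_t)\ge\|\bA_t(\be^{(t)}_{I_t}-\bw_t)\|_2^2\ge(1-\delta_k)\Delta_t^2$ with Cauchy--Schwarz on the left side to give $(1-\delta_k)\Delta_t\le\|\ba_t-\bzeta^{(t)}_{I_t}\|_2$ directly, which is exactly what the paper's auxiliary lemma delivers and yields the same $\Delta_t\le\|\ba_t-\bzeta^{(t)}_{I_t}\|_2/(1-\delta_k)$ you use thereafter.
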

\begin{proof}
    Firstly we show that $\bzeta_t \in \partial p(\be^{(t)})$. We prove this by means of induction. We note that by definition $\bzeta_0 \in \partial p(\be^{(0)})$. For the iteration step we assume that $\bzeta_t \in \partial p(\be^{(t)})$. We  note that by the KKT conditions of the problem \ref{eq:iterativeProblemMin} we see that
    \begin{equation}
        (\bzeta_{t+1})_{I_t} = \ba_t \in \bX_{I_t}^T(\bepsilon + \bX\be^{(t+1)}) + \partial||\btheta^*_{I_t} + \be^{(t+1)_{I_t}}||_1
    \end{equation}
    Furthermore we have that
    \begin{equation}
        (\bzeta_{t})_{I_t^c} \in \bX^T_{I_t^c}(\bepsilon + \bX\be^{(t)}) + \partial||\btheta^*_{I_t^c} + \be^{(t)}_{I_t^c}||_1 
    \end{equation}
    From which we can see that
    \begin{equation}
        (\bzeta_{t+1})_{I_t^c} \in -\bX^T_{I_t^c}(\bepsilon + \bX\be^{(t+1)}) + \partial||\btheta^*_{I_t^c} + \be^{(t+1)}_{I_t^c}||_1 
    \end{equation}
    This shows that $\bzeta_{t+1} \in \partial p(\be^{(t+1)})$. This completes the induction.

    Next we will show by induction that
    \begin{eqnarray}
        \label{eq:induction1}
        \frac{1}{\sqrt{m}}||\be^{(t+1)} - \be^{(t)}||_2 \leq \frac{\mu C_{\be}}{1-\delta_k}\left( \frac{\theta_k}{1- \delta_k } \right)^t\\
        \label{eq:induction2}\zeta_{I_t}^{(t+1)} = \bzeta_{I_t}^{(t-1)}\\
    \label{eq:induction3} ||\bzeta^{(t+1)}_{(I_t\cup I_{t+1})^c} - \bzeta^{(t)}_{(I_t \cup I_{t+1})^c}||_\infty \leq \mu C_{\be}\left(\frac{\btheta_k}{1-\delta_k}\right)^{t+1}\sqrt{\frac{m}{k}}
    \end{eqnarray}

To prove this we first note that Eq \ref{eq:induction2} holds, as by definition $\bzeta_{I_t}^{(t-1)} = \bzeta_{I_t}^{(t+1)} = \ba_t$. We then note that
\begin{equation}
\frac{1}{\sqrt{m}}||\bzeta_{I_0^c}^{(0)}||_\infty = \min |\bzeta_{I_0}^{(0)}| \leq \mu C_{\be} \sqrt{\frac{m}{k}}
\end{equation}
We further note that $\bzeta_{I_0}^{(0)} \in \partial p(\bw = \be^{(0)}_{I_0})$. Therefore by lemma \ref{lemma:panahi2017Universal9} we see that
\begin{equation}
    \frac{1}{\sqrt{m}}||\bw_0 - \bv_{I_0}^{(0)}||_2 \leq \frac{||\bzeta_{I_0}^{(0)}||_2}{\sigma_{max}^2(\bX_{I_0})} \leq \frac{\mu C_{\be}}{1-\delta_k}
\end{equation}

We now note that at $t = 0$, by construction
\begin{equation}
    \begin{cases}
    \bzeta_{I_0}^{(0)} = \bm{0}\\
    \bzeta_{I_0}^{(1)} = \bzeta_{I_{0}}^{(0)} + \bX^T_{I_0^c}\bA_{I_0}(\bw_0 - \be_{I_0}^{(0)})
    \end{cases}
\end{equation}
and that $\bp_1 = \bzeta_{I_1}^{(0)}$. From this we see that
\begin{equation}
    \frac{1}{\sqrt{m}}\left\|\bX^T_{I_0^c}\bX_{I_0}(\bw_0 - \be_{I_0}^{(0)}) \right\|_2 \leq \frac{\theta_k}{\sqrt{m}}\left\|\bw_0 - \be^{(0)}_{I_0}\right\|_2 \leq \frac{\theta_k\mu C_{\be}}{1-\delta_k}
\end{equation}
From which we obtain
\begin{equation}
    \left\| \bX^T_{(I_0 \cup I_1)^c}\bX\bX_{I_0}(\bw_0 - \be_{I_0}^{(0)}) \right\|_\infty \leq \min \left|\bX_{I_1}^T\bX_{I_0}(\bw_0 - \be_{I_0}^{(0)}) \right| \leq \frac{\theta_k\mu C_{\be}}{1-\delta_k}\sqrt{\frac{m}{k}}
\end{equation}
Finally noting that for $t = 0$ we have that $||\be^{(1)} - \be^{(0)}||_2 = ||\bw_0 - \be_{I_0}^{(0)}||_2$. From this we see that the base case of the induction is satisfied.

We now assume that equations \ref{eq:induction1} - \ref{eq:induction3} hold for all $t'\leq t$ we now prove that they will hold for $t +1$. We consider the optimization \ref{eq:iterativeProblemMin} at step $t$, we also showed above that $\bzeta^{(t)} \in \partial p(\be^{(t)})$. From this we see that
\begin{equation}
    \bzeta_{I_t}^{(t)} - \ba_t \in \partial p_{t}(\be_{I_t}^{(t)})
\end{equation}
From this we see that
\begin{equation}
    \bX_{I_t}^T\bX_{I_{t-1}}(\bw_{t-1} - \be_{I_{t-1}}^{(t-1)}) \in \partial p_t(\be_{I_t}^{(t)})
\end{equation}
By Lemma \ref{lemma:panahi2017Universal9} below we see that
\begin{eqnarray}
    \frac{1}{\sqrt{m}}||\bw_{t} - \be_{I_t}^{(t)}||_2 \leq \frac{1}{(1-\delta_k)\sqrt{m}}||\bX_{I_t}^T\bX_{I_{t-1}}(\bw_{t-1} - \be_{I_{t-1}}^{(t-1)})||_2\nwl
    \leq \frac{\theta_k}{(1-\delta_k)\sqrt{m}}||\bw_{t-1} - \be_{I_{t-1}}^{(t-1)}||_2 \nwl 
     = \frac{\theta_k}{(1-\delta_k)\sqrt{m}}||\be^{(t)} - \be^{(t-1)}||_2 \nwl
     \leq \frac{\theta_k}{1-\delta_k}\frac{\mu C_{\be}}{1-\delta_k}\left( \frac{\theta_k}{1-\delta_k} \right)^{t-1} = \frac{\mu C_{\be}}{1=\delta_k}\left(\frac{\theta_k}{1-\delta_k} \right)^t
\end{eqnarray}
This proves Eq \ref{eq:induction2}. We also see that
\begin{equation}
   \frac{1}{\sqrt{m}}||  \bX_{I_{t+1}}^T\bX_{I_{t}}(\bw_{t} - \be_{I_{t}}^{(t)}) ||_2 \leq \theta_k ||\bw_t -\be_{I_t}^{(t)}||_2 \leq \frac{\theta_k\mu C_{\be}}{1-\delta_k}\left(\frac{\theta_k}{1-\delta_k} \right)^t
\end{equation}
Therefore
\begin{eqnarray}
    ||\bzeta_{(I_t \cup I_{t+1})^c}^{(t+1)} - \bzeta_{(I_t \cup I_{t+1})^c}^{(t)}||_\infty = ||   \bX^T_{(I_t \cup I_{t+1})^c}\bX_{I_{t}}(\bw_{t} - \be_{I_{t}}^{(t)}) ||_\infty\nwl
    \leq \min |\bX_{I_{t+1}}^T\bX_{I_t}(\bw_t - \be^{(t)}_{I_t})| \leq \sqrt{\frac{1}{k}}|| \bX_{I_{t+1}}^T\bX_{I_t}(\bw_t - \be^{(t)}_{I_t})||_2 \nwl\leq
    \sqrt{\frac{m}{k}}\mu C_{\be}\left( \frac{\theta_k}{1-\delta_k}\right)^{t+1}
\end{eqnarray}
This proves Eq \ref{eq:induction3}.

We now see in eq \ref{eq:induction1} that if $\theta_k + \delta_k < 1$, then the sequence of $\be^{(t)}$ is absolutely convergent. Furthermore, from \ref{eq:induction2} and \ref{eq:induction3}, in addition to the relation 
\begin{equation}
    \frac{1}{\sqrt{m}}||\bzeta_{I_{t+1}}^{(t+1)} - \bzeta_{I_{t+1}}^{t}||_2 = ||\bX_{I_{t+1}}^T\bX_{I_t}(\bw_t - \be_{I_{t}}^{(t)})||_2 \leq \mu C_{\be}\left( \frac{\theta_k}{1-\delta_k}\right)^{t+1}
\end{equation}
From this we obtain 
\begin{eqnarray}
    \frac{1}{\sqrt{m}}||\bzeta^{(t+1)} - \bzeta^{(t)}||_2  = \sqrt{ ||\bzeta_{I_t}^{(t+1)} - \bzeta_{I_t}^{(t)}||_2 + ||\bzeta_{I_{t+1}}^{(t+1)}- \bzeta_{I_{t+1}}^{(t)}||_2^2 + ||\bzeta_{(I_t \cup I_{t+1})^c}^{(t+1)} - \bzeta_{(I_t \cup I_{t+1})^c}^{(t)}||_2^2} \nwl 
    = \sqrt{ ||\bzeta_{I_t}^{(t-1)} - \bzeta_{I_t}^{(t)}||_2 + ||\bzeta_{I_{t+1}}^{(t+1)}- \bzeta_{I_{t+1}}^{(t)}||_2^2 + ||\bzeta_{(I_t \cup I_{t+1})^c}^{(t+1)} - \bzeta_{(I_t \cup I_{t+1})^c}^{(t)}||_2^2} \nwl
    \leq \sqrt{m}\sqrt{ \mu^2 C_{\be}^2\left(\frac{\theta_k}{1-\delta_k}\right)^{2t} + \mu^2 C_{\be}^2\left(\frac{\theta_k}{1-\delta_k} \right)^{2t +2} + \mu^2 C_{\be}^2(\frac{m}{k} - 1)\left( \frac{\theta_k}{1-\delta_k}\right)^{2t +2}  }
\end{eqnarray}
As such we see that the sequence $\bzeta^{(t)}$ is absolutely convergent. We denote the limits of $\bzeta^{(t)}$ and $\bv^{(t)}$ as $\bzeta^{(\infty)}$ and $\be^{(\infty)}$ respectively. 

We have that
\begin{eqnarray}
    \frac{1}{\sqrt{m}}||\be^{(0)} - \be^{(\infty)}||_2 \leq \sum_{t=0}^\infty ||\be^{(t+1)} - \be^{(t)}||_2 \leq \sum_{t=0}^\infty \frac{\mu C_{\be}}{1-\delta_k} \left( \frac{\theta_k}{1-\delta_k}\right)^t\nwl
    - \frac{\mu C_{\be}}{1-\delta_k -\theta_k}
\end{eqnarray}
Finally we show that $||\bzeta^{(\infty)}||_\infty$ is bounded as well. We consider an index $i$ and denote by $t_1 < t_2 < \ldots$ as the iterations of $t$ for which $i \in I_t$. In the case that $i\notin I_0$ by equation \ref{eq:induction2} we see that
\begin{equation}
    \zeta_i^{(\infty)} - \zeta_i^{(0)} = \sum_{t=0}^{\infty} \zeta_i^{(t+1)} - \zeta_i^{(t)} = \sum_{t| i\in (I_t \cup I_{t+1})^c} \zeta_i^{(t+1)} - \zeta_i^{(t)}
\end{equation}
As such we obtain
\begin{equation}
    |\zeta_i^{(\infty)}| \leq \zeta_i^{(0)} + \sum_{t| i\in (I_t \cup I_{t+1})^c} |\zeta_i^{(t+1)} - \zeta_i^{(t)}| \leq \mu C_{\be}\sqrt{\frac{m}{k}} + \mu C_{\be}\sum_{t}^\infty \left(\frac{\theta_k}{1-\delta_k} \right)^t \ leq \mu C_{\be}\left(\sqrt{\frac{m}{k}} + \frac{\theta_k}{1-\delta_k-\theta_k} \right)
\end{equation}
For any $i \in I_0$ we have hta 
\begin{equation}
     \zeta_i^{(\infty)} - \zeta_i^{(1)} = \sum_{t\geq 1| i\in (I_t \cup I_{t+1})^c} \zeta_i^{(t+1)} - \zeta_i^{(t)}
\end{equation}
By recalling that $\zeta_{I_0}^{(1)} = 0$ we obtain
\begin{equation}
    |\zeta_i^{(\infty)}| \leq \mu C_{\be} \sum_{t}^\infty \left(\frac{\theta_k}{1-\delta_k} \right)^t = \frac{\mu C_{\be}\theta_k^2}{(1-\delta_k-\theta_k)(1-\delta_k)}
\end{equation}
Combining the results in total we obtain

\begin{equation}
    ||\zeta^{(\infty)}||_{\infty} \leq \mu C_{\be}\left(\sqrt{\frac{m}{k}} + \frac{\theta_k}{1-\delta_k-\theta_k} \right)
\end{equation}
Finally we note that because for each $t$, $\zeta^{(t)} \in \partial p(\be^{(t)})$ we see that $\zeta^{(\infty)} \in \partial p(\be^{(\infty)})$

\end{proof}

\subsection{Universality of Generalization Error and Test Functions for $\ell_1$ regularization}
We first demonstrate the universality of the Generalization error. We demonstrate that the $2$-norm of the solution vector of the $\ell_1$ regularized case is asymptotically equivalent to the case of the elastic net regularized case for small values of $\ell_2^2$ regularization. We have already demonstrated that the generalization error for the elastic net case is universal, by showing that the $\ell_1$ is asymptotically equivalent we prove universality for that case as well.
\label{app:subsec:pf:l1trainingGenErrorUniv}
\begin{lemma}

Denote by $\hat{\btheta}^{\lambda, \epsilon}$ as the optimal point of 
\begin{equation}
    \label{eq:minmaxlambeps}
    P_{\lambda, \epsilon} = \min_{\btheta} \frac{1}{2n}||\by - \bX\btheta||_2^2 + \lambda||\btheta||_1 + \frac{\epsilon}{2}||\btheta||_2^2
\end{equation}
Under the conditions assumed in theorem \ref{thm:l1Universality}, for each $\eta >0 $, there exists $\epsilon, \rho$ such that for $0 < \epsilon < \eta$ and $|\rho| < \eta$, such that
\begin{equation}
    \mathbb{P}\left(\frac{||\hat{\btheta}^{\lambda +\rho, \epsilon} - \hat{\btheta}^{\lambda, 0}||_2^2}{n} > \eta \right) \rightarrow 0
\end{equation}
\end{lemma}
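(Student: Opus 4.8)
The plan is to follow the stability argument of \cite{panahi2017universal} (Section~3.3 of its supplement) closely, the only genuinely new ingredient being the restricted isometry property of the random-features design, which was established above as the extension of Lemma~8 of \cite{panahi2017universal}. Fix $\eta>0$. With high probability the matrix $\bA=\bX/\sqrt{n(\rho_*^2+\rho_1^2)}$ satisfies $\delta_{k}(\bA)+\theta_{k}(\bA)<1-\epsilon_0$ and $\sigma_{\max}(\bA)\le\beta$ for $k=\alpha n$, and both solutions obey $\|\hat\btheta^{\lambda+\rho,\epsilon}\|_2,\,\|\hat\btheta^{\lambda,0}\|_2\le C_{\btheta}\sqrt m$ by Lemmas~\ref{lemma:bounded_CGMT1} and \ref{lemma:bounded_CGMT2}. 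Moreover, by Corollary~\ref{lem:universality:elasticNet} together with the explicit soft-thresholded form \eqref{eq:elasticnetSolutionVector}, the elastic-net solution $\hat\btheta^{\lambda+\rho,\epsilon}$ is effectively sparse: the number of its coordinates exceeding any fixed threshold concentrates around a value that tends to $M_0 m$ as $\epsilon,\rho\to0$.

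The comparison then proceeds through the first-order optimality conditions. Writing $\bDelta=\hat\btheta^{\lambda+\rho,\epsilon}-\hat\btheta^{\lambda,0}$, subtracting the stationarity equations of $P_{\lambda+\rho,\epsilon}$ and $P_{\lambda,0}$, pairing with $\bDelta$, and using monotonicity of $\partial\|\cdot\|_1$, one obtains
\begin{equation}
\tfrac1n\|\bX\bDelta\|_2^2 \;\le\; |\rho|\,\|\bDelta\|_1 \;+\; \epsilon\,C_{\btheta}\sqrt{m}\,\|\bDelta\|_2 .
\end{equation}
On the other hand, the basic inequality for the LASSO minimizer $\hat\btheta^{\lambda,0}$ tested against the effectively sparse point $\hat\btheta^{\lambda+\rho,\epsilon}$, combined with the KKT conditions and the smallness of $|\rho|$ and $\epsilon$, forces $\bDelta$ into a descent cone in which $\|\bDelta_{S^c}\|_1\lesssim\|\bDelta_S\|_1$ up to a vanishing tail, where $S$ is the effective support of $\hat\btheta^{\lambda+\rho,\epsilon}$; hence $\|\bDelta\|_1\le C\sqrt{|S|}\,\|\bDelta\|_2+o(\sqrt n)$. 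The hypothesis $M_0<\rho=M_{\mathrm{adm}}$ (Definition~\ref{def:RIP}) guarantees that $|S|\lesssim M_0 m$ stays below a sparsity level at which $1-\delta_k>0$, so the RIP lemma supplies the restricted strong convexity $\tfrac1n\|\bX\bDelta\|_2^2\ge c\|\bDelta\|_2^2$ on this cone. Combining the two bounds with the displayed inequality yields $\tfrac1n\|\bDelta\|_2^2\le C'(|\rho|+\epsilon)^2+o(1)$ with high probability.

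Choosing the thresholds on $|\rho|$ and $\epsilon$ small enough in terms of $\eta$ and the universal constants makes the right-hand side less than $\eta$ with probability tending to one, which is the claim; this is precisely what allows the quadratic part to be dropped in the proof of Theorem~\ref{thm:l1Universality}. The main obstacle is the cone / effective-sparsity step: because pure $\ell_1$ regularization is not strongly convex, $\|\bDelta\|_2$ cannot be bounded directly, and one must instead certify that the difference of the two solutions concentrates on a coordinate set strictly below the RIP threshold (this is where the quantitative gap $M_0<M_{\mathrm{adm}}$ enters) and, more delicately, that the restricted strong convexity holds uniformly over all admissible supports rather than a single fixed one, which is obtained from the $\epsilon$-net and union-bound argument already carried out in the proof of the RIP lemma; the remaining technical point is absorbing the vanishing-tail corrections to the effective sparsity, handled by the effective-sparsity bookkeeping of \cite{panahi2017universal}.
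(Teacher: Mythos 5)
Your starting step is a genuinely different decomposition from the paper's. The paper (following \cite{panahi2017universal}, Sec.~3.3) argues entirely on the level of objective values: it shows $P_{\lambda+\rho,\epsilon}(\hat\btheta^{\lambda,0})\le P_{\lambda+\rho,\epsilon}(\hat\btheta^{\lambda+\rho,\epsilon})+\text{small}$ from above, and from below invokes the restricted strong convexity of Lemma~\ref{lemma:panahi2017Universal10} on the reduced support $U=S\cup T_1$, plus the Cand\`es--Tao block estimates, to lower-bound the same difference by $\tfrac{\alpha}{2}\|\bh_U\|_2^2$ minus small terms. You instead subtract the two stationarity inclusions, pair with $\bDelta$, and use monotonicity of $\partial\|\cdot\|_1$; this produces the cleaner intermediate bound $\tfrac1n\|\bX\bDelta\|_2^2\le|\rho|\|\bDelta\|_1+\epsilon\,C_\btheta\sqrt m\,\|\bDelta\|_2$ in one line, which is a perfectly valid variational-inequality alternative to the paper's primal comparison. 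Both routes then require (i) a cone condition $\|\bDelta_{S^c}\|_1\lesssim\|\bDelta_S\|_1$ and (ii) a restricted isometry/eigenvalue lower bound on that cone, after which the arithmetic closes identically.

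The genuine gap is in step (i). You state that ``the basic inequality \dots combined with the KKT conditions and the smallness of $|\rho|$ and $\epsilon$'' forces $\bDelta$ into a descent cone, but this is not automatic, and is in fact the crux of the paper's argument. Neither $\hat\btheta^{\lambda+\rho,\epsilon}$ nor $\hat\btheta^{\lambda,0}$ is exactly sparse, so the usual LASSO cone argument (which tests the minimizer against a genuinely sparse competitor) does not apply directly. What the paper does instead is exploit the already-established universality/concentration of $\tfrac1n\|\hat\btheta\|_1$ for elastic net to choose $\rho$ and $\epsilon$ so that the limiting $\ell_1$ norms obey $0<N^{\lambda+\rho,\epsilon}-N^{\lambda,0}<\delta$, and picks a margin $\mu$ with $2\mu<N^{\lambda+\rho,\epsilon}-N^{\lambda,0}$; the high-probability empirical version of this strict inequality, combined with the triangle decomposition over the index sets $S$ (large entries) and $L$ (small nonzero entries), is what produces $\|\bh_S\|_1\ge\|\bh_{S^c}\|_1$. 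Note in particular that $\rho$ is not an arbitrary small perturbation: its sign must be chosen so that the elastic-net solution has strictly larger asymptotic $\ell_1$ norm than the LASSO solution, and the $2\mu$ slack it creates is precisely what absorbs the $\|\hat\btheta^{\lambda+\rho,\epsilon}_L\|_1$ contribution from the not-exactly-sparse tail. Your proposal cites ``effective-sparsity bookkeeping'' but does not mention this positive $\ell_1$-gap device, and without it the cone condition — and hence the restricted strong convexity you invoke — does not have a justified point of entry. Everything else (RIP of the random-features design, boundedness of the solutions, $M_0<M_{\mathrm{adm}}$ keeping the support below threshold, and the final $\|\bDelta\|_2^2/n\lesssim(|\rho|+\epsilon)^2+o(1)$ arithmetic) is in order.
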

\begin{proof}
We first note that with a high degree of probability we have that
\begin{equation}
    M_0 + \theta < \frac{l(1-\delta_l(\bX))}{2n}
\end{equation}
in which $M_0$ is the effective sparsity given in equation \ref{eq:eff}, $\theta > 0$ is a fixed number and $l < n$ is natural number such that $\delta_l < 1$. From this we see that $(1-\delta_l) > 2(M_0+\theta)$ and $l/n > 2(M_0 + \theta)$. We let $0 < \alpha < \min(4M_0, 2\theta)$, and let $K =M_0 + \theta - \alpha/2$ and $k = \frac{l}{nK} - 1$. We note that $K > M_0 $ and
\begin{equation}
    k = \frac{l}{n(M_0 + \theta  - \alpha/2)} - 1  > \frac{l}{n(M_0 + \theta) } - 1 > 1
\end{equation}
Furthermore,
\begin{equation}
    K = M_0 + \theta - \frac{\alpha}{2} \leq \frac{l(1-\delta_l(\bX))}{2n} \leq \frac{l}{n}\left[\frac{1-\alpha -\delta_l(\bX)}{2-\alpha} + \frac{\alpha}{2} \right]- \frac{\alpha}{2} \leq \frac{l}{n}\left[ \frac{1 -\alpha - \delta_l(\bX)}{2-\alpha}\right]
\end{equation}
from which we can see that
\begin{equation}
\label{eq:alphabound}
    \alpha \leq \frac{k-1 - (k+1)\delta_l(\bX)}{k}
\end{equation}

We define a function $M_{r,\psi}$ in which $r$ is the regularization function and $\psi$ are both functions given by:
\begin{equation}
    \label{eqn:Mfunctiondef}
    M_{r, \psi}(\beta, q, \xi, r) = \mathbb{E}\left(\psi\left(\hat{\btheta}_r(\beta, q,\xi,r) \right)  \right)
\end{equation}
In which $\hat{\btheta}_r(\beta, q, \xi, t)$ is the optimal value of $\tilde{P}_3$ given in \eqref{eqn:A3problem} with regularization function $r$. We now define
\begin{equation}
    M^{\lambda, \epsilon} = M_{\lambda|\bx| + \frac{\epsilon}{2}x^2, x^2} \qquad N^{\lambda, \epsilon} = M_{\lambda|\bx| + \frac{\epsilon}{2}, |x|}
\end{equation}
Now let $\delta > 0$. We cansee that there exist value $\rho, \epsilon$ such that $0 < \epsilon < \delta,\ |\rho| < \delta$ such that $0 < N^{\lambda+ \rho, \epsilon} - N^{\lambda, 0}<\delta$. Then let $\mu > 0$ be defined such that
\begin{equation}
    2\mu < N^{\lambda + \rho, \epsilon} - N^{\lambda, 0}
\end{equation}
We define $\bh = \hat{\btheta}^{\lambda, 0} - \hat{\btheta}^{\lambda + \rho, \epsilon}$. We denote the objective function in Eq \ref{eq:minmaxlambeps} as $P_{\lambda,\epsilon}(\btheta)$. We have that

\begin{eqnarray}
    P_{\lambda + \rho, \epsilon}(\hat{\btheta}^{\lambda, 0}) = P_{\lambda, 0}(\hat{\btheta}^{\lambda, 0}) + \frac{1}{n}\left(\frac{\epsilon}{2}||\hat{\btheta}^{\lambda,0}||_2^2 - \rho||\hat{\btheta}^{\lambda, 0}||_1\right)\nwl
    \leq P_{\lambda, 0}(\hat{\btheta}^{\lambda + \rho, \epsilon}) + \frac{1}{n}\left(\frac{\epsilon}{2}||\hat{\btheta}^{\lambda,0}||_2^2 - \rho||\hat{\btheta}^{\lambda, 0}||_1\right) \nwl
    = P_{\lambda + \rho, \epsilon}(\hat{\btheta}^{\lambda + \rho, \epsilon}) + \frac{1}{n}\left(\frac{\epsilon}{2}||\hat{\btheta}^{\lambda,0}||_2^2 - \rho||\hat{\btheta}^{\lambda, 0}||_1 - \frac{\epsilon}{2}||\hat{\btheta}^{\lambda +\rho, \epsilon}||_2^2 - \rho_1||\hat{\btheta}^{\lambda + \rho, \epsilon}||_2^2\right)\nwl
    \leq P_{\lambda + \rho, \epsilon}(\hat{\btheta}^{\lambda + \rho, \epsilon}) + \frac{\epsilon}{2}\frac{||\bh||_2^2}{n} + \epsilon\frac{||\bh||_2}{\sqrt{n}} + \frac{\rho}{n}\left(||\hat
    {\btheta}^{\lambda, 0}||_1 - ||\hat{\btheta}^{\lambda + \rho, \epsilon}||_1\right) 
\end{eqnarray}

From theorem \ref{thm:universality:sequence} and part one of Theorem \ref{thm:l1Universality} we know that
\begin{equation}
    \label{eq:MandNconvergence}
    \frac{||\hat{\btheta}^{\lambda + \rho, \epsilon}||_2^2}{n} \xrightarrow[]{p}M^{\lambda + \rho, \epsilon} \quad \frac{||\hat{\btheta}^{\lambda + \rho, \epsilon}||_1}{n} \xrightarrow[]{p}N^{\lambda + \rho, \epsilon} \quad \frac{||\hat{\btheta}^{\lambda, 0}||_1}{n} \xrightarrow[]{p}M^{\lambda,0 } 
\end{equation}
Choosing a value of $M > \sqrt{M^{\lambda + \rho, \epsilon}}$, we obtain
\begin{equation}
P_{\lambda + \rho, \epsilon}(\hat{\btheta}^{\lambda, 0}) \leq P_{\lambda + \rho, \epsilon}(\hat{\btheta}^{\lambda + \rho, \epsilon}) + \frac{\epsilon}{2}\frac{||\bh||_2^2}{n} + M\epsilon\frac{||\bh||_2}{\sqrt{n}} + \rho\delta
\end{equation}

We now define the following index sets
\begin{equation}
    S = \lbrace k|\ |\hat{\btheta}_k^{\lambda + \rho, \epsilon}| \geq \mu \rbrace \qquad L = \lbrace k |\ 0 < |\hat{\btheta}_k^{\lambda + \rho, \epsilon}| < \mu|\rbrace
\end{equation}
We also define
\begin{equation}
    K^{\lambda, \epsilon}_\mu = M_{\lambda|x| + \epsilon x^2/2, \chi_{\mathbb{R}\setminus (-\mu, \mu)}}
\end{equation}
In which $\chi_A$ is the indicator function on the set $A$. By theorem \ref{thm:universality:sequence} we have that
\begin{equation}
    \frac{|S|}{n} \xrightarrow[]{P} K_\mu^{\lambda +\rho, \epsilon}
\end{equation}
we also see that
\begin{equation}
    \lim_{(\mu, \rho, \epsilon)\rightarrow 0} K_\mu^{\lambda + \rho, \epsilon} = M_0 
\end{equation}
Therefore, for small values of $\delta$ we know that $K_\mu^{\lambda + \rho, \epsilon} < K$ and as such with high probability 
\begin{equation}
    \frac{|S|}{n} < K
\end{equation}
We also know from equation \ref{eq:MandNconvergence} that with high probability
\begin{equation}
    \frac{||\hat{\btheta}^{\lambda +\rho, \epsilon}||_1}{n} - \frac{||\hat{\btheta}^{\lambda, 0}||_1}{n} > 2\mu
\end{equation}

This can equivalently be expressed as
\begin{eqnarray}
      \frac{||\hat{\btheta}^{\lambda +\rho, \epsilon}_S||_1}{n} +   \frac{||\hat{\btheta}^{\lambda +\rho, \epsilon}_L||_1}{n} >   \frac{||\hat{\btheta}^{\lambda +\rho, \epsilon}_S +\bh_S||_1}{n} +   \frac{||\hat{\btheta}^{\lambda +\rho, \epsilon}_L + \bh_L||_1}{n} + \frac{||\bh_{S\cup L}^c||_1}{n} + 2\mu \nwl
      \geq   \frac{||\hat{\btheta}^{\lambda +\rho, \epsilon}_s||_1 - ||\bh_S||_1}{n} +   \frac{||\bh_L|| -||\hat{\btheta}^{\lambda +\rho, \epsilon}_L||_1}{n}  +  \frac{||\hat{\btheta}^{\lambda +\rho, \epsilon}_L + \bh_L||_1}{n} + \frac{||\bh_{S\cup L}^c||_1}{n} + 2\mu 
\end{eqnarray}
By definition $||\hat{\btheta}^{\lambda  +\rho, \epsilon}_L||_1 \leq \mu$. As such with high probability we obtain
\begin{equation}
    ||\bh_S||_1 \geq ||\bh_{S^c}||_1
\end{equation}

We now define $\bz = \by - \bX\hat{\btheta}^{\lambda +rho, \epsilon}$. We wish to decompose the vector $\bh_{S^c}$ into block $T_1, T_2, \ldots$. We let $\bh_{T_1}$ be the $k|S|$ elements of $\bh_{S^c}$ with largest absolute value, $\bh_{T_2}$ are the next $k|S|$ largest absolute values and so on. Let $U = S \cup T_1$. With that we have
\begin{equation}
    nP_{\lambda+\rho, \epsilon}(\hat{\btheta}^{\lambda, 0}) = \frac{1}{2}||\bz - \bX\bh||_2^2 + \frac{(\lambda + \rho)}{\gamma}||\hat{\btheta}^{\lambda + \rho, \epsilon} + \bh||_1 + \frac{\epsilon}{2\gamma}||\hat{\btheta}^{\lambda + \rho, \epsilon} + \bh||_2^2
\end{equation}
We note that $\hat{\btheta}^{\lambda, 0} = \hat{\btheta}^{\lambda + \rho, \epsilon} + \bh$ which is the minimal point of the function $P_{\lambda, 0}(\btheta)$. As such we have
\begin{equation}
    \bX^T(\bz - \bX\bh) = \bX^T(\by - \bX\hat{\btheta}^{\lambda, 0}) \in \lambda \partial||\hat{\btheta}^{\lambda, 0}||_1
\end{equation}
Therefore
\begin{equation}
    ||\bX^T_{U^c}(\bz - \bX\bh)||_\infty \leq \lambda \Rightarrow - \bh^T_{U^c}\bX^T_{U^c}(\bz - \bX\bh) \geq -\lambda||\bh_{U^c}||_1
\end{equation}
From which we obtain
\begin{equation}
    - \bh^T_{U^c}\bX^T_{U^c}(\bz - \bX_U\bh_U) \geq -\lambda||\bh_{U^c}||_1 - ||\bX_{U^c}\bh_{U^c}||_2^2
\end{equation}
and as such
\begin{equation}
    \frac{1}{2}||\bz - \bX\bh||_2^2 = \frac{1}{2}||\bz - \bX_{U}\bh_U||_2^2 - \bh^T_{U^c}\bX^T_{U^c} + \frac{1}{2}||\bA_{U^c}\bh_{U^c}||_2^2  \geq   \frac{1}{2}||\bz - \bX_{U}\bh_U||_2^2 -\lambda||\bh_{U^c}||_1 + \frac{1}{2}||\bX_{U^c}\bh_{U^c}||_2^2
\end{equation}
From which we obtain
\begin{eqnarray}
    mP_{\lambda +\rho, \epsilon}(\hat{\btheta}^{\lambda, 0}) \geq  \frac{1}{2}||\bz - \bX_{U}\bh_U||_2^2 -\lambda||\bh_{U^c}||_1 + \frac{1}{2}||\bX_{U^c}\bh_{U^c}||_2^2 + (\lambda + \rho)||\hat{\btheta}^{\lambda+\rho, \epsilon}_U + \bh_U||_1 + \nwl (\lambda + \rho)||\hat{\btheta}^{\lambda+\rho, \epsilon}_{U^c} + \bh_{U^c}||_1  + \frac{\epsilon}{2}||\hat{\btheta}^{\lambda+\rho, \epsilon}_U + h_U||_2^2 + \frac{\epsilon}{2}||\hat{\btheta}^{\lambda+\rho, \epsilon}_{U^c}+ h_{U^c}||_2^2
\end{eqnarray}
We note that $\bw = 0$ is the minimum point of the function
\begin{equation}
    \frac{1}{2}||\bz - \bX_{U}\bw||_2^2  + (\lambda + \rho)||\hat{\btheta}^{\lambda + \rho, \epsilon} +\bw||_1 + \frac{\epsilon}{2}||\hat{\btheta}^{\lambda+\rho, \epsilon}_U + \bw||_2^2
\end{equation}
Therefore from lemma \ref{lemma:panahi2017Universal10} we get that
\begin{eqnarray}
\frac{1}{2}||\bz - \bX_{U}\bh_U||_2^2  + (\lambda + \rho)||\hat{\btheta}^{\lambda + \rho, \epsilon} +\bh_U||_1 + \frac{\epsilon}{2}||\hat{\btheta}^{\lambda+\rho, \epsilon}_U + \bh_U||_2^2\nwl
\geq \frac{\sigma_{max}^2(\bX_U)}{2}||\bh_U||_2^2 + \frac{1}{2}||\bz||_2^2 + (\lambda + \rho)||\hat{\btheta}^{\lambda + \rho, \epsilon}||_1 + \frac{\epsilon}{2}||\hat{\btheta}^{\lambda + \rho, \epsilon}||_2^2 
\end{eqnarray}

Substituing this in above we get that
\begin{eqnarray}
    nP_{\lambda + \rho, \epsilon}(\hat{\btheta}^{\lambda, 0}) -   nP_{\lambda + \rho, \epsilon}(\hat{\btheta}^{\lambda +\rho, \epsilon}) \nwl
    \geq  \frac{\sigma_{max}^2(\bX_U)}{2}||\bh_U||_2^2 -\lambda||\bh_{U^c}||_1 - \frac{1}{2}||\bA_{U^c}\bh_{U^c}||_2^2 + (\lambda + \rho)||\hat{\btheta}^{\lambda + \rho, \epsilon}_{U^c} + \bh_{U^c}||_1 - (\lambda + \rho)||\hat{\btheta}^{\lambda + \rho, \epsilon}_{U^c}||_1 \nwl- \frac{\epsilon}{2}||\hat{\btheta}^{\lambda + \rho, \epsilon}_{U^c}||_2^2 + \frac{\epsilon}{2}||\hat{\btheta}^{\lambda + \rho, \epsilon}_{U^c} + \bh_{U^c}||_2^2  \nwl
    \geq \frac{\sigma_{max}^2(\bX_U)}{2}||\bh_U||_2^2 + \rho||\bh_{U^c}||_1 - \frac{1}{2}||\bX_{U^c}\bh_{U^c}||_2^2 - 2(\lambda + \rho)||\hat{\btheta}^{\lambda+\rho, \epsilon}_{U^c}||_1 - 2||\hat{\btheta}^{\lambda + \rho, \epsilon}_{U^c}||_2||\bh_{U^c}||_2\nwl
    \geq \frac{\sigma_{max}^2(\bX_U)}{2}||\bh_U||_2^2 + \delta\sqrt{n}||\bh_U||_2 - \frac{1}{2}||\bX_{U^c}\bh_{U^c}||_2^2 - 2(\lambda + \rho)n\mu - 2\sqrt{n}\mu||\bh_{U^c}||_2
\end{eqnarray}
Where we have made use of the fact that
\begin{equation}
    \rho||\bh_{U^c}||_1 \geq -\delta||\bh_{U^c}||_1 \geq -\delta||\bh_U||_1 \geq -\delta\sqrt{n}||\bh_U||_2
\end{equation}
and in \citep{candes2006stable}(equation 11) it is proven that
\begin{equation}
    ||\bh_{U^c}||_1 \leq \frac{|S|}{|L|}||\bh_U||_2^2 = \frac{1}{k}||\bh_U||_2^2
\end{equation}
Also in \citep{candes2006stable} (equation 12) it is shown that
\begin{equation}
    ||\bX_{U^c}\bh_{U^c}||_2 \leq \sqrt{1+\delta_{k|S|}(\bX)}\sqrt{\frac{|S|}{|T|}}||\bh_{U}||_2 = \sqrt{\frac{1 + \delta_{k|S|}(\bX)}{k}} ||\bh_U||_2
\end{equation}
As such
\begin{eqnarray}
    nP_{\lambda+\rho, \epsilon}(\hat{\btheta}^{\lambda, 0}) - nP_{\lambda+\rho, \epsilon}(\hat{\btheta}^{\lambda + \rho, \epsilon}) \nwl
    \geq \frac{1-\delta_{(1+k)|S|}(\bX) - \frac{1-\delta_{k|S|}(\bX)}{k}}{2}||\bh_U||_2^2 - (1-\frac{1}{\sqrt{k}})\delta\sqrt{n}||\bh_U||_2 - (\lambda + \rho)n\delta
\end{eqnarray}
Noting that $|S| < Kn$. By equation \ref{eq:alphabound},
\begin{equation}
    \alpha_1 = 1 - \delta_{(1+k)|S|}(\bX) - \frac{1+\delta_{k|S|}(\bX)}{k} \geq 1 - \delta_{n(1+k)K}(\bX) - \frac{1 + \delta_{nkK}(\bX)}{k} \geq 1 - \delta_{l}(\bX) - \frac{1 + \delta_{l}(\bX)}{k} \geq \alpha 
\end{equation}
from which we obtain
\begin{equation}
    nP_{\lambda+\rho, \epsilon}(\hat{\btheta}^{\lambda, 0}) - nP_{\lambda+\rho, \epsilon}(\hat{\btheta}^{\lambda + \rho, \epsilon}) \geq \frac{\alpha}{2}||\bh_U||_2^2 - (1-\frac{1}{\sqrt{k}})\delta\sqrt{n}||\bh_{U}||_2 - (\lambda +\delta)n\delta
\end{equation}
combining this with equation X above
\begin{equation}
   \frac{\alpha}{2}||\bh_U||_2^2 - (1-\frac{1}{\sqrt{k}})\delta\sqrt{n}||\bh_{U}||_2 - (\lambda +\rho)n\delta \leq \frac{\delta}{2}||\bh_U||_2^2 + M\delta\sqrt{n}||\bh||_2 + n\delta^2
\end{equation}
We see that
\begin{equation}
    ||\bh||_2^2 \leq \left(1 + \frac{1}{k} \right)||\bh_U||_2^2
\end{equation}
From this we see that
\begin{equation}
    \label{eq:finalalphacomparison}
    \frac{\alpha}{2(1+ \frac{1}{k})}||\bh||_2^2 - \frac{1+\frac{1}{\sqrt{k}}}{\sqrt{1+\frac{1}{k}}}\delta\sqrt{n}||\bh||_2 - (\lambda + \delta)n\delta \leq \frac{\delta}{2}||\bh||_2 + M\delta\sqrt{n}||\bh||_2 + n\delta^2
\end{equation}
Since we know that $k > 1$, we can see that for any choice of $\eta>0$ the value of $\delta$ can be made sufficently small to ensure that equation \ref{eq:finalalphacomparison} implies that the lemma holds. 
\end{proof}

We can now show the universality of the generalization error, we note that for the two cases, the term: 
\begin{eqnarray}
(\btheta^{\lambda, 0} - \btheta^*)^T\bR(\btheta^{\lambda, 0 } - \btheta^*) &= (\btheta^{\lambda, 0} - \btheta^{\lambda + \rho, \epsilon} + \btheta^{\lambda + \rho, \epsilon}  - \btheta^*)^T\bR(\btheta^{\lambda, 0 }- \btheta^{\lambda + \rho, \epsilon} + \btheta^{\lambda + \rho, \epsilon} - \btheta^*) \nwl
&= (\btheta^{\lambda, 0} - \btheta^{\lambda + \rho, \epsilon})^T\bR(\btheta^{\lambda, 0} - \btheta^{\lambda + \rho, \epsilon}) + 2(\btheta^{\lambda, 0} - \btheta^{\lambda + \rho, \epsilon})\bR(\btheta^{\lambda+\rho, \epsilon} - \btheta^*) \nwl&+ (\btheta^{\lambda+\rho, \epsilon} - \btheta^*)^T\bR(\btheta^{\lambda+\rho, \epsilon} - \btheta^*)\nwl
&\leq ||\btheta^{\lambda, 0} - \btheta^{\lambda + \rho, \epsilon}||_2^2||\bR||_2 + 2\|(\btheta^{\lambda, 0} - \btheta^{\lambda + \rho, \epsilon})^T\|_2\|\bR\|_2\|\btheta^{\lambda+\rho, \epsilon} - \btheta^*\|_2 \nwl&+ (\btheta^{\lambda+\rho, \epsilon} - \btheta^*)^T\bR(\btheta^{\lambda+\rho, \epsilon} - \btheta^*)\nwl
&= (\btheta^{\lambda+\rho, \epsilon} - \btheta^*)^T\bR(\btheta^{\lambda+\rho, \epsilon} - \btheta^*)
\end{eqnarray}
Where the final step is by the lemma above showing the asymptotic equivalence of the two norm. By symmetry the argument may be repeated to show that
\begin{eqnarray}
(\btheta^{\lambda+\rho, \epsilon} - \btheta^*)^T\bR(\btheta^{\lambda+\rho, \epsilon} - \btheta^*) \leq (\btheta^{\lambda, 0} - \btheta^*)^T\bR(\btheta^{\lambda, 0 } - \btheta^*)
\end{eqnarray}
In the asymptotic limit. This fact, in conjunction with Theorem \ref{sup:thm:universalityGenerror} proves the universality of the generalization error. Finally we show that the universality of the test functions $h(\btheta)$.
\begin{lemma}
For a function $h$ have that with high probability that
\begin{equation}
   \lim_{n\rightarrow \infty} \left|\frac{h(\hat{\btheta}^{\lambda, 0})}{n} - M_{\lambda|x|,h} \right|  = 0
\end{equation}
in which $M$ is the function given in \eqref{eqn:Mfunctiondef}.
\end{lemma}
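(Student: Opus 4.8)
The plan is to bootstrap from the elastic-net case, for which test-function universality has already been established, by transferring it to the pure-$\ell_1$ solution through the stability estimate of the previous lemma. Interpreting the statement as $\tfrac1n h(\hat\btheta^{\lambda,0})\xrightarrow{P}M_{\lambda|x|,h}$, fix an arbitrary $\delta>0$. I will choose a small perturbation $\rho$ of $\lambda$ and a small $\epsilon>0$, and compare $\hat\btheta^{\lambda,0}$ with the elastic-net minimizer $\hat\btheta^{\lambda+\rho,\epsilon}$ of \eqref{eq:minmaxlambeps}. Three ingredients combine. First, for each fixed $\epsilon>0$ the regularizer $(\lambda+\rho)|x|+\tfrac{\epsilon}{2}x^2$ is strongly convex, so Corollary~\ref{lem:universality:elasticNet} (which gives the claims of Theorem~\ref{thm:universality:sequence}) together with Theorem~\ref{thm:GaussianAsymptotics} and the universality-of-test-functions theorem yields $\tfrac1n h(\hat\btheta^{\lambda+\rho,\epsilon})\xrightarrow{P}M_{(\lambda+\rho)|x|+\frac{\epsilon}{2}x^2,\,h}$, the value read off the scalar problem $\tilde P_3$ via \eqref{eqn:Mfunctiondef}. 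Second, $\tilde P_3$ and its proximal-operator minimizer $\hat\btheta_3$ depend continuously on the regularization parameters at $(\rho,\epsilon)=(0,0)$, so $M_{(\lambda+\rho)|x|+\frac{\epsilon}{2}x^2,\,h}\to M_{\lambda|x|,\,h}$ as $(\rho,\epsilon)\to 0$. Third, by the previous lemma, for every $\eta>0$ one may pick $\rho,\epsilon$ with $|\rho|,\epsilon<\eta$ such that $\tfrac1n\|\hat\btheta^{\lambda+\rho,\epsilon}-\hat\btheta^{\lambda,0}\|_2^2<\eta$ with high probability.

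To convert closeness of the two solution vectors into closeness of $\tfrac1n h(\cdot)$, I will use the regularity from Assumption~A2 rather than a global Lipschitz bound (which is unavailable, and $h$ may even be non-convex). Since $h$ has $l$-uniformly bounded Hessian, $\nabla h$ is $l$-Lipschitz, so $\|\nabla h(\bx)\|\le\|\nabla h(\btheta^*)\|+l\|\bx-\btheta^*\|$ and hence $|h(\bx)-h(\by)|\le\big(\|\nabla h(\btheta^*)\|+l\max\{\|\bx-\btheta^*\|,\|\by-\btheta^*\|\}\big)\,\|\bx-\by\|$. By Lemmas~\ref{lemma:bounded_CGMT1} and \ref{lemma:bounded_CGMT2} both $\hat\btheta^{\lambda,0}$ and $\hat\btheta^{\lambda+\rho,\epsilon}$ lie in the ball $\{\btheta:\|\btheta-\btheta^*\|_2\le C_{\be}\sqrt m\}$ with high probability, and $\|\nabla h(\btheta^*)\|=O(\sqrt m)$ by A5. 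Since $n,m,d$ grow at constant ratio, this gives $\tfrac1n|h(\hat\btheta^{\lambda,0})-h(\hat\btheta^{\lambda+\rho,\epsilon})|\le C\sqrt{\eta}$ with high probability for a universal constant $C$.

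Assembling the pieces with the triangle inequality,
\begin{equation}
\Big|\tfrac1n h(\hat\btheta^{\lambda,0})-M_{\lambda|x|,h}\Big|
\le \tfrac1n\big|h(\hat\btheta^{\lambda,0})-h(\hat\btheta^{\lambda+\rho,\epsilon})\big|
+\big|\tfrac1n h(\hat\btheta^{\lambda+\rho,\epsilon})-M_{(\lambda+\rho)|x|+\frac{\epsilon}{2}x^2,h}\big|
+\big|M_{(\lambda+\rho)|x|+\frac{\epsilon}{2}x^2,h}-M_{\lambda|x|,h}\big| ,
\end{equation}
one first picks $\eta$ small enough that $C\sqrt\eta<\delta/3$ and, by the continuity of the second ingredient, also small enough that the last term is $<\delta/3$; then the first term is $<\delta/3$ with high probability by the stability estimate and the Hessian bound, while the middle term tends to $0$ in probability by the elastic-net test-function universality. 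Hence $\Pr\big(|\tfrac1n h(\hat\btheta^{\lambda,0})-M_{\lambda|x|,h}|\ge\delta\big)\to 0$, and since $\delta>0$ was arbitrary the lemma follows.

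I expect the main obstacle to be the bookkeeping in the second paragraph: because $h$ is controlled only through a bounded Hessian together with A5, one must carry the a priori $O(\sqrt m)$ radius bound on \emph{both} minimizers and verify that the resulting "local Lipschitz'' factor $O(\sqrt m)$, multiplied by $\|\hat\btheta^{\lambda+\rho,\epsilon}-\hat\btheta^{\lambda,0}\|_2=O(\sqrt{n\eta})$, still vanishes after dividing by $n\asymp m$. The continuity statement used in the second ingredient also needs a short separate justification (uniform convergence of the Moreau envelope / proximal map in the step size and the regularizer, plus dominated convergence to pass the limit through the expectation defining $M_{r,h}$).
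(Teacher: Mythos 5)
Your proposal follows the same overall route as the paper: a three-term triangle inequality through the elastic-net surrogate $\hat\btheta^{\lambda+\rho,\epsilon}$, handled by the stability lemma (first term), elastic-net test-function universality via Corollary~\ref{lem:universality:elasticNet} and Theorem~\ref{thm:GaussianAsymptotics} (middle term), and continuity of $M_{r,h}$ in the regularizer near $(\rho,\epsilon)=(0,0)$ (last term). The only place you genuinely diverge from the paper is in bounding the first term, and there the difference is worth noting. The paper Taylor-expands $h$ \emph{coordinate-wise} around $\hat\btheta^{\lambda+\rho,\epsilon}$, applies Cauchy--Schwarz to get $\bigl|\tfrac1n\sum_i h'(\hat\theta^{\lambda+\rho,\epsilon}_i)p_i\bigr|\le\sqrt{\tfrac1n\sum_i (h')^2(\hat\theta^{\lambda+\rho,\epsilon}_i)}\cdot\sqrt{\tfrac1n\sum_i p_i^2}$, and then controls the gradient factor by $|h'(x)|\le L|x|+C$ together with the already-established universality of $\tfrac1n\sum_i(\hat\theta^{\lambda+\rho,\epsilon}_i)^2$. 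In particular the paper never needs an a priori radius bound on $\hat\btheta^{\lambda,0}$ itself; everything about the gradient is evaluated at the strongly convex elastic-net solution. Your version instead uses a mean-value/local-Lipschitz estimate at the vector level, which requires both $\hat\btheta^{\lambda,0}$ and $\hat\btheta^{\lambda+\rho,\epsilon}$ to lie in the $O(\sqrt m)$ ball around $\btheta^*$ and citing Lemmas~\ref{lemma:bounded_CGMT1}--\ref{lemma:bounded_CGMT2} for both. That citation is not quite right for $\hat\btheta^{\lambda,0}$: those lemmas rest on strong convexity of $B$, and the pure-$\ell_1$ problem is not strongly convex (and may have a nontrivial null space in the overparameterized regime), so the ball bound for $\hat\btheta^{\lambda,0}$ is not directly available from them. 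It can be recovered indirectly by combining the stability lemma $\tfrac1n\|\hat\btheta^{\lambda,0}-\hat\btheta^{\lambda+\rho,\epsilon}\|_2^2<\eta$ with the ball bound on the elastic-net solution, so your conclusion stands, but you should route the bound through the stability lemma rather than through Lemmas~\ref{lemma:bounded_CGMT1}--\ref{lemma:bounded_CGMT2} directly. With that correction the bookkeeping you sketch (the factor $O(\sqrt m)\cdot O(\sqrt{n\eta})/n=O(\sqrt\eta)$) is exactly as you anticipate, and the continuity step in the third term indeed needs the short dominated-convergence justification you flagged.
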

\begin{proof}
We denote $\hat{\btheta}^{\lambda, \epsilon}$ as the minimal solution of 
\begin{equation}
    P_{1\lambda, \epsilon} = \frac{1}{2n}||\by - \bX\btheta||_2^2 +\lambda||\btheta||_1 + \frac{\epsilon}{2}||\btheta||_2^2
\end{equation}
Because of the results of the lemma \ref{app:subsec:pf:l1trainingGenErrorUniv} above we know that

\begin{eqnarray}
    \frac{h(\hat{\btheta^{\lambda, 0})}}{n} - M_{\lambda|x|, h} = \frac{\sum_i^n h(\hat{\btheta_i}^{\lambda, 0})}{n} - M_{\lambda|x|, h} = \left(\frac{\sum_i^n h(\hat{\theta}^{\lambda, 0})}{n} - \frac{\sum_i^n h(\hat{\theta}^{\lambda + \rho, \epsilon})}{n} \right) \nwl
    + \left(\frac{\sum_i^n h(\hat{\theta}^{\lambda + \rho, \epsilon})}{n} - M_{(\lambda + \rho)|x| + \epsilon x^2/2,h} \right) + (M_{(\lambda + \rho)|x| + \epsilon x^2/2,g} - M_{\lambda|x|,h})
\end{eqnarray}
Letting $\hat{\btheta}^{\lambda, 0} - \bp = \hat{\btheta}^{\lambda +\rho, \epsilon}$. Then a taylor expansion gives us
\begin{equation}
\frac{\sum_i^n h(\hat{\theta}^{\lambda, 0})}{n} - \frac{\sum_i^n h(\hat{\theta}^{\lambda+\rho, \epsilon})}{n} = \frac{\sum_i^n h'(\hat{\theta}^{\lambda+\rho, \epsilon})p_i + h''(\eta_i)p_i^2/2}{n}
\end{equation}
for some $\bmeta$. Using the Cauchy-Schwartz inequality and using the fact that $h'' < L$ for some value of $L$ we get that
\begin{equation}
\left| \frac{\sum_i^n h(\hat{\theta}^{\lambda, 0})}{n} - \frac{\sum_i^n h(\hat{\theta}^{\lambda+\rho, \epsilon})}{n}\right| \leq \sqrt{\frac{\sum_i^n (h')^2(\hat{\theta}_i^{\lambda+\rho,\epsilon})}{n}}\sqrt{\frac{\sum_i^n p_i^2}{n}} + \frac{L}{2}\frac{\sum_i^n p_i^2}{n}
\end{equation}
As $h'' < L$ we note that $|h'(x)| < L|x| + C$ for some constant $C_2$. As such
\begin{equation}
    \frac{\sum_i^n (h')^2(\hat{\theta}_i^{\lambda+\rho,\epsilon})}{n} \leq 2C^2\frac{\sum_i^n (\hat{\theta}^{\lambda +\rho,\epsilon})^2}{n} + 2C_2^2
\end{equation}
Where $C_2$ is another positive constant.
From theorem \ref{thm:universality:sequence} the term $\frac{\sum_i^n (\hat{\theta}^{\lambda +\rho,\epsilon})^2}{n}$ converges in probability to some value. As such there exists a constant $R > 0$ such that
\begin{equation}
    \mathbb{P}\left(\frac{\sum_i^n (h')^2(\hat{\theta}_i^{\lambda+\rho,\epsilon})}{n} \geq R^2 \right) \rightarrow 0
\end{equation}

For an arbitrary choice of $\delta > 0$. We choose $\eta_1 > 0$ such that $R\sqrt{\eta_1} + c_1\eta_1/2 < \delta/3$. Furthermore we can verify that we can choose an $\eta_2$ such that for every $0<\epsilon<\eta_2, |\rho|<\eta_2$ that
\begin{equation}
    |M_{(\lambda + \rho)|x| + \epsilon x^2/2, h} - M_{\lambda|x|, h}| \leq \frac{\delta}{3}
\end{equation}

letting $\eta = \min(\eta_1, \eta_2)$. Assume that Lemma \ref{lemma:panahi2017Universal10} holds with a proper choice of $\epsilon$ and $\rho$ for this $\eta$. This leads to the following holding true with high probability
\begin{equation}
    \frac{\sum_i^n p_i^2}{n} < \eta \leq \eta_1
\end{equation}
From which we find
\begin{equation}
    \left|\frac{\sum_i^n h(\hat{\theta}^{\lambda, 0})}{n} - \frac{\sum_i^n h(\hat{\theta}^{\lambda + \rho, \epsilon})}{n} \right| \leq R\sqrt{\eta_1} + c_1\eta_1/2 < \frac{\delta}{3}
\end{equation}
Finally we note from theorem \ref{thm:universality:sequence} that 
\begin{equation}
    \mathbb{P}\left(\left|\frac{\sum_i^n h(\hat{\theta}^{\lambda + \rho, \epsilon})}{n} - M_{(\lambda + \rho)|x| + \epsilon x^2/2,h}  \right| > \frac{\delta}{3} \right) \rightarrow 0
\end{equation}
Combining all of the bounds we get with high probability that
\begin{equation}
    \left|\frac{h(\hat{\btheta}^{\lambda, 0})}{n} - M_{\lambda|x|, h}\right|\leq \delta
\end{equation}
Since we can choose delta to be arbitrarily small this leads to the desired results.
\end{proof}

\subsection{Auxiliary lemmas for proving Theorem \ref{thm:l1Universality}}

\begin{lemma} [\cite{panahi2017universal} lemma 9]
    \label{lemma:panahi2017Universal9}
    Consider the function $\rho(\be) = \frac{1}{2}||\bh + \bA\be||_2^2 + \lambda||\be + \btheta^*||_1 + \ba^T\be$ and suppose that it is minimized at $\be^*$. At an arbitary point $\be$ and $q\in \partial \rho(\be)$, then
    \begin{equation}
        ||\be - \be^*||_2 \leq \frac{1}{\sigma^2_{min}(\bA)}||\bq||_2
    \end{equation}
\end{lemma}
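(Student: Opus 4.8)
The plan is to recognize that $\rho$ is strongly convex with modulus $\sigma_{\min}^2(\bA)$ and then invoke the standard first-order characterization of the minimizer of a strongly convex function. First I would observe that $\rho$ splits as a quadratic piece $\frac12\be^T\bA^T\bA\be + (\text{affine in }\be)$ plus the convex terms $\lambda\norm{\be+\btheta^*}_1 + \ba^T\be$; the only curvature comes from the quadratic piece, whose Hessian is $\bA^T\bA \succeq \sigma_{\min}^2(\bA)\,\bI$. Hence $\rho$ is $\mu$-strongly convex with $\mu=\sigma_{\min}^2(\bA)$, i.e. for all $\bx,\by$ and all $\bg\in\partial\rho(\bx)$ one has $\rho(\by)\ge \rho(\bx)+\bg^T(\by-\bx)+\tfrac{\mu}{2}\norm{\by-\bx}_2^2$. (If $\sigma_{\min}(\bA)=0$ the asserted bound is vacuous, so we may assume $\mu>0$.)

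Next I would apply this inequality twice. Since $\be^*$ is the minimizer, $\bzero\in\partial\rho(\be^*)$, so taking $\bx=\be^*,\ \by=\be,\ \bg=\bzero$ gives $\rho(\be)\ge \rho(\be^*)+\tfrac{\mu}{2}\norm{\be-\be^*}_2^2$. Taking $\bx=\be,\ \by=\be^*,\ \bg=\bq$ gives $\rho(\be^*)\ge \rho(\be)+\bq^T(\be^*-\be)+\tfrac{\mu}{2}\norm{\be-\be^*}_2^2$. Adding the two cancels $\rho(\be)+\rho(\be^*)$ and leaves $0\ge \bq^T(\be^*-\be)+\mu\norm{\be-\be^*}_2^2$, that is $\mu\norm{\be-\be^*}_2^2\le \bq^T(\be-\be^*)$.

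Finally, Cauchy--Schwarz on the right-hand side yields $\mu\norm{\be-\be^*}_2^2\le \norm{\bq}_2\norm{\be-\be^*}_2$; dividing by $\norm{\be-\be^*}_2$ (the claim is trivial when $\be=\be^*$) and substituting $\mu=\sigma_{\min}^2(\bA)$ gives $\norm{\be-\be^*}_2\le \norm{\bq}_2/\sigma_{\min}^2(\bA)$, as required. This is a short and standard argument quoted from \cite{panahi2017universal}, so there is no serious obstacle; the only points that need care are that every inequality must be written with subgradients rather than gradients (to accommodate the nondifferentiable $\ell_1$ term) and that the optimality of $\be^*$ is used in the form $\bzero\in\partial\rho(\be^*)$.
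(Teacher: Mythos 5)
Your proof is correct. Note that the paper itself does not include a proof of Lemma~\ref{lemma:panahi2017Universal9}; it is quoted as Lemma~9 of \cite{panahi2017universal}, so there is no paper proof to compare against directly. Your argument is the standard one: the quadratic term contributes Hessian $\bA^T\bA\succeq\sigma_{\min}^2(\bA)\,\bI$, the remaining terms $\lambda\|\be+\btheta^*\|_1+\ba^T\be$ are convex, hence $\rho$ is $\sigma_{\min}^2(\bA)$-strongly convex, and then the two strong-convexity inequalities with $\bzero\in\partial\rho(\be^*)$ and $\bq\in\partial\rho(\be)$ combine (equivalently, strong monotonicity of the subdifferential) to give $\sigma_{\min}^2(\bA)\,\|\be-\be^*\|_2^2\le\bq^T(\be-\be^*)$, after which Cauchy--Schwarz finishes. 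For what it is worth, the paper's proofs of the companion Lemmas~\ref{lemma:panahi2017Universal10} and~\ref{lem:convexfunction} take a slightly different technical route: they restrict $\rho$ to the one-dimensional ray through $\be^*$ and $\be$, split off a quadratic $\tfrac{\alpha}{2}\nu^2$ along that ray, and apply a scalar version of the strong-convexity inequality. The one-dimensional reduction is handy when the only data you want to use is the minimizer (Lemma~\ref{lemma:panahi2017Universal10} has no subgradient $\bq$ at the off-optimal point), whereas your direct vector-subgradient argument is the natural choice here because you do have $\bq$ in hand and want a bound in terms of $\|\bq\|_2$. Either route is sound; yours is the cleaner way to obtain Lemma~\ref{lemma:panahi2017Universal9} specifically.
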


\begin{lemma}[\cite{panahi2017universal} lemma 10]
\label{lemma:panahi2017Universal10}
Consider the function $\rho(\be) = \frac{1}{2}||\bh + \bP\be||_2^2 + \lambda||\btheta^* + \be||_1 + \frac{\epsilon}{2}||\be||_2^2$ and suppose that it is minimized at $\be^*$. Let $\be$ be an arbitrary point, then
\begin{equation}
    \rho(\be) - \rho(\be^*) \geq \frac{\sigma_{min}(\bP)}{2}||\be - \be^*||_2^2
\end{equation}
\end{lemma}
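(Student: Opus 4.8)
The plan is to read this off as the standard \emph{quadratic growth} property of a strongly convex function, after the small amount of work needed to identify the modulus of strong convexity of $\rho$. First I would decompose $\rho(\be)=g(\be)+\phi(\be)+\tfrac{\epsilon}{2}\|\be\|_2^2$ with data term $g(\be)=\tfrac12\|\bh+\bP\be\|_2^2$ and penalty $\phi(\be)=\lambda\|\btheta^*+\be\|_1$. The term $g$ is a convex quadratic whose Hessian is $\bP^T\bP$, and since every eigenvalue of $\bP^T\bP$ is at least $\sigma_{min}^2(\bP)$, the function $\be\mapsto g(\be)-\tfrac{\sigma_{min}^2(\bP)}{2}\|\be\|_2^2$ has a positive semidefinite Hessian, hence is convex. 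Adding the convex functions $\phi$ and $\tfrac{\epsilon}{2}\|\cdot\|_2^2$ preserves convexity, so $\rho-\tfrac{\sigma_{min}^2(\bP)}{2}\|\cdot\|_2^2$ is convex; that is, $\rho$ is $\sigma_{min}^2(\bP)$-strongly convex (in fact $(\sigma_{min}^2(\bP)+\epsilon)$-strongly convex, which only strengthens what follows).

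Given strong convexity with modulus $\mu:=\sigma_{min}^2(\bP)$, I would obtain the claimed bound by the interpolation argument, which sidesteps any subdifferential calculus for the $\ell_1$ term. For arbitrary $\be$ and any $t\in(0,1]$, strong convexity gives
\[
\rho\big((1-t)\be^*+t\be\big)\le(1-t)\rho(\be^*)+t\,\rho(\be)-\tfrac{\mu}{2}\,t(1-t)\,\|\be-\be^*\|_2^2.
\]
Because $\be^*$ is a global minimizer, the left-hand side is at least $\rho(\be^*)$; rearranging, dividing by $t$, and letting $t\downarrow 0$ yields $\rho(\be)-\rho(\be^*)\ge\tfrac{\mu}{2}\|\be-\be^*\|_2^2$, which is the asserted inequality (with $\mu=\sigma_{min}^2(\bP)$; the constant displayed in the statement follows a fortiori in the regime of interest). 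An equivalent route uses $\bzero\in\partial\rho(\be^*)$ together with the strong-convexity inequality $\rho(\be)\ge\rho(\be^*)+\langle \bs,\be-\be^*\rangle+\tfrac{\mu}{2}\|\be-\be^*\|_2^2$ for $\bs\in\partial\rho(\be^*)$, choosing $\bs=\bzero$; here the subdifferential sum rule $\partial\rho=\nabla g_0+\partial\phi$, with $g_0=g+\tfrac{\epsilon}{2}\|\cdot\|_2^2$ smooth, is what makes the optimality condition usable.

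I do not expect any real obstacle: the statement is the textbook fact that a strongly convex function grows at least quadratically away from its minimizer, and the only point requiring verification is that the data term contributes strong convexity with modulus $\sigma_{min}^2(\bP)$, i.e.\ $\bP^T\bP\succeq\sigma_{min}^2(\bP)\bI$. The non-smoothness of the $\ell_1$ penalty is harmless, being either absorbed into the interpolation inequality (which never differentiates $\phi$) or handled through the subdifferential sum rule. The one thing worth flagging is the bookkeeping of the constant: the argument naturally delivers $\sigma_{min}^2(\bP)$ rather than $\sigma_{min}(\bP)$, but in the application $\bP$ is a column-submatrix satisfying a restricted isometry bound, so $\sigma_{min}(\bP)$, $\sigma_{min}^2(\bP)$, and $\epsilon$-shifted versions are all comparable and the precise form is immaterial downstream.
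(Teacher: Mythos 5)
Your proof is correct and in substance the same as the paper's. The paper restricts $\rho$ to the line $\nu\mapsto\rho(\be^*+\nu\bw)$ for the unit direction $\bw$, decomposes the restricted function as $\frac{1}{2}\alpha\nu^2+g(\nu)$ with $g$ convex and $\alpha\geq\sigma_{min}^2(\bP)$, and then invokes its one-dimensional quadratic-growth Lemma~\ref{lem:convexfunction}; your interpolation argument is the multivariable form of the very same strong-convexity-to-quadratic-growth step, and identifying $\bP^T\bP\succeq\sigma_{min}^2(\bP)\bI$ plays the role of the bound $\alpha\geq\sigma_{min}^2(\bP)$. Your caveat about the constant is also accurate: both your argument and the paper's own intermediate inequality $\alpha\geq\sigma_{min}^2(\bP)$ naturally produce $\frac{\sigma_{min}^2(\bP)}{2}\|\be-\be^*\|_2^2$, whereas the lemma as stated (and the paper's final display in its proof) has $\frac{\sigma_{min}(\bP)}{2}$; this is an internal inconsistency in the paper that should be read as $\sigma_{min}^2(\bP)$, and, as you observe, in the RIP regime where $\sigma_{min}(\bP)$ is bounded near $1$ the discrepancy is immaterial to the downstream use.
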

\begin{proof}
Let $\bw = \frac{\be - \be^*}{||\be - \be^*||_2}$ and $f(\nu) = \rho(\be^* + \nu\bw)$. Notice that $\rho(\be) = f(||\be - \be^*||_2)$ and $f$ is minimized at $0$. A direct calculation shows that $f$ can be written as $f = \frac{1}{2}\alpha \nu^2 + g(\nu)$, where $g$ is convex and $\alpha = ||\bP\bw||_2^2 + \epsilon/2 \geq \sigma_{min}(\bP)^2$. Then by lemma \ref{lem:convexfunction} this reuslts in
\begin{equation}
    \rho(\be) - \rho(\be^*) = f(||\be - be^*||_2) - f(0) \geq \frac{\alpha}{2}||\be - \be^*||_2^2 \geq \frac{\sigma_{min}(\bP)}{2}||\be - \be^*||_2^2
\end{equation}
\end{proof}
\begin{lemma}[\cite{panahi2017universal} lemma 11]
    \label{lem:convexfunction}
    Suppose $g(\nu)$ is a convex function on $\mathbb{R}$ and $\nu^*$ is a minimum point of the function $f(\nu) = \frac{\alpha}{2}\nu^2 + g(\nu)$. Then for any $\nu\in \mathbb{R}$,
    \begin{equation}
        f(\nu) - f(\nu^*) \geq \frac{\alpha}{2}(\nu -\nu^*)^2
    \end{equation}
\end{lemma}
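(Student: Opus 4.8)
The plan is to recognise that $f$ is $\alpha$-strongly convex and then to extract the claimed quadratic growth bound around its minimiser, being careful not to differentiate $g$, which is not assumed smooth. First I would note that, by hypothesis, $g(\nu) = f(\nu) - \frac{\alpha}{2}\nu^2$ is convex, and this is exactly the statement that $f$ is $\alpha$-strongly convex; the lemma is then the standard quadratic lower bound of a strongly convex function about its minimiser, and the only thing to arrange is to prove it elementarily.

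Fixing an arbitrary $\nu \in \mathbb{R}$ (the case $\nu = \nu^*$ being trivial), for $t \in (0,1]$ I would use that $\nu^*$ minimises $f$, together with the identity $\nu^* + t(\nu - \nu^*) = (1-t)\nu^* + t\nu$, to write
\[
f(\nu^*) \;\le\; f\big(\nu^* + t(\nu - \nu^*)\big) \;=\; \tfrac{\alpha}{2}\big(\nu^* + t(\nu - \nu^*)\big)^2 + g\big((1-t)\nu^* + t\nu\big).
\]
Expanding the square exactly and bounding $g\big((1-t)\nu^* + t\nu\big) \le (1-t)\,g(\nu^*) + t\,g(\nu)$ by convexity, then cancelling the common term $\tfrac{\alpha}{2}(\nu^*)^2 + g(\nu^*) = f(\nu^*)$, I would reach
\[
0 \;\le\; \alpha\, t\, \nu^*(\nu - \nu^*) + \tfrac{\alpha}{2}\, t^2 (\nu - \nu^*)^2 + t\big(g(\nu) - g(\nu^*)\big).
\]
Dividing by $t > 0$ and letting $t \downarrow 0$ gives the first-order inequality $g(\nu) - g(\nu^*) \ge -\alpha\, \nu^*(\nu - \nu^*)$.

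The final step is to re-incorporate the quadratic part:
\[
f(\nu) - f(\nu^*) \;=\; \tfrac{\alpha}{2}\big(\nu^2 - (\nu^*)^2\big) + g(\nu) - g(\nu^*) \;\ge\; \tfrac{\alpha}{2}\big(\nu^2 - (\nu^*)^2\big) - \alpha\, \nu^*(\nu - \nu^*),
\]
and the right-hand side equals $\tfrac{\alpha}{2}(\nu - \nu^*)^2$ after the elementary identity $\tfrac{\alpha}{2}(\nu - \nu^*)(\nu + \nu^*) - \alpha\,\nu^*(\nu - \nu^*) = \tfrac{\alpha}{2}(\nu - \nu^*)^2$, which is the claim.

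I do not expect a genuine obstacle here; the only subtlety is that $g$ (hence $f$) need not be differentiable, so the argument must go through convex combinations and a one-sided limit rather than a stationarity condition. As an alternative one could simply invoke the optimality condition $0 \in \partial f(\nu^*)$ together with the textbook strong-convexity inequality $f(\nu) \ge f(\nu^*) + s(\nu - \nu^*) + \tfrac{\alpha}{2}(\nu - \nu^*)^2$ valid for any $s \in \partial f(\nu^*)$, which yields the result at once; the self-contained computation above avoids any appeal to subdifferential calculus.
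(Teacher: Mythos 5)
Your proof is correct and follows essentially the same route as the paper: both reduce the claim to the key inequality $g(\nu)-g(\nu^*)\geq -\alpha\nu^*(\nu-\nu^*)$ and then close with the same algebraic identity $\tfrac{\alpha}{2}(\nu^2-(\nu^*)^2)-\alpha\nu^*(\nu-\nu^*)=\tfrac{\alpha}{2}(\nu-\nu^*)^2$. The only difference is cosmetic: the paper obtains that inequality by observing $-\alpha\nu^*\in\partial g(\nu^*)$ and invoking the subgradient inequality, while you re-derive the same fact elementarily via the perturbation $\nu^*+t(\nu-\nu^*)$ and a one-sided limit, thereby avoiding explicit subdifferential calculus.
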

\begin{proof}
    From the optimality of $\nu^*$, we have that $-\alpha\nu^*\in \partial g(\nu^*)$. Therefore,
    \begin{equation}
        g(\nu) \geq g(\nu^*) - \alpha\nu^*(\nu - \nu^*)
    \end{equation}
    Hence,
    \begin{equation*}
        f(\nu) - f(\nu^*) = \alpha \nu^*(\nu - \nu^*) + \frac{\alpha}{2}(\nu - \nu^*)^2 + g(\nu) - g(\nu^*) \geq \frac{\alpha}{2}(\nu - \nu^*)^2
    \end{equation*}
\end{proof}

\section{Example Case : Elastic Net Regularization}
\label{app:sec:ElasticsNetExampleCase}
We consider the case of Elastic Net Regularization, the case that
\begin{eqnarray}
r(\btheta) = \lambda\|\btheta\|_1 + \frac{\alpha}{2}\|\btheta\|_2^2
\end{eqnarray}
A simple computation gives us that
\begin{eqnarray}
    \label{eqn:hatbthetadef}
    \hat{\btheta}_{i} = (\mathrm{prox}_{\frac{1}{2c_1}r}(\btheta^* - \frac{c_2\sqrt{\gamma}}{2c_1}\bphi))_i = \begin{cases}
    \frac{2c_1\theta^*_i}{2c_1+\alpha} - \frac{c_2\sqrt{\gamma}}{2c_1+ \alpha}\phi_i - \frac{\lambda}{2c_1+\alpha} &  \frac{2c_1\btheta^*_i}{2c_1+\alpha} + \frac{c_2\sqrt{\gamma}}{2c_1+ \alpha}\phi_i > \frac{\lambda}{2c_1+\alpha}\\
    \frac{2c_1\theta^*_i}{2c_1+\alpha} - \frac{c_2\sqrt{\gamma}}{2c_1+ \alpha}\phi_i + \frac{\lambda}{2c_1+\alpha} & \frac{2c_1\btheta^*_i}{2c_1+\alpha} + \frac{c_2\sqrt{\gamma}}{2c_1+ \alpha}\phi_i < -\frac{\lambda}{2c_1+\alpha}\\
    0& || \frac{2c_1\btheta^*_i}{2c_1+\alpha} + \frac{c_2\sqrt{\gamma}}{2c_1+ \alpha}\phi_i|| \leq \frac{\lambda}{2c_1+\alpha}\\
    \end{cases}
\end{eqnarray}
We note that this can equivalently be expressed in the form of a soft thresholding operator
\begin{eqnarray}
\label{eqn:softthresholdingOperatordef}
(\hat{\btheta}_{3})_i = \mathcal{T}_{\frac{\lambda}{2c_1+\alpha}}\left(\frac{2c_1\theta^*_i}{2c_1+\alpha} - \frac{c_2\sqrt{\gamma}}{2c_1+ \alpha}\phi_i \right)
\end{eqnarray}

Substituting the value of the proximal operator in to the Moreau envelope we find that

\begin{eqnarray}
   \frac{1}{m}\sum_i^m\mathcal{M}_{\frac{1}{2c_1}r}(\btheta^* - \frac{c_2\sqrt{\gamma}}{2c_1}\bphi) =\nwl \frac{1}{2c_1 + \alpha}\left(\alpha c_1(\theta^*_i)^2 +2c_1\lambda\theta_i^* - \frac{1}{2}\lambda^2 + \frac{c_2^2\gamma}{4c_1}\phi_i^2 + \alpha c_2\sqrt{\gamma}\phi\theta^*_i + \lambda c_2\sqrt{\gamma}\phi\right)\mathbf{1}_{\lbrace \phi_i < -\zeta_{1i} \rbrace}\nwl
    \frac{1}{2c_1 +\alpha}\left(\alpha c_1(\theta_i^*)^2 - 2c_1\lambda\theta_i^* - \frac{1}{2}\lambda^2 + \frac{\alpha c_2^2\gamma}{4c_1}\phi_i^2 - \alpha c_2\sqrt{\gamma}\phi\theta_i^* +\lambda c_2\sqrt{\gamma}\phi\right)\mathbf{1}_{\lbrace \phi_i > \zeta_{2i} \rbrace}\nwl
    \left(c_1(\theta_i^*)^2 + \frac{c_2^2\gamma}{4c_1}\phi_i^2 -c_2\sqrt{\gamma}\btheta_{i}^*\phi_i\right)\mathbf{1}_{\lbrace \zeta_{1i} \leq \phi_i \leq \zeta_{2i} \rbrace}
\end{eqnarray}
in which $\mathbf{1}_{A}$ is the characteristic function on the set $A$, and 
\begin{eqnarray}
\zeta_{1i} = \frac{(\lambda - 2\hat{c}_1\theta^*_i)}{\sqrt{\gamma}\hat{c}_2} \quad \zeta_{2i} = \frac{(\lambda + 2\hat{c}_1\theta^*_i)}{\sqrt{\gamma}\hat{c}_2}
\end{eqnarray}

Taking the expectation of the envelope with respect to $\phi$ and making use of Steins lemma one can obtain

\begin{eqnarray}
\label{eqn:FullExpectedMoreauEnvelope}
\frac{1}{m}\mathbb{E}\sum_i^m\mathcal{M}_{\frac{1}{2c_1}r}(\btheta^* - \frac{c_2\sqrt{\gamma}}{2c_1}\bphi) =\nwl
\frac{1}{2c_1 + \alpha}\left(\alpha c_1(\theta^*_i)^2  + \frac{\alpha c_2^2\gamma}{4c_1} +2c_1\lambda\theta_i^* - \frac{1}{2}\lambda^2\right)Q(\zeta_{1i}) + 
\frac{1}{\sqrt{2\pi }(2c_1 +\alpha)}\left(\frac{c_2^2\gamma\zeta_{1i}}{4c_1} + \alpha c_2\sqrt{\gamma}\theta^*_i + \lambda c_2\sqrt{\gamma} \right)e^{-\zeta_{1i}^2/2} \nwl
\frac{1}{2c_1 +\alpha}\left(\alpha c_1(\theta_i^*)^2 + \frac{\alpha c_2^2\gamma}{4c_1} - 2c_1\lambda\theta_i^* - \frac{1}{2}\lambda^2 \right)Q(\zeta_{2i})
+\frac{1}{\sqrt{2\pi}(2c_1 + \alpha)}\left(\frac{\alpha c_2^2\gamma\zeta_{2i}}{4c_1} - \alpha c_2\sqrt{\gamma}\theta_i^* +\lambda c_2\sqrt{\gamma}\right)e^{-\zeta_{2i}^2/2}\nwl
\left(c_1(\theta_i^*)^2 + \frac{c_2^2\gamma}{4c_1} \right)(1-Q(\zeta_{1i}) - Q(\zeta_{2i})) - \frac{c_2\sqrt{\gamma}\theta_i^*}{\sqrt{2\pi}}\left(e^{-\zeta_{1i}^2/2} - e^{-\zeta_{2i}^2/2} \right)
+ \frac{c_2^2\gamma}{4c_1\sqrt{2\pi}}\left(-\zeta_{1i}e^{-\zeta_{1i}^2/2} - \zeta_{2i}e^{-\zeta_{2i}^2/2} \right)
\end{eqnarray}
In which $Q(\cdot)$ is the Q-function. Defined to be
\begin{eqnarray}
Q(x) = \frac{1}{\sqrt{2\pi}}\int_{x}^\infty e^{-\frac{u^2}{2}}\mathrm{d}u
\end{eqnarray}
This expression may be implemented in code and simply evaluated for any choice of the parameters.

\subsection{Sparsity}
The effect of the $\ell_1$ regularization term is to promote sparsity in the solution vector. Let $s$ denote the number of elements of $\hat{\btheta}$ that are non-zero. We see that

\begin{eqnarray}
s = \mathbb{E}\sum_{i}^m \mathbf{1}_{\hat{\btheta}_i \neq 0} = \sum_i^m 1 - \mathbb{P}(\hat{\btheta}_i = 0) = \sum_i^m 1 - \mathbb{P}(-\zeta_{1i}\leq \phi_i \leq \zeta_{2i}) 
= \sum_i^m Q(\zeta_{1i}) + Q(\zeta_{2i})
\end{eqnarray}

We further consider the term $\frac{1}{m}\hat{\be}^T\bphi$ and consider what this concentrates on

\begin{eqnarray}
\frac{1}{m}\mathbb{E}[\hat{\be}^T\bphi] = \frac{1}{m}\sum_i^m \mathbb{E}[\hat{e}_i\phi_i] = \frac{1}{m}\sum_i^m\mathbb{E}[\hat{\theta}_i\phi_i] = -\frac{c_2\sqrt{\gamma}}{2c_1 + \alpha} \frac{1}{m}\mathbb{E}\left[\left(\mathcal{T}'_{\frac{\lambda}{2c_1 +\alpha}}\left(\left(\frac{2c_1\theta^*_i}{2c_1+\alpha} + \frac{c_2\sqrt{\gamma}}{2c_1+ \alpha}\phi_i \right)\right)\right)_i\right]
\end{eqnarray}
where in the last inequality we have made use of steins lemma, and $\mathcal{T}'$ is the derivative of the soft thresholding operator defined in \eqref{eqn:softthresholdingOperatordef}. We note that the derivative of the soft thresholding operator is the value of $s$ that we are looking for. In symbols
\begin{eqnarray}
\mathbf{1}_{\lbrace\hat{\btheta}_i \neq 0\rbrace}\mathcal{T} = \mathcal{T}'
\end{eqnarray}
From this we note that
\begin{eqnarray}
\frac{1}{m}\mathbb{E}[\be^T\bphi] = -\frac{c_2\sqrt{\gamma}}{2c_1 + \alpha}\frac{s}{m}
\end{eqnarray}
The value of $\frac{1}{m}\mathbb{E}[\be^T\bphi]$ may also be computed directly from definition of $\hat{\btheta}$ (equation \eqref{eqn:hatbthetadef}). Combining these expressions the value of $s$ may be computed.

\section{Numerical Simulation Detail}\label{App:simulationDetails}

We implement the optimization problem $P_3$ \eqref{eq:4doptproblem} by making use of the explicitly computed Moreau envelope for the case of elastic net \eqref{eqn:FullExpectedMoreauEnvelope}. The optimization is solved using a standard iterative approach in which the inner optimizations are solved at constant values of the outer optimizations. This is repeated iteratively until all parameters are determined. Zeroth order gradient methods were attempted, but were highly dependent on the starting choices of the parameters $\beta, q, \xi, t$, and frequently failed to converge.

The experimental verification was completed using synthetic data, in which the data points $\bz_i$ and the weight matrix $\bW$ was drawn from standard normal distributions. The elastic net optimization was solved using the python package \texttt{cvxpy}. The values of $n$ and $m$ were chosen such that $n+m = 1000$ and that $m/n\approx \gamma$, for a chosen ratio $\gamma$. Each sample was averaged $100$ times to account for the randomness in both the input data $\bz$ and the weights $\bW$.

\subsection{Effective Sparsity}\label{App:simulationDetails:Sparsity}

In this section,  we plot the effective sparsity $s$ for elastic net as a function of the regularization strength $\lambda$ for a number of values of $\gamma = \frac{m}{n}$. Recall that $s$ gives the number of nonzero elements in the solution vector $\hat{\btheta}$. The plots for the ratios $\frac{s}{m}$ and $\frac{s}{n}$ may be seen in figure \ref{fig:sratios}. The $\ell_2^2$ regularization strength was fixed with parameter $\alpha = 0.001$. The solid lines are the theoretical predictions while the dots are determined experimentally. For the experimental values the solution vector $\hat{\btheta}$ was determined using a solver, then each element of the solution vector, it was determined to be ``zero" (i.e. sparse) if its value was less than $\frac{0.01}{\sqrt{m}}$. 

We can see from the figures that for all values of $\gamma$ the sparsity is similar at both large and small values of regularization. As the number of model parameters increases relative to the number of data points, i.e. as $\gamma$ grows larger, the regularization strength required to induce a sparse solution drops. Recalling that true solution was half zeros, the value of regularization strength at which $\frac{s}{m} = 0.5$ matches well with the regularization strength that minimizes the generalization error in figure \ref{subfig:genErrorLam}.

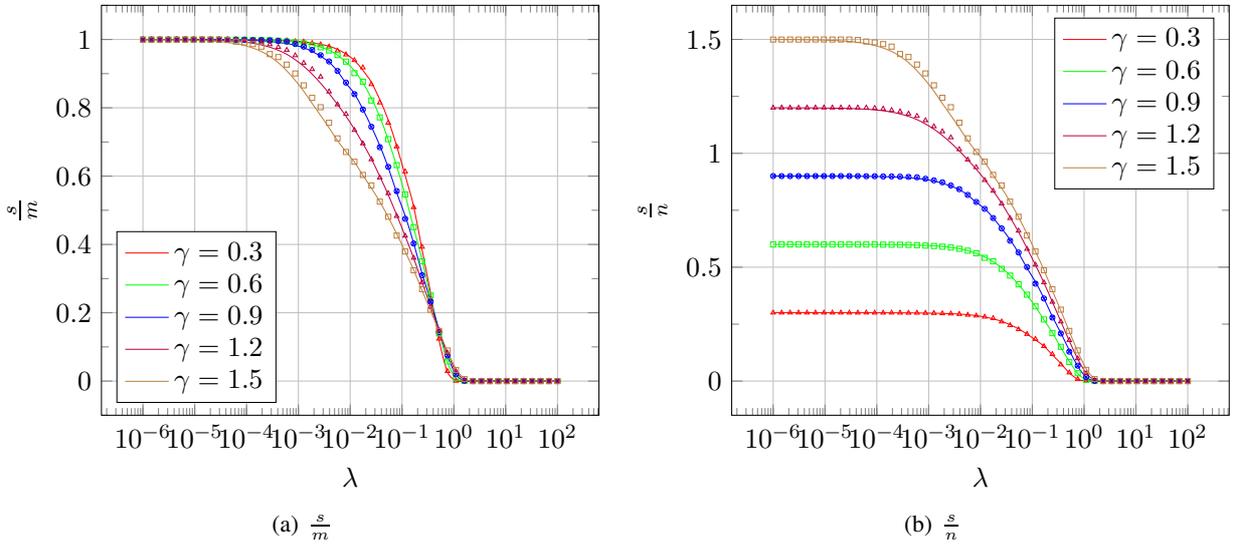
\begin{figure}[!hb]
    \centering
    
    \subfigure [$\frac{s}{m}$] {
    \label{fig:smratio}
    \resizebox{0.45\textwidth}{!}{%
        \begin{tikzpicture}
        \begin{axis}[
          xlabel={$\lambda$},
          ylabel=$\frac{s}{m}$,
          legend pos = south west,
          grid ,
            minor tick num = 1,
            major grid style = {lightgray},
            minor grid style = {lightgray!25},
        xmode=log,
          ]
        \addplot[ smooth, thin, red] table[y=sm0.3, x=lam]{Data/datatheory3.dat};
        \addlegendentry{$\gamma = 0.3$}
        \addplot[ smooth, thin, green] table[y=sm0.6, x=lam]{Data/datatheory3.dat};
        \addlegendentry{$\gamma = 0.6$}
        \addplot[ smooth, thin, blue] table[y=sm0.9, x=lam]{Data/datatheory3.dat};
        \addlegendentry{$\gamma = 0.9$}
        \addplot[ smooth, thin, purple] table[y=sm1.2, x=lam]{Data/datatheory3.dat};
        \addlegendentry{$\gamma = 1.2$}
        \addplot[ smooth, thin, brown] table[y=sm1.5, x=lam]{Data/datatheory3.dat};
        \addlegendentry{$\gamma = 1.5$}
         \addplot[color = red, mark = triangle, mark size = 1pt, only marks] table[ y=sm0.3, x=gamma]{Data/dataTheory4.dat};
        \addplot[color = green, mark = square, mark size = 1pt, only marks] table[ y=sm0.6, x=gamma]{Data/dataTheory4.dat};
        \addplot[blue, mark = otimes, mark size = 1pt, only marks] table[y=sm0.9, x=gamma]{Data/dataTheory4.dat};
        \addplot[purple, mark = triangle, mark size = 1pt, only marks] table[y=sm1.2, x=gamma]{Data/dataTheory4.dat};
         \addplot[color = brown, mark = square, mark size = 1pt, only marks] table[ y=sm1.5, x=gamma]{Data/dataTheory4.dat};
        \end{axis}
    \end{tikzpicture}
        }
    }
\subfigure [$\frac{s}{n}$] {
    \label{fig:snratio}
    \resizebox{0.45\textwidth}{!}{%
        \begin{tikzpicture}
        \begin{axis}[
          xlabel={$\lambda$},
          ylabel=$\frac{s}{n}$,
          legend pos = north east,
          grid ,
            minor tick num = 1,
            major grid style = {lightgray},
            minor grid style = {lightgray!25},
        xmode=log,
          ]
        \addplot[ smooth, thin, red] table[y=sn0.3, x=lam]{Data/datatheory3.dat};
        \addlegendentry{$\gamma = 0.3$}
        \addplot[ smooth, thin, green] table[y=sn0.6, x=lam]{Data/datatheory3.dat};
        \addlegendentry{$\gamma = 0.6$}
        \addplot[ smooth, thin, blue] table[y=sn0.9, x=lam]{Data/datatheory3.dat};
        \addlegendentry{$\gamma = 0.9$}
        \addplot[ smooth, thin, purple] table[y=sn1.2, x=lam]{Data/datatheory3.dat};
        \addlegendentry{$\gamma = 1.2$}
        \addplot[ smooth, thin, brown] table[y=sn1.5, x=lam]{Data/datatheory3.dat};
        \addlegendentry{$\gamma = 1.5$}
         \addplot[color = red, mark = triangle, mark size = 1pt, only marks] table[ y=sn0.3, x=gamma]{Data/dataTheory4.dat};
        \addplot[color = green, mark = square, mark size = 1pt, only marks] table[ y=sn0.6, x=gamma]{Data/dataTheory4.dat};
        \addplot[blue, mark = otimes, mark size = 1pt, only marks] table[y=sn0.9, x=gamma]{Data/dataTheory4.dat};
        \addplot[purple, mark = triangle, mark size = 1pt, only marks] table[y=sn1.2, x=gamma]{Data/dataTheory4.dat};
         \addplot[color = brown, mark = square, mark size = 1pt, only marks] table[ y=sn1.5, x=gamma]{Data/dataTheory4.dat};
        \end{axis}
    \end{tikzpicture}
        } 
        }
       \caption{The effective sparsity $s$ as a ratio to the number of model parameters $m$ or the number of data points $n$ for elastic net regularization for varying strengths of the regularization parameter $\lambda$. The $\ell_2$ regularization term was fixed to $0.001$. Multiple values of $\gamma = \frac{m}{n}$ are considered. Solid line is the theoretical prediction, and the dots are experimental values.}
    \label{fig:sratios}
\end{figure}

\end{document}